\definecolor{citeColor}{RGB}{0,20,115}
\theoremstyle{plain}
\newtheorem{theorem}{Theorem}[section]
\newtheorem{proposition}[theorem]{Proposition}
\newtheorem{lemma}[theorem]{Lemma}
\newtheorem{corollary}[theorem]{Corollary}
\theoremstyle{definition}
\newtheorem{definition}[theorem]{Definition}
\theoremstyle{remark}
\newtheorem{remark}[theorem]{Remark}
\newcommand{\up}[1]{\textcolor[rgb]{0.8 0 0}{#1}}
\newcommand{\down}[1]{\textcolor[rgb]{0 0.6 0}{#1}}
\icmltitlerunning{On Strengthening and Defending Graph Reconstruction Attack with Markov Chain Approximation}
\begin{document}
	
	
	\twocolumn[
	\icmltitle{On Strengthening and Defending Graph Reconstruction Attack \\
		with Markov Chain Approximation}

	
	
	\icmlsetsymbol{equal}{*}
	
	\begin{icmlauthorlist}
		\icmlauthor{Zhanke Zhou}{hkbu}
		\icmlauthor{Chenyu Zhou}{hkbu}
		\icmlauthor{Xuan Li}{hkbu}
		\icmlauthor{Jiangchao Yao}{sjtu,lab}
		\icmlauthor{Quanming Yao}{thu}
		\icmlauthor{Bo Han}{hkbu}
	\end{icmlauthorlist}
	
	\icmlaffiliation{hkbu}{Department of Computer Science, Hong Kong Baptist University}
	\icmlaffiliation{lab}{Shanghai AI Laboratory}
	\icmlaffiliation{sjtu}{Cooperative Medianet Innovation Center, Shanghai Jiao Tong University}
	\icmlaffiliation{thu}{Department of Electronic Engineering, Tsinghua Unversity}
	
	\icmlcorrespondingauthor{Bo Han}{bhanml@comp.hkbu.edu.hk}
	\icmlcorrespondingauthor{Jiangchao Yao}{Sunarker@sjtu.edu.cn}
	
	\icmlkeywords{Machine Learning, ICML}
	
	\vskip 0.3in
	]
	
	
	
	\printAffiliationsAndNotice{}  
	
	\begin{abstract}
		
		Although powerful graph neural networks (GNNs) have boosted numerous real-world applications, the potential privacy risk is still under-explored.
		%
            To close this gap,
		we perform the first comprehensive study of
		\textit{graph reconstruction attack} that aims to reconstruct the \textit{adjacency} of nodes.
		We show that a range of factors in GNNs can lead to the surprising leakage of private links.
		Especially by taking GNNs as a Markov chain and attacking GNNs via a flexible chain approximation,
		we systematically explore the underneath principles of graph reconstruction attack,
		and propose two information theory-guided mechanisms:
		(1) the chain-based attack method
		with adaptive designs for extracting more private information;
		(2) the chain-based defense method that sharply reduces the attack fidelity with moderate accuracy loss. Such two objectives disclose a critical belief that to recover better in attack, you must extract more multi-aspect knowledge from the trained GNN; while to learn safer for defense, you must forget more link-sensitive information in training GNNs. 
		Empirically, we achieve state-of-the-art results on six datasets and three common GNNs.
		The code is publicly available at:
		\url{https://github.com/tmlr-group/MC-GRA}.
	\end{abstract}

	\section{Introduction}
	\label{sec: introduction}
	
	Deep learning has promoted tremendously broad research 
	from Euclidean data like images
	to non-euclidean data like graphs.
	Specifically, graph neural networks (GNNs)~\citep{kipf2016semi, gilmer2017neural, zhang2018link} 
        proposed in the recent years
	have drawn much attention and boosted a wide range of real-world applications, \textit{e.g.},
	social network~\citep{fan2019graph},
	recommender systems~\citep{wu2020graphRecsys}
	and drug discovery~\citep{ioannidis2020few}.

	Nevertheless, the privacy concerns behind these applications raise with the development of the \textit{Model Inversion Attack} (MIA) technique, which only requires a trained model and non-sensitive features to recover the sensitive information.
	In particular, recent progress on MIA~\citep{fredrikson2015model, zhang2020secret, struppek2022ppa} has shown the feasible recovery of private images in high fidelity and diversity.
	%
	As for the scenarios of GNNs,
	the similar inversion of the adjacency of the training graph is also a severe privacy threat, since
	links can reflect the sensitive relationship information
	or intellectual properties of the model's owner.
	%
	We term this kind of MIA on graphs as Graph Reconstruction Attack (GRA) for simplicity and 
	illustrate exemplars in Fig.~\ref{fig: motivation}.
	To date, only limited research has been conducted on GRA~\citep{he2021stealing, zhang2021graphmi}
	that is designed for ad-hoc scenarios.
	The general principles
	for strengthening and defending GRA are still unknown,
	which presents hidden dangers in extensive real-world applications.
	%
	Thus, it is urgent to understand the vulnerability of GNNs
	under such attacks and explore the proper defense methods to protect GNNs 
	and avoid privacy risks in advance.
	
	\begin{figure}[t!]
		\centering
		\includegraphics[width=8.0cm]{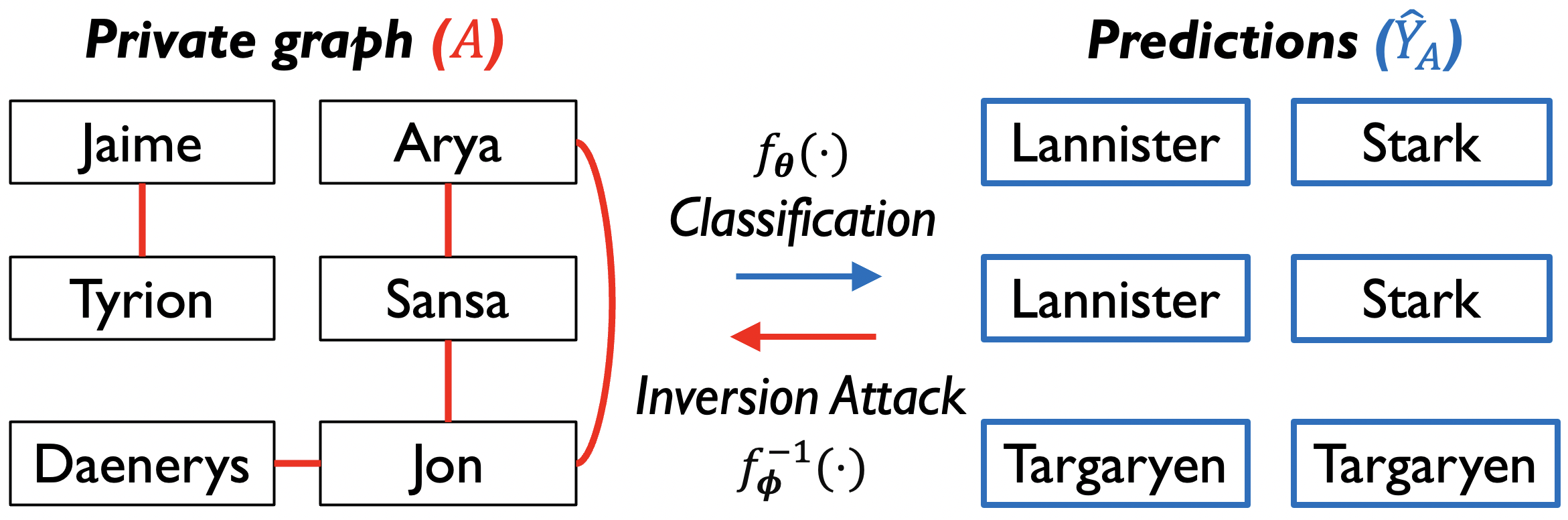}
		\vspace{-2px}
		\caption{
			An illustration of Graph Reconstruction Attack.
			The \textit{forward} inference of a trained model 
			is to predict the node category $\hat{\bm{Y}}_A$,
			\textit{i.e.}, 
                the family name of each character;
			while the \textit{backward} inversion attack is to recover the original adjacency $A$,
			\textit{i.e.}, 
                the kinship among characters (red edges).
		}
		\label{fig: motivation}
	\vspace{-12px}
	\end{figure}
	
	
	
	In this work,
	we systematically investigate this crucial yet under-explored problem
	from both sides of attack and defense.
	Roughly,
	the GNNs' inference procedure can be viewed as a Markov Chain
	$f_{\bm{\theta}}: (A, X) \! \to \! \bm{H}_{A} \! \to \! \bm{\hat{Y}}_{A} \! \leftrightarrow \! Y$,
	where the adjacent matrix $A$ and node features $X$ are taken as the inputs to generate
	node embeddings $\bm{H}_{A}$, and
	a linear layer with activation transforms $\bm{H}_{A}$ 
	into the classification outputs $\bm{\hat{Y}}_{A}$ 
	to predict node labels $Y$.
	%
	More importantly,
	we reveal that every variable 
        in $\{X,  Y, \bm{H}_{A}, \bm{\hat{Y}}_{A} \}$
	can recover adjacency to a certain extent through a simple transformation.
	However, different from the single variable in MIA for images, 
	it is mysterious to understand the intriguing mechanism behind the multiple interplaying factors in GNNs, thus challenging to apply for strengthening and defending GRA.

	\begin{figure}[t!]
		\centering
		\vspace{+4px}
		\includegraphics[width=8.0cm]{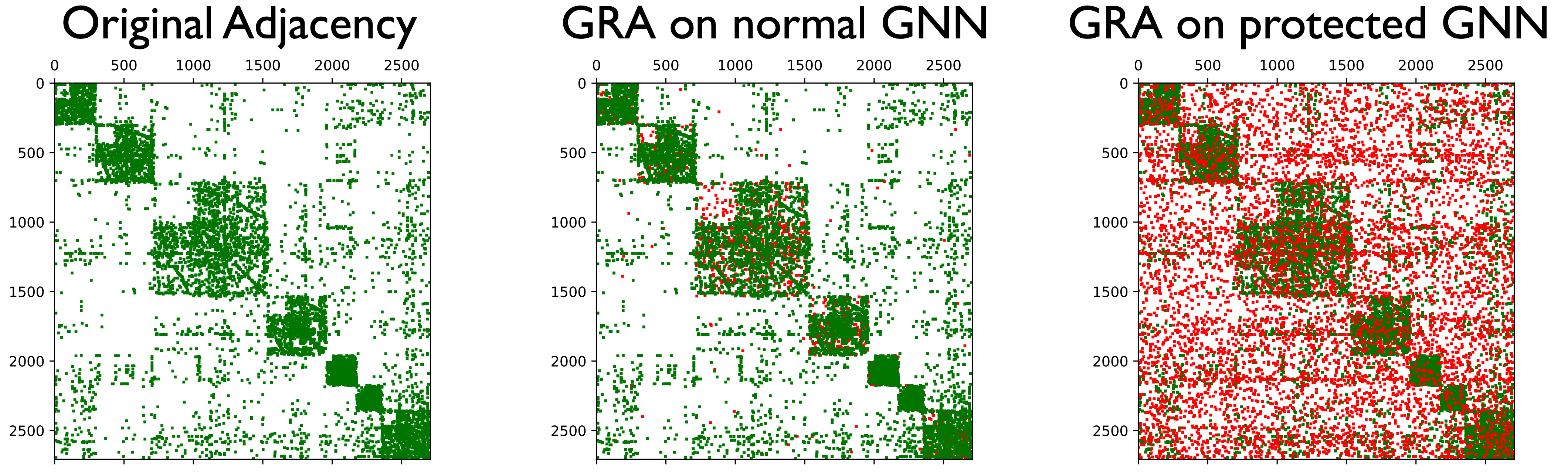}
		\vspace{-6px}
		\caption{
			Recovered adjacency on Cora dataset.
			Green dots are correctly predicted edges while red dots are wrong ones.
			GRA on normal GNN leads to privacy leakage,
			while GRA on protected GNN cannot recover the private adjacency.
		}
		\label{fig: adjacency-demo}
		\vspace{-10px}
	\end{figure}
	
	
	To close the gap, 
	we formulate the GRA problem from a novel perspective, \textit{i.e.},
	approximating the original Markov chain by the attack chain (Fig.~\ref{fig: GRA-two-chains}).
	Note that such a modeling manner brings three-fold advantages:
	(1) adaptively supports the white-box attack that
	utilizes any set of prior knowledge;
	(2) help derive the chain-based attack and defense objectives in optimization;
	(3) enables analysis from the information-theoretical view.
	%
	%
	On the basis of the chain modeling,
	we investigate the underneath principles of the GRA problem, which are two folds.
	To strengthen the attack,
	we derive the Markov Chain-based Graph Reconstruction Attack (MC-GRA)
	that simulates the hidden transformation procedure of the target GNN
	by approximating all the known informative variables in a combinatorial manner.
	As for defense,
	we propose the Markov Chain-based Graph Privacy Bottleneck (MC-GPB), which regularizes the mutual dependency among graph representations, adjacency, and labels to alleviate privacy leakage,
	as shown in Fig.~\ref{fig: adjacency-demo}.
	
	In short, our main contributions are summarized as follows.
	\\\textbf{(1)}
	To our best knowledge,
	we are the first to conduct a systematic study of GRA
	and reveal several essential and useful phenomenons (Sec.~\ref{sec: overview}).
	\textbf{(2)} On the basis of the chain modeling,
	we propose a new method for the attack that boosts the attack fidelity with parameterization techniques
	and injected stochasticity 
	(Sec.~\ref{sec: GRA attack}), 
        and propose an information theory-guided principle for the defense that
	significantly degenerates all the attacks with only a slight accuracy loss
	(Sec.~\ref{sec: GRA defense}).
	\textbf{(3)}
	We provide a rigorous analysis from information-theoretical perspectives to disclose several valuable insights on what and how to strengthen and defend GRA.
	%
	\textbf{(4)}
	Both the two proposed methods achieve state-of-the-art results
	on six datasets and three GNNs (Sec.~\ref{sec: experiment}).

	\begin{figure}[t!]
		\centering
		\includegraphics[width=8.0cm]{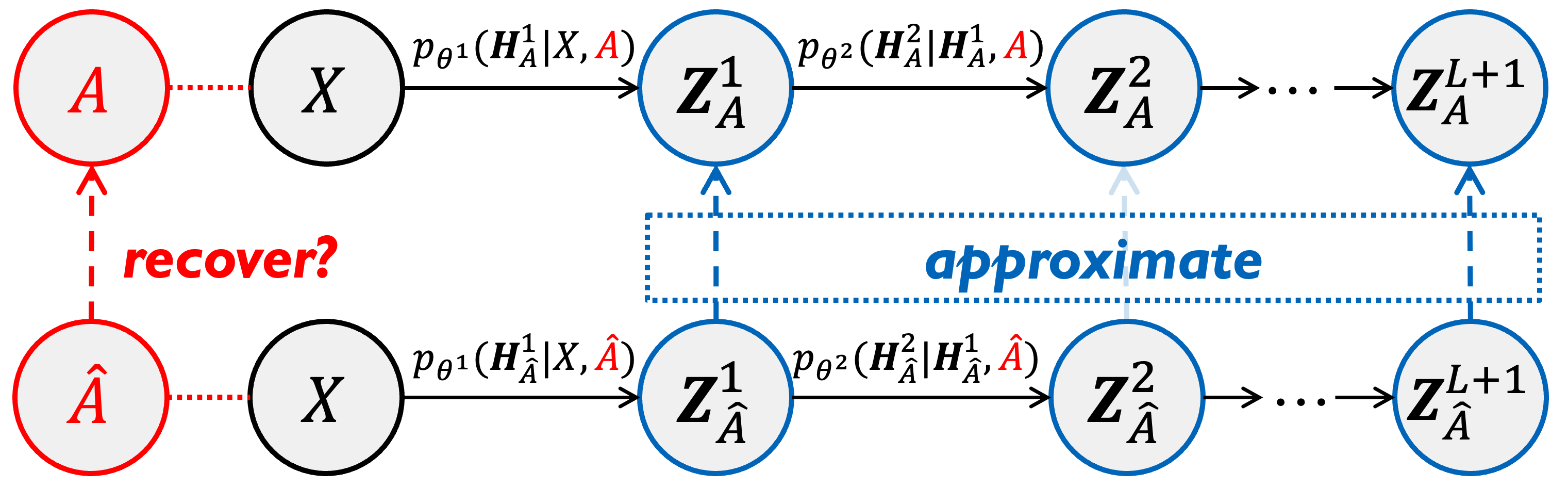}
		\vspace{-4px}
		\caption{
			Modeling the GRA problem as
			approximating the original Markov chain (upper) by the attack chain (lower).
			Note that
			the original chain is with the original adjacency $A$,
			while the attack chain is with the recovered adjacency $\bm{\hat{A}}$.
		}
		\label{fig: GRA-two-chains}
		\vspace{-10px}
	\end{figure}

	\section{Related Work}
	\label{sec: related work}
	
	
	\textbf{Inversion attacks on images.} 
	Pioneer works~\citep{szegedy2013intriguing, fredrikson2014privacy, fredrikson2015model, hidano2017model}
	introduced the Model Inversion Attack (MIA) with shallow models and 
	justified the feasibility of MIA
	in recovering the monochrome images.
	However, 
	they fail in attacking deep models for image classification tasks,
	where the reconstructed images are of low fidelity.
	Generative model inversion~\citep{zhang2020secret}
	is the first to conduct MIA on 
	convolution neural networks.
	Instead of directly reconstructing the private data from scratch,
	its inversion process is guided by a distributional prior 
	through the generative adversarial networks (GAN)
	that can reveal private training data of the target model with high fidelity.
	Later,
	variational model inversion~\citep{wang2021variational}
	further formulates the MIA as the variational inference.
	It generally can bring a higher attack accuracy and diversity
	for its equipped powerful generator StyleGAN to optimize its designed variational objective.
	Recent advance~\citep{struppek2022ppa}
	significantly decreases the cost of conducting MIA
	through relaxing the dependency between the target model and the image prior.
	It enables the use of a single GAN to attack a wide range of targets,
	requiring only minor adjustments to the attack.
	It shows that MIA is possible 
        with publicly available pre-trained GANs 
	under strong distributional shifts.

	\textbf{Inversion attacks on graphs.}
	%
	Early works~\citep{duddu2020quantifying, chanpuriya2021deepwalking}
	attempt to reconstruct the target graph from released graph embeddings of each node that are generated by Deepwalk or GNNs.
	The link stealing attack~\citep{he2021stealing} 
	is the first work to steal links 
        from a GNN as the target model.
	It aims to conduct the MIA
	with three kinds of prior knowledge, 
	including node features, 
        partial target graph, 
	and a shadow dataset.
        It considered all permutations of the three elements
        and proposed eight kinds of attack methods in total
	that are adaptive to the eight scenarios
        with chemical networks and social networks.
	%
	Another recent work~\citep{zhang2021graphmi}
	is a learnable attack that also aims to 
	recover the links of the original graph.
	With the white-box access to the target GNN model,
	the optimal adjacency is obtained 
	by maximizing the classification accuracy 
	regarding the known node labels.
	Please refer to Appendix.~\ref{sec: related_work}
	for a detailed introduction to related work.
	
	\section{Preliminaries}
	\label{sec: preliminaries}
	
	
	\textbf{Notations.}	
	With adjacent matrix $A$ and node features $X$,
	an undirected graph is denoted as $\mathcal{G} \! = \! (A, X)$,
	where $A_{ij} \! = \! 1$ means there is an edge $e_{ij}$ between $v_i$ and $v_j$.
	For each node $v_i$,
	its $D$-dimension node feature is denoted as $X_{[i,:]} \! \in \! \mathbb{R}^{D}$,
	and its label $y_i \! \in \! Y$ indicates the node category.
	The node classification task is to predict the label 
        $Y$ of each node
	via a parameterized model $f_{\bm \theta}(\cdot)$.
        $I(X; Y)$ indicates the mutual information of variables $X$ and $Y$.
	We summarize the frequently used notations in Table~\ref{tab:notations}
	of Appendix.~\ref{ssec: notations}
	
	\textbf{Graph neural networks.}	
	Predicting node labels 
	requires a parameterized hypothesis 
	$f_{\bm{\theta}}$
	with GNN architecture and the message propagation framework~\citep{gilmer2017neural}.
	Specifically,
	the forward inference of a $L$-layer GNN
	generates the node representations $\bm{H}_{A} \! \in \! \mathbb{R}^{N \times D}$ 
	by a $L$-layer message propagation.
	The follow-up linear layer
	transforms 
	the representations $\bm{H}_{A}$  to 
	the classification probabilities
	$\bm{\hat{Y}}_{A} \! \in \! \mathbb{R}^{N \times C}$,
	with $C$ categories of nodes in total.


	\textbf{Model inversion attack on graphs.}
	%
	%
	In this study, to catch more attention to the privacy risk of GNNs, we study the reconstruction of the graph adjacency by MIA
	and term it Graph Reconstruction Attack (GRA),
	as elaborated below.
	\begin{definition}[Graph Reconstruction Attack]
		\label{def: graph reconstruction attack}
		Given a set of prior knowledge $\mathcal{K}$
		and a trained GNN $f_{\bm{\theta}^*}(\cdot)$,
		the graph reconstruction attack aims to recover the original linking relations $\bm{\hat{A}}^*$
		of the training graph $\mathcal{G}_{\text{train}} \! = \! (A, X)$,
		namely,
		\begin{align}
			\text{GRA: \; \;}&  
			\bm{\hat{A}}^* = \arg \max_{\bm{\hat{A}}} \mathbb{P}(\bm{\hat{A}} | f_{\bm{\theta}^*}, \mathcal{K}).
			\label{eqn: graph_MI_attack}
		\end{align}
		\vspace{-20pt}
	\end{definition}
	Here, $\mathbb{P}(\cdot)$ is the attack method to generate $\bm{\hat{A}}$,
	and $\mathcal{K}$ can be any subset of 
	$\{ X, Y, \bm{H}_{A}, \bm{\hat{Y}}_{A} \}$.
	Note that GRA is conducted in a post-hoc manner, \textit{i.e.}, after the training of GNN $f_{\bm{\theta}}(\cdot)$.
	
	
	
	\section{A Comprehensive Study of GRA}
	\label{sec: overview}
	
	In this section,
	we formulate the Graph Reconstruction Attack as a Markov chain approximation problem (Sec.~\ref{ssec: problem statement}), quantify the privacy risk of releasing the non-sensitive features
	(Sec.~\ref{ssec: understanding by direct MI calculation}),
	and investigate the training dynamics of graph representations \textit{w.r.t.} the privacy leakage
	(Sec.~\ref{ssec: tracking by graph information plane}).
	
	\subsection{Modeling GRA as Markov chain approximation.}
	\label{ssec: problem statement}
	%
	To adaptively support the white-box GRA 
	that leverages the target model and any prior knowledge;
	and to properly locate, present, and utilize the interplaying variables of GNN forward 
	in a generic manner; 
	we cast the GRA problem as approximating the 
	original Markov chain \texttt{ORI-chain} by the attack chain \texttt{GRA-chain}, 
	as shown in Fig.~\ref{fig: GRA-two-chains}, namely,
	%
	\vspace{-4pt}	
	\begin{equation}	
		\begin{split}
			\texttt{ORI-chain:}& \bm{Z}^{0} 
			\! \xrightarrow[\bm{\theta}^1]{A} \! \bm{Z}_{A}^1 
			\! \xrightarrow[\bm{\theta}^2]{A} \! \bm{Z}_{A}^2  
			\! \to \! \cdots 
			\! \xrightarrow[\bm{\theta}^{L \! + \! 1}]{A} \! \bm{Z}_{A}^{L \! + \! 1}, \\
			\texttt{GRA-chain:}& \bm{Z}^{0} 
			\! \xrightarrow[\bm{\theta}^1]{\bm{\hat{A}}} \! \bm{Z}_{\bm{\hat{A}}}^1 
			\! \xrightarrow[\bm{\theta}^2]{\bm{\hat{A}}} \! \bm{Z}_{\bm{\hat{A}}}^2  
			\! \to \! \cdots 
			\! \xrightarrow[\bm{\theta}^{L \! + \! 1}]{\bm{\hat{A}}} \! \bm{Z}_{\bm{\hat{A}}}^{L \! + \! 1},
		\end{split}
		\label{eqn: two-Markov-chains}
	\end{equation}
	where $\bm{Z}^0 \! = \! X$,
	$\bm{Z}_{A}^i \! = \! \bm{H}_{A}^i$ 
	for $i \!= \! 1, \! \cdots \!, L$
	and $\bm{Z}_{A}^{L+1} \! = \! \bm{\hat{Y}}_{A}$.
	%
	Note that GNNs' forward can be seen as a Markov chain
	that is discrete-time finite, non-reversible, and pairwise-independent.
	The probability of current state $\bm{H}_{A}^{i}$ 
	only depends on the previous state $\bm{H}_{A}^{i-1}$,
	where the transition kernel 
	is determined by $A$ and $\bm{\theta}^i$.
	%
	Importantly,
	the principle of GRA to recover the adjacency $A$ by $\bm{\hat{A}}$
	is to approximate latent variables 
	$\mathcal{S}_{A} \! = \! \{\bm{Z}_{A}^{i}: \bm{Z}_{A}^{i} \! \in \! \mathcal{K} \}$
	in \texttt{ORI-chain}
	by the corresponding
	$\mathcal{S}_{\bm{\hat{A}}} \! = \! \{\bm{Z}_{\bm{\hat{A}}}^{i}: \bm{Z}_{A}^{i} \! \in \! \mathcal{S}_{A}\}$
	in \texttt{GRA-chain}.
	
	\subsection{What leaks privacy in \texttt{ORI-chain}?}
	\label{ssec: understanding by direct MI calculation}

	\begin{table}[t!]
		\centering
		\caption{
			Quantitative analysis of $I(A; \bm{Z})$ with AUC metric under range $[0,1]$.
			A higher AUC value means a severer privacy leakage.
			"---" indicates that nodes in this dataset do not have features.
			Besides, the \textbf{boldface} numbers mean the best results, 
			while the \underline{underlines} indicate the second-bests.
                The target model $f_{\bm{\theta}}$ is a two-layer GCN by default.
		}
		\vspace{-8px}
		\label{tab:understanding-MI-term-comparison}
		\fontsize{8}{10}\selectfont
		\setlength\tabcolsep{5.2pt}
		\begin{tabular}{c|cccccc}
			\toprule
			MI & Cora & Citeseer &  Polblogs   & USA   &   Brazil &  AIDS  \\
			\midrule
			$I(A; X)$ &  \underline{.781} & \textbf{.881} & --- & --- & --- &  .521 \\ 
			$I(A; \bm{H}_A)$ & .766 & .760 & \textbf{.763} & \textbf{.850} & \textbf{.758} & \textbf{.584} \\
			$I(A; \bm{\hat{Y}}_A)$  &  .712 & .743 & \underline{.772} & \underline{.826} & \underline{.732} & \underline{.561}  \\
			$I(A;Y)$  & \textbf{.815} & \underline{.779} & .705 & .728 & .613 & .536 \\
			\bottomrule
		\end{tabular}
	\vspace{-4px}
	\end{table}

	\begin{table}[t!]
		\centering
		\caption{
			An ensemble study on the prior knowledge with AUC metric.
			For a generic evaluation,
			it is assumed that node feature $X$ is accessible (if exists),
			based on which we evaluate all the possible $8$ combinations
			with $2$, $3$, or $4$ components, 
			where "$\checkmark$" means accessible for this variable.
		}
		\vspace{-8px}
		\label{tab:understanding-MI-term-ensemble}
		\fontsize{8}{10}\selectfont
		\setlength\tabcolsep{3.4pt}
		\begin{tabular}{cccc|cccccc}
			\toprule
			$X$ & $\bm{H}_A$ &  $\bm{\hat{Y}}_A$  &  $Y$  & Cora & Citeseer &  Polblogs   & USA   &   Brazil &  AIDS \\
			\midrule
			$\checkmark$ & $\checkmark$ &  &  & .781 &.881 & .763 & .850 & .758 & .521 \\
			$\checkmark$ &  & $\checkmark$ &   &  .781 & .881 & .772 & .826 & .732 & .521 \\
			$\checkmark$ &  &  & $\checkmark$  &  .849 & .907 & .705 & .728 & .613 & .522 \\ 
			\midrule
			$\checkmark$ & $\checkmark$ & $\checkmark$ &   &  .781 & .881 & .763 & .848 & .756 & .521 \\ 
			$\checkmark$ & $\checkmark$ & & $\checkmark$   & .849 & .907 & .779 & .850 & .743 & .522 \\ 
			$\checkmark$ &  & $\checkmark$ & $\checkmark$   &  .842 & .907 & .785 & .842 & .730 & .522 \\ 
			\midrule
			$\checkmark$ & $\checkmark$  & $\checkmark$ & $\checkmark$  & .849 & .907 & .781 & .852 & .717 & .522 \\ 
			\bottomrule
		\end{tabular}
	\vspace{-10px}
	\end{table}

	Intuitively,
	variables in \texttt{ORI-chain} might contain information about ground-truth adjacency $A$
	as the transition kernel is partially determined by $A$.
	To figure out,
	we quantify the direct correlation between $A$
	and a single variable 
	$\bm{Z} \! \in \! \{ X, Y, \bm{H}_{A}, \bm{\hat{Y}}_{A} \}$
	in \texttt{ORI-chain}
	%
	through the informative concept of mutual information (MI) $I(A; \bm{Z})$.
	%
	Following link prediction works~\citep{zhang2018link, zhu2021neural},
	the AUC 
        metric is utilized
	to quantify $I(A; \bm{Z})$ regarding 
	edges in $A$ and $\hat{A}_{\bm{Z}} \! = \! \sigma(\bm{Z} \bm{Z}^\top)$,
	where $\sigma(\cdot)$ is the activation function.
	Here, the inner product transforms 
	the informative variable $\bm{Z} \! \in \! \mathbb{R}^{N \times D}$ 
	to the predictive adjacency 
	$\hat{A}_{\bm{Z}} \! \in \! \mathbb{R}^{N \times N}$, 
	where the $(i,j)$ entry in $\hat{A}_{\bm{Z}}$
	indicate the existence of edge $e_{ij}$. 
	See Appendix.~\ref{ssec: quantifying privacy leakage} for details.
	
	%
	%

	\begin{figure*}[t!]
		\centering
		\hfill
		\includegraphics[width=8.0cm]{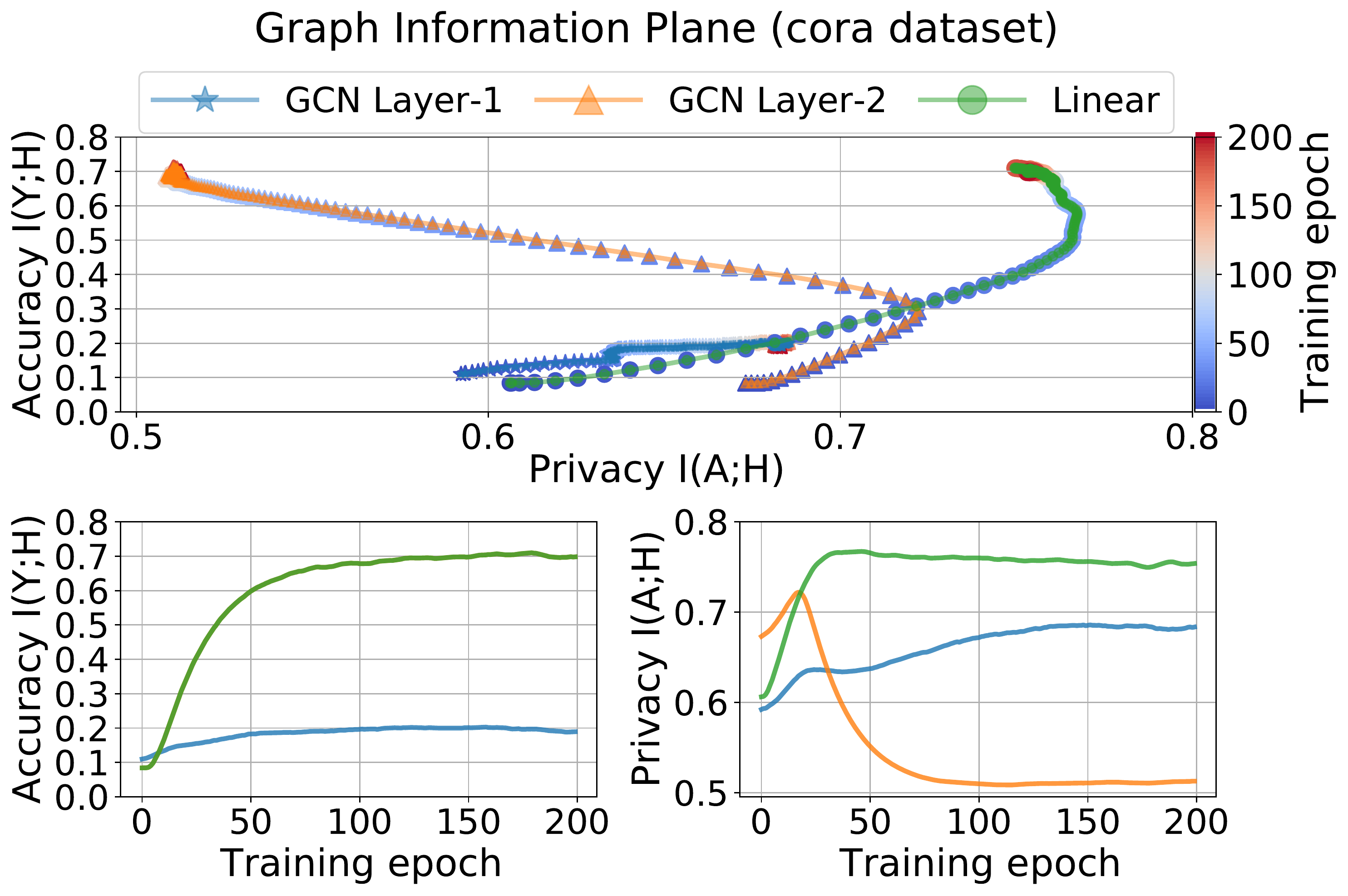}
		%
		\hfill
		\includegraphics[width=8.0cm]{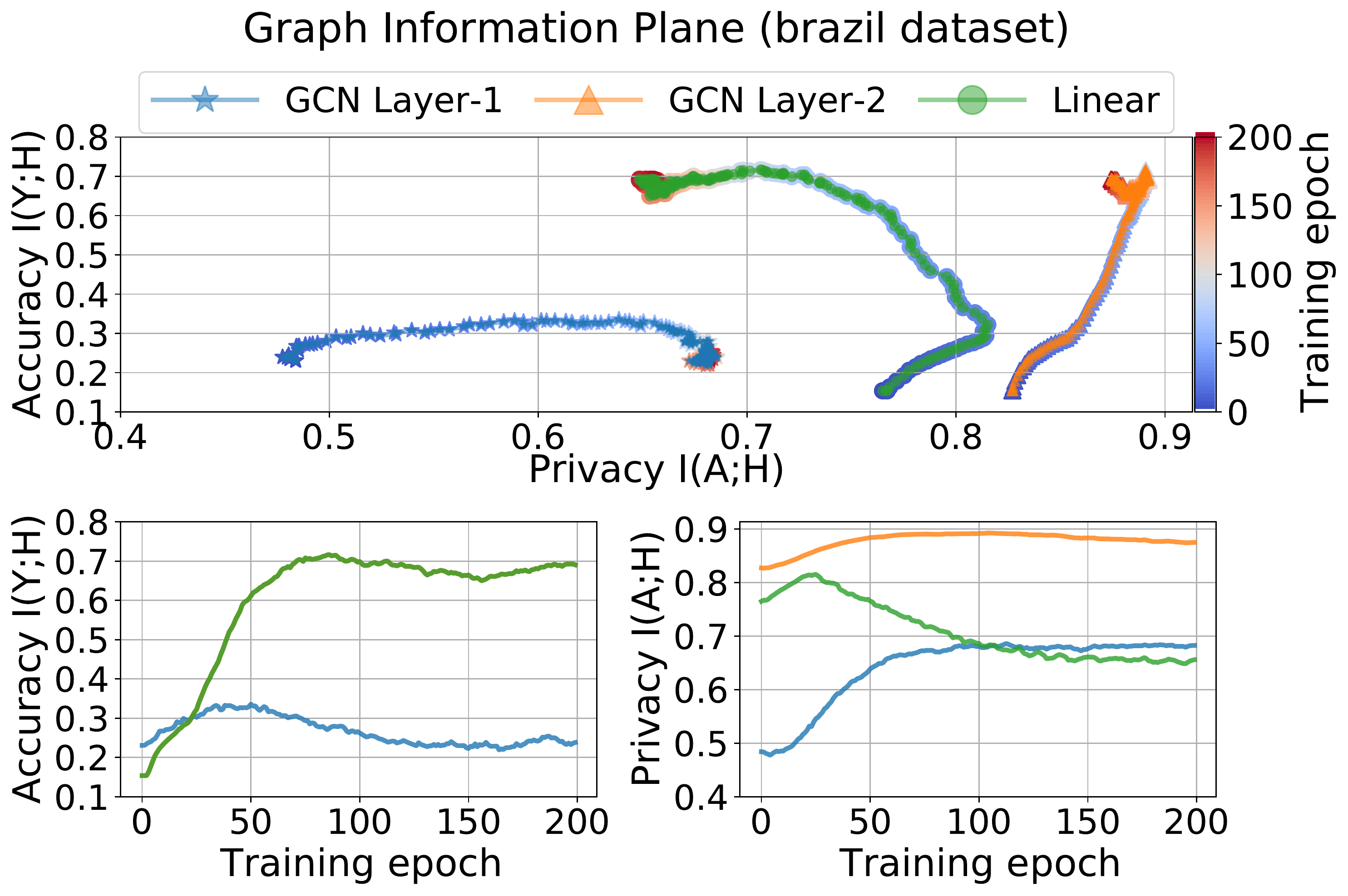}
		\hfill
		\vspace{-6px}
		\caption{
			Graph information plane:
                tracking the
                standard training procedures 
                of a two-layer GCN on Cora (left) and Brazil (right).
			The accuracy of GCN layer-2 and the linear layer is the same 
			as $\hat{\bm{Y}}_A \! = \! \texttt{Linear}(\bm{H}_A^2)$
                (thus with overlapped curves).
		}
		\label{fig: graph-information-plane}
		\vspace{-10px}
	\end{figure*}

	\textbf{Observation 3.1.}
	As shown in Tab.~\ref{tab:understanding-MI-term-comparison},
	a single variable in \texttt{ORI-chain} can recover the original adjacency to     a certain extent through the inner-product transformation. 
	It is applicable to black-box attacks once obtain these variables.
	%
        Besides,
	the model outputs $\{\bm{H}_{A}, \bm{\hat{Y}}_{A}\}$
	generally contain more adjacency information
	than the original data $\{X,Y\}$.
	As single variables $\bm{Z}$ in \texttt{ORI-chain} present diverse approximation power,  
	the stored private information might be complementary to each other in recovering adjacency.
	To answer, we ensemble these variables via a linear combination, namely,
	$\bm{\hat{A}}_{esm} \! = \! \nicefrac{1}{|\mathcal{K}|} \sum_{i=1}^{|\mathcal{K}|} \hat{A}_{\mathcal{K}_i}$,
	where $\hat{A}_{\mathcal{K}_i} \! = \! \sigma({\mathcal{K}_i} {\mathcal{K}_i}^\top)$.
	%
	%
	
	\textbf{Observation 3.2.}
	As shown in Tab.~\ref{tab:understanding-MI-term-ensemble},
	the straightforward and linear combination of informative terms
	only brings marginal improvements in recovering the adjacency.
	Such an observation is consistent with the chain rule of MI, \textit{i.e.},
	$\forall \mathcal{K}_i, \mathcal{K}_j \in \mathcal{K}, \;
        I(A;\mathcal{K}_i, \mathcal{K}_j) \geq \max \big( I(A;\mathcal{K}_i), I(A; \mathcal{K}_j) \big)$.
	%
	
	\vspace{-6pt}
	\subsection{How \texttt{ORI-chain} memorizes the privacy?}
	\label{ssec: tracking by graph information plane}



	\begin{figure*}[t!]
		\centering
		\subfigure[
		The attack framework MC-GRA.
		In forward, a recovered adjacency $\bm{\hat{A}}$
		is sampled from the parameterized distribution $\mathbb{P}_{\bm{\phi}}(\bm{\hat{A}})$ 
		and injected with manual stochasticity.
		As for backward, the learnable parameters $\bm{\phi}$ gain supervision from the MC-GRA objective Eq.~\eqref{eqn: MC-GRA}.
		]
		{\includegraphics[width=8.3cm]{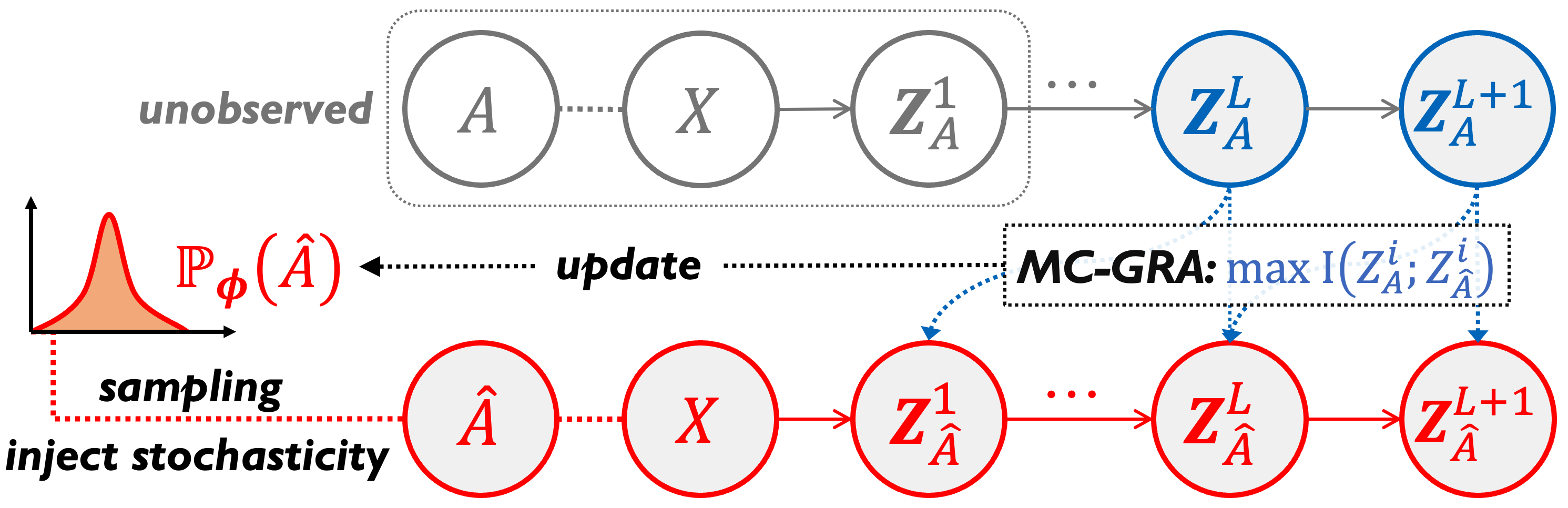}
			\label{fig: markov-attack}}
		\hfill
		\subfigure[
		The defense framework MC-GPB.
		It solves the accuracy-privacy tradeoff by objective Eq.~\eqref{eqn: tighter-defense-MI} 
		through regularizing graph representations to make GNNs forget about private $A$ and injecting stochasticity to promote forgetting that decreases the privacy risk further.
		]
		{\includegraphics[width=8.3cm]{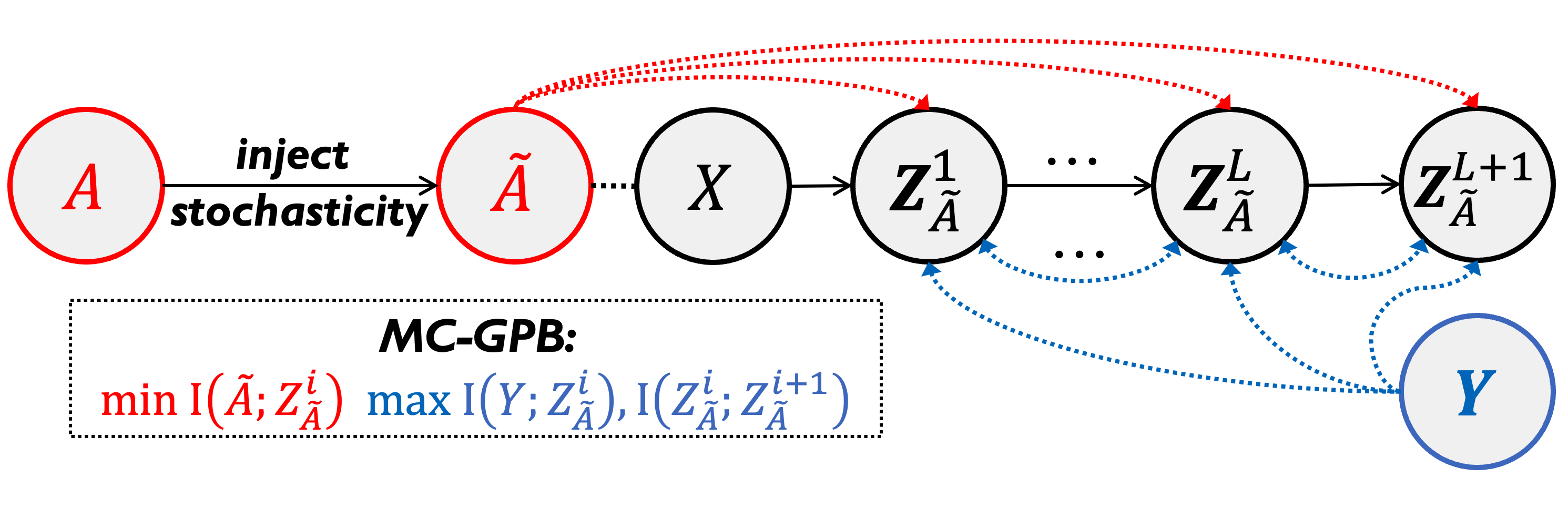}
			\label{fig: markov-defense}}
		\vspace{-10px}
		\caption{
			Illustrations of the two proposed methods for strengthening (a) and defending (b) the GRA, respectively.
		}
	\vspace{-10px}
	\end{figure*}

	For further understanding of the learning and memorization mechanisms of \texttt{ORI-chain}
	and acquiring inspiration for devising the corresponding defense approach,
	we track the training process 
	by privacy $I(A; \bm{Z})$ and accuracy $I(Y; \bm{Z})$,
	where variable $\bm{Z} \! \in \! \{ \bm{H}_A^1, \bm{H}_A^2, \bm{\hat{Y}}_A \}$ are from \texttt{ORI-chain}.
	Conceptually,
	we derive Graph Information Plane 
	\footnote{We leave the formal definition and details in Appendix.~\ref{ssec: graph information plane}.}
	inspired by information theory~\citep{tishby2015deep, shwartz2017opening}.
	%
	The anytime $\bm{Z}$ in training phase
	is projected to the two-dimensional $\big(I(A; \bm{Z}),  I(Y; \bm{Z})\big)$ plane.

	\textbf{Observation 3.3.}	
	As shown in Fig.~\ref{fig: graph-information-plane},
	the training procedure with v-shape curves 
	contains two main phases:
	\textit{fitting} and \textit{compressing}.
	In the first and shorter phase, 
	the layers increase the information about privacy.
	While in the second and longer phase,
	the layers gradually forget about privacy.

	\section{To Recover Better, You Must Extract More}
	\label{sec: GRA attack}
	
	To attack, one must integrate all the available prior knowledge
	to backward recover the adjacency.
	The key challenge here is the lack of an effective way 
	to employ all the prior knowledge and the target model in attacks.
	Besides, it is also hard to represent and update the recovered adjacency 
	in a differentiable way due to the discrete nature of adjacency.

	To solve,
	we propose the Markov Chain-based Graph Reconstruction Attack (MC-GRA) framework,
	as illustrated in Fig.~\ref{fig: markov-attack}. 
	Here, instead of directly maximizing $I(\bm{\hat{A}}; \mathcal{K})$,	
	we choose to promote $I(\bm{Z}_{\bm{\hat{A}}}^i; \bm{Z}_A^i \! = \! \mathcal{K}_i)$
	as it provides supervision signals that can be tractably approximated.
	To be specific,
	we adopt the aforementioned chain-based modeling
	for extracting the knowledge stored in the target model 
	while utilizing all the prior knowledge for optimization simultaneously.
	\footnote{The detailed deriving is elaborated in Appendix.~\ref{ssec: deriving MC-GRA}.}
	The relaxation power hails from
	approaching the known variable $\mathcal{K}_i$ of \texttt{ORI-chain} 
	by the locationally corresponding $\bm{Z}_{\bm{\hat{A}}}^i$ generated by \texttt{GRA-chain},
	namely,
	%
	%
	%
	\begin{equation}	
		\begin{split}
			\text{MC-GRA: }
			\bm{\hat{A}}^* = \arg \max_{\bm{\hat{A}}} 
			\underbrace{\alpha_{p} I(\bm{H}_A ; \bm{H}^{i}_{\bm{\hat{A}}})}_{\text{propagation approximation}} \\
			+ \underbrace{\alpha_{o} I(\bm{Y}_A ; \bm{Y}_{\bm{\hat{A}}})
				+ \alpha_{s} I(Y ; \bm{Y}_{\bm{\hat{A}}})}_{\text{outputs approximation}}
			- \underbrace{\alpha_{c} H(\bm{\hat{A}})}_{\text{complexity}}.
		\end{split}
		\label{eqn: MC-GRA}
	\end{equation}
	Note that MC-GRA is a maximin game:
	it maximizes the approximation of forward processes of the two Markov chains,
	while minimizing the complexity in each transition with $\bm{\hat{A}}$ 
	to avoid trivial solutions by constraining the density.
	
	\begin{remark}
		The adaptive power of MC-GRA comes from its leveraging any prior knowledge set.
		That is, the propagation approximation term in Eq.~\eqref{eqn: MC-GRA}
		for $\bm{H}_A$ works once obtained,
		while the outputs approximation term for $\bm{\hat{Y}}_A$ and $Y$.
		Thus, it can be utilized for all the $7$ settings in Tab.~\ref{tab:understanding-MI-term-ensemble}.
	\end{remark}

	\textbf{Parameterize Eq.~\eqref{eqn: MC-GRA} with different forms.}
	For approximating the original adjacency in a learnable manner,
	the recovered adjacency is parameterized and updated with the relaxed objective.
	Each time forward,
	an adjacency $\bm{\hat{A}}$ 
	is sampled from its parameterized distribution 
	as $\bm{\hat{A}} \! \sim \! \mathbb{P}_{\bm{\phi}}(\bm{\hat{A}})$.
	Technically,
	three implementations of $\mathbb{P}_{\bm{\phi}}(\bm{\hat{A}})$ 
	with learnable weights $\bm{\phi}$ are listed below with increasing complexity.
	\vspace{-6pt}
	\begin{itemize}[leftmargin=*]
		\setlength\itemsep{0.1em}
		\item
		Formulating $\bm{\hat{A}}$ as the only learnable parameter and directly optimizing it, \textit{i.e.},
		$\mathbb{P}_{\bm{\phi}}(\bm{\hat{A}}) = \bm{\hat{A}} \in  [0,1]^{N \times N}$. 
		
		\item
		A Gaussian distribution $\mathbb{P}_{\bm{\phi}} = \mathcal{N}(\bm{\mu}, \bm{\sigma}^2)$
		with two learnable parameters $\bm{\mu}, \bm{\sigma} \in  [0,1]^{N \times N}$.
		That is utilized to generate $\bm{\hat{A}}$ 
		as	$\bm{\hat{A}} = \bm{\mu} + \epsilon \bm{\sigma}$,
		where random noise $\epsilon \sim \mathcal{N}(0,1)$.
		
		\item
		A parameterized generator $f_{\bm{\phi}}(\cdot)$ 
		initialized with the same architecture and weights as $f_{\bm{\theta}^*}(\cdot)$.
		It generates the probabilistic distribution
		by $\mathbb{P}_{\bm{\phi}} \! = \! \sigma (\bm H_{I} \bm H_{I} ^ \top) \! \in \!  [0,1]^{N \times N}$,
		where $I$ is the identity matrix
		and $\bm H_{I} \! = \! f_{\bm \phi}(I, X)$.
	\end{itemize}
	\vspace{-6pt}
	
	
	\begin{figure*}[t!]
		\centering
		\subfigure[Standard training.]
		{\includegraphics[width=5.5cm]{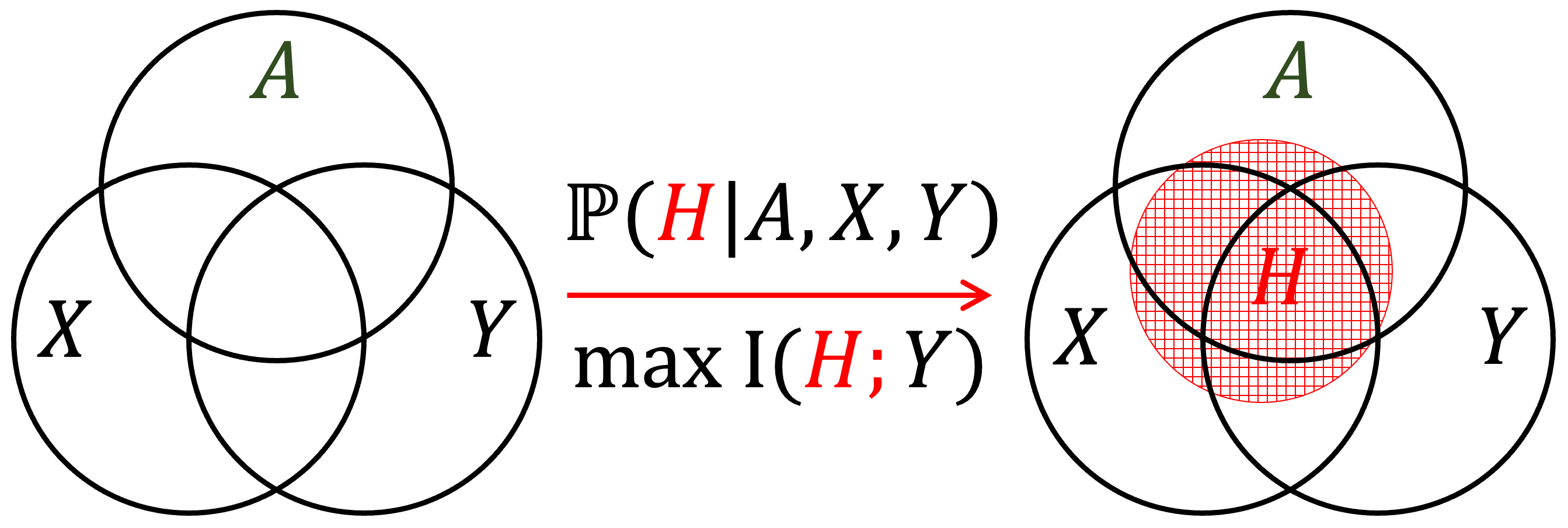}
			\label{fig: three-ball-standard-training}}
		\hfill
		\subfigure[Reconstruction attack by MC-GRA.]
		{\includegraphics[width=5.5cm]{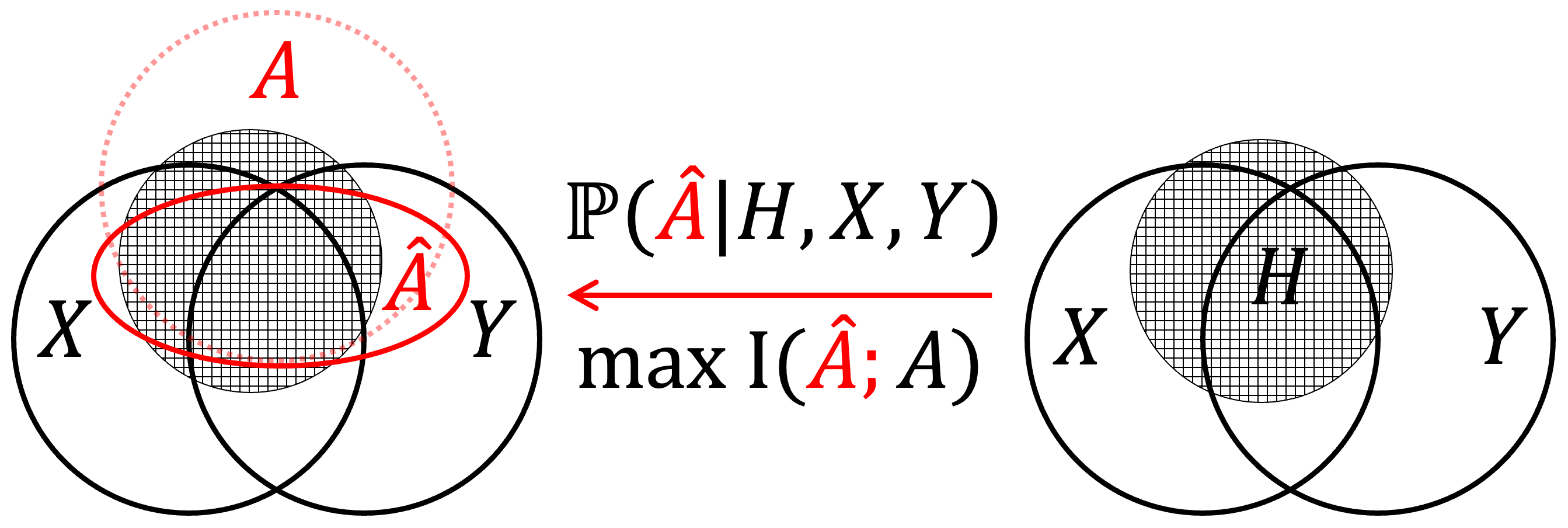}
			\label{fig: three-ball-GRA}}
		\hfill
		\subfigure[Defensive training by MC-GPB.]
		{\includegraphics[width=5.5cm]{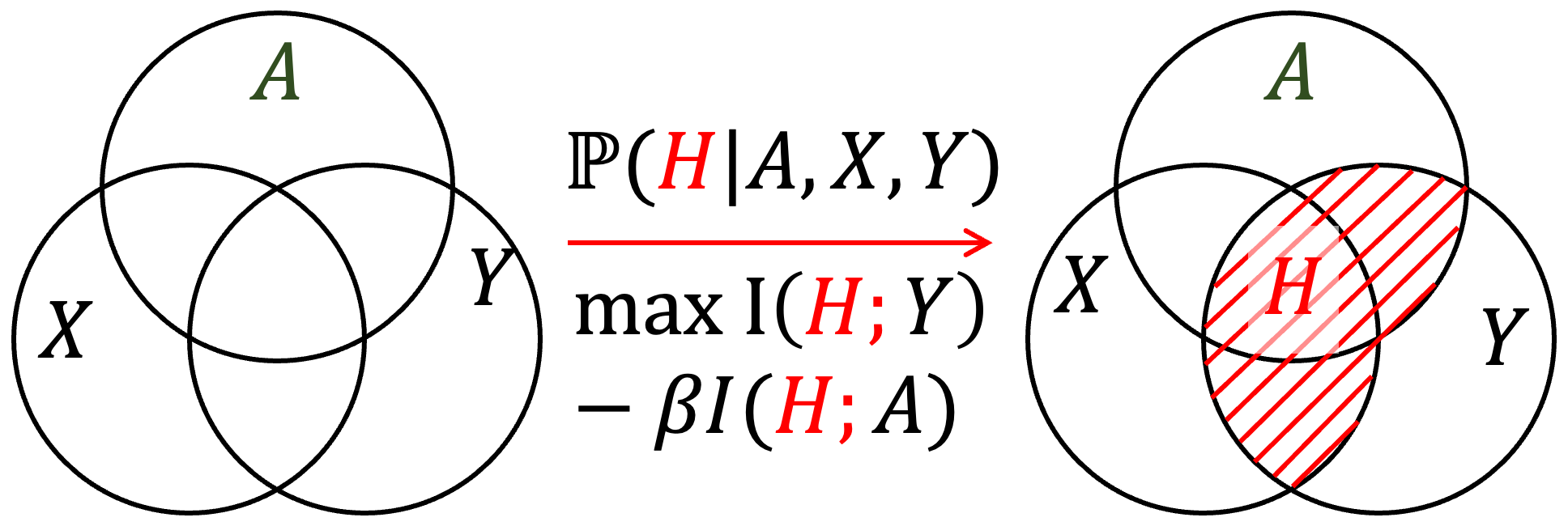}
			\label{fig: three-ball-GPB}}
		\vspace{-6px}
		\caption{
			Illustrations of the information properties
			regarding the training, attacking, and defending processes.
		}
		\label{fig: information properties of training and attack}
	\vspace{-10px}
	\end{figure*}

	\textbf{Optimize Eq.~\eqref{eqn: MC-GRA} with injected stochasticity.}
	Considering that 
	both $\bm{\hat{A}}$ and $X$ contribute to 
	to $\bm{Z} \in \{ \bm{H}_{A}, \bm{\hat{Y}}_{A}, Y\}$,
	the mutual dependence among these three variables is coupled together.
	%
	The spurious correlation $I(X; \bm{\hat{A}} | \bm{Z})$, 
	possibly degenerates the effectiveness of GRA~\citep{yang2022understanding, miao2022interpretable}.
	To solve,
	we inject stochasticity
	to further remove the spurious correlation among $\bm{\hat{A}}, X$ and $\bm{Z}$,
	where the probability of spurious correlation naturally increases with the length of the Markov chain.
	%
	Specifically,
	the debias power comes from
	the lower MI $I(\tilde{X}; \bm{\tilde{A}} | \bm{Z})$, 
	where $\tilde{X}, \bm{\tilde{A}}$ are perturbed
	as $\tilde{X} \! = \! X \oplus X_{\epsilon}, \bm{\tilde{A}} \! = \! \bm{\tilde{A}} \oplus A_{\epsilon}$.
	%
	%
	Technically,
	for each potential edge $e_{ij}$, 
	its existence $a_{ij}$ is sampled from a Bernoulli distribution, \textit{i.e.},
	$a_{ij} \! \sim \! \text{Bern}(\bm{p}_{ij})$,
	$a_{ij} \! \in \! \{0,1\}$, and $\bm{p}_{ij} \! = \! \bm{\hat{A}}_{ij} \! \in \! [0,1]$.
	To cooperate with the stochasticity
	and enable the back-propagation of gradients,
	the Gumbel-softmax reparameterization~\citep{kool2019stochastic, xie2019reparameterizable} is applied.
	%
	That is, the edge probabilities are perturbed as
	$\tilde{\bm{p}}_{ij} \! = \! \bm{p}_{ij} \! - \! \log( \! - \! \log (\epsilon))$,
	where $\epsilon \! \sim \! \text{Uniform}(0,1)$.
	
	

        \vspace{+2pt}
	\begin{remark}
		The incremental contribution of $\bm{\hat{A}}$ regarding $X$ to approximate $\bm{Z}$
		is $I(\bm{Z}; \bm{\hat{A}} | X) \! = \! I(\bm{Z}; A, X) \! - \! I(\bm{Z}; X)$.
		%
		%
		Here, a general solution for obtaining a more informative $\bm{\hat{A}}$
		is to reduce $I(\bm{Z}; X)$ via perturbation and promote $I(\bm{Z}; \bm{\hat{A}})$.
	\end{remark}
	
	\textbf{Theoretical analysis about Eq.~\eqref{eqn: MC-GRA}.}
	A rigorous analysis is conducted
	on the basis of information properties shown in Fig.~\ref{fig: information properties of training and attack}.
	The non-invertible nature of GNN forwarding,
	which hails from the adopted non-linear operations,
	decreases the information entropy by layers
	and forms a bottleneck that extracts informative signals from the input data.
	%
	As a result, the MI of two Markov chains (Eq.~\eqref{eqn: two-Markov-chains})
	is decreasing by layers, which is elaborated in the following Theorem~\ref{theorem: reducing MI with two chains}.
	Based on this,
	we derive a tractable bound in Theorem~\ref{theorem: GRA attack lower bound}
	to estimate the attack fidelity without the ground truth $A$.
	
	\begin{theorem}
		\label{theorem: reducing MI with two chains}
		The layer-wise transformations ${\bm{Z}_{A}^i \! \to \! \bm{Z}_{A}^{i+1}}$ are non-invertible,
		\textit{e.g.}, ${ \bm{Z}_{A}^{i+1} \! = \! \sigma(\psi(A) \cdot \bm{Z}_{A}^i \cdot \bm{\theta}^{i}) }$,
		where $\psi(A)$ is the graph convolution kernel, as in Eq.~\eqref{eqn: two-Markov-chains}.
		It leads to a lower MI between the two Markov chains, \textit{i.e.},
		${ I(\bm{Z}_{A}^{i}; \bm{Z}_{\bm{\hat{A}}}^{i}) \! - \! 
			I(\bm{Z}_{A}^{i+1}; \bm{Z}_{\bm{\hat{A}}}^{i+1}) \! \geq \! 0 }$.
		$\!$Proof. See Appendix.\ref{ssec: proof of reducing MI with two chains}.
	\end{theorem}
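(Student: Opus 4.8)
The plan is to reduce the statement to two successive applications of the data processing inequality (DPI). The key observation is that, once the true adjacency $A$, the recovered adjacency $\bm{\hat{A}}$, and the weights $\{\bm{\theta}^j\}$ are fixed, each transition in each chain is a \emph{deterministic} map: write $\bm{Z}_A^{i+1} = g^{i+1}_A(\bm{Z}_A^{i})$ with $g^{i+1}_A(\cdot) = \sigma\big(\psi(A)\cdot(\cdot)\cdot\bm{\theta}^{i+1}\big)$, and likewise $\bm{Z}_{\bm{\hat{A}}}^{i+1} = g^{i+1}_{\bm{\hat{A}}}(\bm{Z}_{\bm{\hat{A}}}^{i})$ with the same weights but the convolution kernel $\psi(\bm{\hat{A}})$, exactly as in Eq.~\eqref{eqn: two-Markov-chains}. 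The only randomness feeding both chains enters through the shared initial state $\bm{Z}^0 = X$; any injected stochasticity can be handled by conditioning on it throughout, so that the two chains remain honest functional Markov chains sharing $\bm{Z}^0$. Consequently, conditionally on $\bm{Z}_A^{i}$ the variable $\bm{Z}_A^{i+1}$ is a constant and hence independent of everything else, and symmetrically for the hatted chain.

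First I would record the two Markov structures this yields: $\bm{Z}_{\bm{\hat{A}}}^{i+1} \!-\! \bm{Z}_A^{i} \!-\! \bm{Z}_A^{i+1}$ (because $\bm{Z}_A^{i+1}$ is $\sigma(\bm{Z}_A^{i})$-measurable) and $\bm{Z}_A^{i} \!-\! \bm{Z}_{\bm{\hat{A}}}^{i} \!-\! \bm{Z}_{\bm{\hat{A}}}^{i+1}$ (because $\bm{Z}_{\bm{\hat{A}}}^{i+1}$ is $\sigma(\bm{Z}_{\bm{\hat{A}}}^{i})$-measurable). Applying DPI to the first chain gives $I(\bm{Z}_A^{i+1};\bm{Z}_{\bm{\hat{A}}}^{i+1}) \le I(\bm{Z}_A^{i};\bm{Z}_{\bm{\hat{A}}}^{i+1})$, and DPI on the second gives $I(\bm{Z}_A^{i};\bm{Z}_{\bm{\hat{A}}}^{i+1}) \le I(\bm{Z}_A^{i};\bm{Z}_{\bm{\hat{A}}}^{i})$. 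Chaining them,
\[
I(\bm{Z}_A^{i+1};\bm{Z}_{\bm{\hat{A}}}^{i+1}) \;\le\; I(\bm{Z}_A^{i};\bm{Z}_{\bm{\hat{A}}}^{i+1}) \;\le\; I(\bm{Z}_A^{i};\bm{Z}_{\bm{\hat{A}}}^{i}),
\]
which is precisely the claimed $I(\bm{Z}_A^{i};\bm{Z}_{\bm{\hat{A}}}^{i}) - I(\bm{Z}_A^{i+1};\bm{Z}_{\bm{\hat{A}}}^{i+1}) \ge 0$; iterating over $i$ yields the monotone decrease of the cross-chain mutual information along the whole forward pass.

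I would then use the non-invertibility hypothesis to explain why the drop is \emph{genuine} rather than vacuous: if every $g^{i+1}_A$ (and $g^{i+1}_{\bm{\hat{A}}}$) were one-to-one on the support of its input, invariance of mutual information under bijections would force equality at each step; it is precisely the many-to-one behaviour of $\sigma$ (e.g.\ ReLU) together with the possible rank deficiency of $\psi(A)\cdot(\cdot)\cdot\bm{\theta}^{i+1}$ that collapses distinct states and strictly shrinks the mutual information at each transition, forming the information bottleneck discussed before Theorem~\ref{theorem: reducing MI with two chains}. The main obstacle is not the DPI computation but the bookkeeping that justifies the ``deterministic transition $\Rightarrow$ conditional independence $\Rightarrow$ Markov chain'' step in the presence of the recovered adjacency $\bm{\hat{A}}$ and the Gumbel/Bernoulli noise used elsewhere in the paper; the clean fix is to carry out the entire argument conditionally on all exogenous randomness other than $X$, after which everything is a two-line estimate.
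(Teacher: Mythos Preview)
Your proposal is correct and is essentially the same argument as the paper's: both reduce to the data processing inequality applied once on each side of the pair $(\bm{Z}_A^i,\bm{Z}_{\bm{\hat{A}}}^i)$. The only packaging difference is that the paper first decomposes the layer map into an invertible linear part $\psi(A)\cdot(\cdot)\cdot\bm{\theta}^i$ (which preserves MI exactly) and the non-invertible activation $\sigma$ (which is the sole source of the drop), whereas you treat the entire layer $g_A^{i+1}$ as a single deterministic map and invoke DPI via the two Markov triples directly; your route is slightly more robust since it does not need $\psi(A)$ or $\bm{\theta}^i$ to be invertible, while the paper's route makes explicit that the strict decrease is attributable to $\sigma$.
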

	%
	
        \vspace{+2pt}
	\begin{theorem}[Tractable Lower Bound of Fidelity]
		\label{theorem: GRA attack lower bound}
		The attack fidelity satisfies
		$I(A; \bm{\hat{A}}) 
		\! \geq \! H(\bm{H}_{A}) \! - \! H_b(e) \! - \! P(e) \log(|\mathcal{H}|)$,
		where 
		$P(e) \! \triangleq \! P(\bm{H}_{A} \! \neq \! \bm{H}_{\bm{\hat{A}}})$ is the probability of approximation error,
		$\mathcal{H}$ denotes the support of $\bm{H}_{A}$,
		and $H_b(\cdot)$ is the binary entropy.
		\textit{Proof}. See Appendix.~\ref{ssec: proof of GRA attack lower bound}.
	\end{theorem}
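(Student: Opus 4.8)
The plan is to recognize the claimed inequality as a Fano-type bound on the representation-reconstruction error, obtained in two moves: first a data-processing step that lower-bounds the target quantity $I(A;\bm{\hat{A}})$ by the mutual information $I(\bm{H}_{A};\bm{H}_{\bm{\hat{A}}})$ between the ``matched'' latent states of the two chains, and then Fano's inequality applied to $\bm{H}_{A}$ with $\bm{H}_{\bm{\hat{A}}}$ playing the role of its estimator.

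For the first move, note that $\bm{H}_{A}$ is a deterministic function of $A$ (with $X$ held fixed, or conditioned on throughout), and likewise $\bm{H}_{\bm{\hat{A}}}$ is a deterministic function of $\bm{\hat{A}}$. Determinism gives $H(\bm{H}_{A}\mid A)=0$ and $H(\bm{H}_{\bm{\hat{A}}}\mid\bm{\hat{A}})=0$, which forces $\bm{H}_{A}\perp(\bm{\hat{A}},\bm{H}_{\bm{\hat{A}}})\mid A$ and $\bm{H}_{\bm{\hat{A}}}\perp(A,\bm{H}_{A})\mid\bm{\hat{A}}$; hence $\bm{H}_{A}\to A\to\bm{\hat{A}}\to\bm{H}_{\bm{\hat{A}}}$ is a Markov chain, regardless of how the attack actually produced $\bm{\hat{A}}$ from $\mathcal{K}$. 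Applying the data-processing inequality twice along this chain yields $I(\bm{H}_{A};\bm{H}_{\bm{\hat{A}}})\le I(A;\bm{H}_{\bm{\hat{A}}})\le I(A;\bm{\hat{A}})$. One may equivalently read this as the information contraction along the two chains already recorded in Theorem~\ref{theorem: reducing MI with two chains}.

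For the second move, assume $\bm{H}_{A}$ takes values in the finite support $\mathcal{H}$, so that $H(\bm{H}_{A})$ and $\log|\mathcal{H}|$ are meaningful, and regard $\bm{H}_{\bm{\hat{A}}}$ as an estimator of $\bm{H}_{A}$ with error probability $P(e)=P(\bm{H}_{A}\neq\bm{H}_{\bm{\hat{A}}})$. Fano's inequality gives $H(\bm{H}_{A}\mid\bm{H}_{\bm{\hat{A}}})\le H_b\!\big(P(e)\big)+P(e)\log(|\mathcal{H}|-1)\le H_b(e)+P(e)\log|\mathcal{H}|$. Substituting this into $I(\bm{H}_{A};\bm{H}_{\bm{\hat{A}}})=H(\bm{H}_{A})-H(\bm{H}_{A}\mid\bm{H}_{\bm{\hat{A}}})$ and composing with the data-processing bound of the first move gives exactly $I(A;\bm{\hat{A}})\ge H(\bm{H}_{A})-H_b(e)-P(e)\log|\mathcal{H}|$.

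The main obstacle is a matter of care rather than of depth: making the Markov structure $\bm{H}_{A}\to A\to\bm{\hat{A}}\to\bm{H}_{\bm{\hat{A}}}$ rigorous when $X$ feeds both endpoints — which is dealt with by conditioning on $X$ and then averaging, leaving each inequality intact — and stating explicitly the finite-support (quantization) assumption on $\bm{H}_{A}$ without which Fano and the entropy terms are not well defined. Beyond these two points, the argument is a two-line composition of the data-processing inequality and Fano's inequality.
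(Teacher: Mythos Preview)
Your proposal is correct and follows essentially the same approach as the paper: first use the data-processing inequality to bound $I(A;\bm{\hat{A}})\ge I(\bm{H}_{A};\bm{H}_{\bm{\hat{A}}})$ (the paper routes this through the layerwise chain of Theorem~\ref{theorem: reducing MI with two chains}, while you go directly via $\bm{H}_{A}\to A\to\bm{\hat{A}}\to\bm{H}_{\bm{\hat{A}}}$), and then apply Fano's inequality to $\bm{H}_{A}$ with $\bm{H}_{\bm{\hat{A}}}$ as estimator. If anything, you are more explicit than the paper about why the Markov structure holds when $X$ is shared by both chains and about the finite-support hypothesis needed for Fano.
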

	
	%
	The estimated $I(A; \bm{\hat{A}})$ can be a valuable reference
	when conducting GRA that maximizes such a MI term (see Fig.~\ref{fig: three-ball-GRA}).
	Besides, Theorem~\ref{theorem: GRA attack lower bound} also indicates that 
	a higher approximation $I(\bm{H}_{A} ; \bm{H}_{\bm{\hat{A}}})$
	with a lower error $P(e)$ can bring a higher $I(A; \bm{\hat{A}})$ that indicates a higher attack fidelity.
	Then, we indicate the worst privacy leakage
	with the optimal attack fidelity 
        as the upper bound 
	in following Theorem~\ref{theorem: optimal fidelity in GRA}.

	\begin{theorem}[The Optimal Fidelity]
		\label{theorem: optimal fidelity in GRA}
		The recovering fidelity 
		satisfies
		${I(A; X,Y,\bm{H}_{A}) \! - \! I(A; \bm{\hat{A}}) \! \geq \! 0}$.
		%
            Solving MC-GRA sufficiently yields a solution to achieve the optimal case,
            \textit{i.e.},
		$I(A; \bm{\hat{A}}^{*}) \! = \! I(A; X,Y,\bm{H}_{A})$.
		%
		Proof. $\!$See Appendix.~\ref{ssec: proof of optimal fidelity in GRA}.
	\end{theorem}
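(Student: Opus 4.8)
The plan is to establish the two assertions separately: the inequality $I(A;X,Y,\bm{H}_A)-I(A;\bm{\hat{A}})\ge 0$ by a data-processing argument, and the attainability $I(A;\bm{\hat{A}}^{*})=I(A;X,Y,\bm{H}_A)$ by exhibiting a maximizer of the MC-GRA objective Eq.~\eqref{eqn: MC-GRA} that is an information-sufficient encoding of the prior knowledge.

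For the inequality I would work conditionally on the fixed trained model $f_{\bm{\theta}^{*}}$, so that $\bm{H}_A=\psi_{\bm{\theta}^{*}}(A,X)$ and $\bm{\hat{Y}}_A=h_{\bm{\theta}^{*}}(\bm{H}_A)$ are deterministic maps; in particular $\bm{\hat{Y}}_A$ is a function of $\bm{H}_A$, hence carries no extra information about $A$, and the maximal knowledge set obeys $I(A;X,Y,\bm{H}_A,\bm{\hat{Y}}_A)=I(A;X,Y,\bm{H}_A)$. Any GRA attack outputs $\bm{\hat{A}}$ from $\mathcal{K}\subseteq\{X,Y,\bm{H}_A,\bm{\hat{Y}}_A\}$ together with randomness that is independent of $A$ (the Gumbel/Uniform noise and the reparameterization $\epsilon$), so $A\to(X,Y,\bm{H}_A)\to\bm{\hat{A}}$ is a Markov chain for every choice of $\mathcal{K}$. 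The data-processing inequality then yields $I(A;\bm{\hat{A}})\le I(A;\mathcal{K})\le I(A;X,Y,\bm{H}_A)$, which is the claimed non-negativity. The one point requiring care is that no information about $A$ reaches $\bm{\hat{A}}$ except through $\mathcal{K}$; this is precisely why the argument is run with $f_{\bm{\theta}^{*}}$ held fixed, so the trained parameters are not a separate, uncontrolled side channel.

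For attainability I would first record the information-theoretic converse: over all attacks the supremum of $I(A;\bm{\hat{A}})$ equals $I(A;X,Y,\bm{H}_A)$, a value realized by any sufficient statistic of $(X,Y,\bm{H}_A)$ for $A$. Under the pairwise-independence structure of \texttt{ORI-chain} in Eq.~\eqref{eqn: two-Markov-chains} a convenient witness is the edge-wise posterior $\bm{\hat{A}}^{*}_{ij}=\mathbb{P}(A_{ij}=1\mid X,Y,\bm{H}_A)$, which is sufficient and therefore satisfies $I(A;\bm{\hat{A}}^{*})=I(A;X,Y,\bm{H}_A)$; equivalently, instantiating the generator parameterization $f_{\bm{\phi}}$ at $\bm{\phi}=\bm{\theta}^{*}$ and driving $\bm{H}_{\bm{\hat{A}}}$ to $\bm{H}_A$ produces such a solution. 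It then remains to see that this $\bm{\hat{A}}^{*}$ is a global maximizer of Eq.~\eqref{eqn: MC-GRA}: the propagation term $\alpha_p I(\bm{H}_A;\bm{H}_{\bm{\hat{A}}})$ and the output terms $\alpha_o I(\bm{\hat{Y}}_A;\bm{\hat{Y}}_{\bm{\hat{A}}})+\alpha_s I(Y;\bm{\hat{Y}}_{\bm{\hat{A}}})$ are simultaneously maximized when $\bm{H}_{\bm{\hat{A}}}=\bm{H}_A$ (hence $\bm{\hat{Y}}_{\bm{\hat{A}}}=\bm{\hat{Y}}_A$), which by Theorem~\ref{theorem: GRA attack lower bound} corresponds to approximation error $P(e)=0$ and thus to the largest attainable $I(A;\bm{\hat{A}})$, while the complexity penalty $-\alpha_c H(\bm{\hat{A}})$ merely rules out degenerate dense solutions and, for small enough $\alpha_c$, does not move the optimum. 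The injected stochasticity enters here by suppressing the spurious term $I(\tilde X;\bm{\tilde A}\mid\bm{Z})$, forcing $\bm{\hat{A}}^{*}$ to encode the part of $\mathcal{K}$ that actually constrains $A$ rather than the part already explained by $X$, which is what makes the surrogate optimum coincide with a sufficient statistic.

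The step I expect to be the main obstacle is exactly this last bridge: showing that a maximizer of the surrogate objective Eq.~\eqref{eqn: MC-GRA} is also a maximizer of the true, intractable quantity $I(A;\bm{\hat{A}})$. The surrogate only controls layer-wise agreements $I(\bm{Z}_A^{i};\bm{Z}_{\bm{\hat{A}}}^{i})$, and Theorem~\ref{theorem: reducing MI with two chains} says these can only shrink along the chain, so one must propagate the top-layer match $\bm{H}_{\bm{\hat{A}}}=\bm{H}_A$ back down to a statement about $A$ itself; the pairwise-independence assumption (making the edge-wise posterior sufficient) and the hypothesis that MC-GRA is \emph{solved sufficiently} (the surrogate is globally maximized, not merely improved) are both essential here, and I would state them explicitly as the conditions under which the optimal case $I(A;\bm{\hat{A}}^{*})=I(A;X,Y,\bm{H}_A)$ is reached.
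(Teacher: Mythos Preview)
Your approach matches the paper's for the inequality: both arguments are data-processing, with the paper writing tersely that since $\bm{\hat{A}}$ is learned from $\mathcal{K}=\{X,Y,\bm{H}_A\}$ one has $I(Z;\bm{\hat{A}})\le I(Z;\mathcal{K})$ for all $Z$, hence $I(A;\bm{\hat{A}})\le I(A;X,Y,\bm{H}_A)$. Your version is more careful (fixing $f_{\bm{\theta}^*}$, noting $\bm{\hat{Y}}_A$ is a function of $\bm{H}_A$, isolating the independent noise), but the core idea is identical.

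For attainability, you go well beyond what the paper actually proves. The paper's argument is essentially a one-line assertion: it states the optimality conditions $I(A;\mathcal{K}\mid\bm{\hat{A}}^*)=I(A;\bm{\hat{A}}^*\mid\mathcal{K})=0$ and then claims that for some weights $\alpha_1,\alpha_2$ the MC-GRA optimizer satisfies them, without constructing a witness or arguing why the surrogate maximizer must be a sufficient statistic. Your proposal to exhibit the edge-wise posterior as an explicit sufficient statistic, and your honest identification of the ``bridge'' from the surrogate objective Eq.~\eqref{eqn: MC-GRA} to the true quantity $I(A;\bm{\hat{A}})$ as the genuine obstacle, are both additions the paper does not make. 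In other words, the gap you worry about is real, and the paper does not close it either --- it treats the attainability as a heuristic consequence of matching the chain variables rather than a theorem with a complete proof. Your added conditions (pairwise independence, small $\alpha_c$, global rather than local optimization) are reasonable explicit hypotheses under which the claim becomes defensible; the paper leaves them implicit.
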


	Theorem~\ref{theorem: optimal fidelity in GRA} 
        indicates that MC-GRA
	is capable of achieving optimal recovering fidelity.
	Nonetheless, the remaining information of $A$,
	\textit{i.e.}, $H(A | \bm{\hat{A}}^{*}) \! = \! H(A | \mathcal{K})$,
	is unobservable from $\mathcal{K} \! = \! \{X, Y, \bm{H}_A\}$.
	Such information refers to the non-overlapping area of $A$ shown in Fig.~\ref{fig: three-ball-GRA},
	which cannot be recovered unless additional information is provided.

	\vspace{-4pt}
	\section{To Learn Safer, You Must Forget More}
	\label{sec: GRA defense}
	
	Recall in Sec.~\ref{sec: overview},
	graph representations naturally comprise the connectivity information,
	while the graph information plane shows that increasing privacy information 
	is stored in the training phase.
	So, how can GNNs be \textit{GRA-resistant}?
	
	For defense, one must require the GNN to \textit{forget} 
	the privacy information in the training process,
	\textit{i.e.}, make the learned representations contain less information about adjacency.
	Nonetheless,
	it could easily degenerate the accuracy as the adjacency 
	also essentially supports the prediction.
	To solve the trade-off,
	we proposed 
	the \textit{Markov Chain-based Graph Privacy Bottleneck} (MC-GPB) framework
	to defend against GRA (see Fig.~\ref{fig: markov-defense}).
	Intuitively, the expected graph representations
	should come from a refined training process
	that learn the $\bm{\theta}^*$ from the original data $A, X, Y$.
	Inspired by the principle that
	\textit{to learn, you must forget}
	by the information bottleneck~\cite{tishby2000information, shwartz2017opening, wu2020graph}
	that constrains the data compression procedure $X \! \to \! Z \! \to \! Y$,
	we derive the defense objective as
	%
	%
	\vspace{-4pt}
	\begin{equation}	
		\begin{split}
			\text{MC-GPB:}
			\bm{\theta}^* \! = \! \arg \min_{\bm{\theta}} &
			\!
			\sum_{i=1}^{L} 
			\!
			\underbrace{-I(Y \! ; \! \bm{H}^{i}_{A})}_{\text{accuracy}}
			\! + \! \underbrace{\beta_{p}^{i} I(A ; \! \bm{H}^{i}_{A})}_{\text{privacy}} \\
			& \; + \sum_{i=1}^{L-1}
			\underbrace{\beta_{c}^{i} I(\bm{H}^{i}_{A}; \bm{H}^{i+1}_{A})}_{\text{complexity}}.
		\end{split}
		\label{eqn: tighter-defense-MI}
	\end{equation}
	%
	Note that
	MC-GPB is also a maximin game:
	the correlation between hidden representations and labels is maximized, 
	while that with adjacency is minimized instead.
	Analytically,
	it aims to minimize the conditional MI $I(A; \bm{H}^{i}_{A} | Y)$
	through balancing accuracy $I(Y; \bm{H}^{i}_{A})$ and privacy $I(A; \bm{H}^{i}_{A})$.
	And the transformation complexity $I(\bm{H}^{i}_{A}; \bm{H}^{i+1}_{A})$ 
	is constrained to relieve the smoothing effect of message propagation.


	\textbf{Promote forgetting in Eq.\eqref{eqn: tighter-defense-MI} with injected stochasticity.}
	Making GNNs forget more about adjacency leads to lower privacy risk.
	%
	%
	For simplicity,
	the DropEdge~\citep{rong2020dropedge} method is adopted,
	which performs random drop with probability $p$ 
	on each observed edge of $A$.
	The perturbed adjacency 
	$\tilde{A} \! = \! A \oplus A_{\epsilon} \! : A_{\epsilon} \! \perp \!\!\! \perp \!  A, Y, \bm{Z}$,
	which satisfies $I(\tilde{A}; Y) \! \leq \! I(A; Y)$
	and $I(\tilde{A}; \bm{Z}) \! \leq \! I(A; \bm{Z})$
	\citep{you2020graph, you2022bringing}.
	The injected stochasticity
	enforces the GNN model to discriminate the essential topological information $I(A;Y)$,
	rather than fully capturing the association between $A$ and $Y$
	that can be potentially spurious~\citep{zhao2022learning}.
	As such, the redundancy $I(\tilde{A}; \bm{Z} | Y)$ 
	is compressed to preserve privacy and maintain accuracy simultaneously.

	\begin{table*}[t!]
		\centering
		\caption{
			Results of MC-GRA with standard GNNs.
			Relative promotions (in $\%$) are computed \textit{w.r.t.} results in Tab.~\ref{tab:understanding-MI-term-ensemble}.
		}
		\vspace{-8px}
		\label{tab:exp-attack-results}
		\fontsize{8}{10}\selectfont
		\setlength\tabcolsep{7pt}
		\begin{tabular}{cccc|cccccc}
			\toprule
			$X$ & $\bm{H}_A$ &  $\bm{\hat{Y}}_A$  &  $Y$  & Cora & Citeseer &  Polblogs   & USA   &   Brazil &  AIDS   \\
			\midrule
			$\checkmark$ & $\checkmark$ &  &  &  .864 (\up{10.6$\% \! \uparrow$}) & .912 (\up{3.5$\% \! \uparrow$})  & .831 (\up{8.9$\% \! \uparrow$})  & .883 (\up{3.8$\% \! \uparrow$})  & .771 (\up{1.7$\% \! \uparrow$})  &  .574 (\up{10.1$\% \! \uparrow$})   \\ 
			$\checkmark$ &  & $\checkmark$ &   &  .839 (\up{7.4$\% \! \uparrow$}) & .902 (\up{2.3$\% \! \uparrow$})  & .836 (\up{8.2$\% \! \uparrow$})  & .913 (\up{10.5$\% \! \uparrow$})  & .800 (\up{9.2$\% \! \uparrow$})  &  .567 (\up{8.8$\% \! \uparrow$})  \\ 
			$\checkmark$ &  &  & $\checkmark$  &  .896 (\up{5.5$\% \! \uparrow$}) & .918 (\up{1.2$\% \! \uparrow$})  & .837 (\up{18.7$\% \! \uparrow$})  & .825 (\up{13.3$\% \! \uparrow$})  & .753 (\up{22.8$\% \! \uparrow$})  &  .574 (\up{9.9$\% \! \uparrow$})  \\ 
			\midrule
			$\checkmark$ & $\checkmark$ & $\checkmark$ &   &  .866 (\up{10.8$\% \! \uparrow$}) & .921 (\up{4.5$\% \! \uparrow$})  & .839 (\up{9.9$\% \! \uparrow$})  & .878 (\up{3.5$\% \! \uparrow$})  & .776 (\up{2.6$\% \! \uparrow$})  &  .572 (\up{9.7$\% \! \uparrow$})  \\ 
			$\checkmark$ & $\checkmark$ & & $\checkmark$   &  .905 (\up{6.5$\% \! \uparrow$}) & .930 (\up{2.5$\% \! \uparrow$})  & .832 (\up{6.8$\% \! \uparrow$})  & .878 (\up{3.5$\% \! \uparrow$})  & .758 (\up{2.0$\% \! \uparrow$})  &  .603 (\up{15.5$\% \! \uparrow$})  \\ 
			$\checkmark$ &  & $\checkmark$ & $\checkmark$   &  .897 (\up{5.6$\% \! \uparrow$}) & .928 (\up{2.3$\% \! \uparrow$})  & .839 (\up{6.8$\% \! \uparrow$})  & .870 (\up{3.3$\% \! \uparrow$})  & .758 (\up{3.7$\% \! \uparrow$})  &  .567 (\up{8.6$\% \! \uparrow$})  \\ 
			\midrule
			$\checkmark$ & $\checkmark$  & $\checkmark$ & $\checkmark$  &  .904 (\up{6.4$\% \! \uparrow$}) & .931 (\up{2.6$\% \! \uparrow$})  & .853 (\up{9.2$\% \! \uparrow$})  & .870 (\up{1.9$\% \! \uparrow$})  & .760 (\up{5.9$\% \! \uparrow$})  &  .588 (\up{12.6$\% \! \uparrow$})  \\ 
			\bottomrule
		\end{tabular}
	\vspace{-2px}
	\end{table*}

	\begin{table*}[t!]
		\centering
		\caption{
			Results of GRA with MC-GPB protected GNNs.
			Relative reductions are computed \textit{w.r.t.} results in Tab.~\ref{tab:understanding-MI-term-comparison}.
			$I(A; \bm{H}_A), I(A; \bm{\hat{Y}}_A)$ 
			are non-learnable GRA~\citep{he2021stealing}
                while
			$I(A; \bm{H}_{\bm{\hat{A}}}^1)$ 
			is the learnable GRA~\citep{zhang2021graphmi}.
		}
		\vspace{-8px}
		\label{tab:exp-defense-results}
		\fontsize{8}{10}\selectfont
		\setlength\tabcolsep{11pt}
		\begin{tabular}{c|cccccc}
			\toprule
			MI & Cora & Citeseer &  Polblogs   & USA   &   Brazil &  AIDS   \\
			\midrule
			$I(A; \bm{H}_A)$ &  .706 (\down{7.8$\% \! \downarrow$}) & .750 (\down{1.3$\% \! \downarrow$})  & .724 (\down{5.1$\% \! \downarrow$})  & .716 (\down{15.8$\% \! \downarrow$})  & .745 (\down{1.7$\% \! \downarrow$})  &  .564 (\down{3.4$\% \! \downarrow$})  \\ 
			$I(A; \bm{\hat{Y}}_A)$  &   .704 (\down{0.1$\% \! \downarrow$}) & .730 (\down{1.7$\% \! \downarrow$})  & .705 (\down{8.7$\% \! \downarrow$})  & .587 (\down{28.9$\% \! \downarrow$})  & .692 (\down{5.5$\% \! \downarrow$})  &  .559 (\down{0.4$\% \! \downarrow$})  \\ 
			$I(A; \bm{H}_{\bm{\hat{A}}}^1)$  & .625 (\down{9.9$\% \! \downarrow$}) & .691 (\down{9.8$\% \! \downarrow$})  & .506 (\down{26.3$\% \! \downarrow$})  & .300 (\down{64.5$\% \! \downarrow$})  & .609 (\down{25.1$\% \! \downarrow$})  &  .514 (\down{10.6$\% \! \downarrow$})  \\ 
			\midrule
			Acc. &   .734 (\down{3.0$\% \! \downarrow$}) & .602 (\down{4.4$\% \! \downarrow$})  & .830 (\down{1.1$\% \! \downarrow$})  & .391 (\down{16.8$\% \! \downarrow$})  & .808 (\up{5.1$\% \! \uparrow$})  &  .668 (\up{0.0$\% \! \uparrow$})  \\ 
			\bottomrule
		\end{tabular}
        \vspace{-2px}
	\end{table*}

	\begin{table*}[t!]
		\centering
		\caption{
			Results of MC-GRA with MC-GPB protected GNNs.
			Relative reductions are computed \textit{w.r.t.} results in Tab.~\ref{tab:exp-attack-results}.
		}
		\vspace{-8px}
		\label{tab:exp-defense-results-2}
		\fontsize{8}{10}\selectfont
		\setlength\tabcolsep{7pt}
		\begin{tabular}{cccc|cccccc}
			\toprule
			$X$ & $\bm{H}_A$ &  $\bm{\hat{Y}}_A$  &  $Y$  & Cora & Citeseer &  Polblogs   & USA   &   Brazil &  AIDS   \\
			\midrule
			$\checkmark$ & $\checkmark$ &  &  &  .816 (\down{5.5$\% \! \downarrow$}) & .871 (\down{4.4$\% \! \downarrow$})  & .748 (\down{9.9$\% \! \downarrow$})  & .841 (\down{4.7$\% \! \downarrow$})  & .752 (\down{2.4$\% \! \downarrow$})  &  .503 (\down{12.3$\% \! \downarrow$})   \\ 
			$\checkmark$ &  & $\checkmark$ &   &  .817 (\down{9.7$\% \! \downarrow$}) & .843 (\down{6.5$\% \! \downarrow$})  & .707 (\down{15.4$\% \! \downarrow$})  & .844 (\down{7.5$\% \! \downarrow$})  & .747 (\down{6.6$\% \! \downarrow$})  &  .458 (\down{19.2$\% \! \downarrow$})  \\ 
			$\checkmark$ &  &  & $\checkmark$  &  .892 (\down{0.4$\% \! \downarrow$}) & .888 (\down{3.2$\% \! \downarrow$})  & .699 (\down{16.4$\% \! \downarrow$})  & .738 (\down{10.5$\% \! \downarrow$})  & .700 (\down{7.0$\% \! \downarrow$})  &  .490 (\down{14.6$\% \! \downarrow$})  \\ 
			\midrule
			$\checkmark$ & $\checkmark$ & $\checkmark$ &   &  .804 (\down{7.1$\% \! \downarrow$}) & .894 (\down{2.9$\% \! \downarrow$})  & .706 (\down{15.8$\% \! \downarrow$})  & .754 (\down{14.1$\% \! \downarrow$})  & .636 (\down{16.7$\% \! \downarrow$})  &  .546 (\down{3.7$\% \! \downarrow$})  \\ 
			$\checkmark$ & $\checkmark$ & & $\checkmark$   &  .890 (\down{1.6$\% \! \downarrow$}) & .881 (\down{5.2$\% \! \downarrow$})  & .731 (\down{12.1$\% \! \downarrow$})  & .808 (\down{5.6$\% \! \downarrow$})  & .705 (\down{6.9$\% \! \downarrow$})  &  .507 (\down{15.9$\% \! \downarrow$})  \\ 
			$\checkmark$ &  & $\checkmark$ & $\checkmark$   &  .858 (\down{4.3$\% \! \downarrow$}) & .903 (\down{2.6$\% \! \downarrow$})  & .791 (\down{5.7$\% \! \downarrow$})  & .768 (\down{11.7$\% \! \downarrow$})  & .656 (\down{13.4$\% \! \downarrow$})  &  .511 (\down{9.8$\% \! \downarrow$})  \\ 
			\midrule
			$\checkmark$ & $\checkmark$  & $\checkmark$ & $\checkmark$  &  .864 (\down{4.4$\% \! \downarrow$}) & .891 (\down{4.2$\% \! \downarrow$})  & .757 (\down{11.2$\% \! \downarrow$})  & .853 (\down{1.9$\% \! \downarrow$})  & .637 (\down{16.1$\% \! \downarrow$})  &  .547 (\down{6.9$\% \! \downarrow$})  \\ 
			\bottomrule
		\end{tabular}
	
        \vspace{-8px}
	\end{table*}
	
	
	\textbf{Promote feasibility via differentiable measurements.}
	Solving Eq.~\eqref{eqn: MC-GRA} and Eq.~\eqref{eqn: tighter-defense-MI}
	requires tractable objectives.
	%
	Given two variables,
	$X \! \in \! \mathbb{R}^{N \times D_x}$ and $Y \! \in \! \mathbb{R}^{N \times D_y}$,
	we calculate the similarity $s(X, Y)$ to approximate $I(X, Y)$
	considering six differentiable measurements 
        \citep{kornblith2019similarity}.
	Technical details can be found in Appendix.~\ref{ssec: differentiable-MI}.

	%

	\textbf{Theoretical analysis about Eq.~\eqref{eqn: tighter-defense-MI}.}
	Regularizing the graph representations $\bm{H}_{A}$
	with a lower $I(A; \bm{H}_{A})$ indicates a lower $I(A; X,Y,\bm{H}_{A})$,
	and thus, the optimal fidelity $I(A; \bm{\hat{A}}^{*})$ is also decreased
	(refer to Theorem~\ref{theorem: optimal fidelity in GRA}).
	Note that 
	accuracy is prior to privacy in optimization with trade-offs,
	which corresponds to the concept of sufficient statistics.
	\begin{proposition}[Sufficient Statistics]
		\label{prop: sufficient graph representation}
		Denote the sufficient statistics of $X$ as $\bm{Z}$.
		Namely, $\bm{Z}$ is a compression of $X$ as $\bm{Z} \! = \! f(X)$,
		and sufficiency satisfies $I(\bm{Z};Y) \! = \! I(X;Y)$.
	\end{proposition}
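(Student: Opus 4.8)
The plan is to recognize that the proposition records a \emph{definition} of sufficiency together with the single inequality that makes it meaningful: the stated equality is precisely the tight case of the data-processing inequality. Accordingly, there is no substantive theorem to establish, and the ``proof'' reduces to a short consistency check with two parts. First, I would verify that $I(\bm{Z};Y)\le I(X;Y)$ always holds whenever $\bm{Z}=f(X)$, so that the equality $I(\bm{Z};Y)=I(X;Y)$ is exactly the statement that no label-relevant information is discarded by the compression. Second, I would show that this equality is equivalent to the conditional independence $Y\perp X\mid\bm{Z}$, which is the probabilistic content underlying the word ``sufficiency.''

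First I would observe that because $\bm{Z}=f(X)$ is a deterministic function of $X$, the triple obeys the Markov chain $Y\to X\to\bm{Z}$, and moreover $\bm{Z}$ carries no new information once $X$ is revealed, so that $I(Y;X,\bm{Z})=I(Y;X)$. Next I would expand the same joint quantity by the chain rule of mutual information,
\begin{equation}
I(Y;X,\bm{Z})=I(Y;\bm{Z})+I(Y;X\mid\bm{Z}),
\end{equation}
and combine the two expressions to obtain the identity $I(Y;X)=I(Y;\bm{Z})+I(Y;X\mid\bm{Z})$. Since $I(Y;X\mid\bm{Z})\ge 0$, the data-processing inequality $I(\bm{Z};Y)\le I(X;Y)$ follows immediately, and equality holds if and only if $I(Y;X\mid\bm{Z})=0$, i.e. $Y\perp X\mid\bm{Z}$.

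This reading makes the proposition self-contained: the equality $I(\bm{Z};Y)=I(X;Y)$ posited as the definition of sufficiency is precisely the condition under which $\bm{Z}$ retains all of $X$'s predictive information about $Y$, which is exactly what licenses treating accuracy (governed by $I(Y;\bm{H}_A)$) as prior to privacy in the MC-GPB objective of Eq.~\eqref{eqn: tighter-defense-MI}. There is no genuine obstacle; the only point requiring care is the functional-determinacy step $I(Y;X,\bm{Z})=I(Y;X)$, which must be justified directly from $\bm{Z}=f(X)$ (so that $\bm{Z}$ adds no conditional information about $Y$ given $X$) rather than merely from the Markov property, together with the usual caveat that the relevant entropies and mutual informations are assumed finite and well defined.
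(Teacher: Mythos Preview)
Your reading is correct: the proposition is purely definitional, and the paper provides no proof for it at all (there is no appendix section devoted to Proposition~\ref{prop: sufficient graph representation}; it is stated and immediately used). Your consistency check via the chain rule and the data-processing inequality is sound and goes beyond what the paper does, supplying the justification that the equality $I(\bm{Z};Y)=I(X;Y)$ is the tight case of $I(\bm{Z};Y)\le I(X;Y)$ and is equivalent to $I(Y;X\mid\bm{Z})=0$; the paper simply takes the definition as given.
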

	%
	\begin{theorem}[Maximum Adjacency Information]
		\label{theorem: maximum adjacency information}
		The MI between representations $\bm{H}_{A}$
		and adjacency $A$ satisfies that
		$I(A; \bm{H}_{A}) \! \leq \! I(A;A) \! = \! H(A)$.
		\textit{Proof}. See Appendix.~\ref{ssec: proof of maximum adjacency information}.
	\end{theorem}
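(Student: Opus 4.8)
The plan is to reduce the statement to the elementary fact that the mutual information between two random variables never exceeds the entropy of either one, combined with the observation that $\bm{H}_A$ is, in \texttt{ORI-chain}, a transformation of the inputs that is deterministic in $A$ up to independent noise. Concretely, recall that $\bm{H}_A$ is produced from the pair $(A,X)$ by the $L$-layer message-passing map of Eq.~\eqref{eqn: two-Markov-chains}, optionally composed with the edge-perturbation stochasticity of Sec.~\ref{sec: GRA defense}; in either case the perturbation $A_\epsilon$ is by construction independent of $A$ (the same property already used for $I(\tilde A;\bm Z)\le I(A;\bm Z)$), so $A \to (A,X) \to \bm{H}_A$ is a Markov chain.

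First I would apply the data-processing inequality along this chain to get $I(A;\bm{H}_A) \le I(A;(A,X))$, and then expand the right-hand side as $I(A;(A,X)) = H(A) - H(A\mid A,X) = H(A)$, since $A$ is a function of $(A,X)$ and hence $H(A\mid A,X)=0$. An equivalent one-line route, avoiding the DPI entirely, is $I(A;\bm{H}_A) = H(A) - H(A\mid \bm{H}_A) \le H(A)$, using only that $A$ is discrete so $H(A\mid \bm{H}_A)\ge 0$. Either way, the right endpoint is immediate: $I(A;A) = H(A) - H(A\mid A) = H(A)$, whence $I(A;\bm{H}_A) \le I(A;A) = H(A)$.

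I do not anticipate a real obstacle; the only subtle point is making "$\bm{H}_A$ is a transformation of $(A,X)$'' precise enough to license the DPI in the presence of injected stochasticity, which is handled exactly by the independence of the noise from $A$. For context, this bound is the defense-side counterpart of Theorem~\ref{theorem: optimal fidelity in GRA}: since $H(A)$ caps $I(A;\bm{H}_A)$, it also caps the quantity $I(A;X,Y,\bm{H}_A)$ controlling the optimal recovering fidelity, so driving $I(A;\bm{H}_A)$ down through MC-GPB provably shrinks the best attainable leakage.
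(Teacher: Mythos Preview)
Your proposal is correct and, in its ``equivalent one-line route'' $I(A;\bm{H}_A)=H(A)-H(A\mid \bm{H}_A)\le H(A)$, matches the paper's proof almost verbatim: the paper simply invokes the general fact $I(X;Y)\le I(X;X)=H(X)$ for any $X,Y$ and instantiates it with $X=A$, $Y=\bm{H}_A$. Your additional DPI route through the chain $A\to(A,X)\to \bm{H}_A$ and the care about injected stochasticity are unnecessary here (and not in the paper), since the entropy bound holds for \emph{arbitrary} $\bm{H}_A$ regardless of how it is produced.
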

	\vspace{-6pt}
	As such, Theorem~\ref{theorem: maximum adjacency information} indicates that 
        the graph representations might maintain the maximum information of private $A$,
        as
        $\max I(A; \bm{H}_{A}) \! = \! H(A)$.
        Thus, the only sufficient guarantee is not \textit{safe} enough, and
	the representations $\bm{H}_A$ potentially stores excess adjacency information
	$I(A; \bm{H}_A | Y)$, as illustrated in  Fig.~\ref{fig: three-ball-standard-training}.
	To reduce,
	we refer to the minimal sufficient statistics in Proposition~\ref{prop: minimal graph representation},
	and deduce the lower bound of adjacency information
	in Theorem~\ref{theorem: minimum adjacency information} as follows.
	%
	\begin{proposition}[Minimal Sufficient Statistics]
		\label{prop: minimal graph representation}
		Denote sufficient statistics
		(Proposition~\ref{prop: sufficient graph representation})
		of $X$ as $\bm{Z}$,
		and the minimal sufficient statistics, $\bm{Z}^*$, 
		is the optimal graph representation, namely,
		$\bm{Z}^* = \arg \min_{\bm{Z}: \; I(\bm{Z};Y) = I(X;Y)} I(\bm{Z}; X)$.
	\end{proposition}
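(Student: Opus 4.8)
\emph{Proof proposal.} The statement is essentially a variational characterization of the optimal (minimal sufficient) graph representation, so the task is to show that the right-hand minimization is well-posed and that its optimum coincides with the classical notion of a minimal sufficient statistic. The plan is to first rewrite the objective using the Markov structure forced by $\bm{Z} = f(X)$. Since $\bm{Z}$ is a function of $X$ alone, the chain $Y \to X \to \bm{Z}$ is Markov, hence $I(\bm{Z}; Y \mid X) = 0$; applying the chain rule of mutual information to $I(\bm{Z}; X, Y)$ in two ways then gives $I(\bm{Z}; X) = I(\bm{Z}; X, Y) = I(\bm{Z}; Y) + I(\bm{Z}; X \mid Y)$. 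Imposing the sufficiency constraint $I(\bm{Z}; Y) = I(X; Y)$ from Proposition~\ref{prop: sufficient graph representation}, this becomes
\[
I(\bm{Z}; X) = I(X; Y) + I(\bm{Z}; X \mid Y),
\]
where the first term is a constant independent of $\bm{Z}$ and the nonnegative second term $I(\bm{Z}; X \mid Y)$ measures the label-irrelevant (nuisance) information carried by $\bm{Z}$. Consequently $\arg\min_{\bm{Z}:\, I(\bm{Z};Y)=I(X;Y)} I(\bm{Z};X)$ is equivalent to $\arg\min I(\bm{Z}; X \mid Y)$, and the minimum is bounded below by $I(X;Y)$, attained exactly when the double-Markov condition $Y - \bm{Z}^* - X$ holds.

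Next I would exhibit a minimizer, both to show the $\arg\min$ is attained and to identify $\bm{Z}^*$ explicitly: take the equivalence classes of $X$ under ``$x \sim x' \iff P_{Y\mid X}(\cdot\mid x) = P_{Y\mid X}(\cdot\mid x')$'', equivalently the map $X \mapsto P_{Y\mid X}(\cdot\mid X)$, which is the classical minimal sufficient statistic. One checks it is a deterministic function of $X$ (so $Y - X - \bm{Z}^*$ is Markov), that it is sufficient ($P_{Y\mid \bm{Z}^*} = P_{Y\mid X}$ by construction, giving $I(\bm{Z}^*;Y) = I(X;Y)$), and that it is a function of every sufficient statistic, which by the data-processing inequality forces $I(\bm{Z}^*; X) \le I(\bm{Z}; X)$ for all feasible $\bm{Z}$. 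This simultaneously yields existence, the minimality property, and uniqueness up to relabeling, matching the claimed characterization; it also exhibits $\bm{Z}^*$ as the solution of the information-bottleneck Lagrangian $\min_{\bm{Z}} I(\bm{Z}; X) - \beta I(\bm{Z}; Y)$ in the regime enforcing full sufficiency, which is the link the subsequent Theorem~\ref{theorem: minimum adjacency information} will exploit.

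The step I expect to be the main obstacle is the achievability/identification part: rigorously connecting the information-theoretic optimum (minimizing $I(\bm{Z};X\mid Y)$) to the classical minimal sufficient statistic, since one must be careful about whether a \emph{deterministic} statistic attains the lower bound $I(\bm{Z}^*;X)=I(X;Y)$ --- the double-Markov condition $Y - \bm{Z}^* - X$ can in general require allowing stochastic encoders $\bm{Z}\mid X$ --- and about measurability and existence of the minimizing partition when $X$ is continuous. Pinning down these points, or stating the regularity assumptions under which the clean equality holds, is where the real work lies; everything else reduces to the two-line mutual-information bookkeeping displayed above.
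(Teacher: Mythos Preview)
Your proposal is mathematically sound, but it is doing work the paper never attempts. In the paper, Proposition~\ref{prop: minimal graph representation} is not proved at all: it functions purely as a \emph{definition} of the minimal sufficient statistic $\bm{Z}^*$ via the variational formula $\bm{Z}^* = \arg\min_{\bm{Z}:\,I(\bm{Z};Y)=I(X;Y)} I(\bm{Z};X)$, and the paper simply uses this characterization downstream (notably in the proof of Theorem~\ref{theorem: GPB approximate the optimal representation}). There is no accompanying appendix section, no existence argument, and no link to the classical likelihood-ratio partition; the proposition is a named recall of a known concept, parallel to Proposition~\ref{prop: sufficient graph representation}.

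What you have written --- the decomposition $I(\bm{Z};X)=I(X;Y)+I(\bm{Z};X\mid Y)$ under the sufficiency constraint, the identification of the minimizer with the equivalence classes of $P_{Y\mid X}$, and the discussion of when a deterministic encoder attains the bound --- is a genuine and correct justification that the variational definition agrees with the classical one. It is strictly more than the paper provides. The caveat you flag about deterministic versus stochastic encoders and the achievability of $I(\bm{Z}^*;X)=I(X;Y)$ is real and is precisely the sort of subtlety the paper glosses over; but since the paper treats the proposition as definitional, there is no ``paper proof'' to compare against, and your concerns about achievability, while legitimate in a rigorous treatment, are outside the scope of what was intended here.
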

	%
	\begin{theorem}[Minimum Adjacency Information]
		\label{theorem: minimum adjacency information}
		For any sufficient graph representations $\bm{H}_{A}$ 
		of adjacency $A$ \textit{w.r.t.} task $Y$,
		its MI with $A$ satisfies that
		$I(A; \bm{H}_{A}) \! \geq \! I(A; Y)$.
		The minimum information $I(A; \bm{H}_{A}) \! = \! I(A; Y)$
		can be achieved iff $I(A; \bm{H}_{A} | Y) \! = \! 0$.
		\textit{Proof}. See Appendix.~\ref{ssec: proof of minimum adjacency information}.
	\end{theorem}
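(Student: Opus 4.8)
\emph{Proof plan.} The plan is to split $I(A;\bm{H}_{A})$ via the chain rule of mutual information with the true label $Y$ as an auxiliary variable, and then to show that the term coupling $A$ with $Y$ \emph{given} $\bm{H}_{A}$ vanishes by sufficiency. Writing $I(A;\bm{H}_{A},Y)$ in its two orderings,
\begin{equation}
  I(A;Y) + I(A;\bm{H}_{A}\mid Y) \;=\; I(A;\bm{H}_{A},Y) \;=\; I(A;\bm{H}_{A}) + I(A;Y\mid\bm{H}_{A}),
  \label{eqn: chain-rule-two-orderings}
\end{equation}
so it suffices to establish $I(A;Y\mid\bm{H}_{A})=0$. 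Granting that, Eq.~\eqref{eqn: chain-rule-two-orderings} reduces to $I(A;\bm{H}_{A}) = I(A;Y) + I(A;\bm{H}_{A}\mid Y) \ge I(A;Y)$ by nonnegativity of conditional mutual information, and equality holds precisely when $I(A;\bm{H}_{A}\mid Y)=0$, which is the asserted iff condition.

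The only nontrivial step is to derive $I(A;Y\mid\bm{H}_{A})=0$ from the hypothesis that $\bm{H}_{A}$ is a \emph{sufficient} graph representation for task $Y$. First I would make the sufficiency statement precise in the spirit of Proposition~\ref{prop: sufficient graph representation}: $\bm{H}_{A}$ is a deterministic transform of the GNN input along \texttt{ORI-chain} and satisfies $I(\bm{H}_{A};Y)=I(A,X;Y)$. Because $\bm{H}_{A}$ is a function of $(A,X)$, the variables form the Markov chain $Y\!\to\!(A,X)\!\to\!\bm{H}_{A}$; the data processing inequality together with the sufficiency equality then forces $I(A,X;Y\mid\bm{H}_{A})=0$, i.e.\ $Y\perp(A,X)\mid\bm{H}_{A}$. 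Monotonicity of conditional mutual information gives $0\le I(A;Y\mid\bm{H}_{A})\le I(A,X;Y\mid\bm{H}_{A})=0$, hence $I(A;Y\mid\bm{H}_{A})=0$, which closes the proof.

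I expect the main obstacle to be definitional rather than computational: pinning down which notion of ``sufficiency w.r.t.\ task $Y$'' is in force — sufficiency of $\bm{H}_{A}$ for $Y$ relative to the whole input $(A,X)$, as opposed to the $X$-only reading of Proposition~\ref{prop: sufficient graph representation} — so that the Markov relation $A\!\to\!\bm{H}_{A}\!\to\!Y$ is legitimately available; once that is fixed, the remainder is one application each of the chain rule, the DPI, and nonnegativity. Finally I would remark that this bound is the dual of Theorem~\ref{theorem: maximum adjacency information}: the slack $I(A;\bm{H}_{A})-I(A;Y)=I(A;\bm{H}_{A}\mid Y)$ is exactly the ``excess'' adjacency information drawn as the non-overlapping region in Fig.~\ref{fig: three-ball-standard-training}, and driving it to zero recovers the minimal sufficient representation of Proposition~\ref{prop: minimal graph representation}, which is precisely what the privacy term of MC-GPB targets.
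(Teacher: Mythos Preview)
Your proposal is correct and follows essentially the same route as the paper's proof: both hinge on the chain rule for $I(A;\bm{H}_A,Y)$ together with the fact that sufficiency forces $I(A;Y\mid\bm{H}_A)=0$ (equivalently, the Markov relation $A\to\bm{H}_A\to Y$), from which the inequality and the iff condition drop out. The paper simply asserts the Markov chain $A\to\bm{H}_A\to Y$ and invokes the data processing inequality directly for the bound, then uses the chain rule only for the equality case, whereas you run the chain rule from the outset and derive the vanishing of $I(A;Y\mid\bm{H}_A)$ explicitly; your treatment is slightly more careful, and your flag about whether sufficiency is relative to $A$ or to $(A,X)$ is apt---the paper sidesteps this by writing ``$\bm{H}_A$ can be seen as extracted from $A$'' and working with $A$ alone.
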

	\vspace{-6pt}
	Then, the Theorem~\ref{theorem: GPB approximate the optimal representation}
	justifies that solving MC-GPB 
	yields an approximation to the optimal representations $\bm{H}_{A}^{*}$,
	as illustrated in Fig.~\ref{fig: three-ball-GPB},
	It satisfies sufficiency (accuracy guarantee) and contains minimal adjacency (privacy guarantee).
	\begin{theorem}
		\label{theorem: GPB approximate the optimal representation}
		When degenerating $\beta_c \! = \! 0$ and $\beta^{i} \! = \! \beta$,
		MC-GPB Eq.~\eqref{eqn: tighter-defense-MI} is equivalent to
		minimizing the Information Bottleneck Lagrangian, \textit{i.e.},
		$\mathcal{L}(p(\bm{Z}|A)) = H(Y|\bm{Z}) + \beta I(\bm{Z}; A)$.
		It yields a sufficient representation $\bm{Z}$ of 
		data $A$ for task $Y$,
		that is an approximation to the optimal representation $\bm{Z}^*$ 
		in Proposition~\ref{prop: minimal graph representation}.
		\textit{Proof}. See Appendix.~\ref{ssec: proof of GPB approximate the optimal representation}.
	\end{theorem}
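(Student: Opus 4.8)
The plan is to turn the degenerate form of Eq.~\eqref{eqn: tighter-defense-MI} into the information bottleneck (IB) Lagrangian by one entropy identity, and then appeal to the known structural characterization of IB optima to extract the sufficiency and minimality claims. First I would substitute $\beta_c^i = 0$, which removes the complexity sum $\sum_{i=1}^{L-1}\beta_c^i I(\bm{H}_A^i;\bm{H}_A^{i+1})$, and $\beta_p^i = \beta$, leaving the objective $\sum_{i=1}^{L}\bigl[-I(Y;\bm{H}_A^i) + \beta I(A;\bm{H}_A^i)\bigr]$. Writing $\bm{Z}$ for the representation at a generic layer (concretely $\bm{Z}=\bm{H}_A^{L}$, with the earlier layers and the injected DropEdge noise absorbed into a stochastic encoder $p(\bm{Z}\mid A)$ parameterized by $\bm\theta$, and $X$ held fixed as side information), each summand is $-I(Y;\bm{Z}) + \beta I(A;\bm{Z})$. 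The crucial step is the identity $I(Y;\bm{Z}) = H(Y) - H(Y\mid\bm{Z})$; since $H(Y)$ is determined by the data and independent of $\bm\theta$, minimizing $-I(Y;\bm{Z}) + \beta I(A;\bm{Z})$ over $\bm\theta$ is the same as minimizing $H(Y\mid\bm{Z}) + \beta I(\bm{Z};A)$, which is exactly $\mathcal{L}(p(\bm{Z}\mid A))$. This gives the claimed equivalence; I would also note that $\bm{Z}=\bm{H}_A$ is computed from $A$ (and fixed $X$) without access to labels, so the Markov relation $Y - A - \bm{Z}$ of the standard IB setup holds with $A$ playing the role of the input data.

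For the second half — that the solution is a sufficient and (approximately) minimal representation — I would invoke the classical analysis of the IB Lagrangian~\citep{tishby2000information, shwartz2017opening}: for fixed $\beta$ its stationary points obey the self-consistent IB equations, and the family of minimizers traces the information-plane curve $I(\bm{Z};Y)$ versus $I(\bm{Z};A)$. As $\beta\to 0^+$, a minimizer $\bm{Z}_\beta$ approaches a representation with $I(\bm{Z};Y) = I(A;Y)$, i.e.\ sufficiency for $Y$ in the sense of Proposition~\ref{prop: sufficient graph representation} with $A$ in the role of the input, and among all such sufficient representations it attains the least $I(\bm{Z};A)$, matching the definition of the minimal sufficient statistic $\bm{Z}^{*}$ in Proposition~\ref{prop: minimal graph representation} (read with $X$ replaced by $A$). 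Combining this with Theorem~\ref{theorem: minimum adjacency information}, which lower-bounds $I(A;\bm{H}_A)$ by $I(A;Y)$ for any sufficient $\bm{H}_A$, pins the target value of the privacy term at $I(A;Y)$ and identifies the forgotten quantity as the excess $I(A;\bm{H}_A\mid Y)$.

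The main obstacle I anticipate is not the algebraic equivalence, which is essentially one line, but making the ``approximation to $\bm{Z}^{*}$'' half precise: the exact minimal sufficient statistic is recovered only in the $\beta\to 0$ limit, and only under the assumption that the GNN encoder class is expressive enough to realize (or approach) the IB-optimal $p(\bm{Z}\mid A)$; for a fixed positive $\beta$ one obtains a controlled relaxation rather than an exact identity. I would therefore phrase the approximation statement with explicit dependence on $\beta$ (the minimizer is $\epsilon$-sufficient with excess adjacency information $I(A;\bm{Z}\mid Y)$ bounded in terms of $\beta$) and on the realizability gap of the encoder family, and defer the limiting behaviour to the cited IB literature rather than reproving it here.
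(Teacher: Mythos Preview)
Your reduction to the IB Lagrangian via $I(Y;\bm{Z})=H(Y)-H(Y\mid\bm{Z})$ is exactly the paper's first step. The divergence is in the second half. Rather than citing the IB literature and taking a $\beta\to 0^{+}$ limit, the paper gives a short self-contained decomposition (using the Markov relation $Y\!-\!A\!-\!\bm{Z}$ so that $I(Y;\bm{Z},A)=I(Y;A)$ and $I(\bm{Z};A)=I(\bm{Z};Y)+I(\bm{Z};A\mid Y)$) to rewrite the objective as
\[
I(\bm{Z};Y)-\beta I(\bm{Z};A)=(1-\beta)I(A;Y)-(1-\beta)I(A;Y\mid\bm{Z})-\beta I(\bm{Z};A\mid Y),
\]
from which the optimum $(1-\beta)I(A;Y)$ and the two optimality conditions $I(A;Y\mid\bm{Z}^{*})=0$ (sufficiency) and $I(\bm{Z}^{*};A\mid Y)=0$ (minimality) drop out directly for \emph{every} $\beta\in(0,1)$, not only in a limit. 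This buys a sharper and self-contained statement: over an unconstrained encoder family the minimizer is exactly the minimal sufficient statistic at any such $\beta$, so your caution about the $\beta\to 0$ limit is unnecessary in the idealized setting (it remains relevant only under encoder-class restrictions, which you rightly flag). Conversely, your route is more explicit about what ``approximation'' means when the GNN family is not universal, a point the paper leaves implicit.
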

	

	
	

	
	\section{Empirical Study}
	\label{sec: experiment}
        
	In this section, 
	we empirically verify the two proposed methods
	and provide answers to the three questions.
	\textbf{\textit{Q1:}} how effective are the proposed methods 
	on real-world datasets with common GNNs?
	\textbf{\textit{Q2:}} how helpful are MI constraints and injected stochasticity?
	\textbf{\textit{Q3:}} what insights can empirical results provide
	to GNNs and defending GRA in practice?

	\textbf{Setup.}
	The default target model is a two-layer GCN followed by a linear layer.
	We also investigate other GNN architectures, 
	including GAT~\citep{velickovic2018graph} and GraphSAGE~\citep{hamilton2017inductive}.
	For evaluation,
	we use the AUC metric as in~\citep{zhang2020revisiting, zhu2021neural, zhang2021graphmi}, 
	which considers a set of thresholds.
	Besides,
	the implementation software is Pytorch~\citep{paszke2017automatic}
	while the hardware is an NVIDIA RTX 3090 GPU. 
	%
	Details of the six datasets are referred to Appendix.~\ref{ssec: datasets}.
	
	
	\vspace{-2px}
	\textbf{Baselines.}
	Two recent works are considered as baselines here:
	(1) Stealing link~\citep{he2021stealing}
	that performs non-learnable GRA on the target model's outputs,
	which shares a similar spirit as in Sec.~\ref{ssec: understanding by direct MI calculation}.
	(2) GraphMI~\citep{zhang2021graphmi}
	that conducts learnable GRA with prior knowledge $\mathcal{K} \! = \! \{X,  Y\}$.
	The recovered adjacency is obtained by maximizing the classification probability with regard to labels.

	\subsection{Quantitative Results}
	\vspace{+4px}
        
	\textbf{Attacking.}
	As results shown in Tab.~\ref{tab:exp-attack-results}, 
	the proposed MC-GRA achieves the best results in all six datasets
	with various settings of prior knowledge sets.
	The relative promotions in AUC are gained 
	by comparing with the linear ensemble results in Tab.~\ref{tab:understanding-MI-term-ensemble}.
	As can be seen,
	more prior knowledge with a larger $|\mathcal{K}|$ generally bring a higher attack AUC.
	The learnable MC-GRA brings significantly and consistently 
	better results than the non-learnable methods,
	especially on the more challenging datasets,
	\textit{i.e.}, Brazil and AIDS, where at most \textbf{22.8$\%$} and \textbf{15.5$\%$} promotion can be achieved.
	%
	
	\textbf{Defending.}
	Here, we evaluate the effectiveness of the proposed 
	MC-GPB method in defending against GRA.
	First, in Tab.~\ref{tab:exp-defense-results},
	we show that MC-GPB
	is able to defend all the attack methods of GRA.
	%
	Especially on the Polblogs dataset,
	MC-GPB achieves a \textbf{13.4$\%$} average reduction in privacy leakage
	at the cost of \textbf{1.1$\%$} loss in accuracy.
	Besides, as in Tab.~\ref{tab:exp-defense-results-2}.
	we show that MC-GPB can also defend the MC-GRA,
	where it shows consistent and significant reductions in attack AUC.
	The above results verify the effectiveness of MC-GPB in defending both learnable and non-learnable GRA methods.
        It can potentially protect GNNs 
        applied in real-world applications, \textit{e.g.},
        the recommendation system.
	
	\textbf{Different GNN architectures.}
	%
	As shown in Tab.~\ref{tab:ablation-attack-GNN-arch} as well as Tab.~\ref{tab:ablation-defense-GNN-arch},
	we show that both proposed methods are model-agnostic
	as they can be generalized to different kinds of GNN with various layers.
	Generally, a deeper model (larger $L$) can better protect privacy (lower $I(A; \bm{H}_{A}^{L})$),
	which is consistent with observations in Sec.~\ref{ssec: tracking by graph information plane}.
	However, it might come at the cost of severe accuracy degradation
	due to the well-known over-smoothing effect of GNNs in message propagation.
	Besides,
	it is found that \textit{a more powerful model with a higher accuracy
		is usually more vulnerable to GRA}, which presents a higher risk of privacy leakage in practice.

	\vspace{-2px}
	\subsection{Ablation Study}
	
	\textbf{The MI regularization.}
	As shown in Tab.~\ref{tab:ablation-two-algorithms},
	each MI component contributes to the final results. 
	Specifically,
	the encoding approximation terms in MC-GRA contribute most to the attack.
	A potential reason is that hidden representations $\bm{H}_A$ 
	contain more information about privacy than other variables.
	And thus, extracting this term brings a higher fidelity in outcomes. 
	In addition, all three kinds of constraints contribute greatly to
	MC-GPB, while the contributing patterns are diverse.
	Thus, it is essential to have a careful balance of these three constraints 
	with tuning hyperparameters $\beta^{i}_{p}, \beta^{i}_{c}$.
	
	\textbf{The injected stochasticity.}
	As can be seen from Tab.~\ref{tab:ablation-stochasticity}, 
	learning without injecting stochasticity generally leads to sub-optimal outcomes
	for both methods.
	That is, the manual randomness
	help the removal of spurious correlation for MC-GRA
	and boosts the forgetting about privacy for MC-GPB.
	In addition to MI regularization and injected stochasticity,
	the other ablation study can be found in Appendix.~\ref{sec: full quantitative results}.
	
	

	\begin{table}[t!]
		\centering
		\caption{
			MC-GRA with various architectures on Cora.
		}
		\vspace{-6px}
		\label{tab:ablation-attack-GNN-arch}
		\fontsize{8}{8}\selectfont
		\setlength\tabcolsep{0.5pt}
		\begin{tabular}{c|ccc|ccc|ccc}
			\toprule
			\multirow{2}{*}{$\mathcal{K}$}  & \multicolumn{3}{c|}{GCN}  & \multicolumn{3}{c|}{GAT} & \multicolumn{3}{c}{GraphSAGE} \\
			&  $L \! = \! 2$    &   $L \! = \!  4$   &  $L \! = \!  6$  &  $L \! = \! 2$    &   $L \! = \!  4$   &  $L \! = \!  6$   & $L \! = \! 2$    &   $L \! = \!  4$   &  $L \! = \!  6$  \\   
			\midrule
			$\{X,  Y\}$   &   
			.895 & .892 &  .878 &  .883 &  .878 &  .876 & .889 &  .872 & .840   \\
			$\{X,  Y, \bm{H}_{A}\}$   &   
			.904 & .900 &  .884 &  .897 &  .885 &  .874 & .892 &  .8881 & .873   \\
			$\{X,  Y, \bm{H}_{A}, \bm{\hat{Y}} \}$   &   
			.905 & .895 &  .892 &  .913 &  .887 &  .879 & .909 &  .893 & .865   \\
			\midrule
			Acc.   &   
			.792 & .661 &  .248 &  .637 &  .651 &  .630 & .614 &  .443 & .145   \\
			\bottomrule
		\end{tabular}
		\vspace{-8px}
	\end{table}

	\begin{table}[t!]
		\centering
		\caption{
			MC-GPB with various architectures on Polblogs.
		}
		\vspace{-6px}
		\label{tab:ablation-defense-GNN-arch}
		\fontsize{8}{8}\selectfont
		\setlength\tabcolsep{1.2pt}
		\begin{tabular}{c|ccc|ccc|ccc}
			\toprule
			\multirow{2}{*}{MI}  & \multicolumn{3}{c|}{GCN}  & \multicolumn{3}{c|}{GAT} & \multicolumn{3}{c}{GraphSAGE} \\
			&  $L \! = \! 2$    &   $L \! = \!  4$   &  $L \! = \!  6$  &  $L \! = \! 2$    &   $L \! = \!  4$   &  $L \! = \!  6$   & $L \! = \! 2$    &   $L \! = \!  4$   &  $L \! = \!  6$  \\   
			\midrule
			$I(A; \bm{H}_A)$ &   
			.724 & .790 &  .810  &  .901 & .808  &  .854 & .805 &  .808 &  .813 \\
			$I(A; \bm{\hat{Y}}_A)$  &   
			.705 & .650 &  .650  &  .654 & .623 &  .673 & .803 &  .668 &  .652 \\
			$I(A; \bm{H}_{\bm{\hat{A}}})$   &   
			.506 & .577 &  .532  &  .542 & .656   &  .536 & .599 &  .769 &  .468 \\
			\midrule
			Acc.   &   
			.830 & .822 &  .512  &  .855 & .880 &  .869 & .830 &  .869 &  .801  \\
			\bottomrule
		\end{tabular}
		\vspace{-14px}
	\end{table}
	
	\begin{table}[t!]
		\centering
		\caption{
			Ablation study of two algorithms \textit{w.r.t.} the 
			approximation (\textit{appr.}) and constraint (\textit{cons.}) terms.
		}
		\vspace{-6px}
		\label{tab:ablation-two-algorithms}
		\fontsize{8}{8}\selectfont
		\setlength\tabcolsep{1.8pt}
		\begin{tabular}{c|ccc}
			\toprule
			variant & Cora  & USA & AIDS \\
			\midrule
			MC-GRA (full)                          
			& .905 & .904 & .572  \\
			- w/o encoding appr.
			& .829 (\down{8.3$\% \! \downarrow$})  & .870 (\down{3.7$\% \! \downarrow$})  &  .536 (\down{6.2$\% \! \downarrow$})  \\ 
			- w/o decoding appr.
			& .854 (\down{5.6$\% \! \downarrow$})  & .849 (\down{6.0$\% \! \downarrow$})  &  .490 (\down{14.3$\% \! \downarrow$})  \\ 
			- w/o complexity cons.
			& .889 (\down{1.7$\% \! \downarrow$})  & .858 (\down{5.0$\% \! \downarrow$})  &  .537 (\down{11.3$\% \! \downarrow$})  \\ 
			\midrule
			MC-GPB (full) 
			& .745 & .391 & .668  \\
			- w/o accuracy cons.
			& .681 (\down{8.6$\% \! \downarrow$})  & .369 (\down{5.6$\% \! \downarrow$})  &  .625 (\down{6.4$\% \! \downarrow$})  \\ 
			- w/o privacy cons.
			& .707 (\down{5.1$\% \! \downarrow$})  & .249 (\down{36.3$\% \! \downarrow$})  &  .480 (\down{28.1$\% \! \downarrow$})  \\ 
			- w/o complexity cons.
			& .705 (\down{5.4$\% \! \downarrow$})  & .251 (\down{35.8$\% \! \downarrow$})  &  .448 (\down{32.9$\% \! \downarrow$})  \\ 
			\bottomrule
		\end{tabular}
		\vspace{-6px}
	\end{table}

	\begin{table}[t!]
		\centering
		\caption{
			Results of removing injecting stochasticity. 
		}
		\vspace{-6px}
		\label{tab:ablation-stochasticity}
		\fontsize{8}{8}\selectfont
		\setlength\tabcolsep{0.2pt}
		\begin{tabular}{cc|ccc}
			\toprule
			type & case & USA  & Brazil & AIDS \\
			\midrule
			\multirow{4}{*}{attack} 
			& $\mathcal{K} \! = \! \{X,  Y\}$
			& .802 (\down{2.7$\% \! \downarrow$})  & .713 (\down{5.3$\% \! \downarrow$})  &  .567 (\down{1.2$\% \! \downarrow$})  \\ 
			& $\mathcal{K} \! = \!  \{X,  Y, \bm{H}_{A}\}$ 
			& .856 (\down{1.3$\% \! \downarrow$})  & .740 (\down{2.3$\% \! \downarrow$})  &  .572 (\down{5.1$\% \! \downarrow$})  \\ 
			& $\mathcal{K} \! = \!  \{X,  Y, \bm{H}_{A}, \bm{\hat{Y}} \}$
			& .864 (\down{0.4$\% \! \downarrow$})  & .730 (\down{3.9$\% \! \downarrow$})  &  .567 (\down{3.5$\% \! \downarrow$})  \\ 
			\midrule
			\multirow{4}{*}{defense}
			& $I(A; \bm{H}_A)$
			& .861 (\up{16.2$\% \! \uparrow$})  & .758 (\up{1.7$\% \! \uparrow$})  &  .564 (\up{0.0$\% \! \uparrow$})  \\ 
			& $I(A; \bm{\hat{Y}}_A)$
			& .309 (\down{47.4$\% \! \downarrow$})  & .722 (\up{4.3$\% \! \uparrow$})  &  .548 (\down{2.0$\% \! \downarrow$})  \\ 
			& $I(A; \bm{H}_{\bm{\hat{A}}})$
			& .389 (\up{29.7$\% \! \uparrow$})  & .796 (\up{30.7$\% \! \uparrow$})  &  .539 (\up{4.9$\% \! \uparrow$})  \\ 
			& Acc.
			& .259 (\down{33.8$\% \! \downarrow$})  & .538 (\down{33.4$\% \! \downarrow$})  &  .628 (\down{6.0$\% \! \downarrow$})  \\ 
			\bottomrule
		\end{tabular}
	\vspace{-14px}
	\end{table}

	
	\begin{figure*}[t!]
		\centering
		\hfill
		{\includegraphics[width=8.0cm]{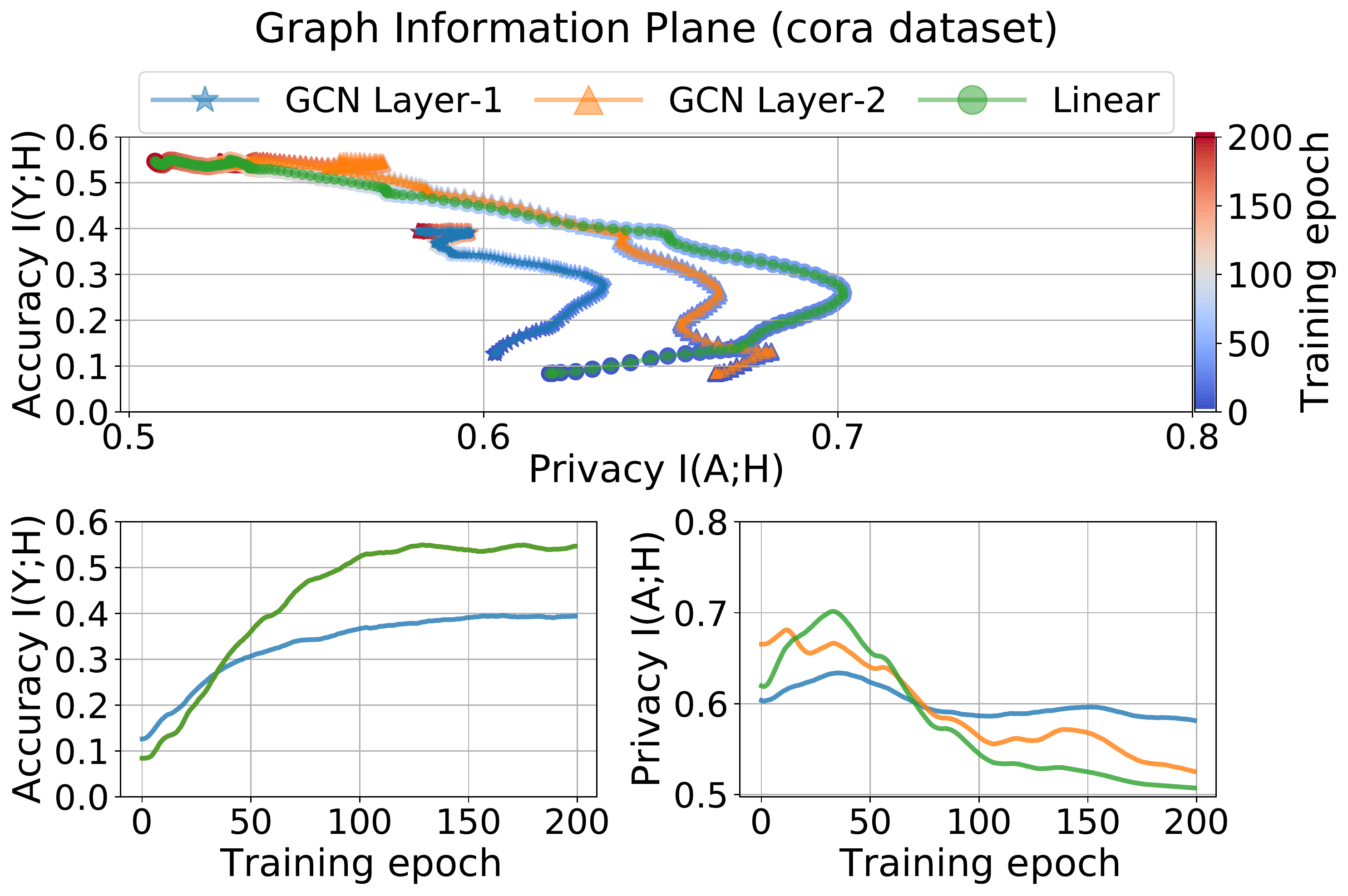}}
		\hfill
		{\includegraphics[width=8.0cm]{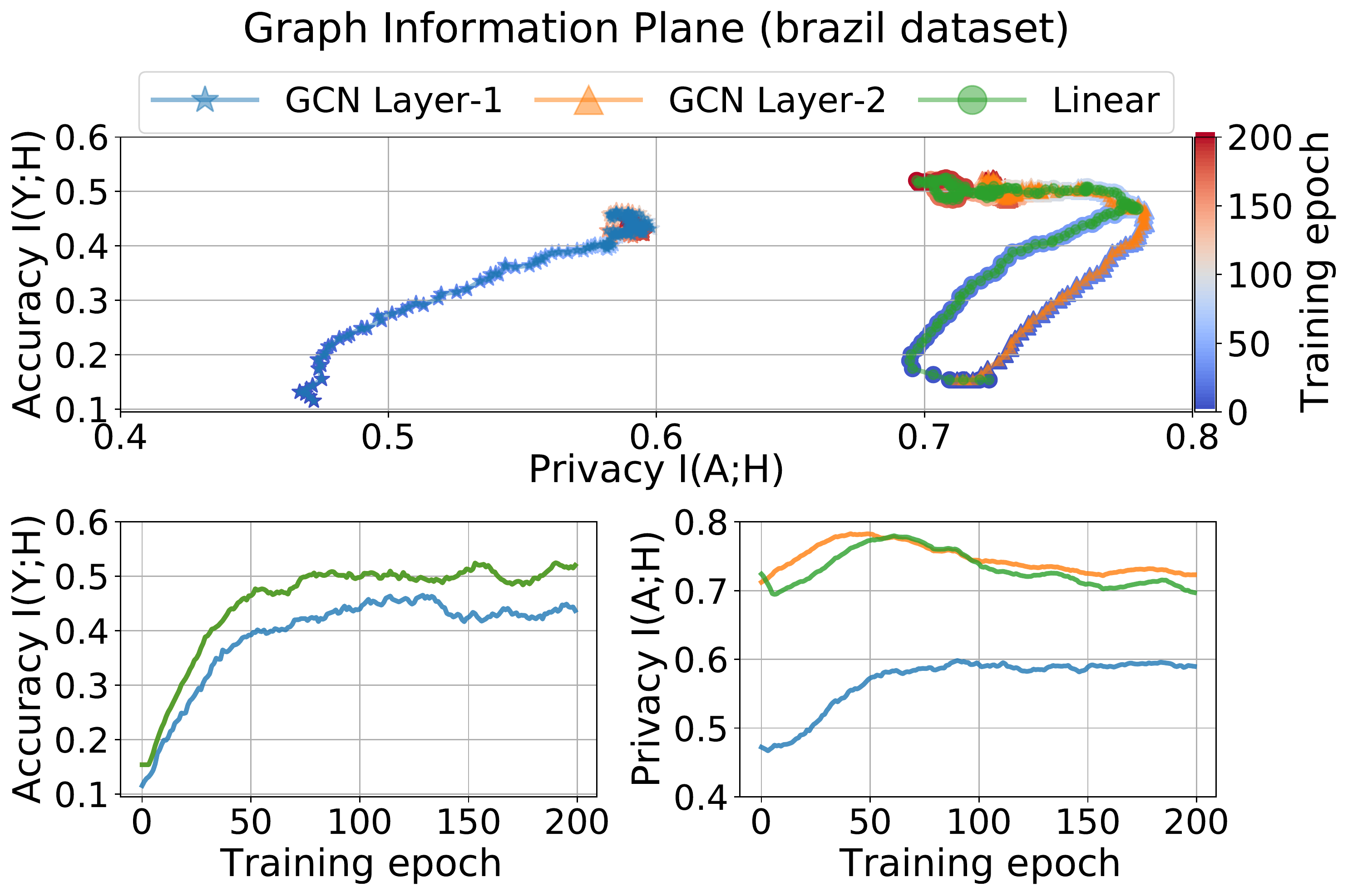}}
		\hfill
		\vspace{-4px}
		\caption{
			Graph information plane: defensive training with MC-GPB.
			Compared with the standard training (Fig.~\ref{fig: graph-information-plane}) without any constraints,
			MC-GPB effectively decreases the amount of privacy information contained in the graph representations.
		}
		\label{fig: demo-GPB-graph-information-plane}
		\vspace{-10px}
	\end{figure*}
	
	\begin{figure}[t!]
		\centering
		\subfigure[GRA on normally trained GNNs.]
		{{\includegraphics[width=8.0cm]{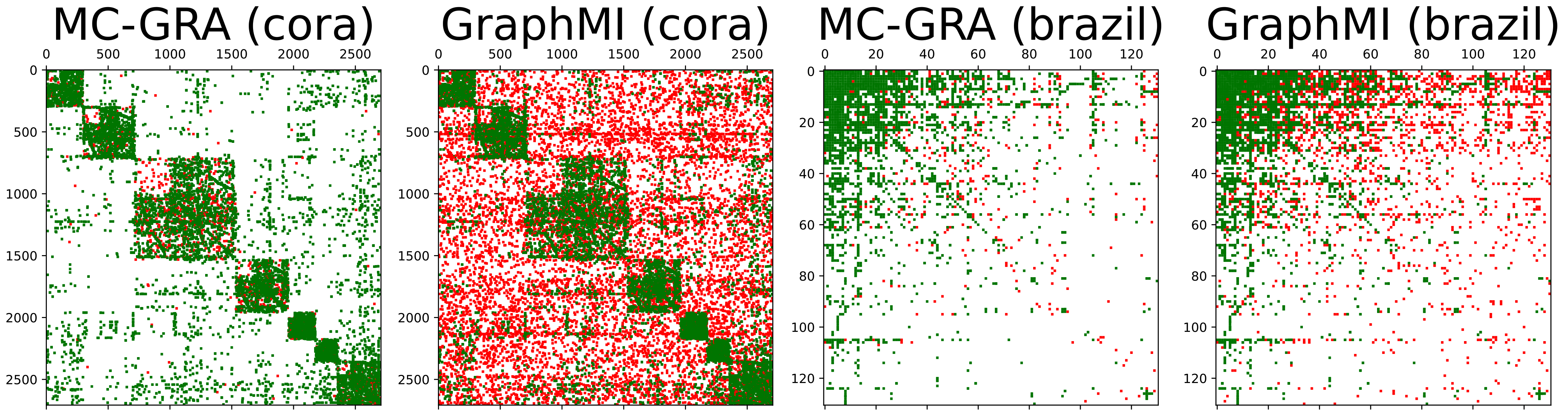}}}
		
		\vspace{-0.3cm}
		
		\subfigure[GRA on protected GNNs, \textit{i.e.}, trained with MC-GPB.]
		{{\includegraphics[width=8.0cm]{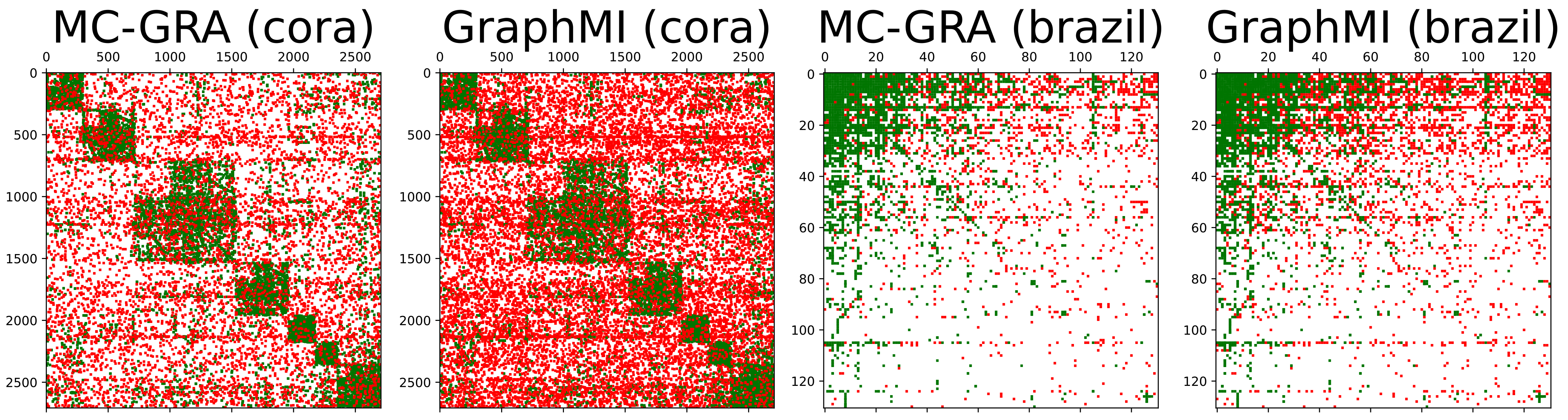}}}
		
		\vspace{-8px}
		\caption{	
			Examples of recovered adjacency.
			Green dots are correctly predicted edges while red dots are wrong ones.
		}
		\label{fig: demo-4-Adj}
		\vspace{-14px}
	\end{figure}

        \vspace{-2px}
	\subsection{Case Visualizations}
	
	\textbf{The recovered adjacency.}
	We show the recovered $\bm{\hat{A}}$ by various GRA methods in Fig.~\ref{fig: demo-4-Adj}.
	Compared with GraphMI, MC-GRA can recover adjacency more accurately,
	with fewer wrong predictions and higher AUC values.
	As for defense,
	MC-GPB significantly degenerates both GRA methods,
	with more failure cases and much lower AUC values.

	\textbf{A further analysis with the graph information plane.}
	Besides,
	we visualize the training procedures of MC-GPB in Fig.~\ref{fig: demo-GPB-graph-information-plane}
	based on the graph information plane introduced in Sec.~\ref{ssec: tracking by graph information plane}.
	As shown, the privacy term is markedly reduced while that in standard training is increased,
	especially in the later training stage.
	It shows the trade-off between accuracy and privacy in training GNNs:
	the accuracy $I(Y; \bm{Z})$ also starts to decrease 
	when the privacy $I(A; \bm{Z})$ is minimized to some extent.
        More visualizations are in Appendix.~\ref{sec: full qualitative results}.
 

	
	
	
	%
	%
	%

	\vspace{-8px}
	\section{Further Discussions}
	\label{sec: limitation}
        
	\textbf{GRA in practice.}
	In real-world examples regarding the threat model, 
	the prior knowledge set $\mathcal{K}$ 
	can be accessed by an adversary in practice. 
	For instance, to train a GNN model for fraudulent account detection, 
	a social network service provider uses 
	the technology of another company. 
	In this case, the provider will frequently send 
	the company the model's outputs $\bm{Y}_A$
	to debug and improve. 
	Similar circumstances apply to 
	the node embeddings $\bm{H}_A$,
	which are typically released. 
	Thus, the inversion of adjacency
        requiring only a subset $\mathcal{K}$ of the informative variables
	can be a privacy threat in real-world scenarios of GNNs, 
	which have been widely 
	used in recommendation systems, social networks, 
        citation networks, and drug discovery. 
	Therefore, the user's privacy should be protected
	especially for personalized relationships and certain sensitive information.
	
	\textbf{The inversion target.}
        Intuitively, the adjacency $A$ and the node features $X$ can be regarded as inversion targets. 
        The key motivations to attack adjacency are the practical risks and understandability to human beings. 
        Unlike visual images that are naturally understandable to humans, the node features are not understandable without the sufficient knowledge of human experts, while the adjacency is much easier to understand. 
        More discussions are in Appendix.~\ref{sec: further discussion on GRA}.
 
	\textbf{Limitations.}
	This work follows the common homophily assumption that 
	connected nodes are likely to be in the same category 
	and possess similar features~\cite{he2021stealing,zhang2021graphmi}.
	We leave the generalization to heterogeneous graphs as future work. 
	Besides, 
	our proposed method requires white-box access to the target model.
	The black-box scenarios with only access to the model's outputs
	can be more practical but also much more challenging.

	\textbf{Future directions.}
	One general direction to enhance GRA
	is to extract information about adjacency from more information sources,
	\textit{e.g.}, with partial edges of the target graph
	or an auxiliary dataset to conduct a transferring attack.
	GRA can be conducted on more GNN architectures
	and cooperated with generative models,
	\textit{e.g.}, the graph auto-encoders or diffusion models.
	Besides, a more fine-grained study
	on graph properties is also intriguing, 
	\textit{e.g.}, density, community, number of triangles.
	To what extent can the above properties be recovered
	will shed insights into the power of GRA 
	and the memorization effect of GNNs.
	Besides,
	applying GRA to more realistic and general settings is also promising,
	\textit{e.g.}, inductive GNNs which can generalize well to unseen nodes,
	or the black-box scenarios
	where the attacker can only get access to the outputs of the target model.
	As for defending against GRA in practice,
	a trained model might be required to completely forget about partial training data
	with limited budgets for updating their weights.
	A promising solution here is machine unlearning, 
	especially on large-scale graphs.

	\textbf{Broader impacts.}
	\textit{The gun is not guilty, the person who pulled the trigger is},
	said the father of the AK-47.
	It must be admitted that
	GRA (or any kind of MIA)
	might be misused to attack real-world targets.
	For this reason, it is essential to  
	raise the awareness of such an adversary and the potential privacy risk.
	More importantly, investigating GRA (MIA) enables 
	to understand the black-box deep learning models,
	to inspire more robust methods,
	to protect privacy in advance, 
	and to make AI products safer and more trustworthy.

	\section{Conclusion}
	
	In this work, 
	we conduct a comprehensive study of
	enhancing and defending Graph Reconstruction Attack.
	We conceptually abstract the problem as 
	approximating the original Markov chain by the attack chain.
	Technically, we derive 
	(1) the chain-based attack method with adaptive designs for extracting more private information;
	and
	(2) the chain-based defense method that sharply reduces the attack fidelity with moderate accuracy loss. 
	%
	Empirically,
	the proposed methods achieve the best results on six datasets and three GNNs.
	
        
	\vspace{-4pt}
        \section*{Acknowledgements}
	ZKZ, CYZ, XL, and BH were supported 
	by NSFC Young Scientists Fund No. 62006202 
	and Guangdong Basic 
	and Applied Basic Research Foundation No. 2022A1515011652, 
	CAAI-Huawei MindSpore Open Fund, 
	and HKBU CSD Departmental Incentive Grant.
        JCY was supported by the National Key R\&D Program of China (No. 2022ZD0160703), STCSM (No. 22511106101, No. 22511105700, No. 21DZ1100100), 111 plan (No. BP0719010).
        
	\clearpage
	
	
	\bibliography{draft}
	\bibliographystyle{icml2023}

	\clearpage
	\onecolumn
	\appendix
	
	\etocdepthtag.toc{mtappendix}
	\etocsettagdepth{mtchapter}{none}
	\etocsettagdepth{mtappendix}{subsection}
	
	\renewcommand{\contentsname}{Appendix}
	\tableofcontents
	
	\clearpage
	
	
	\section{Theoretical justification}
	
	\subsection{Notations}
	\label{ssec: notations}
	
	With adjacent matrix $A$ and node features $X$,
	an undirected graph is denoted as $\mathcal{G} \! = \! (A, X)$,
	where $A_{ij} \! = \! 1$ means there is an edge $e_{ij}$ between $v_i$ and $v_j$.
	For each node $v_i$,
	its $D$-dimension node feature is denoted as $X_{[i,:]} \! \in \! \mathbb{R}^{D}$,
	and its label $y_i \in Y = \{y_i\}_{i=1}^{N}$ indicates the node class.
	The node classification task is to predict the label $Y=\{y_i\}_{i=1}^{N}$ of each node
	via a parameterized model $f_{\theta}(\cdot)$,
	\textit{i.e.}, $f_{\theta}(A, X) = \hat{Y} \leftrightarrow Y$.
	We summarize the frequently used notations in Table~\ref{tab:notations} as follows.
	
	\begin{table}[ht]
		\centering
		\caption{The most frequently used notations in this work.}
		\label{tab:notations}
		\begin{tabular}{c|c}
			\toprule
			notations  & meanings \\
			\midrule
			$\mathcal{V} = \{v_i\}_{i=1}^{N}$ & the set of nodes \\ \midrule
			$\mathcal{E} \! = \! \{e_{ij}\}_{ij=1}^{M}$ & the set of edges \\ \midrule
			$A\in\{0,1\}^{N \times N}$ & the adjacent matrix with binary elements \\ \midrule
			$X\in\mathbb{R}^{N \times D}$ & the node features \\ \midrule
			$\mathcal{G} \! = \! (A, X)$ & the input graph of a GNN \\ \midrule
			$Y$ & the labels of nodes \\ \midrule
			$\bm{H}_{A}$ & representation of all nodes with adjacency $A$ \\ \midrule
			$H(X)$ & the information entropy of random variable $X$ \\ \midrule
			$H(X, Y)$ & the joint entropy of variable $X$ and $Y$ \\ \midrule
			$I(X ; Y)$ & the mutual information of $X$ and $Y$ \\ \midrule
			$I(X ; Y | Z)$ & the conditional mutual information of $X$ and $Y$ when observing $Z$ \\		
			\bottomrule
		\end{tabular}
	\end{table}
	
	\subsection{Preliminaries for information measures}
	\label{ssec: information measures}
	
	\begin{definition}[Informational Divergence]
		The informational divergence 
		(also called relative entropy or Kullback-Leibler distance)
		between two probability distributions $p$ and $q$ 
		on a finite space $\mathcal{X}$ (\textit{i.e.}, a common alphabet) is defined as
		\begin{align}
			\begin{split}
				D(p || q)
				= \sum_{x \in\mathcal{X}} p(x) \log \frac{p(x)}{q(x)} = \mathbb{E}_{p}\big[ \log \frac{p(X)}{q(X)} \big]
			\end{split}
		\end{align}
	\end{definition}
	
	\begin{remark}
		$D(p || q)$ measures the \textit{distance} between $p$ and $q$.
		However, $D(\cdot || \cdot)$ is not a true metric,
		and it does not satisfy the triangular inequality.
		$D(p || q)$ is non-negative and $D(p || q)=0$ if and only if $p = q$.
	\end{remark}

	\begin{definition}[Mutual Information]
		Given two discrete random variables $X$ and $Y$,
		the mutual information (MI) $I(X; Y)$
		is the relative entropy between the joint distribution $p(x,y)$
		and the product of the marginal distributions $p(x)p(y)$, namely,
		\begin{align}
			\begin{split}
				I(X; Y) =& \; D(p(x,y) || p(x)p(y))  \\
				=& \; \sum_{x \in X, y \in Y} p(x,y) \log \big( \frac{p(x,y)}{p(x)p(y)} \big) \\
				=& \; \sum_{x \in X, y \in Y} p(x,y) \log \big( \frac{p(x|y)}{p(x)} \big). \\
			\end{split}
		\end{align}
	\end{definition}
	
	\begin{remark}
		$I(X; Y)$ is symmetrical in $X$ and $Y$, \textit{i.e.}, 
		$I(X; Y)= H(X) - H(X|Y) = H(Y) - H(Y|X) = I(Y;X)$. 
	\end{remark}
	
	\begin{proposition}[Chain Rule for Entropy]
		$H(X_1, X_2, \cdots, X_n) = \sum_{i=1}^{n} H(X_i | X_1, X_2, \cdots, X_{i-1})$.
	\end{proposition}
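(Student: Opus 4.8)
The plan is to prove the chain rule by induction on $n$, reducing everything to the two-variable case $H(X,Y) = H(X) + H(Y \mid X)$, which itself follows directly from the definition of (joint and conditional) entropy together with the factorization $p(x,y) = p(x)\,p(y \mid x)$.

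First I would establish the two-variable identity. Writing $H(X,Y) = -\sum_{x,y} p(x,y)\log p(x,y)$ and substituting $p(x,y) = p(x)\,p(y\mid x)$, the logarithm splits as $\log p(x) + \log p(y\mid x)$. The first contribution, after marginalizing $\sum_y p(x,y) = p(x)$, collapses to $-\sum_x p(x)\log p(x) = H(X)$; the second contribution is exactly $-\sum_{x,y} p(x,y)\log p(y\mid x) = H(Y\mid X)$ by the definition of conditional entropy. Hence $H(X,Y) = H(X) + H(Y\mid X)$.

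Next, the induction. The base case $n=1$ is the tautology $H(X_1) = H(X_1)$, using the convention that conditioning on the empty set of variables is vacuous. For the inductive step, assume the claim for $n-1$ variables and apply the two-variable identity with the block variable $X := (X_1,\dots,X_{n-1})$ and $Y := X_n$, obtaining $H(X_1,\dots,X_n) = H(X_1,\dots,X_{n-1}) + H(X_n \mid X_1,\dots,X_{n-1})$. Expanding the first term by the induction hypothesis as $\sum_{i=1}^{n-1} H(X_i \mid X_1,\dots,X_{i-1})$ and appending the last term completes the sum up to $i=n$.

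The only care required is routine bookkeeping: justifying the marginalization step when collapsing the first log-term, handling $p(x,y)=0$ terms by the usual $0\log 0 = 0$ convention, and being consistent that the $i=1$ summand reads $H(X_1)$. There is no genuine obstacle here; the statement is an immediate consequence of the definitions, and the induction is purely mechanical.
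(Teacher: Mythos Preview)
Your proof is correct and is the standard textbook argument. The paper itself does not prove this proposition; it is stated without proof as a well-known preliminary fact from information theory, so there is nothing to compare against.
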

	
	\begin{proposition}[Chain Rule for Conditional Entropy]
		$H(X_1, X_2, \cdots, X_n | Y) = \sum_{i=1}^{n} H(X_i | X_1, X_2, \cdots, X_{i-1}, Y)$.
	\end{proposition}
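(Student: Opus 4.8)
The final statement is the Chain Rule for Conditional Entropy, and the plan is to reduce it to the unconditional Chain Rule for Entropy stated immediately before it. First I would fix an outcome $Y = y$ and note that the conditional joint law $p(x_1,\dots,x_n \mid y)$ is an ordinary probability distribution on the tuple $(X_1,\dots,X_n)$. Applying the unconditional chain rule to that distribution gives
\[
H(X_1,\dots,X_n \mid Y=y) = \sum_{i=1}^n H(X_i \mid X_1,\dots,X_{i-1}, Y=y),
\]
where each summand is the entropy of the corresponding conditional distribution with $y$ held fixed. Then I would average both sides over $y$ against the marginal $p(y)$: by the definition of conditional entropy, $\sum_y p(y)\,H(X_1,\dots,X_n \mid Y=y) = H(X_1,\dots,X_n \mid Y)$, and similarly $\sum_y p(y)\,H(X_i \mid X_1,\dots,X_{i-1}, Y=y) = H(X_i \mid X_1,\dots,X_{i-1}, Y)$, so the claimed identity follows termwise.

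A self-contained alternative is to prove the two-variable case $H(X_1,X_2\mid Y) = H(X_1\mid Y) + H(X_2\mid X_1,Y)$ directly, by taking $-\log$ of the factorization $p(x_1,x_2\mid y) = p(x_1\mid y)\,p(x_2\mid x_1,y)$, multiplying through by $p(x_1,x_2,y)$, and summing over $x_1,x_2,y$; one then inducts on $n$ by treating $(X_1,\dots,X_{n-1})$ as a single block, applying the two-variable version with this block in place of $X_1$, and invoking the induction hypothesis on $H(X_1,\dots,X_{n-1}\mid Y)$.

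The argument is routine and I do not expect a substantive obstacle. The only point needing care is bookkeeping of the conditioning: in the $i$-th term the conditioning event is the \emph{joint} variable $(X_1,\dots,X_{i-1},Y)$, so one must confirm that the generalized chain rule applies with $Y$ merged into the prefix — which is immediate once one works pointwise in $y$ as in the first route, since there $Y=y$ is simply part of the background throughout.
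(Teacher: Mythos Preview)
Your argument is correct; both routes you sketch are standard and complete. The paper, however, states this proposition as a preliminary fact without proof, so there is no paper-proof to compare against. Your first route (fix $Y=y$, apply the unconditional chain rule to the conditional distribution, then average over $y$) is the cleanest justification and would serve well here.
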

	
	\begin{proposition}[Chain Rule for Mutual Information]
		$I(X_1, X_2, \cdots, X_n; Y) = \sum_{i=1}^{n} I(X_i; Y | X_1, X_2, \cdots, X_{i-1})$.
	\end{proposition}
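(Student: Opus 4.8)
The plan is to reduce the identity to the two entropy chain rules stated above together with the entropy form of mutual information, $I(U;V) = H(U) - H(U\mid V)$, and the analogous definition of conditional mutual information, $I(U;V\mid W) = H(U\mid W) - H(U\mid V,W)$. First I would apply the entropy form to the left-hand side with $U = (X_1,\dots,X_n)$ and $V = Y$, writing $I(X_1,\dots,X_n;Y) = H(X_1,\dots,X_n) - H(X_1,\dots,X_n\mid Y)$.

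Next I would expand the first term by the Chain Rule for Entropy and the second term by the Chain Rule for Conditional Entropy, so that the right-hand side becomes $\sum_{i=1}^{n} H(X_i\mid X_1,\dots,X_{i-1}) - \sum_{i=1}^{n} H(X_i\mid X_1,\dots,X_{i-1},Y)$. Since both sums run over the same index set and share the same prefix $X_1,\dots,X_{i-1}$ in the conditioning, I would combine them termwise into $\sum_{i=1}^{n} \big[ H(X_i\mid X_1,\dots,X_{i-1}) - H(X_i\mid X_1,\dots,X_{i-1},Y) \big]$. Recognizing each bracketed difference via the conditional mutual information definition, applied with $U = X_i$, $V = Y$, and $W = (X_1,\dots,X_{i-1})$, identifies the $i$-th summand as $I(X_i;Y\mid X_1,\dots,X_{i-1})$, which is exactly the claimed sum; for $i = 1$ the conditioning set is empty and the summand is just $I(X_1;Y)$, so no separate base case is needed.

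This is the standard textbook argument, so I expect no genuine obstacle; the only point requiring care is bookkeeping, namely ensuring that the conditioning prefixes in the two chain-rule expansions line up index-by-index so that the termwise subtraction is legitimate. An alternative is a direct induction on $n$ whose inductive step is the two-variable identity $I(X_1,\dots,X_{n};Y) = I(X_1,\dots,X_{n-1};Y) + I(X_n;Y\mid X_1,\dots,X_{n-1})$, but the entropy-decomposition route is cleaner since it invokes the already-stated chain rules verbatim rather than re-deriving them.
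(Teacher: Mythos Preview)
Your argument is correct and is precisely the standard derivation: express $I(X_1,\dots,X_n;Y)$ as $H(X_1,\dots,X_n)-H(X_1,\dots,X_n\mid Y)$, expand each joint entropy by the respective chain rule, and collect terms. Note, however, that the paper does not actually supply a proof of this proposition; it is listed in the preliminaries as a stated fact alongside the entropy chain rules, so there is no paper proof to compare against. Your write-up would serve perfectly well as the omitted justification, and it uses exactly the two chain rules the paper states immediately before this proposition.
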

	
	\begin{corollary}
		${\forall A, Z_i, Z_j}, I(A;Z_i, Z_j) \geq \max \big( I(A;Z_i), I(A; Z_j) \big)$.
		\begin{proof}
			As $ I(A; Z_i | Z_j) \geq 0$,
			$I(A;Z_i, Z_j) = I(A; Z_i) + I(A; Z_i | Z_j) \geq I(A;Z_i)$.
			Similarly, $I(A;Z_i, Z_j) \geq I(A;Z_j)$ can be obtained.
			Thus, we have $I(A;Z_i, Z_j) \geq \max \big( I(A;Z_i), I(A; Z_j) \big)$.
		\end{proof}
	\end{corollary}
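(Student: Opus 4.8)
The plan is to derive this directly from the Chain Rule for Mutual Information together with the non-negativity of (conditional) mutual information, both of which are already available from the preliminaries. First I would expand $I(A; Z_i, Z_j)$ in two complementary ways via the chain rule: $I(A; Z_i, Z_j) = I(A; Z_i) + I(A; Z_j | Z_i)$ and $I(A; Z_i, Z_j) = I(A; Z_j) + I(A; Z_i | Z_j)$.

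Next I would invoke the fact that conditional mutual information is non-negative, i.e. $I(A; Z_j | Z_i) \geq 0$ and $I(A; Z_i | Z_j) \geq 0$; this itself follows from the informational-divergence characterization of mutual information, since a conditional mutual information is an expectation of a Kullback--Leibler divergence and the latter is non-negative. Substituting into the two chain-rule identities and dropping the non-negative conditional term gives $I(A; Z_i, Z_j) \geq I(A; Z_i)$ and $I(A; Z_i, Z_j) \geq I(A; Z_j)$ simultaneously.

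Finally, since $I(A; Z_i, Z_j)$ is an upper bound for each of the two quantities individually, it is also an upper bound for their maximum, which yields $I(A; Z_i, Z_j) \geq \max\big( I(A; Z_i), I(A; Z_j) \big)$ and completes the argument.

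There is no real obstacle here — the statement is an immediate corollary of standard information-theoretic identities and uses no structure specific to graphs or GNNs. The only point requiring a moment of care is bookkeeping in the chain rule: one must place the conditioning variable on the correct side in each of the two expansions, so that the term being discarded is genuinely a \emph{conditional} mutual information (hence non-negative) rather than an unconditional one; once that is respected, the conclusion follows in two lines.
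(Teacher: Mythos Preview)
Your proposal is correct and follows essentially the same route as the paper: apply the chain rule for mutual information in both orders and drop the non-negative conditional term. In fact, your bookkeeping is cleaner than the paper's own proof, which writes $I(A;Z_i,Z_j)=I(A;Z_i)+I(A;Z_i\mid Z_j)$ with a mismatched index in the conditional term; your version places the conditioning variable on the correct side in each expansion, exactly as your final paragraph cautions.
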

	
	\begin{proposition}[Chain Rule for Conditional Mutual Information]
		${I(X_1, \! \cdots \!, X_n; Y \! | \! Z) \! = \! \sum_{i=1}^{n} I(X_i; Y \! | \! X_1, \! \cdots \!, X_{i-1}, Z)}$.
	\end{proposition}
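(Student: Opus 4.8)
The plan is to reduce the identity to the already-established Chain Rule for Conditional Entropy by writing conditional mutual information as a difference of conditional entropies. First I would expand the left-hand side via its definition,
\[
I(X_1, \ldots, X_n; Y \mid Z) = H(X_1, \ldots, X_n \mid Z) - H(X_1, \ldots, X_n \mid Y, Z),
\]
which is legitimate here because all variables are discrete with finite alphabets, so every entropy term is finite and the subtraction is unambiguous.

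Next I would apply the Chain Rule for Conditional Entropy twice — once conditioning on $Z$, and once conditioning on the pair $(Y, Z)$ — to obtain
\[
H(X_1, \ldots, X_n \mid Z) = \sum_{i=1}^{n} H(X_i \mid X_1, \ldots, X_{i-1}, Z)
\]
and
\[
H(X_1, \ldots, X_n \mid Y, Z) = \sum_{i=1}^{n} H(X_i \mid X_1, \ldots, X_{i-1}, Y, Z).
\]
Subtracting these two expansions term by term, each summand becomes $H(X_i \mid X_1, \ldots, X_{i-1}, Z) - H(X_i \mid X_1, \ldots, X_{i-1}, Y, Z)$, which is exactly $I(X_i; Y \mid X_1, \ldots, X_{i-1}, Z)$ by the definition of conditional mutual information with the enlarged conditioning set $X_1, \ldots, X_{i-1}, Z$. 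Summing over $i$ gives the claimed formula.

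An equivalent route is a short induction on $n$: the base case $n=1$ is a tautology, and the inductive step amounts to the two-variable instance $I(X_1, \ldots, X_{n+1}; Y \mid Z) = I(X_1, \ldots, X_n; Y \mid Z) + I(X_{n+1}; Y \mid X_1, \ldots, X_n, Z)$, itself provable by the same entropy-difference expansion. I would present the non-inductive version, since it is more transparent and directly reuses the preceding proposition.

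I do not anticipate a genuine obstacle here — the statement is a standard bookkeeping identity. The only point requiring care is tracking the conditioning sets consistently when the Chain Rule for Conditional Entropy is invoked with $Z$ replaced by $(Y, Z)$, and confirming that all entropies are finite so that the termwise subtraction of the two finite sums is valid; both are immediate in the discrete, finite-alphabet setting assumed throughout.
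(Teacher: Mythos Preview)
Your proof is correct and entirely standard. Note, however, that the paper does not actually prove this proposition: it is listed among the information-theoretic preliminaries in the appendix and stated without proof, as a textbook identity assumed as background. Your entropy-difference argument reducing to the Chain Rule for Conditional Entropy is the canonical justification and would be the natural proof to supply if one were required.
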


	
	\begin{definition}[Markov]
		A discrete stochastic process is called Markov if it satisfies
		$p(x_{i+1} | x_{i} , x_{i-1}, x_{i-2}, \cdots, x_{1}) = p(x_{i+1} | x_{i}) \; \forall i$.
		As such,
		$\forall n > 1$,
		$p(x_{1}, x_{2}, \dots, x_{n}) = p(x_{1})p(x_{2}|x_{1})p(x_{3}|x_{2}) \dots p(x_{n}|x_{n-1})$.
	\end{definition}

	\begin{definition}[Causally Similarity]
		Suppose we have two stochastic processes $X(t)$ and $Y(t)$ defined
		on the ordered set $R$ with associated probability functions $p$ and $q$ and the same
		outcome sets $\{\overrightarrow{x}(t)\}$. 
		We say that the two processes are \textit{causally similar} if
		$p(\overrightarrow{x}(t) | \overrightarrow{x}(t-a)) = q(\overrightarrow{x}(t) | \overrightarrow{x}(t-a))$
		$\; \forall t \; \text{and} \; \forall a > 0$.
	\end{definition}

	\begin{remark}
		\textit{
			Two stochastic processes are causally similar
			if they are time homogenous, Markov, and share the same transition matrix.
			Besides,
			two Markov processes are also causally similar
			that is not necessarily time homogenous 
			if they share the same transition matrix at the same time step.
		}
	\end{remark}

	\begin{lemma}
		Suppose we have two causally similar stochastic processes 
		with probability functions at time t of 
		$p(\overrightarrow{x_t})$ and $q(\overrightarrow{x_t})$.
		Then
		$D(p(\overrightarrow{x_t}) || q(\overrightarrow{x_t})) \leq D(p(\overrightarrow{x_s}) || q(\overrightarrow{x_s}))$
		when $t > s$.
	\end{lemma}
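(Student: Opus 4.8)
The plan is to run the data-processing argument for relative entropy, exploiting the hypothesis that causally similar processes share their forward transition kernels. Fix $t>s$ and write $a=t-s>0$. I would consider the joint law of the two trajectory pieces $\overrightarrow{x_s}$ and $\overrightarrow{x_t}$ under $p$ and under $q$, and expand the joint divergence in the two possible orders via the chain rule for informational divergence:
\begin{align*}
D\big(p(\overrightarrow{x_s},\overrightarrow{x_t})\,||\,q(\overrightarrow{x_s},\overrightarrow{x_t})\big)
&= D\big(p(\overrightarrow{x_s})\,||\,q(\overrightarrow{x_s})\big) + D\big(p(\overrightarrow{x_t}\,|\,\overrightarrow{x_s})\,||\,q(\overrightarrow{x_t}\,|\,\overrightarrow{x_s})\big) \\
&= D\big(p(\overrightarrow{x_t})\,||\,q(\overrightarrow{x_t})\big) + D\big(p(\overrightarrow{x_s}\,|\,\overrightarrow{x_t})\,||\,q(\overrightarrow{x_s}\,|\,\overrightarrow{x_t})\big),
\end{align*}
where each conditional divergence denotes the $p$-averaged pointwise relative entropy, and the identity follows from the chain rule for entropy already recorded in the preliminaries.

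Next, I would invoke causal similarity with lag $a$: since $p(\overrightarrow{x}(t)\,|\,\overrightarrow{x}(t-a))=q(\overrightarrow{x}(t)\,|\,\overrightarrow{x}(t-a))$, the conditional divergence $D\big(p(\overrightarrow{x_t}\,|\,\overrightarrow{x_s})\,||\,q(\overrightarrow{x_t}\,|\,\overrightarrow{x_s})\big)$ in the first line vanishes, so the joint divergence equals $D\big(p(\overrightarrow{x_s})\,||\,q(\overrightarrow{x_s})\big)$. In the second line, the trailing conditional divergence is non-negative, being an average of quantities $D(\cdot\,||\,\cdot)\geq 0$, so the joint divergence is at least $D\big(p(\overrightarrow{x_t})\,||\,q(\overrightarrow{x_t})\big)$. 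Chaining these two facts gives $D\big(p(\overrightarrow{x_s})\,||\,q(\overrightarrow{x_s})\big)\geq D\big(p(\overrightarrow{x_t})\,||\,q(\overrightarrow{x_t})\big)$, which is exactly the claim.

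The main obstacle is not the algebra but aligning the hypothesis with the chain-rule step: I must be careful whether $\overrightarrow{x}(t)$ denotes the instantaneous state or the whole path up to time $t$, and whether conditioning on $\overrightarrow{x_s}$ supplies the past that the kernel identity expects. Here the Markov property assumed throughout does the work: the state at time $s$ screens off earlier history, so conditioning on $\overrightarrow{x_s}$ is equivalent to conditioning on the time-$s$ state, and the causal-similarity identity for kernels then transfers verbatim to $p(\overrightarrow{x_t}\,|\,\overrightarrow{x_s})=q(\overrightarrow{x_t}\,|\,\overrightarrow{x_s})$. I would state this reduction explicitly before applying the chain rule. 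A secondary, routine point is to confirm the chain rule for relative entropy in the averaged-conditional form used above and to note the non-negativity of $D(\cdot\,||\,\cdot)$; both are standard, and everything else is bookkeeping.
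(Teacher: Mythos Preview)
The paper does not actually supply a proof for this lemma; it is stated as a preliminary fact in Appendix~A.2 with no accompanying argument. Your proposal is correct and is precisely the standard data-processing argument for relative entropy (as in Cover and Thomas): expand the joint divergence by the chain rule in both orders, use causal similarity to kill the forward-conditional term, and use non-negativity of the backward-conditional term to obtain the inequality. Your caveat about whether $\overrightarrow{x}(t)$ denotes the state or the trajectory is well taken, and your resolution via the Markov property is the right one; nothing further is needed.
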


	\begin{lemma}
		In a stationary Markov process, the entropy conditioned 
		on the initial condition is non-decreasing.
		\begin{proof}
			That is, $H(X_{n}|X_{1}) \geq H(X_{n}|X_{1}, X_{2})$ 
			as further conditioning reduces entropy.
			Besides,
			$H(X_{n} | X_{1}, X_{2}) = H(X_{n}|X_{2}) = H(X_{n-1}|X_{1})$.
			Thus, $H(X_{n}|X_{1}) \geq H(X_{n-1}|X_{1})$,
			which shows that $H(X_{n}|X_{1})$ is non-decreasing.
		\end{proof}
	\end{lemma}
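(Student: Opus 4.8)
The plan is to show that the sequence $n \mapsto H(X_n \mid X_1)$ never decreases by chaining together three elementary facts: conditioning cannot increase entropy, the Markov property, and stationarity. First I would bound $H(X_n \mid X_1) \ge H(X_n \mid X_1, X_2)$, which holds for any random variables because conditioning on the extra coordinate $X_2$ can only reduce (never increase) the conditional entropy.

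Second, I would replace the pair $(X_1,X_2)$ in the conditioning by $X_2$ alone; this is where the Markov hypothesis enters. For $n \ge 2$ the triple $X_1 - X_2 - X_n$ is a Markov chain, so $X_1$ and $X_n$ are conditionally independent given $X_2$, and therefore $H(X_n \mid X_1, X_2) = H(X_n \mid X_2)$. Third, I would invoke stationarity: shifting every time index down by one leaves all joint distributions unchanged, so the pair $(X_2, X_n)$ has the same law as $(X_1, X_{n-1})$, whence $H(X_n \mid X_2) = H(X_{n-1} \mid X_1)$. Concatenating the three relations yields $H(X_n \mid X_1) \ge H(X_{n-1} \mid X_1)$ for all $n \ge 2$, i.e. the claimed monotonicity in $n$.

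The one step that deserves care — and the main (mild) obstacle — is the middle one: the Markov definition as stated asserts only one-step memorylessness, $p(x_{i+1}\mid x_i,\dots,x_1)=p(x_{i+1}\mid x_i)$, so I would spell out why this forces $X_1 \perp X_n \mid X_2$ for \emph{every} $n\ge 2$. This follows from the induced factorization $p(x_1,\dots,x_n)=p(x_1)\prod_{j=1}^{n-1} p(x_{j+1}\mid x_j)$: summing out $x_3,\dots,x_{n-1}$ leaves $p(x_n\mid x_2,x_1)=p(x_n\mid x_2)$, which is exactly the required conditional independence. A secondary bookkeeping point is to pin down precisely what ``stationary'' buys us here, namely shift-invariance of the bivariate marginals of $(X_j,X_{n+j-1})$; one could alternatively deduce the statement from the relative-entropy contraction for causally similar processes in the preceding lemma, but the direct conditioning argument above is cleaner and self-contained.
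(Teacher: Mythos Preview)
Your proposal is correct and follows exactly the same three-step argument as the paper's proof: conditioning reduces entropy, then the Markov property gives $H(X_n\mid X_1,X_2)=H(X_n\mid X_2)$, then stationarity gives $H(X_n\mid X_2)=H(X_{n-1}\mid X_1)$. The only difference is that you spell out more carefully why the one-step Markov definition implies $X_1\perp X_n\mid X_2$ for all $n\ge 2$, which the paper takes for granted.
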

	
	\subsection{Proof for Theorem~\ref{theorem: reducing MI with two chains}}
	\label{ssec: proof of reducing MI with two chains}
	
	\begin{lemma}[Invertible Transformations Are Invariant to MI]
		The mutual information is invariant
		to any invertible transformations $\psi(\cdot), \phi(\cdot)$, 
		namely, $I(X; Y) =  I(\psi(X); Y) = I(X; \phi(Y)) = I(\psi(X); \phi(Y))$.
		\label{lemma: Invertible transformations are invariant to MI}
	\end{lemma}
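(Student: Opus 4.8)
The plan is to obtain each of the three equalities from a two-sided application of the data-processing inequality (DPI), using only the elementary fact that both an invertible map and its inverse are deterministic functions. Recall the DPI: whenever $U \to V \to W$ is a Markov chain, $I(U;W) \le I(U;V)$; in particular this holds whenever $W$ is a deterministic function of $V$.

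First I would establish $I(\psi(X);Y) = I(X;Y)$. Since $\psi(X)$ is a deterministic function of $X$, the chain $Y \to X \to \psi(X)$ is Markov, so DPI gives $I(Y;\psi(X)) \le I(Y;X)$. Conversely, because $\psi$ is invertible, $X = \psi^{-1}(\psi(X))$ is a deterministic function of $\psi(X)$, so $Y \to \psi(X) \to X$ is Markov and DPI gives $I(Y;X) \le I(Y;\psi(X))$. Combining the two inequalities and using the symmetry $I(X;Y)=I(Y;X)$ yields $I(\psi(X);Y)=I(X;Y)$. Running the identical argument in the second coordinate gives $I(X;\phi(Y)) = I(X;Y)$, and then applying the first-coordinate argument once more with $X$ replaced by $\psi(X)$ gives $I(\psi(X);\phi(Y)) = I(\psi(X);Y) = I(X;Y)$, which is the full claim.

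As an alternative (discrete) route that makes the mechanism transparent, one may note that a bijection merely relabels outcomes: $H(\psi(X)) = H(X)$, and since the conditional law of $\psi(X)$ given $Y=y$ is just the pushforward of that of $X$ given $Y=y$, also $H(\psi(X)\mid Y) = H(X\mid Y)$; hence $I(\psi(X);Y) = H(\psi(X)) - H(\psi(X)\mid Y) = H(X) - H(X\mid Y) = I(X;Y)$. For continuous variables the same computation goes through with differential entropies, the Jacobian-correction terms in $h(\psi(X))$ and $h(\psi(X)\mid Y)$ being identical and cancelling in the difference.

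The main obstacle is purely technical and lives only in the continuous, measure-theoretic case: one must read ``invertible'' as \emph{bi-measurable} (a measurable bijection with measurable inverse) for the pushforward and change-of-variables manipulations to be legitimate, and be mildly careful that differential entropies are finite so the cancellation is not an indeterminate $\infty-\infty$. The DPI argument of the first two paragraphs avoids all of this, requiring nothing beyond measurability of $\psi,\phi$ together with $\psi^{-1},\phi^{-1}$, so I would present it as the primary proof and retain the entropy computation only as an illustrative remark.
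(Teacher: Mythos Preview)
Your proof is correct: the two-sided data-processing argument (DPI applied to $Y \to X \to \psi(X)$ and then to $Y \to \psi(X) \to X$ via $\psi^{-1}$) is the standard textbook derivation, and your entropy-relabelling alternative is also fine. There is nothing to compare against here, because the paper does not actually prove this lemma; it is stated without proof in the appendix and then invoked as a known fact in the proof of Theorem~\ref{theorem: reducing MI with two chains}. Your write-up therefore supplies strictly more than the paper does, and the caveat you flag about bi-measurability in the continuous case is appropriate but orthogonal to how the lemma is used downstream (the paper applies it only to linear maps given by the convolution kernel $\psi(A)$ and the weight matrix $\bm{\theta}^{i}$, where invertibility in the ordinary sense suffices).
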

	
	\begin{lemma}[Non-invertible Transformation Reduces MI]
		For any non-invertible transformation $\psi(\cdot)$, 
		it reduce the MI between $X$ and $Y$ as
		$I(X; Y) \geq I(\psi(X); Y) \geq I(\psi(X); \psi(Y))$.
		\label{lemma: Non-invertible transformation reduces MI}
	\end{lemma}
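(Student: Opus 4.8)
The plan is to derive both inequalities from the data processing inequality (DPI), realized as two one-step deterministic ``processings.'' For the left inequality, observe that $\psi(X)$ is a deterministic function of $X$, so $I(\psi(X);Y\mid X)=0$ and the triple $Y\to X\to\psi(X)$ is a Markov chain. Expanding $I(X,\psi(X);Y)$ by the chain rule for mutual information in its two possible orders gives $I(X;Y)+I(\psi(X);Y\mid X)=I(\psi(X);Y)+I(X;Y\mid\psi(X))$; since the first conditional term vanishes, $I(X;Y)-I(\psi(X);Y)=I(X;Y\mid\psi(X))\ge 0$, which is the first claimed inequality (and identifies the exact gap).

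For the right inequality I would repeat the same argument with the roles of the processed variable swapped: $\psi(Y)$ is a deterministic function of $Y$, so $\psi(X)\to Y\to\psi(Y)$ is Markov and, by the same chain-rule expansion, $I(\psi(X);Y)-I(\psi(X);\psi(Y))=I(\psi(X);Y\mid\psi(Y))\ge 0$. Chaining the two bounds yields $I(X;Y)\ge I(\psi(X);Y)\ge I(\psi(X);\psi(Y))$. The identical two steps also cover the variant in which distinct maps are applied to $X$ and to $Y$, which is what is needed when instantiating this lemma with the layer maps $g^{i}(\cdot)=\sigma(\psi(\cdot)\cdot(\cdot)\cdot\bm{\theta}^{i})$ of the two chains in Eq.~\eqref{eqn: two-Markov-chains} to prove Theorem~\ref{theorem: reducing MI with two chains}.

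Finally I would add a short remark pinning down the role of the hypothesis ``non-invertible.'' The inequalities above hold for \emph{any} (measurable) $\psi$; invertibility is exactly the case in which the two gaps $I(X;Y\mid\psi(X))$ and $I(\psi(X);Y\mid\psi(Y))$ collapse to zero, because $X$ (resp.\ $Y$) is then recoverable from $\psi(X)$ (resp.\ $\psi(Y)$), so conditioning on the image is as informative as conditioning on the original — this is precisely Lemma~\ref{lemma: Invertible transformations are invariant to MI}. Hence ``non-invertible'' should be read as naming the regime in which these gaps are permitted to be strictly positive, i.e.\ where information is genuinely discarded.

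I do not expect a hard step here; the only point needing care is to argue throughout in terms of mutual information (via the MI chain rule / DPI) rather than entropy, since the GNN states $\bm{Z}_{A}^{i}$ are continuous-valued and the companion bound $H(\psi(\cdot))\le H(\cdot)$ can fail for differential entropy, whereas the DPI and the chain-rule identities used above remain valid in the general (continuous or discrete) setting. I would also make explicit that for fixed $A$ and $\bm{\theta}^{i}$ the map $\bm{Z}\mapsto\sigma(\psi(A)\cdot\bm{Z}\cdot\bm{\theta}^{i})$ is a fixed deterministic function, so it legitimately plays the role of $\psi$, which is what lets the lemma be applied to the pair $(\bm{Z}_{A}^{i},\bm{Z}_{\bm{\hat{A}}}^{i})$.
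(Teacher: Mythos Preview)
Your argument is correct and is the standard derivation of the data processing inequality via the chain rule for mutual information. Note, however, that the paper does not actually supply a proof of this lemma: Lemmas~\ref{lemma: Invertible transformations are invariant to MI} and~\ref{lemma: Non-invertible transformation reduces MI} are stated without proof in Appendix~A.3 as known facts, and the proof environment that follows them is for Theorem~\ref{theorem: reducing MI with two chains}, which merely \emph{invokes} the two lemmas. So there is no ``paper's own proof'' to compare against here; your proposal simply fills in what the paper leaves implicit.

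Your two additional remarks are worth keeping. First, the observation that the inequalities hold for \emph{any} measurable $\psi$ and that ``non-invertible'' only names the regime where the gaps $I(X;Y\mid\psi(X))$ and $I(\psi(X);Y\mid\psi(Y))$ can be strictly positive is exactly right, and it clarifies a point the paper's phrasing leaves slightly ambiguous. Second, your caution about arguing via mutual information rather than entropy is well placed: the paper's proof of Theorem~\ref{theorem: reducing MI with two chains} does write ``$H(X)\ge H(\sigma(X))$'' in passing, which is not generally valid for differential entropy, whereas your DPI-based route avoids that issue entirely and is what actually justifies the step $I(\sigma(\cdot);\sigma(\cdot))\le I(\cdot;\cdot)$ used there.
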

	
	\begin{proof}
		
		Based on Lemma~\ref{lemma: Invertible transformations are invariant to MI}
		and Lemma~\ref{lemma: Non-invertible transformation reduces MI},
		we have the following deduction.
		As $\bm{Z}_{A}^{i+1} = \sigma(\psi(A) \cdot \bm{Z}_{A}^i \cdot \bm{\theta}^{i})$,
		where graph convolution kernel $\psi(A) \in \mathbb{R}^{N \times N}$
		and weights $\bm{\theta}^{i} \in \mathbb{R}^{D \times D}$ are invertible transformations.
		Note the activate function $\sigma(\cdot)$ (\textit{e.g.}, ReLU) 
		is a non-invertible transformation that $H(X) \geq H(\sigma(X))$,
		and $I(X; Y) \geq I(\sigma(X); \sigma(Y))$.		
		
		\begin{align}
			\begin{split}
				I(\bm{Z}_{A}^{i+1} \; ; \; \bm{Z}_{\bm{\hat{A}}}^{i+1}) 
				\; = \; & I \big( \sigma(\psi(A) \cdot \bm{Z}_{A}^i \cdot \bm{\theta}^{i}) \; ; \; \sigma(\psi(\bm{\hat{A}}) \cdot \bm{Z}_{\bm{\hat{A}}}^i \cdot \bm{\theta}^{i}) \big) \\
				\; \leq \; & I \big( \psi(A) \cdot \bm{Z}_{A}^i \cdot \bm{\theta}^{i} \; ; \; \psi(\bm{\hat{A}}) \cdot \bm{Z}_{\bm{\hat{A}}}^i \cdot \bm{\theta}^{i} \big) \\
				\; = \; & I \big( \psi(A) \cdot \bm{Z}_{A}^i \; ; \; \psi(\bm{\hat{A}}) \cdot \bm{Z}_{\bm{\hat{A}}}^i \big)
				= I \big( \bm{Z}_{A}^i \cdot \bm{\theta}^{i} \; ; \; \bm{Z}_{\bm{\hat{A}}}^i \cdot \bm{\theta}^{i} \big) \\
				\; = \; & I(\bm{Z}_{A}^{i} \; ; \; \bm{Z}_{\bm{\hat{A}}}^{i}). 
			\end{split}
		\end{align}
		
		Thus, $\forall i \in [L-1], I(\bm{Z}_{A}^{i+1} ; \bm{Z}_{\bm{\hat{A}}}^{i+1}) \! \leq \! I(\bm{Z}_{A}^{i} ; \bm{Z}_{\bm{\hat{A}}}^{i})$.
		The layer-wise MI of the two Markov chains is decreasing by layers.
	\end{proof}

	\subsection{Proof for Theorem~\ref{theorem: GRA attack lower bound}}
	\label{ssec: proof of GRA attack lower bound}
	
	
	\begin{lemma}[Finite sample bounds]
		%
		Assume all variables' empirical estimates of the mutual information are based on finite support,
		\textit{i.e.}, $K = |\hat{X}| \approx 2^{I(\hat{X}; X)}$.
		%
		Then, we denote by $\hat{I}(\cdot ; \cdot)$ the
		finite sample distribution $\hat{p}(x, y)$ for a given sample of size $n$. 
		The generalization bounds in \citep{shamir2010learning} guarantee that
		\begin{align}
			\begin{split}
				I(\hat{X}; Y) \leq& \; \hat{I}(\hat{X}; Y) + O(\frac{K|\mathcal{Y}|}{\sqrt{n}}), \\
				I(\hat{X}; X) \leq& \; \hat{I}(\hat{X}; X) + O(\frac{K}{\sqrt{n}}).
			\end{split}
		\end{align}
	\end{lemma}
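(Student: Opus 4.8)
The plan is to reduce the claim to a concentration statement about the plug-in (empirical) estimator of mutual information and then invoke the uniform deviation bound of \citet{shamir2010learning}. First I would fix a representation $\hat{X} = f(X)$ with $|\hat{\mathcal{X}}| = K$ and write $I(\hat{X};Y) = H(\hat{X}) + H(Y) - H(\hat{X},Y)$, so that $I(\hat{X};Y)$ is a fixed (universal) functional of the joint pmf $p$ on the $K|\mathcal{Y}|$-cell alphabet, while $\hat{I}(\hat{X};Y)$ is the same functional evaluated at the empirical pmf $\hat{p}_n$ built from the $n$ i.i.d.\ samples. The second inequality is the identical statement with $Y$ replaced by $X$ and the cell count $K|\mathcal{Y}|$ replaced by $K$, so the two bounds are handled in one stroke.

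Second, I would control $|I - \hat{I}|$ by splitting it into a bias term $\mathbb{E}[\hat{I}] - I$ and a fluctuation term $\hat{I} - \mathbb{E}[\hat{I}]$. For the bias, the plug-in entropy estimator underestimates $H$ by at most $O(m/n)$, where $m$ is the number of occupied cells ($m = K|\mathcal{Y}|$ for the joint and $m = K$ for each marginal), yielding a net $O(K|\mathcal{Y}|/n)$ bias on $I(\hat{X};Y)$ and $O(K/n)$ on $I(\hat{X};X)$. For the fluctuation, $H$ is a bounded-differences function of the $n$ samples with per-sample change $O(\log n / n)$, so McDiarmid's inequality gives concentration of order $O(\sqrt{\log^2 n/n})$; adding the two pieces and absorbing logarithmic factors into the $O(\cdot)$ gives the advertised $O(K|\mathcal{Y}|/\sqrt{n})$ and $O(K/\sqrt{n})$ rates, after which the lemma is obtained by retaining only the upper tail.

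Third --- and this is where the real work lies --- the representation $\hat{X}$ is not a single fixed map but is itself being optimised, so the deviation bound must hold \emph{uniformly} over the family of candidate compressions with $|\hat{\mathcal{X}}| \le K$. I would obtain this via a covering/union-bound argument over partitions of $\mathcal{X}$ into $K$ cells, which is precisely the content of the Shamir--Sabato--Tishby analysis; the subtlety is that the empirical-frequency concentration must be controlled uniformly even for cells of tiny probability, because $t \mapsto t\log t$ has unbounded derivative near $0$ and the naive Lipschitz estimate fails there. I expect this combination --- the non-Lipschitz behaviour of entropy near zero probability together with making the bound uniform in the choice of $\hat{X}$ --- to be the main obstacle, whereas the bias and McDiarmid steps are routine concentration bookkeeping; in practice one simply cites the relevant theorem of \citet{shamir2010learning} with $K$ and $n$ plugged in, which is the route I would take here.
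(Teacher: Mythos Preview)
Your proposal is correct and, in fact, far more detailed than what the paper does. The paper provides no proof for this lemma at all: the statement itself is phrased as a direct citation (``the generalization bounds in \citep{shamir2010learning} guarantee that\ldots''), and the \texttt{proof} environment that immediately follows in Appendix~A.4 is actually the proof of Theorem~\ref{theorem: GRA attack lower bound} (via Fano's inequality and the data processing inequality), not of this lemma. The lemma is stated as an auxiliary result imported wholesale from Shamir--Sabato--Tishby and is not even invoked in the subsequent argument.

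Your sketch --- entropy decomposition, bias/fluctuation split, McDiarmid, and the uniform-over-partitions covering argument with attention to the non-Lipschitz behaviour of $t\log t$ near zero --- is an accurate reconstruction of the Shamir et al.\ proof strategy. Your closing remark that ``in practice one simply cites the relevant theorem'' is exactly the route the paper takes, so on that bottom line you and the paper agree; the difference is only that you have unpacked what lies behind the citation, which the paper does not.
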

	
	\begin{proof}
		Let the random variables $X$ and $Y$ represent input and output messages with a joint probability
		$P(x,y)$. Let $e$ represent an occurrence of error, \textit{i.e.}, that $X \neq \hat{X}$
		with $\hat{X}=f(Y)$ being an approximate version of $X$.
		Fano's inequality~\citep{fano1961transmission} 
		(also known as the Fano converse) 
		states that the conditional entropy
		\begin{align}
			\begin{split}
				H(X | Y) =& \; - \sum_{i,j} P(x_i, y_j) \log P(x_i | y_j) \\
				\leq& \; H_b(e) + P(e) \log(|\mathcal{X}| + 1),
			\end{split}
		\end{align}
		where the probability of the communication error
		$P(e) \triangleq P(X \neq \hat{X}) 
		\geq \frac{H(X | Y) - 1}{\log(|\mathcal{X}|)}
		$,
		and 
		$H_b(e)$ is the corresponding binary entropy
		that computed as 
		$H_b(e) = -e \log_2(e) - (1-e) \log_2(1-e)$.

		Note that the data processing inequality~\citep{cover1999elements}
		indicates that any three variables $X, Y, Z$ that form a Markov chain $X \! \to \!  Y \!  \to \!  Z$,
		satisfy $I(X; Y) \; \geq \; I(X; Z)$ and $I(Y; Z) \; \geq \; I(X; Z)$.
		As $(A,X) \xrightarrow[]{\bm{\theta}^1} \bm{Z}_{A}^{1}$  
		and $(\bm{\hat{A}},X) \xrightarrow[]{\bm{\theta}^1} \bm{Z}_{\bm{\hat{A}}}^{1}$, 
		we have
		\begin{align}
			\begin{split}
				I(A; \bm{\hat{A}}) 
				\; \geq \; I(\bm{Z}_{A}^{1} \; ; \; \bm{Z}_{\bm{\hat{A}}}^{1}) 
				\; \geq \; I(\bm{Z}_{A}^{2} \; ; \; \bm{Z}_{\bm{\hat{A}}}^{2}) 
				\; \geq \;  \cdots \; \geq \;  I(\bm{Z}_{A}^{L} \; ; \; \bm{Z}_{\bm{\hat{A}}}^{L}) 
				\; =  \; I(\bm{H}_{A} \; ; \; \bm{H}_{\bm{\hat{A}}}) \\
			\end{split}
		\end{align}
		
		Then, according to Fano's inequality~\citep{fano1961transmission},
		the lower bound of MI $I(\bm{H}_{A} ; \bm{H}_{\bm{\hat{A}}})$ is
		\begin{align}
			\begin{split}
				I(\bm{H}_{A} \; ; \; \bm{H}_{\bm{\hat{A}}}) \; = \; & H(\bm{H}_{A}) - H(\bm{H}_{A} \; | \; \bm{H}_{\bm{\hat{A}}}) \\
				\; \geq \; & H(\bm{H}_{A}) - H_b(e) - P(e) \log(|\mathcal{H}|), \\
			\end{split}
		\end{align}
		where entropy $H(\bm{H}_{A}) = \mathbb{E}_{x \in \bm{H}_{A}}[-\log p(x)] = - \sum_{x \in \bm{H}_{A}} p(x) \log p(x)$,
		the probability of approximation error
		$P(e) = P(\bm{H}_{A} \neq \bm{H}_{\bm{\hat{A}}})$,
		and the binary entropy $H_b(e) = -e \log_2(e) - (1-e) \log_2(1-e)$.
		$\mathcal{H}$ denotes the support of $\bm{H}_{A}$.
		Specifically, the approximation fidelity
		$I(A; \bm{\hat{A}}) \; \geq \; 
		- \sum_{x \in \bm{H}_{A}} p(x) \log p(x)
		+e \log_2(e) + (1-e) \log_2(1-e)
		- P(e) \log(|\mathcal{H}|)
		$.
	\end{proof}

	\subsection{Proof for Theorem~\ref{theorem: optimal fidelity in GRA}}
	\label{ssec: proof of optimal fidelity in GRA}
	
	\begin{proof}
		
		To learn $\bm{\hat{A}}$
		given the prior knowledge $\mathcal{K} = \{X,Y,\bm{H}_{A} \}$,
		we have 
		$H(\bm{\hat{A}}) \leq H(\mathcal{K})$, and
		$\forall Z, I(Z; \bm{\hat{A}}) \leq I(Z; \mathcal{K}) = I(Z; X,Y,\bm{H}_{A})$.
		Thus,
		the recovering fidelity of $\bm{\hat{A}}$ satisfies
		${I(A; X,Y,\bm{H}_{A}) - I(A; \bm{\hat{A}}) \geq 0}$.
		Then, we obtain the upper bound of the attack fidelity
		with the optimal recover adjacency $\bm{\hat{A}}^{*}$, namely,
		\begin{align}
			\begin{split}
				\bm{\hat{A}}^{*} = \max_{\bm{\hat{A}}} I(A; \bm{\hat{A}}) = I(A;\mathcal{K}) = I(A; X,Y,\bm{H}_{A}), \\
				s.t.\; \; I(A; \mathcal{K} | \bm{\hat{A}}^{*}) =  I(A; \bm{\hat{A}}^{*} | \mathcal{K}) =  0.
			\end{split}
		\end{align}

		Solving MC-GRA (Eq.~\eqref{eqn: MC-GRA})
		that $\exists \alpha_{1}, \alpha_{2} \in \mathbb{R}$,
		$\bm{\hat{A}}^* \! = \! \arg \max_{\bm{\hat{A}}} 
		\sum_{i=1}^{L} 	
		\alpha_{1} I(\bm{H}_A ; \bm{H}^{i}_{\bm{\hat{A}}})
		+ \alpha_{2} I(Y ; \bm{Y}_{\bm{\hat{A}}})$
		yields a sufficient solution to achieve the optimal fidelity, \textit{i.e.},
		$\bm{\hat{A}}^{*}: I(A; \bm{\hat{A}}^{*}) \! = \! I(A; X,Y,\bm{H}_{A})$.
		However, the optimal $\bm{\hat{A}}^{*}$
		does not necessarily mean exactly recover the original $A$,
		as $H(A | \bm{\hat{A}}^{*}) = H(A) - I(A; \bm{\hat{A}}^{*}) \geq 0$.
		Intuitively,
		the perfect recovery can not be achieved
		due to the data compression nature of the learning process.
		Besides,
		$H(A) \geq \max_{Z \in \mathcal{K}} H(Z)$ is a usual case
		as the hidden dimension $D 	\ll N$.
		The remaining information, \textit{i.e.},
		$H(A | \bm{\hat{A}}^{*}) = H(A | \mathcal{K})$,
		that is unobservable from $\mathcal{K} = \{X,Y,\bm{H}_{A}\}$,
		can not be recovered unless additional information is provided.
		
	\end{proof}

	\subsection{Proof for Theorem~\ref{theorem: maximum adjacency information}}
	\label{ssec: proof of maximum adjacency information}
	\begin{proof}
		$\forall X, Y$, we have
		$I(X; Y) \leq I(X; X) = H(X)$.
		Thus, the MI between 
		representations $\bm{H}_{A}$ and adjacency $A$ satisfies that
		$I(A; \bm{H}_{A}) \leq I(A;A) = H(A)$.
		Which means,
		the upper bound of the MI, 
		\textit{i.e.}, the worst privacy leakage as $I(A; \bm{H}_{A}) \leq H(A)$,
		is that all the private information $H(A)$ about the adjacency 
		is obtained for the attacker.
	\end{proof}

	\subsection{Proof for Theorem~\ref{theorem: minimum adjacency information}}
	\label{ssec: proof of minimum adjacency information}
	
	\begin{proof}
		For any sufficient graph representations $\bm{H}_{A}$ 
		of adjacency $A$ \textit{w.r.t.} task $Y$
		introduced in Proposition~\ref{prop: sufficient graph representation},
		its MI with $A$ satisfies that
		$I(A; Y) = I(\bm{H}_{A}; Y)$,
		as $\bm{H}_{A}$ can be seen as extracted from $A$.
		However,
		$I(A; \bm{H}_{A}) \! \geq \! I(A; Y)$
		as the data processing inequality~\citep{cover1999elements}
		in Markov chain $A \to \bm{H}_{A} \to Y$.
		Based on the two above conditions,
		the minimum information $I(A; \bm{H}_{A}) = I(A; Y)$
		can be achieved  if and only if $I(A; \bm{H}_{A} | Y) = 0$.
		That is, the optimal representations $\bm{H}_{A}^*$ satisfy 
		(1) sufficient condition that $I(A; Y) = I(\bm{H}_{A}^*; Y)$,
		and (2) minimal condition that $I(A; \bm{H}_{A}^*) = I(A; Y)$.
		
		Thus, $I(A; \bm{H}_{A}^* | Y) = I(A; \bm{H}_{A}^*, Y) - I(A,Y) 
		= I(A,Y) - I(A,Y) = 0$.
	\end{proof}

	\subsection{Proof for Theorem~\ref{theorem: GPB approximate the optimal representation}}
	\label{ssec: proof of GPB approximate the optimal representation}
	\begin{proof}
		
		When degenerate $\beta_c \! = \! 0$ and $\beta^{i} \! = \! \beta$,
		MC-GPB is equivalent to
		minimizing the Information Bottleneck Lagrangian~\citep{shwartz2017opening}, \textit{i.e.},
		$\mathcal{L}(p(\bm{Z}|A)) = H(Y|\bm{Z}) + \beta I(\bm{Z}; A)$,
		in the limit ${\beta \to 0}$.
		Specifically,
		$\mathcal{L}(p(\bm{Z}|A))
		= H(Y| \bm{Z}) + \beta I(\bm{Z}; A)
		= H(Y) - I(\bm{Z}; Y) + \beta I(\bm{Z}; A)
		\propto - I(\bm{Z}; Y) + \beta I(\bm{Z}; A)$, 
		where entropy $H(Y)$ is a constant.
		Then, we deduce the optimal case of 
		$\min \mathcal{L}(p(\bm{Z}|A)) = \max I(\bm{Z}; Y) - \beta I(\bm{Z}; A)$
		as follows.
		\begin{align}
			\begin{split}
				&\max I(\bm{Z}; Y) - \beta I(\bm{Z}; A)  \\
				=&	\max \big(I(Y; \bm{Z}, A)	- I(A; Y | \bm{Z})\big) - \beta \big(I(\bm{Z}; A, Y) - I(A; Y | \bm{Z})\big) \\
				=&	\max I(Y; \bm{Z}, A)	- (1-\beta) I(A; Y | \bm{Z}) - \beta I(\bm{Z}; A, Y) \\
				= &\max 	I(Y; A)	- (1-\beta) I(A; Y | \bm{Z}) - \beta I(\bm{Z}; A, Y) \\
				=&	\max (1-\beta) I(A; Y) - (1-\beta) I(A ;Y | \bm{Z}) - \beta I(\bm{Z} ; A | Y)	\\
				=& (1-\beta) I(A; Y).
			\end{split}
		\end{align} 
		
		As the two MI terms
		$I(A ;Y | \bm{Z}) \geq 0$ and $I(\bm{Z} ; A | Y) \geq 0$,
		the optimal $\bm{Z}^*$
		should satisfies that $I(A ;Y | \bm{Z}^*) = I(\bm{Z}^* ; A | Y) = 0$.
		As such, it yields a sufficient representation $\bm{Z}$ of 
		data $A$ for task $Y$,
		that is an approximation to the minimal and sufficient representations $\bm{Z}^*$ 
		in Proposition~\ref{prop: minimal graph representation},
		\textit{i.e.},
		$\bm{Z}^* = \arg \min_{\bm{Z}: \; I(\bm{Z};Y) = I(A;Y)} I(\bm{Z}; A)$.
	\end{proof}

	\section{Full empirical study}

	\subsection{Datasets}
	\label{ssec: datasets}
	Six common datasets are utilized in experiments,
	which are collected from four diverse domains:
	(1) Cora and Citeseer~\citep{sen2008collective} are citation networks
	where nodes are documents, and edges indicate citations among them;
	(2) Polblogs~\citep{adamic2005political} is a social network of political blogs
	where nodes represent blogs with political leaning while edges are citations;
	(3) USA and Brazil~\citep{ribeiro2017struc2vec}
	are air-traffic networks where nodes are airports and edges denote airlines;
	(4) AIDS~\citep{riesen2008iam} 
	is a chemical network where
	each node is an atom, and each edge is a chemical bond.
	The data statistics are in Tab.~\ref{tab:statistics}.

	\begin{table}[H]
		\centering
		\caption{Dataset statistics. 
			The hard homophily of an edge $e_{ij}$ is computed as $\mathbb{I}(y_i, y_j)$ with node labels,
			while the soft homophily is calculated by $\cos(x_i, x_j)$ with node features.
                "---" means this dataset is intrinsic without node features.
		}
		\setlength\tabcolsep{10pt}
		\vspace{-6px}
		\begin{tabular}{c|ccccccc}
			\toprule
			dataset &  {    Cora    }  & Citeseer &  Polblogs   & USA   &   Brazil &  AIDS     \\
			\midrule
			\# Nodes &	2,708	&	3,327	&	 1490 &	 1190	&	131	&	1429		\\
			\# Edges &	5,278	&	4,676	&	33430	&	27164	&	2077	&	2948		\\
			\# Class &	7	&	6	&	2	&	4	&	4	&	14	\\
			\# Features  &	 1433	&	3703	&	---	&	---	&	---	&	4		\\
			Soft homophily & 0.83    &   0.81	&	--- 	&	--- 	&	---	& 0.06			\\
			Hard homophily & 0.81    &	0.74	& 0.91		& 0.70		&	0.45 &	0.51		\\
			\bottomrule
		\end{tabular}
		\label{tab:statistics}
	\end{table}

	\subsection{Full quantitative results}
	\label{sec: full quantitative results}

	
	\textbf{A further comparison of attack methods. }
	For the attack, 
	we further compare the proposed method (MC-GRA) 
	with three baselines in the below table. 
	Here, the evaluation is also with the AUC metric, 
	where a higher value indicates a better attack performance. 
	The \textbf{boldface} numbers represent the best results. 
	As can be seen from the table below, 
	the MC-GRA consistently achieves the best in all six datasets, 
	outperforming all the baselines by a large margin.
	
	
	\begin{table}[H]
		\centering		\caption{
			A further quantitative comparison of attack methods (with AUC metric).
		}
		\setlength\tabcolsep{10pt}
		\vspace{-6px}
		\begin{tabular}{c|cccccc}
			\toprule
			dataset &  {    Cora    }  & Citeseer &  Polblogs   & USA   &   Brazil &  AIDS     \\
			\midrule
			single MI (Tab.~\ref{tab:understanding-MI-term-comparison}) & .815     & .881     & .763     & .850     & .758     & .584     \\
			ensemble (Tab.~\ref{tab:understanding-MI-term-ensemble})      & .849     & .907     & .781     & .852     & .717     & .522     \\
			GraphMI          & .812     & .781     & .791     & .769     & .680     & .575     \\
			MC-GRA (Tab.~\ref{tab:exp-attack-results})        &\textbf{ .904} & \textbf{.931} & \textbf{.853} & \textbf{.870} & \textbf{.760} & \textbf{.588}  \\
			\bottomrule
		\end{tabular}
	\end{table}

	\textbf{A further comparison of defense methods.}
	For the defense, 
	we compare the proposed MC-GPB with two additional defense methods, 
	\textit{i.e.}, 
	adding random noise and differentiable privacy. 
	Specifically, we inject Gaussian noise into the model prediction, 
	termed random noise. 
	While another baseline, 
	termed differential privacy~\cite{abadi2016deep}, 
	is achieved by adding Gaussian noise to the clipped gradients in each training iteration. 
	The empirical results are shown in the below table, 
	where GraphMI~\cite{zhang2021graphmi} is used as the attack method. 
	As can be seen, 
	the defending power of random noise 
	and differential privacy comes at the price of sharply degenerating the model's accuracy. 
	By contrast, our proposed MC-GPB significantly 
	degenerates GRA with much lower AUC 
	while maintaining high accuracy simultaneously. 
	
	
	\begin{table}[H]
		\centering
		\caption{
			A further quantitative comparison of defense methods.
		}
		\setlength\tabcolsep{2pt}
		\vspace{-6px}
		\begin{tabular}{c|cc|cc|cc|cc|cc|cc}
			\toprule
			\multirow{2}{*}{dataset} &  \multicolumn{2}{c|}{{    Cora    }}  & \multicolumn{2}{c|}{Citeseer} &  \multicolumn{2}{c|}{Polblogs}   & \multicolumn{2}{c|}{USA}   &   \multicolumn{2}{c|}{Brazil} &  \multicolumn{2}{c}{AIDS}     \\
			& ACC$\uparrow$ & AUC$\downarrow$  & ACC$\uparrow$ & AUC$\downarrow$  & ACC$\uparrow$ & AUC$\downarrow$  & ACC$\uparrow$ & AUC$\downarrow$  & ACC$\uparrow$ & AUC$\downarrow$  & ACC$\uparrow$ & AUC$\downarrow$  \\
			\midrule
			No defense                 & .757 & .812     & .630 & .781     & .833 & .791     & .470 & .769     & .769 & .680     & .668 & .575     \\
			Random noise               & .620 & .657     & .570 & .727     & .802 & .759     & .440 & .754     & .634 & .713     & .572 & .559     \\
			Differential privacy     & .315 & .500     & .224 & .500     & .553 & .502     & .263 & .500     & .423 & .706     & .131 & .502     \\
			MC-GPB               & .734 & .625 & .602 & .691 & .830 & .506 & .391 & .300 & .808 & .609 & .668 & .514 \\
			\bottomrule
		\end{tabular}
	\end{table}

    \textbf{Ablation study of similarity measurement.}
	We also conduct experiments with the influence of similarity measurement since our implementation depends on the estimation of mutual information, shown as Tab.~\ref{tab:ablation-similarity-metrics}.
    As can be seen, the MC-GRA has consistent performance across different similarity measurements, while the MC-GPB exhibits a high variance for different similarity measurements.
    Therein, the HSIC and CKA are generally good choices.
    %
	\begin{table}[H]
		\centering
		\caption{
			Ablation study of similarity measurements
                (with AUC metric).
		}
		\vspace{-6px}
		\label{tab:ablation-similarity-metrics}
		\begin{tabular}{cc|cccc|cccc}
			\toprule
			\multirow{2}{*}{type} & \multirow{2}{*}{case}  & \multicolumn{4}{c|}{Cora} & \multicolumn{4}{c}{USA} \\
			& & DP & HSIC & CKA & KDE & DP & HSIC & CKA & KDE  \\
			\midrule
			\multirow{3}{*}{attack} 
			& $\mathcal{K} \! = \! \{X,  Y\}$
			& .876 & .871 &  .873 &  .876 &  .791 &  .800  &  .802 &  .802  \\
			& $\mathcal{K} \! = \!  \{X,  Y, \bm{H}_{A}\}$ 
			& .892 & .890 &  .892 &  .895 &  .856 &  .850  &  .845 &  .851  \\
			& $\mathcal{K} \! = \!  \{X,  Y, \bm{H}_{A}, \bm{\hat{Y}} \}$
			& .898 & .898 &  .904 &  .896 &  .846 &  .852  &  .818 &  .840  \\
			\midrule
			\multirow{3}{*}{defense} 
			& $I(A; \bm{H}_A)$
			& .476 & .751 &  .701 &  .706 &  .716 &  .873 &  .879  &  .883 \\
			& $I(A; \bm{\hat{Y}}_A)$
			& .508 & .688 &  .705 &  .704 &  .587 &  .542 &  .872  &  .873 \\
			& $I(A; \bm{H}_{\bm{\hat{A}}})$
			& .505 & .644 &  .644 &  .625 &  .300 &  .467 &  .770  &  .728 \\
			& Acc.
			& .306 & .635 & .758  & .734  &  .391 &  .319 &  .431 &  .447  \\
			\bottomrule
		\end{tabular}
		\vspace{-6px}
	\end{table}

        \textbf{Ablation study of parameterization methods.}
        We also provide the empirical result of different parameterization methods of MC-GRA, which are mentioned in Sec.~\ref{sec: GRA attack}. Overall, the GNNs method achieves the best score out of the three methods, especially for the dataset with given node features~(Cora, Citeseer, AIDS), and exceeds its counterparts by a large margin. Therein, the Gaussian parameterization generally performs better than its counterparts for graphs without node features. 
        
	\begin{table}[H]
		\centering
		\caption{
			Attack with different parameterization methods (with AUC metric).
		}
		\vspace{-6px}
		\label{tab:ablation-parameterization}
		\setlength\tabcolsep{10pt}
		\begin{tabular}{c|cccccc}
			\toprule
			variant & Cora & Citeseer &  Polblogs   & USA   &   Brazil &  AIDS   \\
			\midrule
			Direct Matrix                
			& .890 & .580 & .684  & .737 & .521 & .540  \\
			Gaussian                
			& .893 & .654 & .777  & .846 & .758 & .567  \\
			GNNs                
			& .891 & .889 & .803  & .776 & .731 & .662  \\
			\bottomrule
		\end{tabular}
	\end{table}

	\textbf{A further empirical study about the evaluation metric.}
	Without requiring any estimation, we utilize the AUC (area under the curve) metric with the ground-truth edges in $A$ to quantify the mutual information $I(A; \bm Z)$. And alternatively, the metric can be replaced by AP (Average Precision), MRR (Mean Reciprocal Rank), Hit@K (the ratio of positive edges that are ranked at the K-th place or above), which are also common in the link prediction task.
	
	The metrics mentioned above treat all edges equally indeed. An objective measurement is required to further discriminate between high-value and low-value links. Here, the link homophily is a proper measurement. For instance, the link (Jaime, Tyrion) in Figure.~\ref{fig: motivation} can be seen as a \textit{homogeneous} link because Jaime and Tyrion have the same node labels (\textit{i.e.}, Lannister). On the other hand, the link (Daenerys, Jon) can be seen as a \textit{heterogeneous} link because Daenerys and Jon do not have the same node labels (as audiences in the earlier period, we do not know they are Targaryens, and they have a kinship). 
	Formally, the homogeneous links can be denoted as $\{e_{ij}: y_i = y_j\}$ while the heterogeneous links are $\{e_{ij}: y_i \neq y_j\}$, where $y_i,y_j$ are node labels of node $i$, node $j$. 
	In what follows, we further investigate the effectiveness of GRA on these two kinds of links.
	
	\textbf{(1)} For the attack, the homogeneous links are much easier to recover, as shown in the table below. More importantly, it is observed that the high-value heterogeneous links are naturally protected but can still be recovered to some extent.
	
	
	\begin{table}[H]
		\centering
		\caption{
			A further quantitative comparison 
			of attack methods
			on homogeneous or heterogeneous links
                (with AUC metric).
		}
		\setlength\tabcolsep{10pt}
		\vspace{-6px}
		\begin{tabular}{c|cccccc}
			\toprule
			dataset &  {    Cora    }  & Citeseer &  Polblogs   & USA   &   Brazil &  AIDS     \\
			\midrule
			MC-GRA (Homogeneous links)    & .960 & .917     & .896     & .951 & .891   &  .585  \\
			MC-GRA (Heterogeneous links)  & .684 & .861     & .298     & .716 & .564   &  .551  \\
			GraphMI (Homogeneous links)   & .724 & .799     & .717     & .919 & .871   &  .707  \\
			GraphMI (Heterogeneous links) & .569 & .675     & .391     & .666 & .728   &  .437  \\
			\bottomrule
		\end{tabular}
	\end{table}
	
	\textbf{(2)} For defense, we apply the proposed MC-GPB to protect GCN against the GRA by GraphMI. In addition, we also implement a revised version, \textit{i.e.}, MC-GPB-hetero, which only focuses on protecting the heterogeneous links of the original adjacency matrix. As results are shown in the table below, the recovery of heterogeneous links is significantly degenerated by MC-GPB and further degenerated by MC-GPB-hetero. Thus, we justify that MC-GPB and its revised version are capable of protecting the high-value heterogeneous links.
	
	
	\begin{table}[H]
		\centering
		\caption{
			A further quantitative comparison 
			of attack methods
			on homogeneous or heterogeneous links
                (with AUC metric).
		}
		\setlength\tabcolsep{10pt}
		\vspace{-6px}
		\begin{tabular}{c|cccccc}
			\toprule
			dataset &  {    Cora    }  & Citeseer &  Polblogs   & USA   &   Brazil &  AIDS     \\
			\midrule
			No defense (Heterogeneous links)         & .569 & .675    & .391    & .666 & .728  & .437  \\
			MC-GPB (Heterogeneous links)        & .532 & .584    & .453    & .552 & .530  & .471  \\
			MC-GPB-hetero (Heterogeneous links) & .493 & .515    & .210    & .494 & .416  & .423  \\
			\bottomrule
		\end{tabular}
	\end{table}

	\textbf{Empirical results on large-scale datasets.}
	Here, we conduct an empirical study on two large-scale datasets 
	for node property prediction, \textit{i.e.}, 
	ENZYME (6254 nodes and 23914 edges) and 
	OGB-Arxiv (8532 nodes and 26281 edges). 
	Detailed statistics are shown below.
        Specifically,
	we use GraphSAINT~\cite{zeng2019graphsaint} random walk sampler to extract the subgraph for illustration. 
	The dataset split setting of train/validate/test sets is consistent with other datasets used in this work.
	
	
	\begin{table}[H]
		\centering
		\caption{Dataset statistics of the two large-scale datasets.}
		\setlength\tabcolsep{10pt}
		\vspace{-6px}
		\begin{tabular}{c|ccccc}
			\toprule
			dataset &  \# Nodes  & \# Edges  &  \# Class   & \# Features   &   Hard homophily     \\
			\midrule
			ENZYME    & 6254    & 23914   & 3         & 18          & 0.629   \\
			OGB-Arxiv & 8532    & 26281   & 40        & 128         & 0.618 \\
			\bottomrule
		\end{tabular}
	\end{table}

        \textbf{(1)}
        Then, we evaluate the performance of MC-GRA with $\mathcal{K} = \{X, Y\}$, as shown in the table below (the higher the AUC value, the better the attack performance). As can be seen, 
        MC-GRA is also effective on these two large-scale datasets. 
        Besides, MC-GRA outperforms the baseline GRA method by a large margin.
	
	
	\begin{table}[H]
		\centering
		\caption{
			A comparison 
			of attack methods on large-scale datasets
                (with AUC metric).
		}
		\setlength\tabcolsep{10pt}
		\vspace{-6px}
		\begin{tabular}{c|cc}
			\toprule
			dataset &  ENZYME &  OGB-Arxiv     \\
			\midrule
			GraphMI		& .494  &  .828		\\
			MC-GRA     & .761   &  .891      \\
			\bottomrule
		\end{tabular}
	\end{table}

        \textbf{(2)}
        We also conduct the experiment of our defense method MC-GPB on these two datasets, with GraphMI as the attack method. As shown in the table below (the lower, the better), MC-GPB degenerates both kinds of GRA (\textit{i.e.}, MC-GRA and GraphMI), which empirically proves the effectiveness of our defense method on large-scale datasets. 
	
	
	\begin{table}[H]
		\centering
		\caption{
			A comparison 
			of attack methods on large-scale datasets
			our defense method MC-GPB
                (with AUC metric).
		}
		\setlength\tabcolsep{10pt}
		\vspace{-6px}
		\begin{tabular}{c|cc}
			\toprule
			dataset &  ENZYME &  OGB-Arxiv     \\
			\midrule
			GraphMI		& .488 (1.2\%$\downarrow$)  &  .533  (35.6\%$\downarrow$) 	\\
			MC-GRA    &  .607 (20.2\%$\downarrow$) & .848 (4.8\%$\downarrow$)     \\
			\bottomrule
		\end{tabular}
	\end{table}

	\textbf{Attacks without node feature.}
	We further implement the experiments without node features $X$ in the following. Note that the usair, brazil, and polblogs datasets have no initial node feature. Therefore, calculating $I(A;X)$ in Table.~\ref{tab:understanding-MI-term-comparison} with these datasets is infeasible due to the lack of $X$. Here, we present the attack results of our method on Cora, Citeseer, and AIDS datasets without using the node feature. 
	
	
	\begin{table}[H]
		\centering
		\caption{
			A further quantitative comparison of attack methods
			without access to the node feature
                (with AUC metric).
		}
		\setlength\tabcolsep{10pt}
		\vspace{-6px}
		\begin{tabular}{c|ccc}
			\toprule
			dataset &  {    Cora    }  & Citeseer &  AIDS     \\
			\midrule
			Dot-Product (Tab.~\ref{tab:understanding-MI-term-ensemble})    &  .849    &     .907     &     .521     \\
			GraphMI (without $X$)    &  .802    &     .759     &     .575     \\
			MC-GRA  ($\mathcal{K}=\{\bm{H}_A\}$)    &  .834    &     .887     &     .575     \\
			MC-GRA ($\mathcal{K}=\{\hat{\bm{Y}}_A\}$)    &  .771    &     .890     &     .540     \\
			MC-GRA ($\mathcal{K}=\{\bm{Y}\}$)    &  .864    &     .853     &     .525     \\
			MC-GRA ($\mathcal{K}=\{\bm{H}_A, \hat{\bm{Y}}_A\}$)    &  .828    &     .918     &     .525     \\
			MC-GRA ($\mathcal{K}=\{\bm{H}_A, \bm{Y}\}$)    &  .875    &     .919     &     .539     \\
			MC-GRA ($\mathcal{K}=\{\hat{\bm{Y}}_A, \bm{Y}\}$)    &  .867    &     .896     &     .539     \\
			MC-GRA ($\mathcal{K}=\{\bm{H}_A, \hat{\bm{Y}}_A, \bm{Y}\}$)    &  .883    &     .914     &     .580     \\
			\bottomrule
		\end{tabular}
	\end{table}

	As shown in the above table, 
	without using node features $X$ as the prior knowledge, 
	the MC-GRA is still effective in recovering adjacency with considerable AUC results. 
	Besides, 
	the performance of MC-GRA is still better than the baselines. 
        In fact, node features do not always exist, \textit{e.g.}, 
	for the Polblogs, USA, and Brazil datasets. 
	While on the other hand, the characteristic adjacency $A$ is indispensable for graph learning. Besides, we further justify the feasibility of our MC-GRA without node features. 
	For the extension of GRA, a more fine-grained study on graph properties is intriguing, \textit{e.g.}, density, community, number of triangles \textit{w.r.t.} adjacency $A$. To what extent can the above properties be recovered will shed insights into the power of GRA and the memorization effect of GNNs.
 

	\textbf{Why would some of the model accuracy benefit 
	from the defense mechanism?}
        As shown in Tab.~\ref{tab:exp-defense-results},
        MC-GPB can also bring improvement in classification accuracy
        on partial datasets.
	We speculate that the reason is 
	\textit{forgetting more might also lead to learning better in some cases}.
	We provide a three-fold analysis from the information-theory perspective
        as follows.
        
        \textbf{(1)}
	For brevity, we consider a simplified objective of 
	the graph privacy bottleneck in Eq.~\eqref{eqn: tighter-defense-MI}, 
	\textit{i.e.}, to solve $\min -I(\bm{H};Y) + \beta \cdot I(\bm{H};A)$ 
	\textit{w.r.t.} representations $\bm{H}$, graph adjacency $\bm{A}$, and node labels $\bm{Y}$. 
	The maximin game here is to encourage the accuracy by a higher $I(\bm{H};Y)$, 
	and reduce the complexity by regularizing the $I(\bm{H};A)$ with $\beta$ for the trade-off. 

        \textbf{(2)}
	In this case, the spurious correlation measured by $I(\bm{H};A|Y)$ will also be reduced and help the inference in test time. The reason is that absorbing too much irrelevant information between $\bm{A}$ and $\bm{Y}$, which can be superficially but not causally associated, will lead to degenerated test performance for GNNs~\citep{zhao2022learning}. Thus, a lower $I(\bm{H};A|Y)$ here encourages the forgetting of adjacency and might bring a better generalization power in the test-time inference of GNNs. While the optimal case, \textit{i.e.}, $I(\bm{H};A|Y)=0$, is also discussed in Theorem~\ref{theorem: minimum adjacency information}.

        \textbf{(3)}
	Another supporting material is that only relying on a subgraph for reasoning can also boost the test-time performance~\cite{miao2022interpretable}. The method GSAT~\cite{miao2022interpretable} aims to extract a subgraph $G_s$ as the interpretation. It inherits the same spirit of information bottleneck in its optimization, \textit{i.e.}, $\min -I(G_s;Y) + \beta \cdot I(G_s;G)$. The integrated subgraph sampler can explicitly remove the spurious correlation or noisy information in the entire graph $G$.
	
	Besides, we should note that in the cases where the model does not suffer from severe spurious correlations. The defense mechanism usually induces the drop trade-off regarding the model accuracy.

	\subsection{Full qualitative results}
	\label{sec: full qualitative results}

	\textbf{The recovered adjacency.}
        Fig.~\ref{appd:adj:cora}-\ref{appd:adj:polblogs} shows the recovered adjacency of each dataset, which is grouped by node label under different attack strategies. 
        In addition, we also provide the recovered adjacency on protected GNN, which is training with our proposed MC-GPB mechanism~(sub-figure (d) and (e)). As can be seen, GNN training with MC-GPB successfully resists both attacks in terms of a larger amount of wrong prediction compared to the normal GNN. 
        For example, for the Cora dataset, MC-GRA~(Fig.~\ref{appd:adj:cora:gra_normal}) achieves a better result compared to GraphMI~(Fig.~\ref{appd:adj:cora:gmi_normal}) under both normal training strategy, in terms of fewer error predictions~(red dots). Whereas MC-GPB successfully defended the MC-GRA~(Fig.~\ref{appd:adj:cora:gra_protected}) and GraphMI~(Fig.~\ref{appd:adj:cora:gmi_protected}), and MC-GRA still have better performance compared to GraphMI with protected GNN.
        
	\begin{figure}[H]
		\centering
            \hfill		
            \subfigure[Ground truth]
		{{\includegraphics[height=3.3cm]{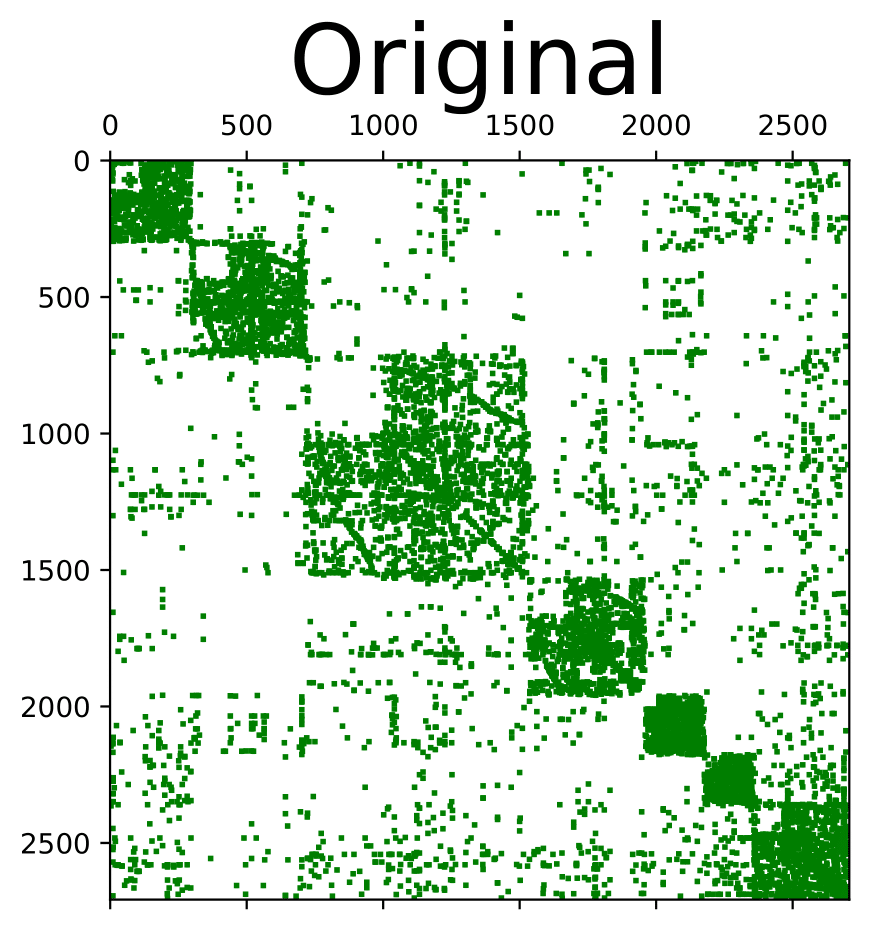}}}
		\hfill
		\subfigure[With normal GNN]
		{{
                \includegraphics[height=3.3cm]{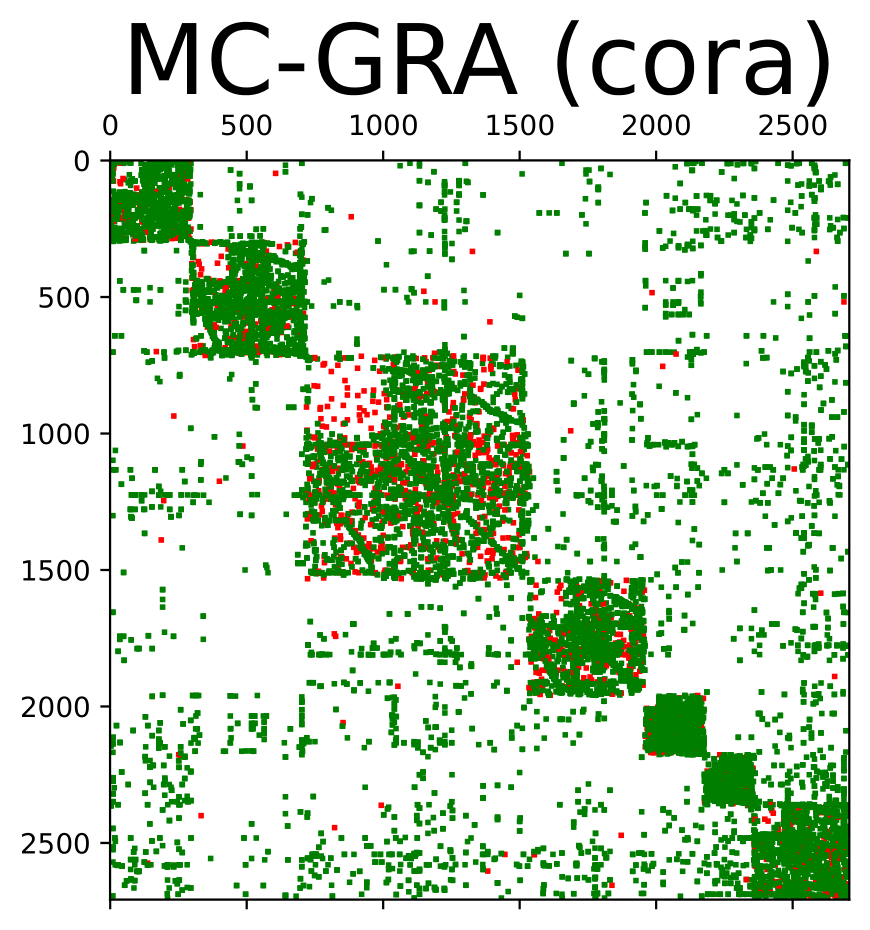} 
                \label{appd:adj:cora:gra_normal}
            }}
		\hfill
		\subfigure[With normal GNN]
		{{
                \includegraphics[height=3.3cm]{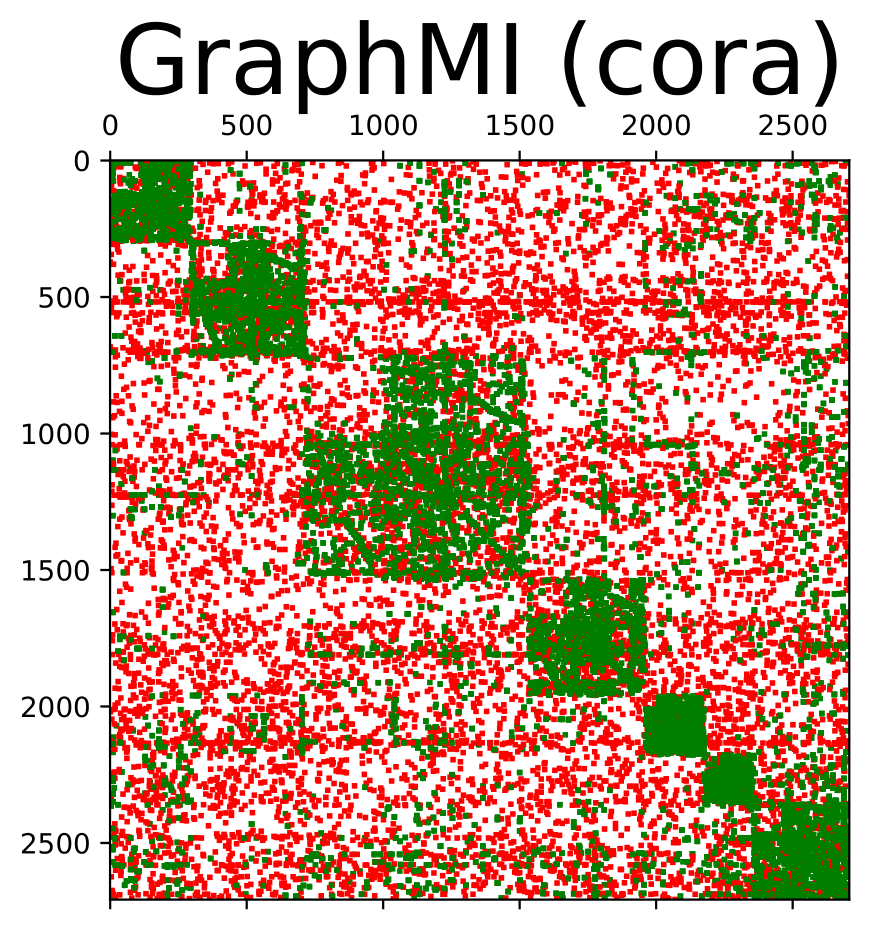}
                \label{appd:adj:cora:gmi_normal}
            }}
		\hfill
		\subfigure[With protected GNN]
		{{
                \includegraphics[height=3.3cm]{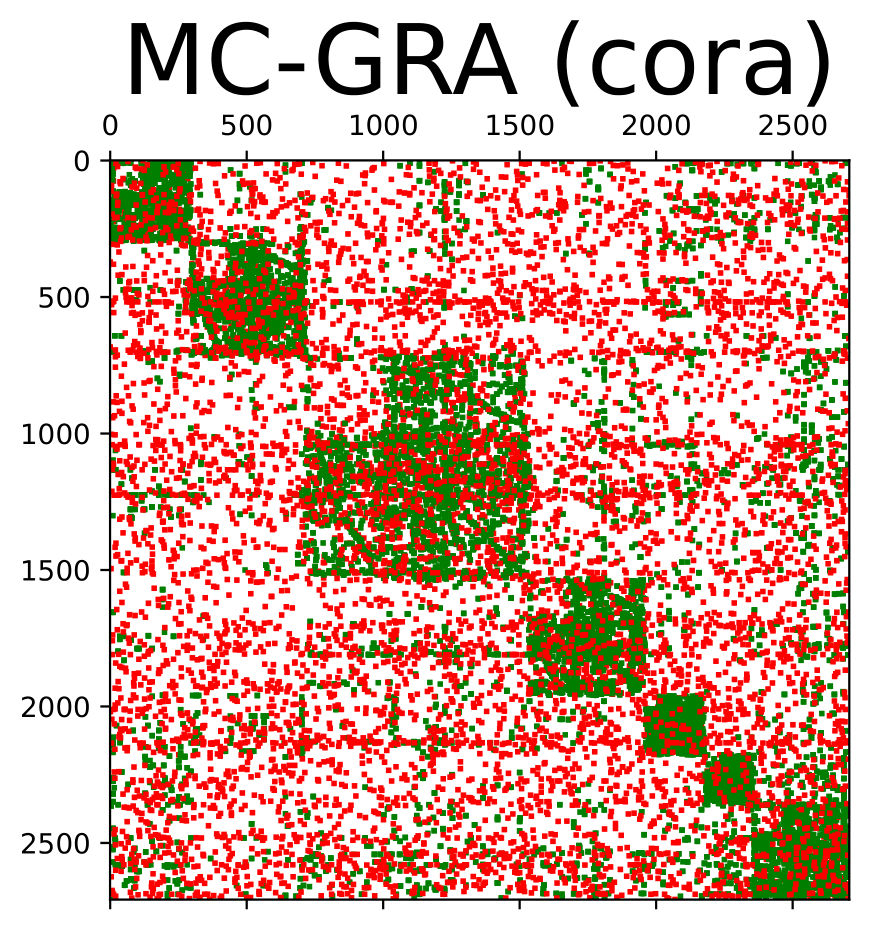}
                \label{appd:adj:cora:gra_protected}
            }}
		\hfill
		\subfigure[With protected GNN]
		{{
                \includegraphics[height=3.3cm]{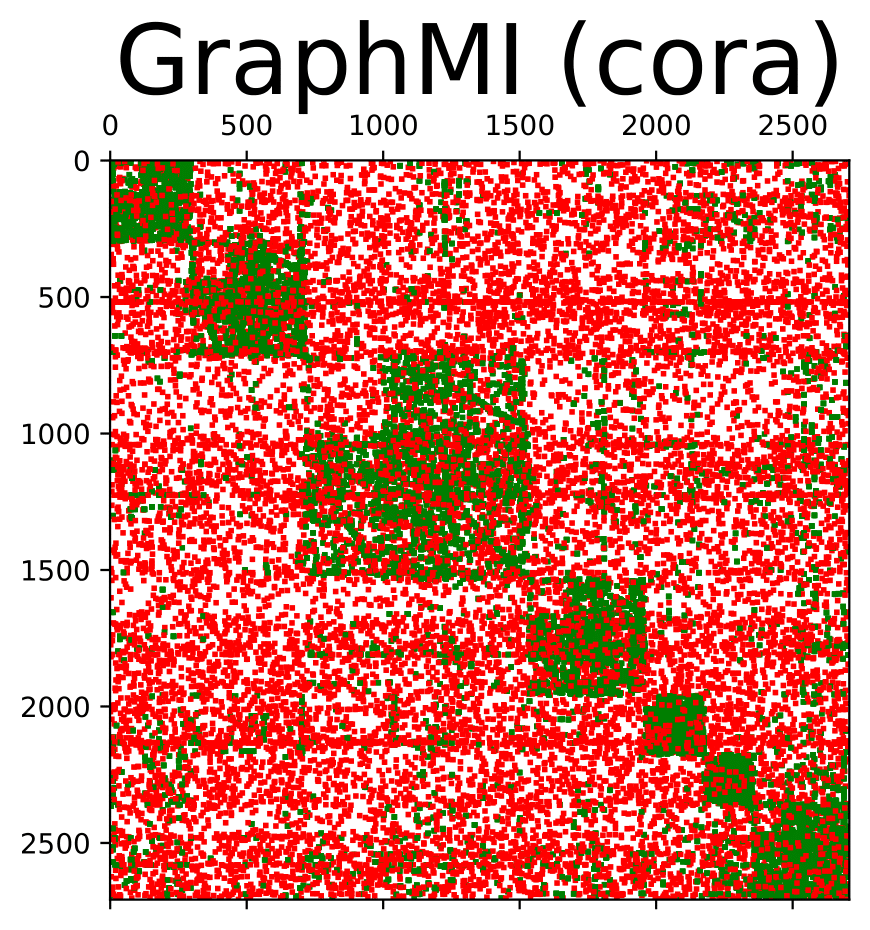}
                \label{appd:adj:cora:gmi_protected}
            }}
		\vspace{-8pt}
		\caption{
			Recovered adjacency on Cora dataset.
		}
        \label{appd:adj:cora}
	\end{figure}

	\begin{figure}[H]
		\centering
		\subfigure[Ground truth]
		{{\includegraphics[height=3.3cm]{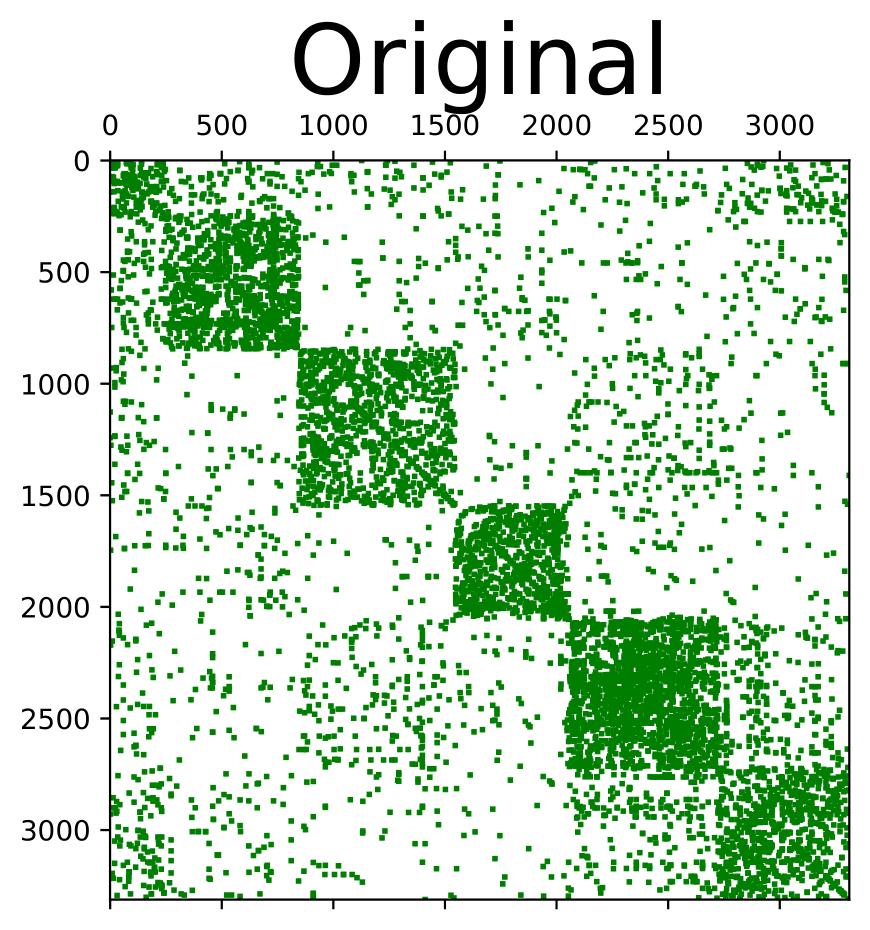}}}
		\hfill
		\subfigure[With normal GNN]
		{{\includegraphics[height=3.3cm]{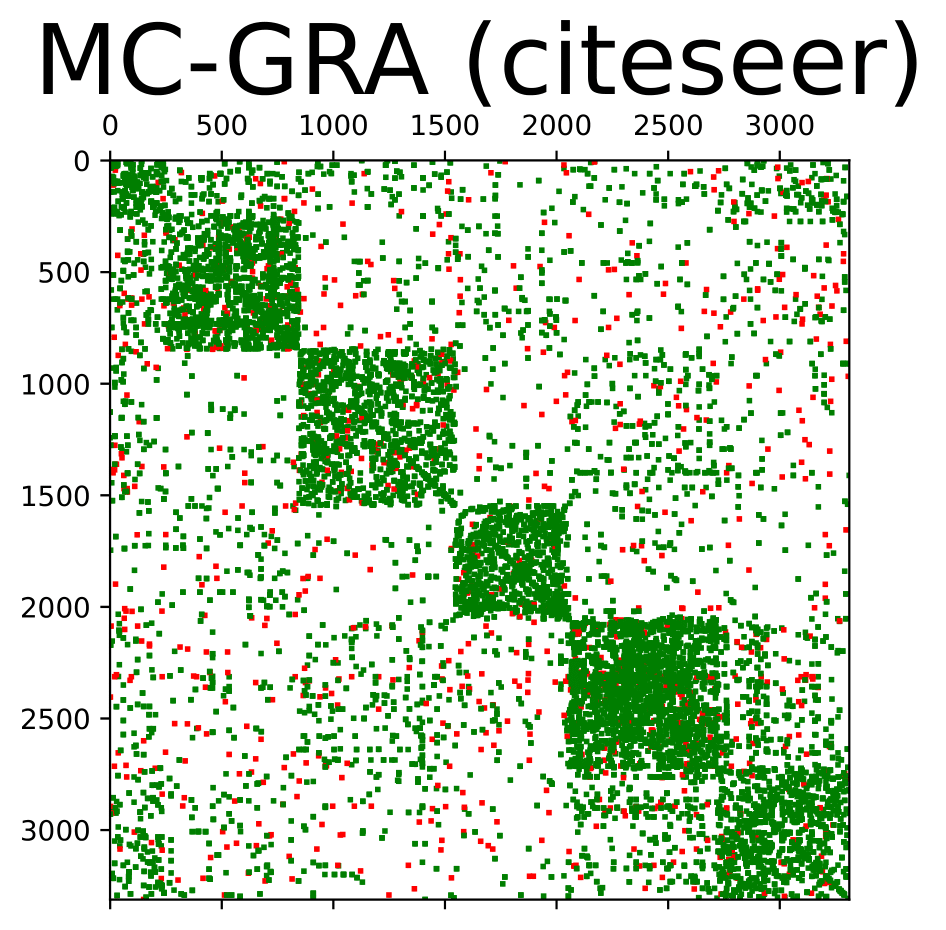}}}
		\hfill
		\subfigure[With normal GNN]
		{{\includegraphics[height=3.3cm]{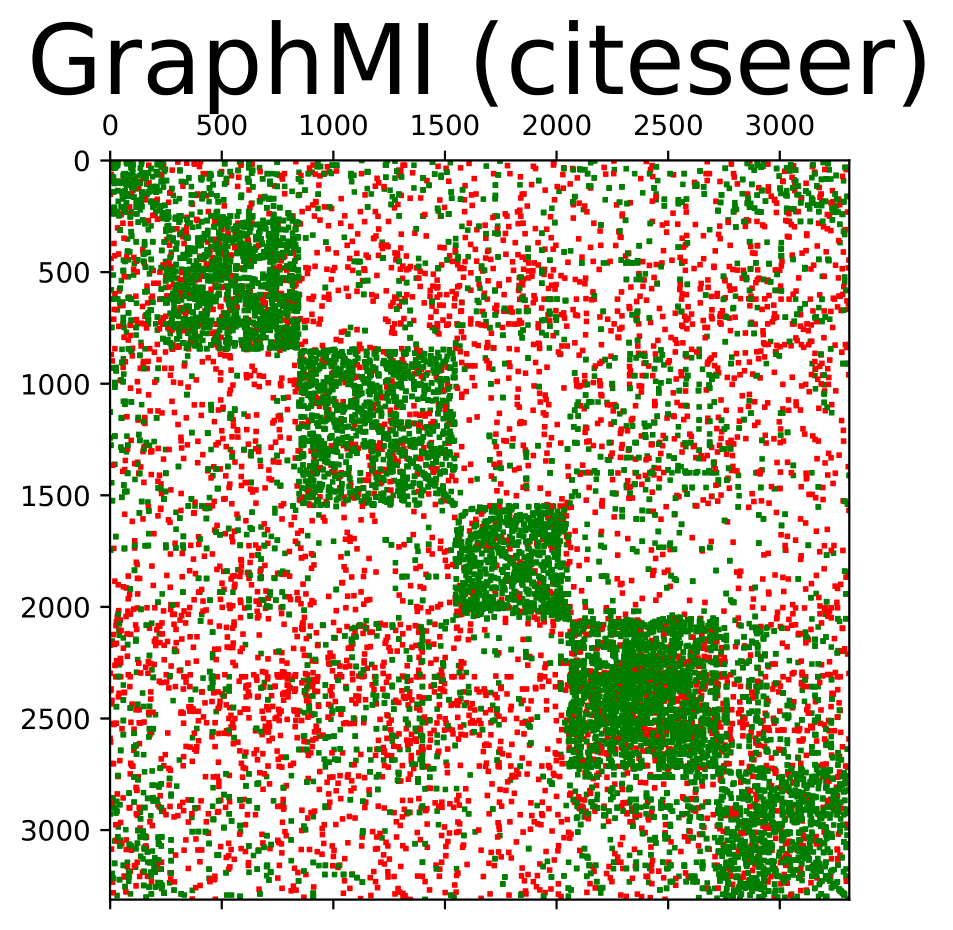}}}
		\hfill
		\subfigure[With protected GNN]
		{{\includegraphics[height=3.3cm]{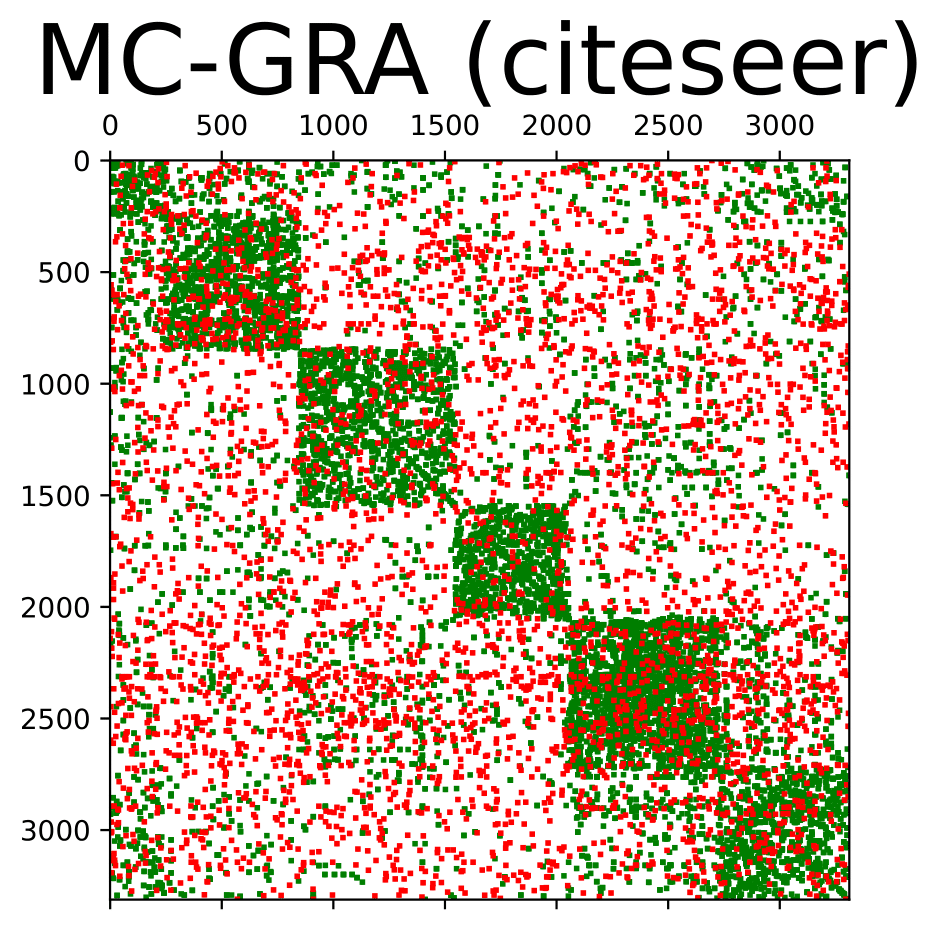}}}
		\hfill
		\subfigure[With protected GNN]
		{{\includegraphics[height=3.3cm]{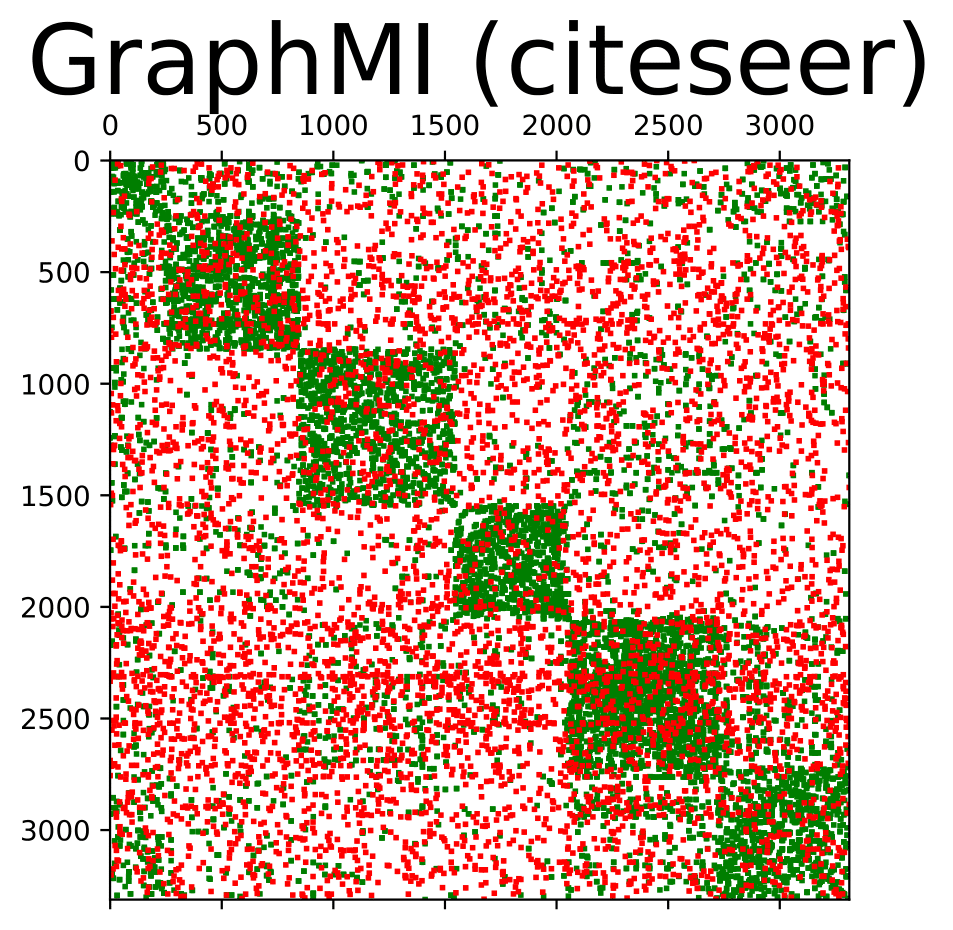}}}
		\vspace{-8pt}
		\caption{
			Recovered adjacency on Citeseer dataset.
		}
        \label{appd:adj:citeseer}
	\end{figure}
	
	\begin{figure}[H]
		\centering
		\subfigure[Ground truth]
		{{\includegraphics[height=3.3cm]{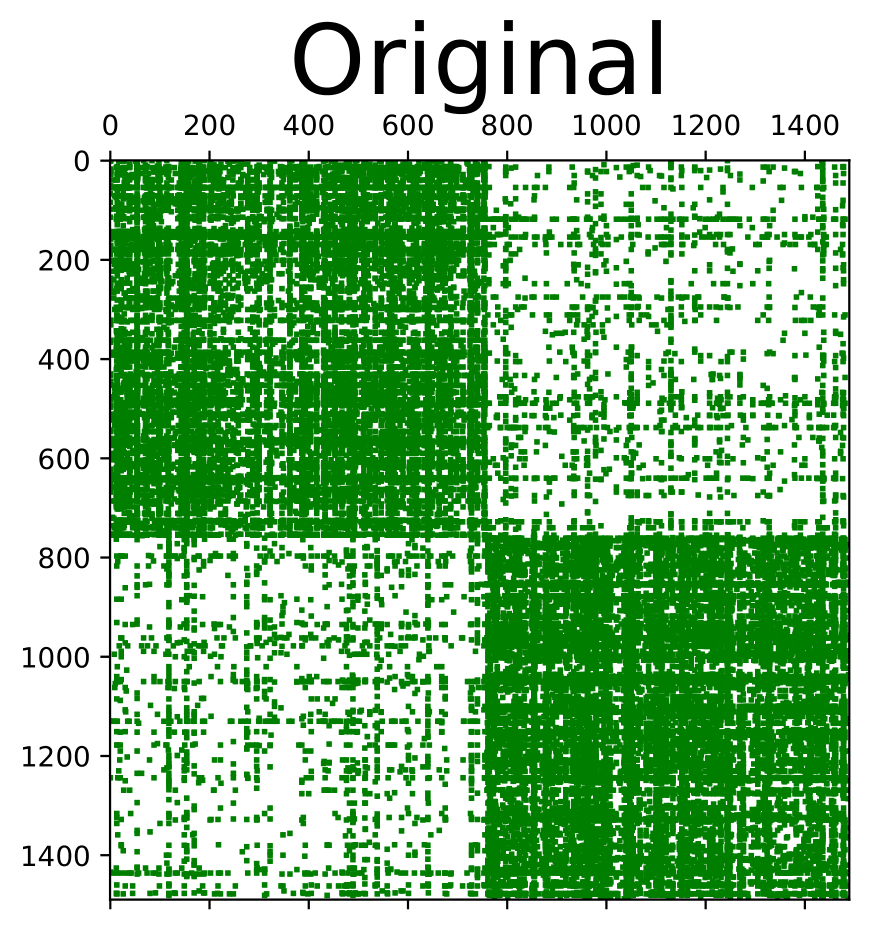}}}
		\hfill
		\subfigure[With normal GNN]
		{{\includegraphics[height=3.3cm]{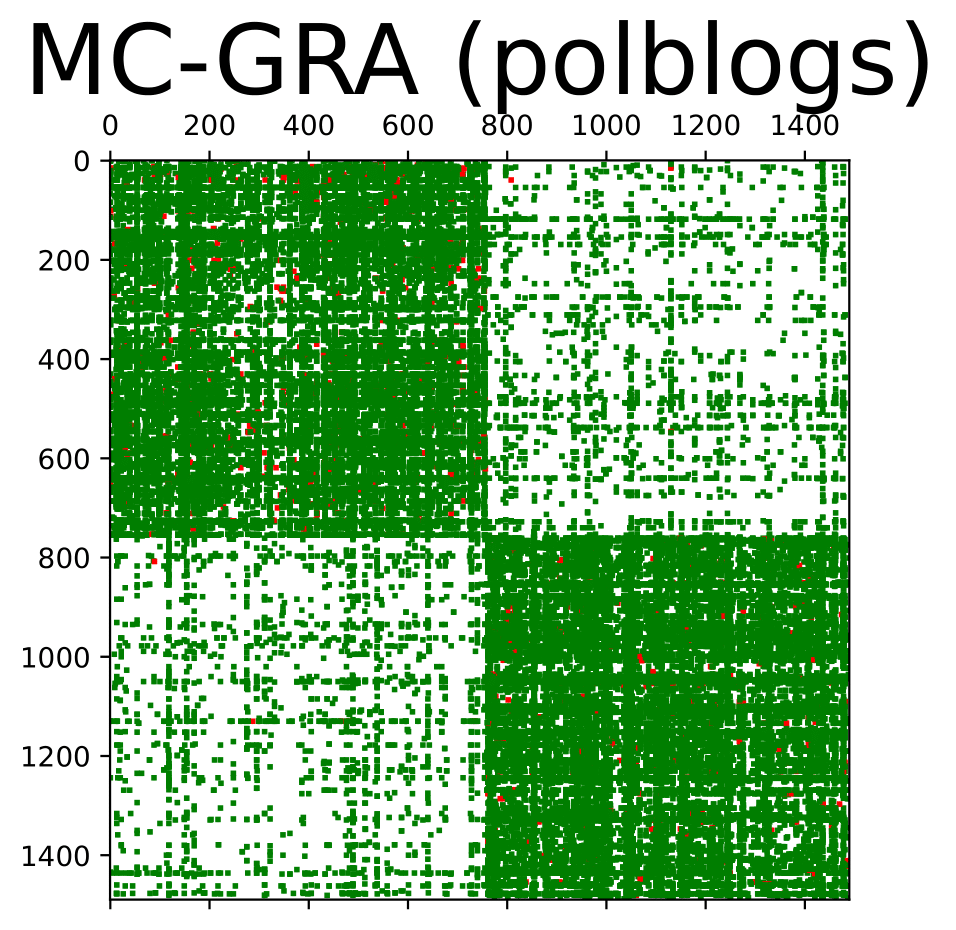}}}
		\hfill
		\subfigure[With normal GNN]
		{{\includegraphics[height=3.3cm]{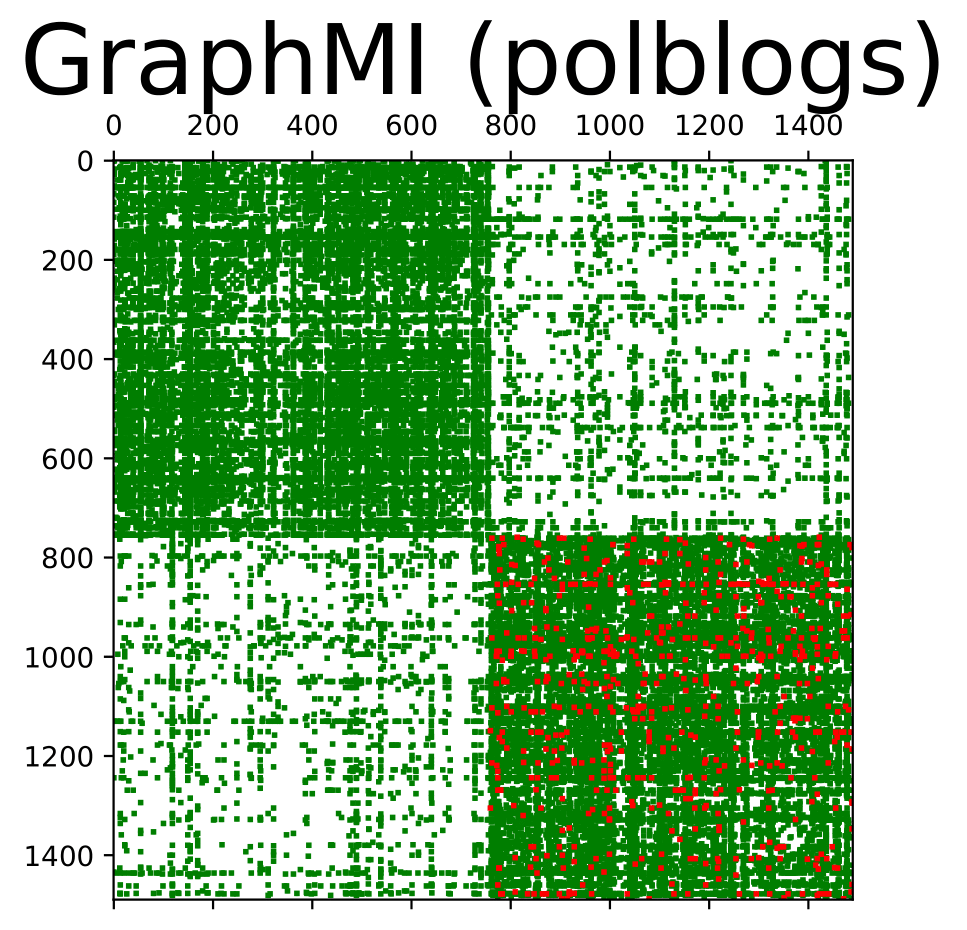}}}
		\hfill
		\subfigure[With protected GNN]
		{{\includegraphics[height=3.3cm]{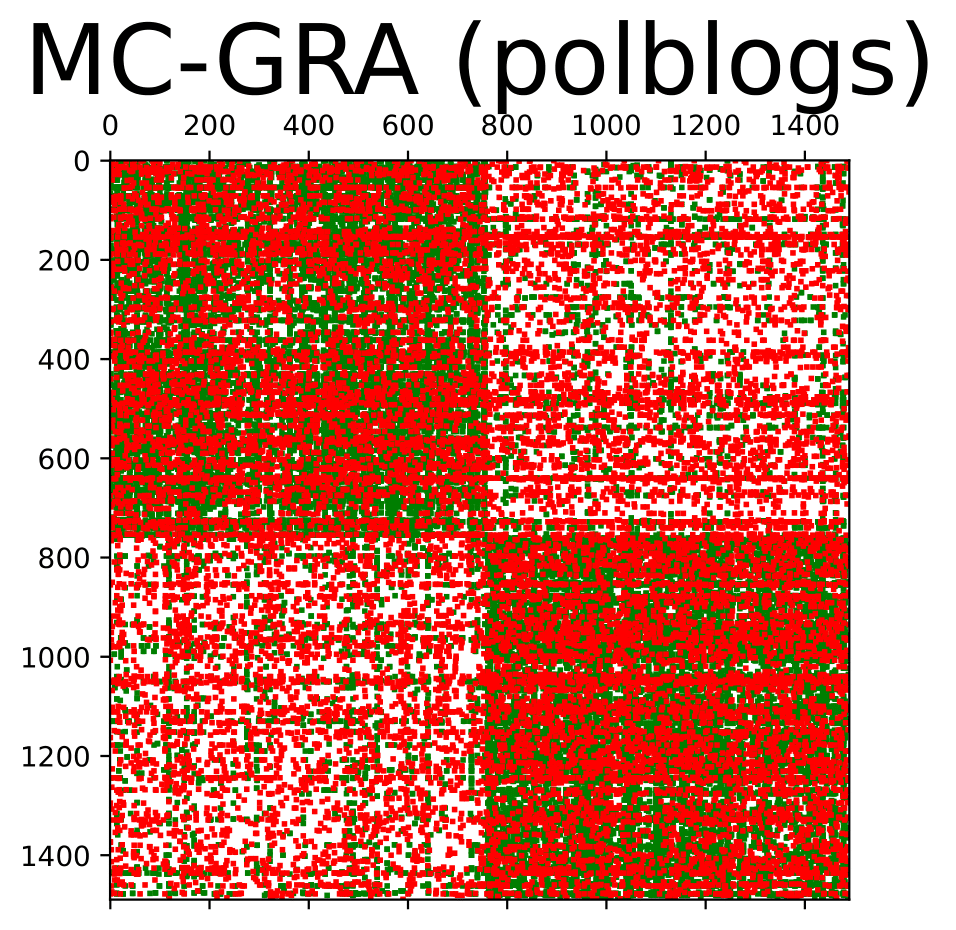}}}
		\hfill
		\subfigure[With protected GNN]
		{{\includegraphics[height=3.3cm]{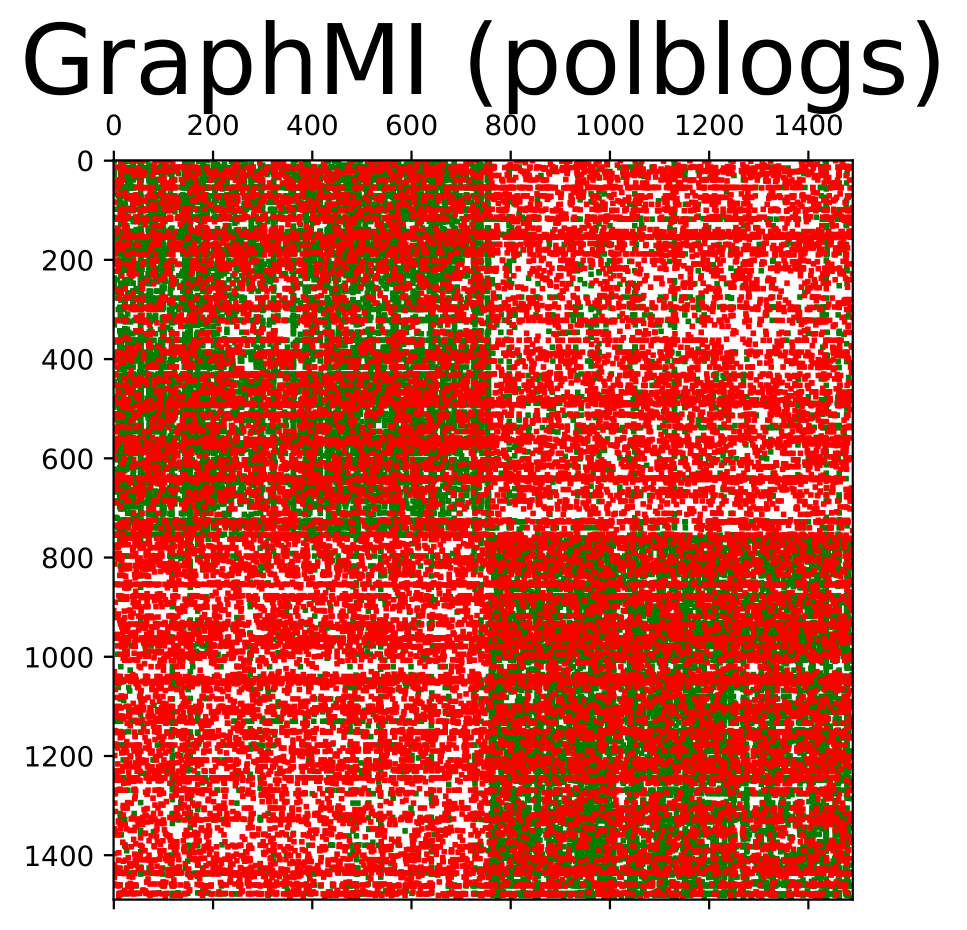}}}
		\vspace{-8pt}
		\caption{
			Recovered adjacency on Polblogs dataset.
		}
        \label{appd:adj:polblogs}
	\end{figure}
	
	\begin{figure}[H]
		\centering
		\subfigure[Ground truth]
		{{\includegraphics[height=3.3cm]{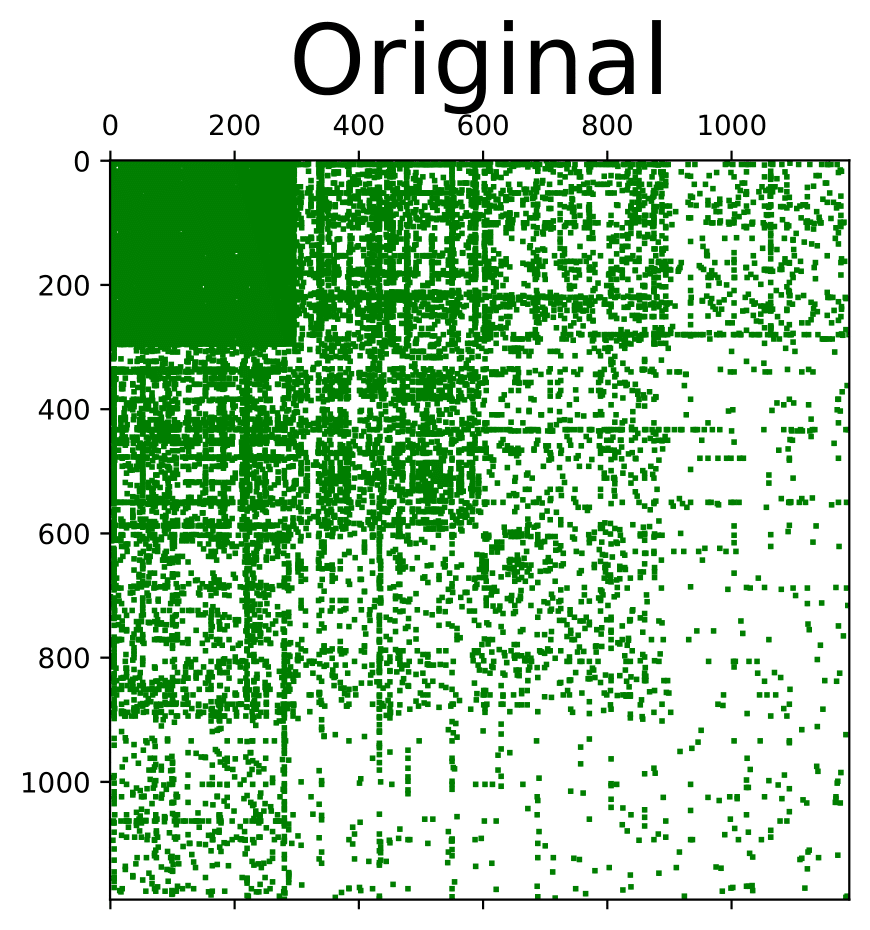}}}
		\hfill
		\subfigure[With normal GNN]
		{{\includegraphics[height=3.3cm]{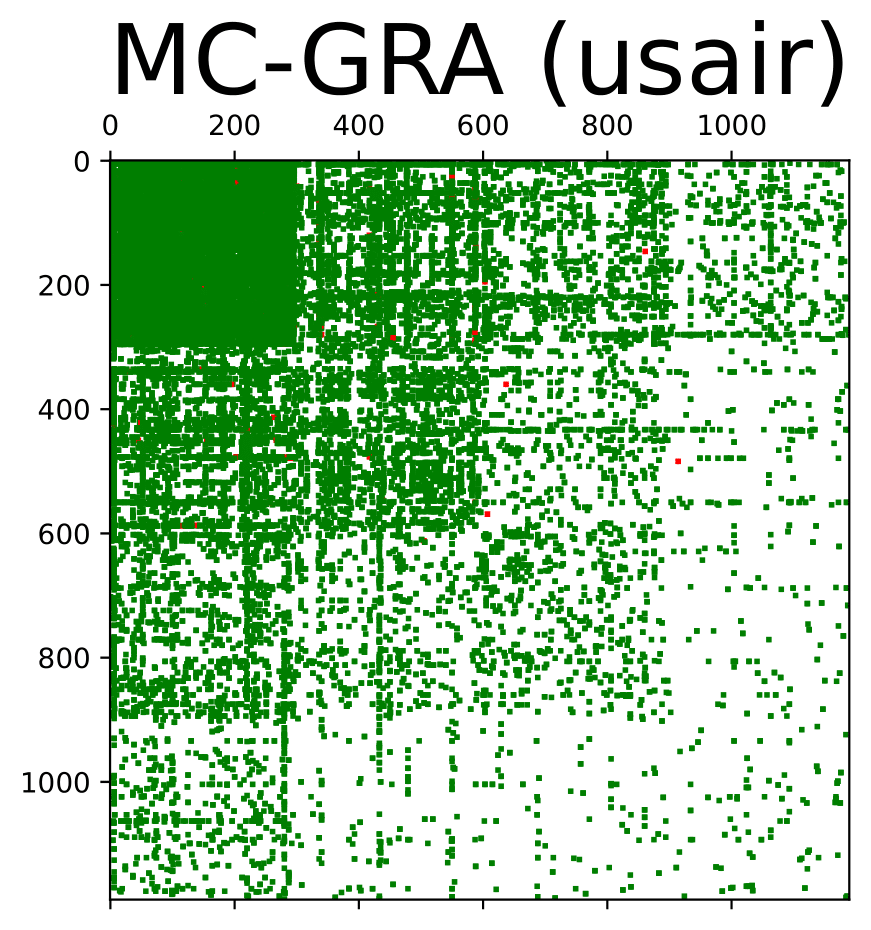}}}
		\hfill
		\subfigure[With normal GNN]
		{{\includegraphics[height=3.3cm]{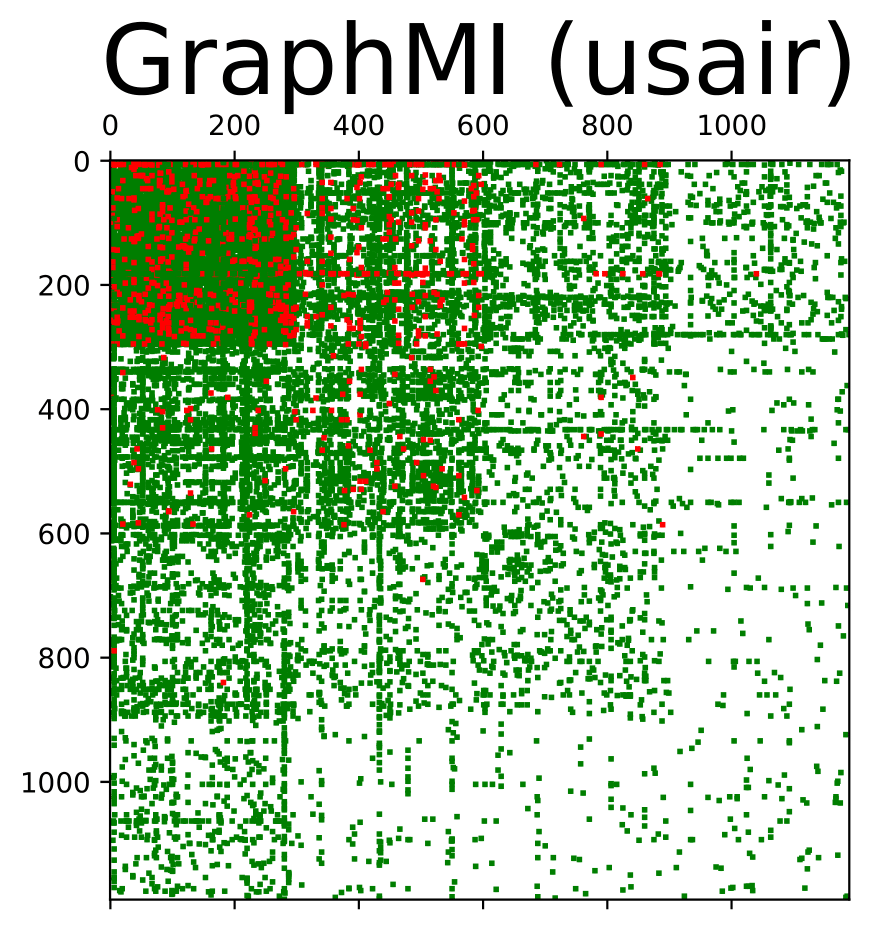}}}
		\hfill
		\subfigure[With protected GNN]
		{{\includegraphics[height=3.3cm]{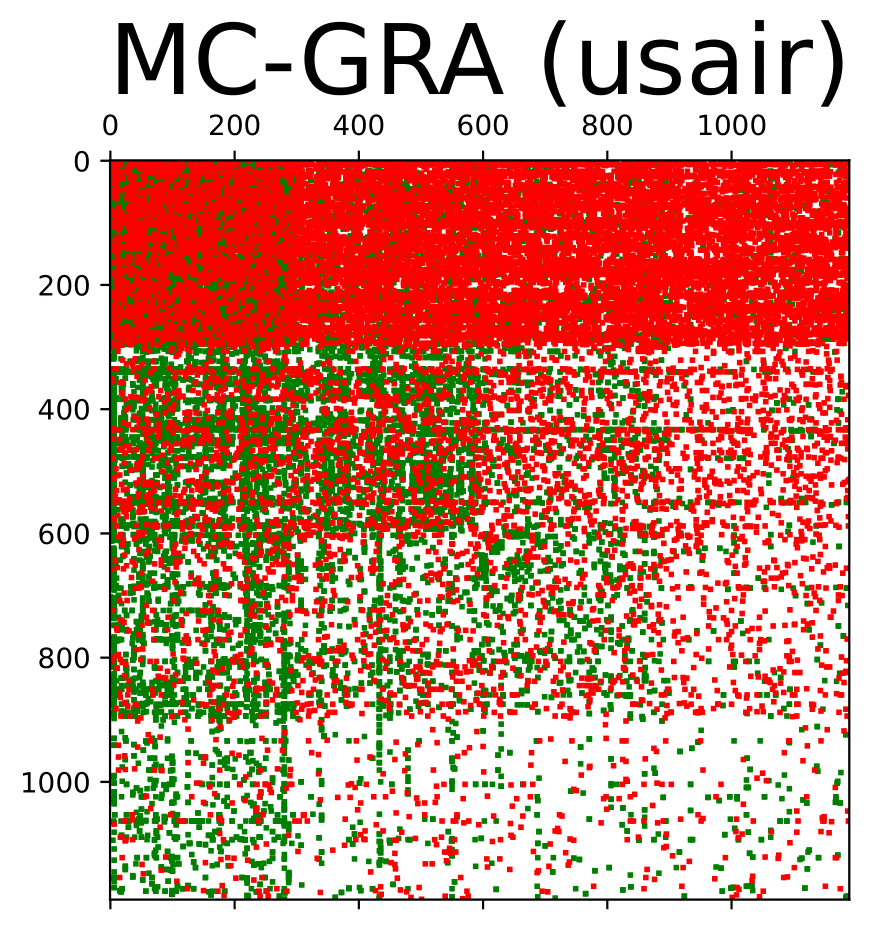}}}
		\hfill
		\subfigure[With protected GNN]
		{{\includegraphics[height=3.3cm]{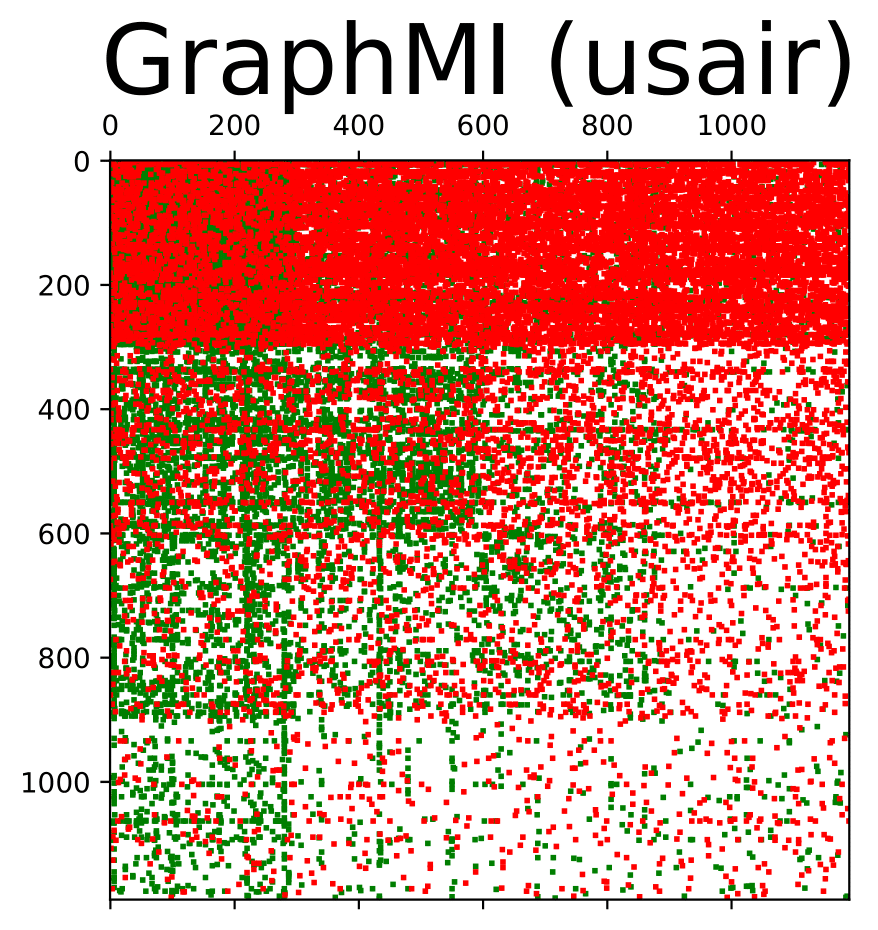}}}
		\vspace{-8pt}
		\caption{
			Recovered adjacency on USA dataset.
		}
	\end{figure}

	\begin{figure}[H]
		\centering
		\subfigure[Ground truth]
		{{\includegraphics[height=3.3cm]{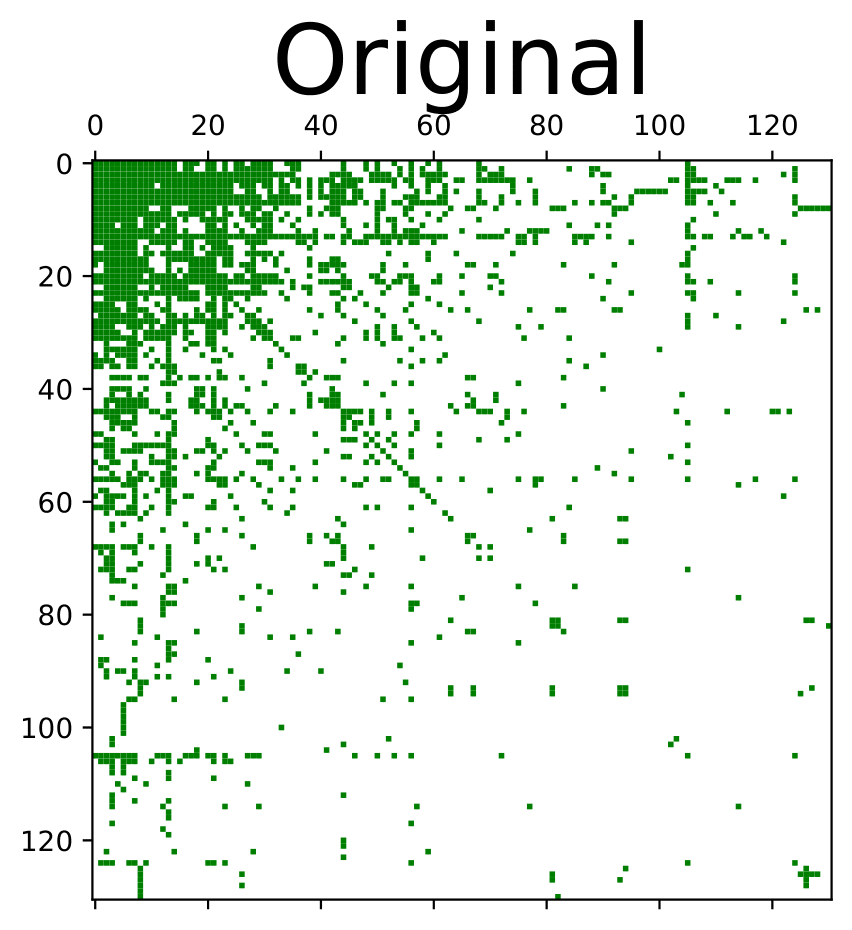}}}
		\hfill
		\subfigure[With normal GNN]
		{{\includegraphics[height=3.3cm]{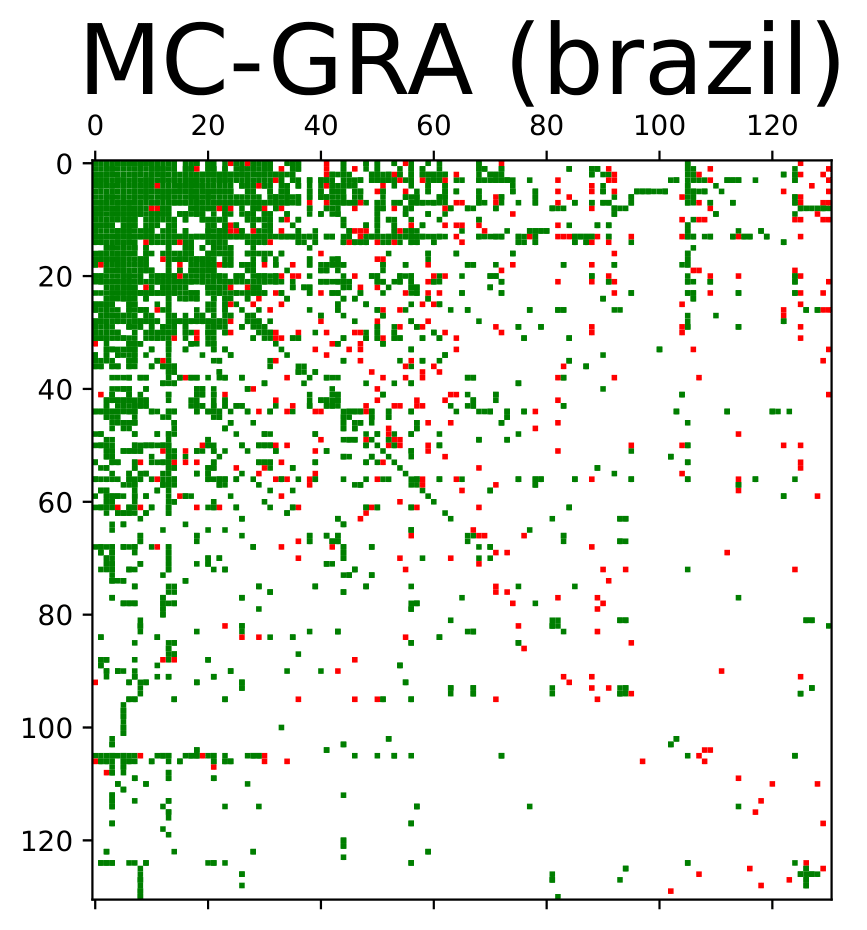}}}
		\hfill
		\subfigure[With normal GNN]
		{{\includegraphics[height=3.3cm]{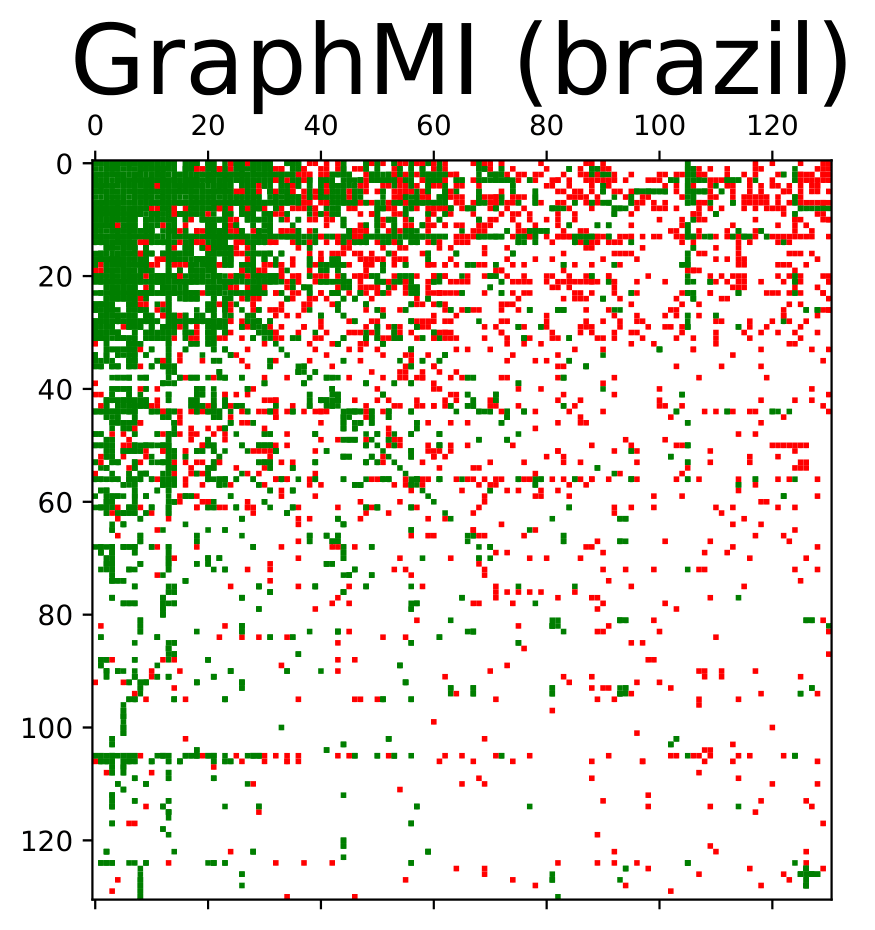}}}
		\hfill
		\subfigure[With protected GNN]
		{{\includegraphics[height=3.3cm]{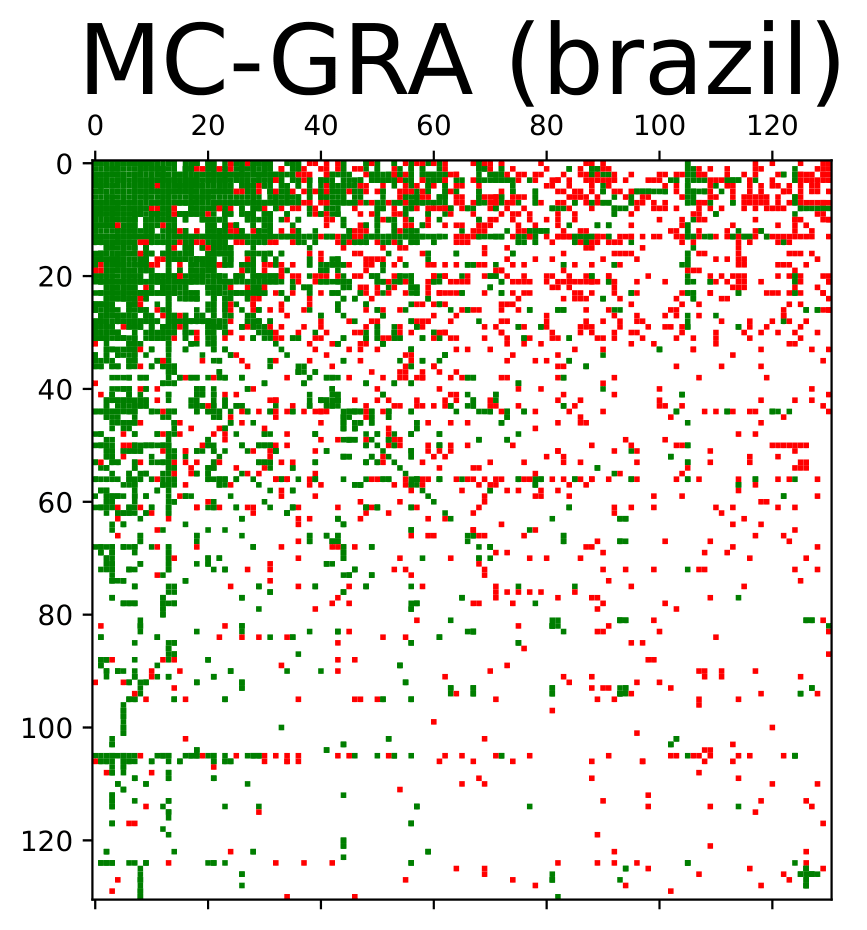}}}
		\hfill
		\subfigure[With protected GNN]
		{{\includegraphics[height=3.3cm]{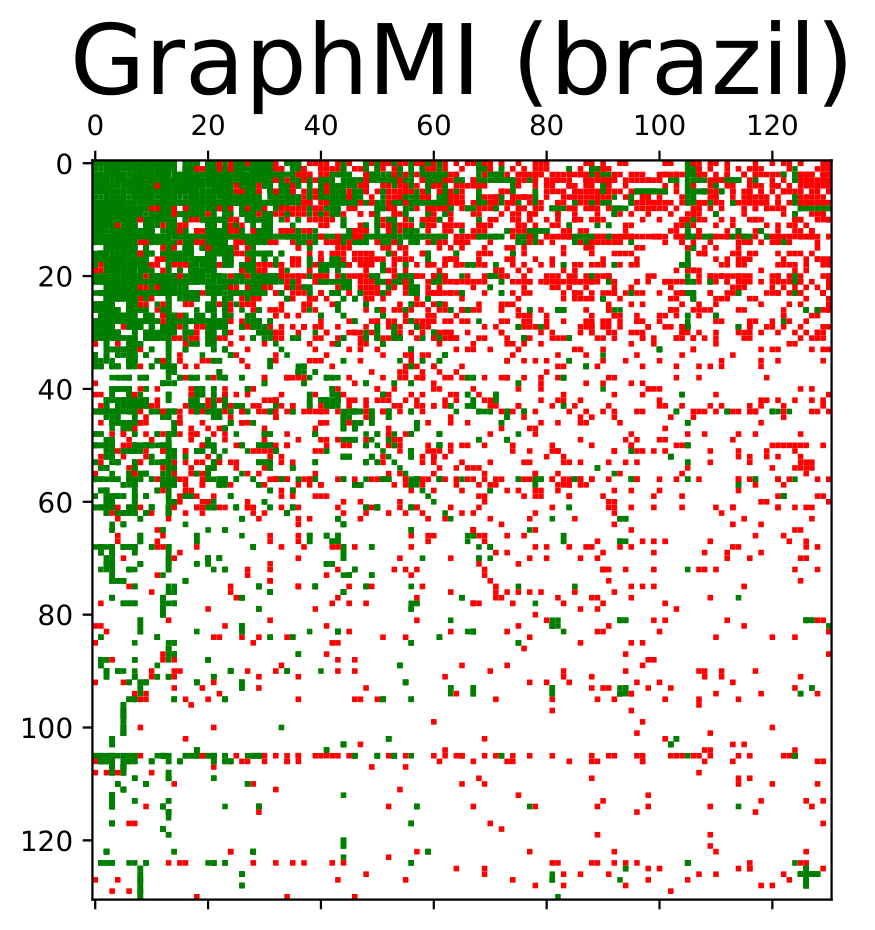}}}
		\vspace{-8pt}
		\caption{
			Recovered adjacency on Brazil dataset.
		}
	\end{figure}
	
	\begin{figure}[H]
		\centering
		\subfigure[Ground truth]
		{{\includegraphics[height=3.3cm]{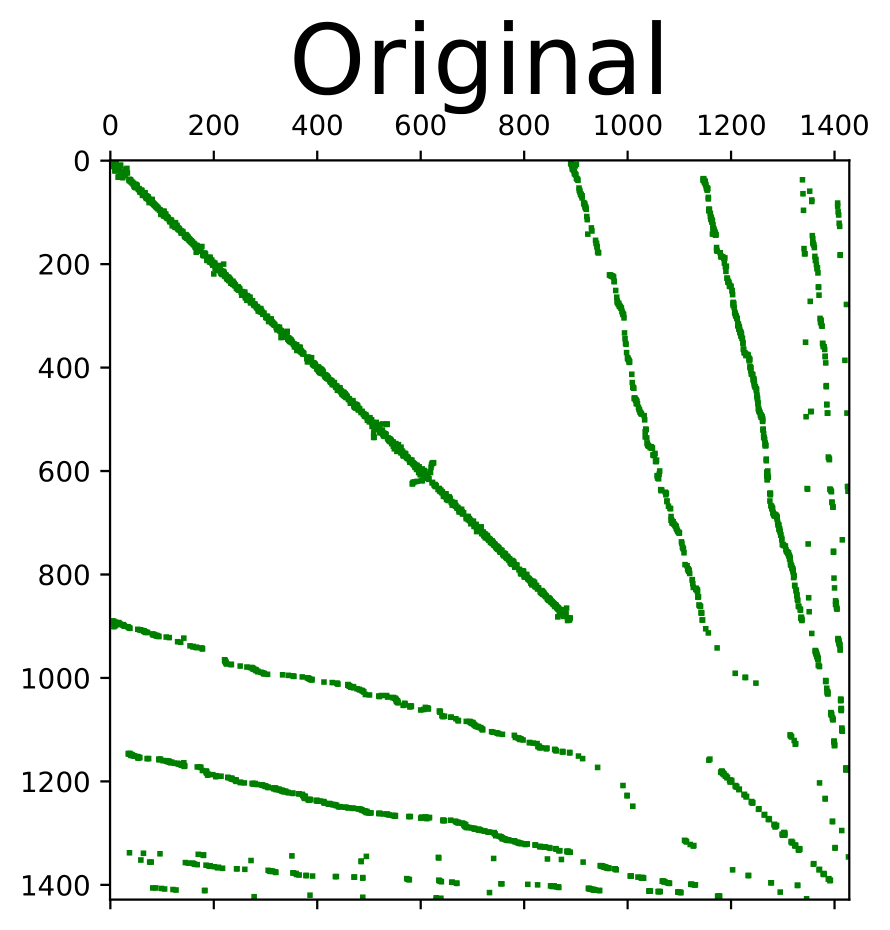}}}
		\hfill
		\subfigure[With normal GNN]
		{{\includegraphics[height=3.3cm]{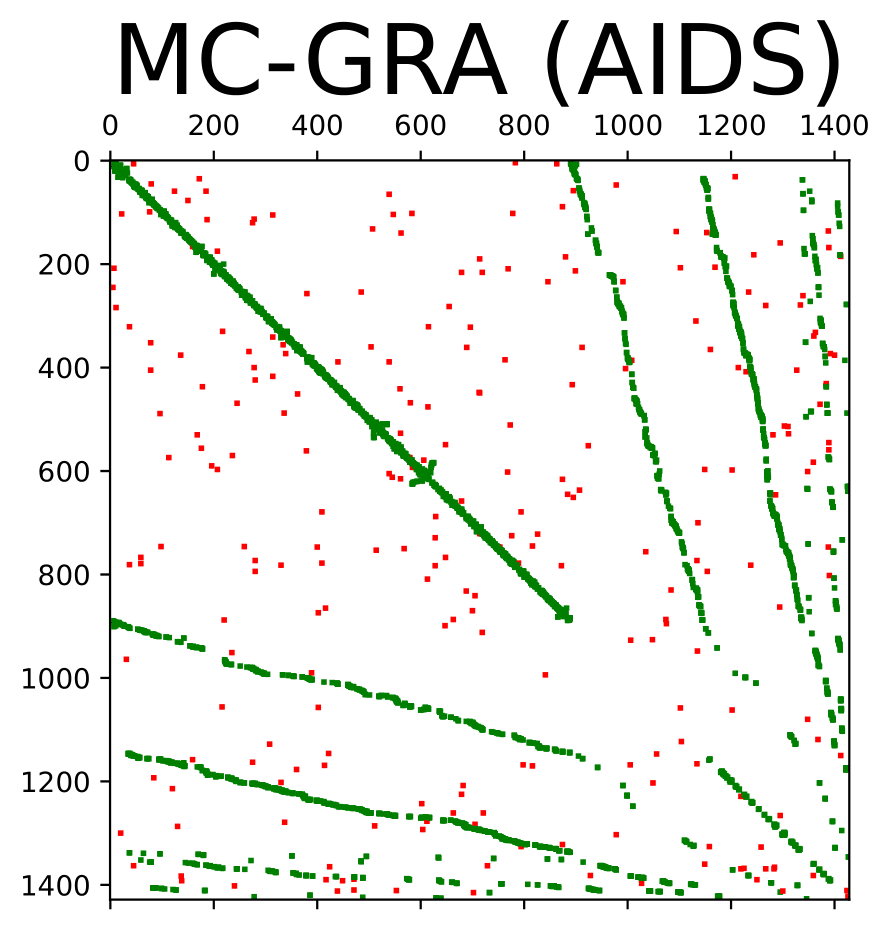}}}
		\hfill
		\subfigure[With normal GNN]
		{{\includegraphics[height=3.3cm]{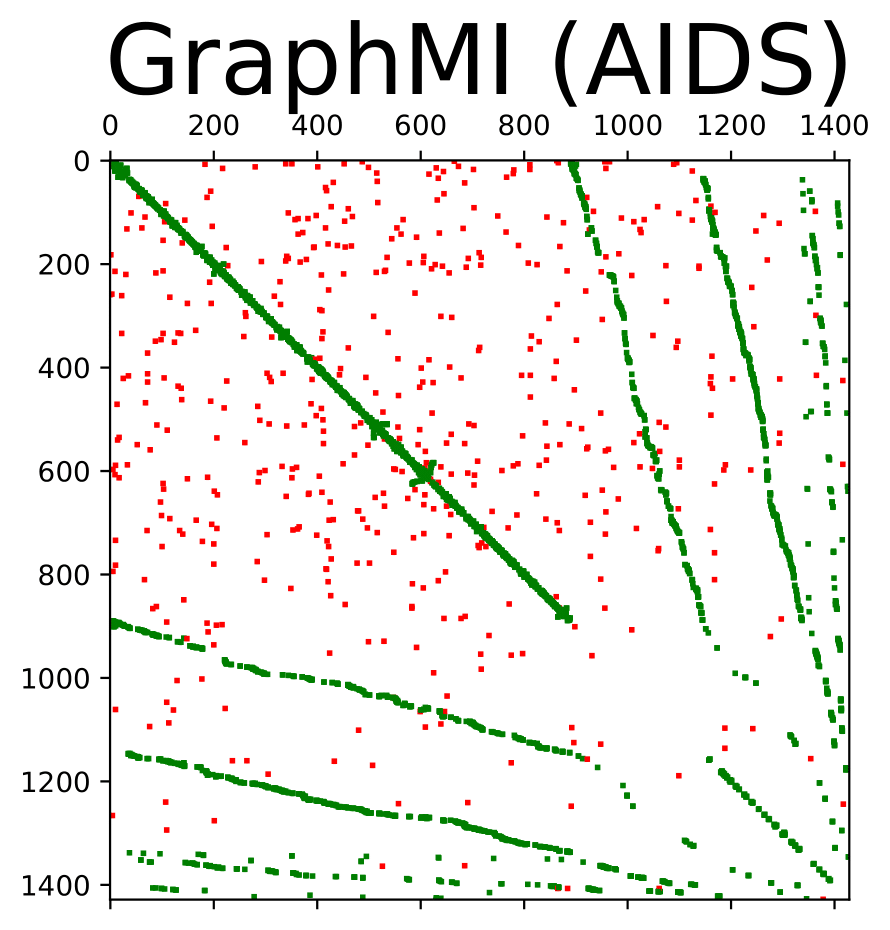}}}
		\hfill
		\subfigure[With protected GNN]
		{{\includegraphics[height=3.3cm]{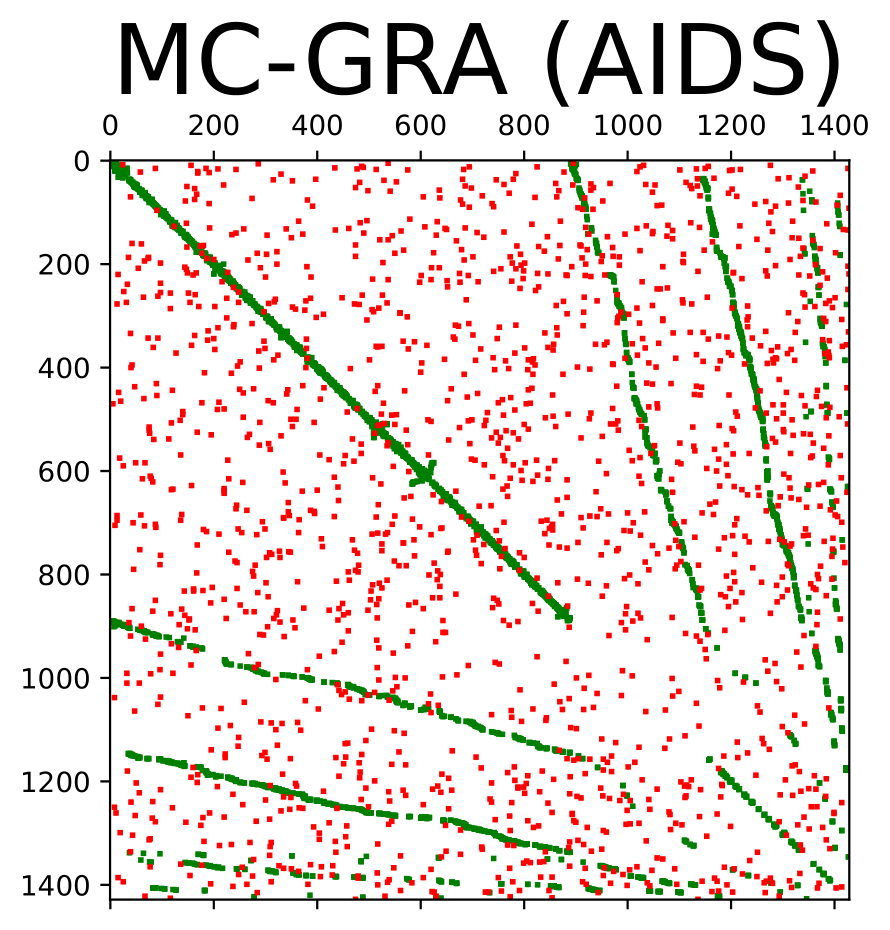}}}
		\hfill
		\subfigure[With protected GNN]
		{{\includegraphics[height=3.3cm]{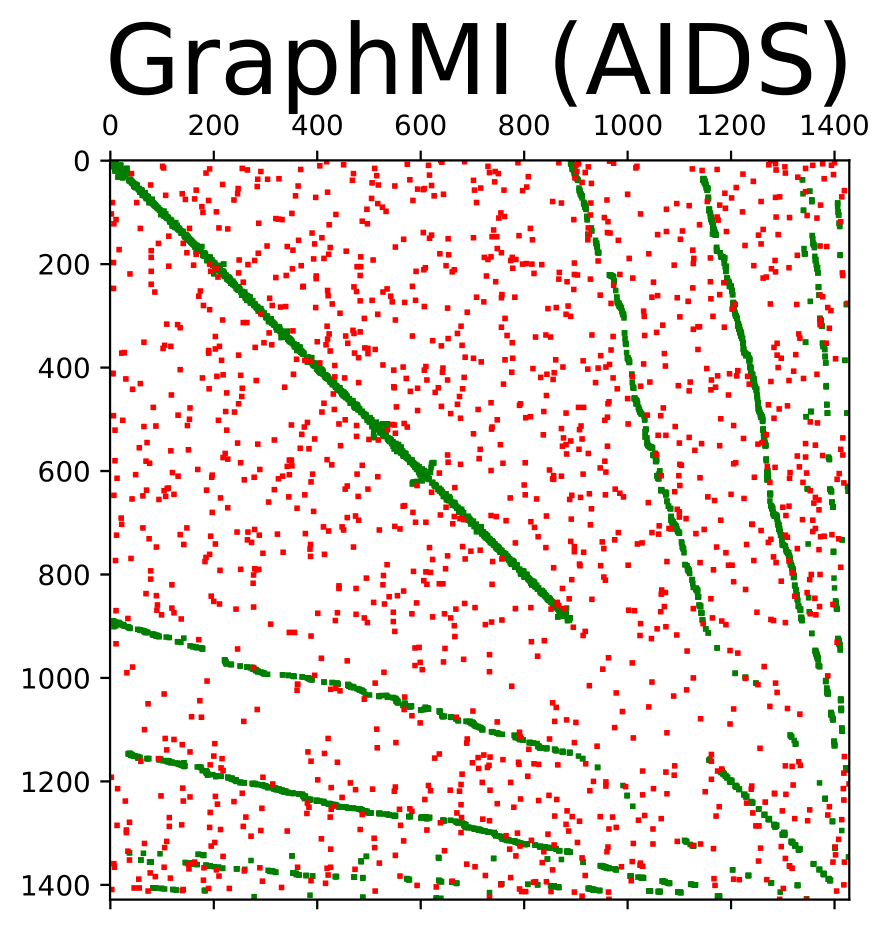}}}
		\vspace{-8pt}
		\caption{
			Recovered adjacency on AIDS dataset.
		}
        \label{appd:adj:AIDS}
	\end{figure}

	\textbf{Tracking the MI terms.}
    We show the learning curves of MC-GRA and MC-GPB on each dataset as follows.    
     
     For MC-GRA~(~Fig.~\ref{appd:curve:mc-gra}), most of the output and propagation loss converged to near zero, showing that the model efficiently approximates the original Markov chain. 
     %
     %
     For MC-GPB, we track three constraints layer-wise and average them out to visualize the overall trend. We also record the model accuracy constraint, \textit{i.e.}, the cross-entropy of model output, to check whether the layer and the full model have consistent patterns.
     Both privacy and complexity constraints, despite the fluctuation of the former in some datasets, show a downward trend throughout the training, especially for the usair dataset. The accuracy curves also have similar patterns. 
     
	\begin{figure}[H]
		\centering
		\hfill
		\includegraphics[width=5.5cm]{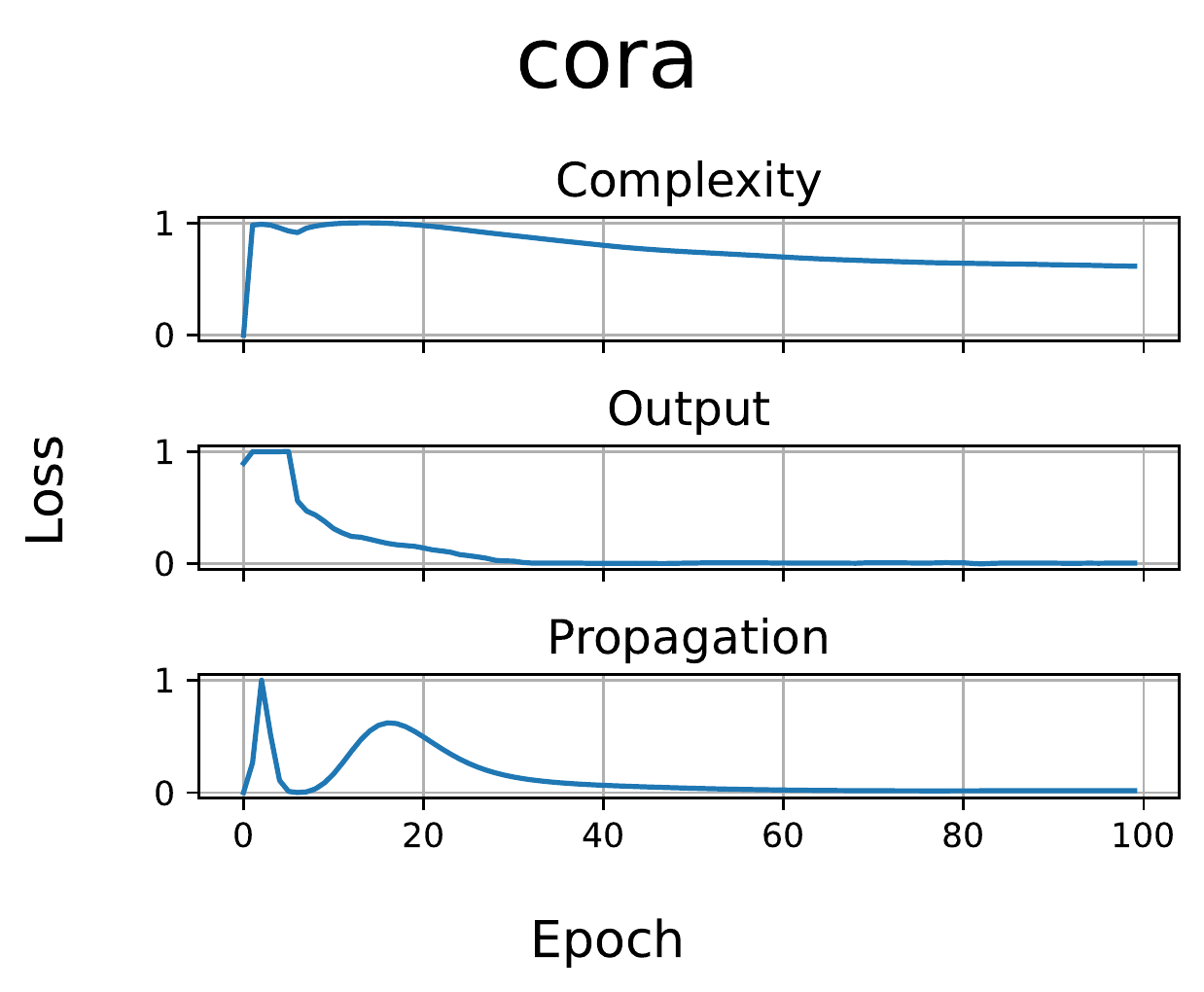}
		\hfill
		\includegraphics[width=5.5cm]{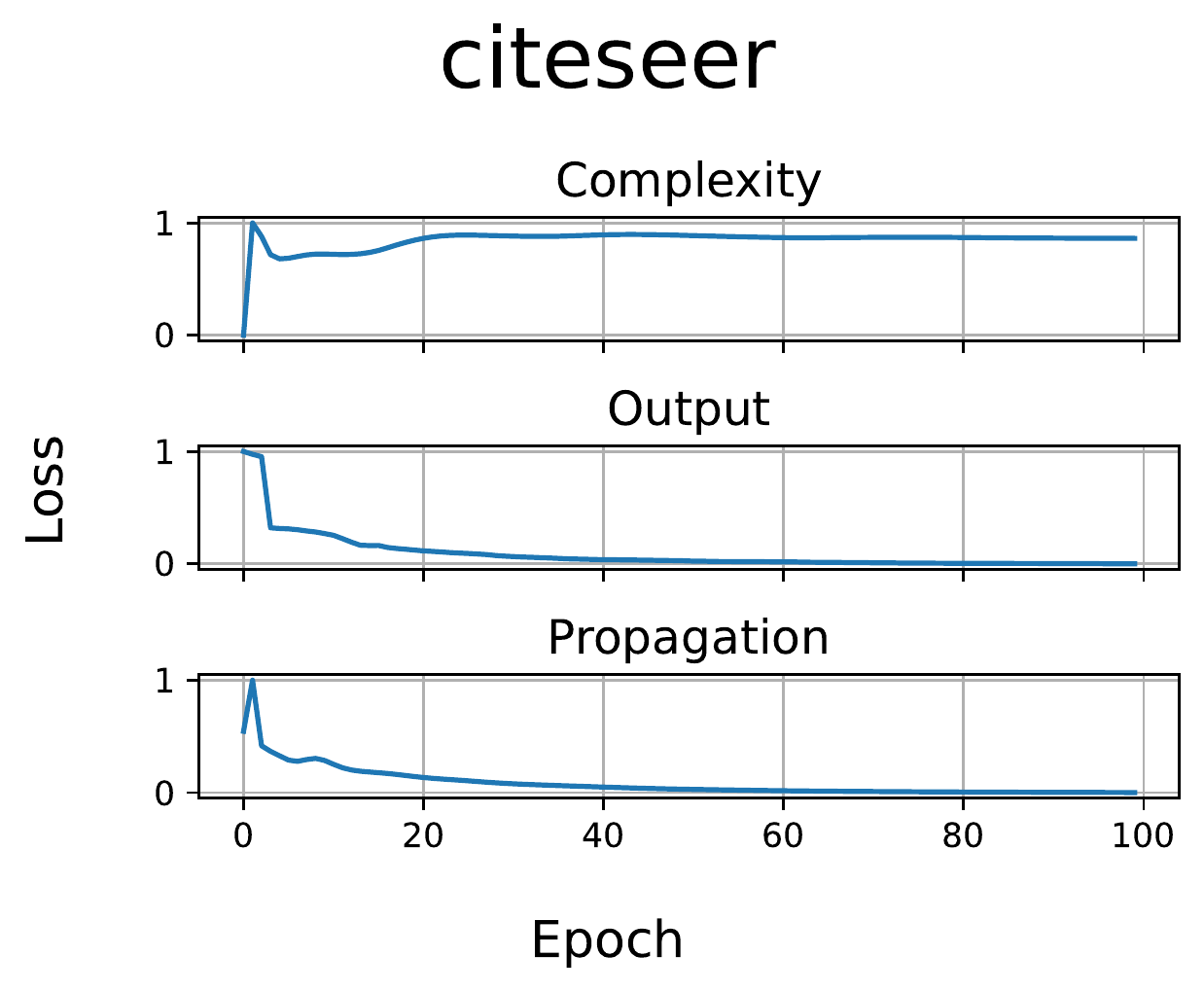}
		\hfill
		\includegraphics[width=5.5cm]{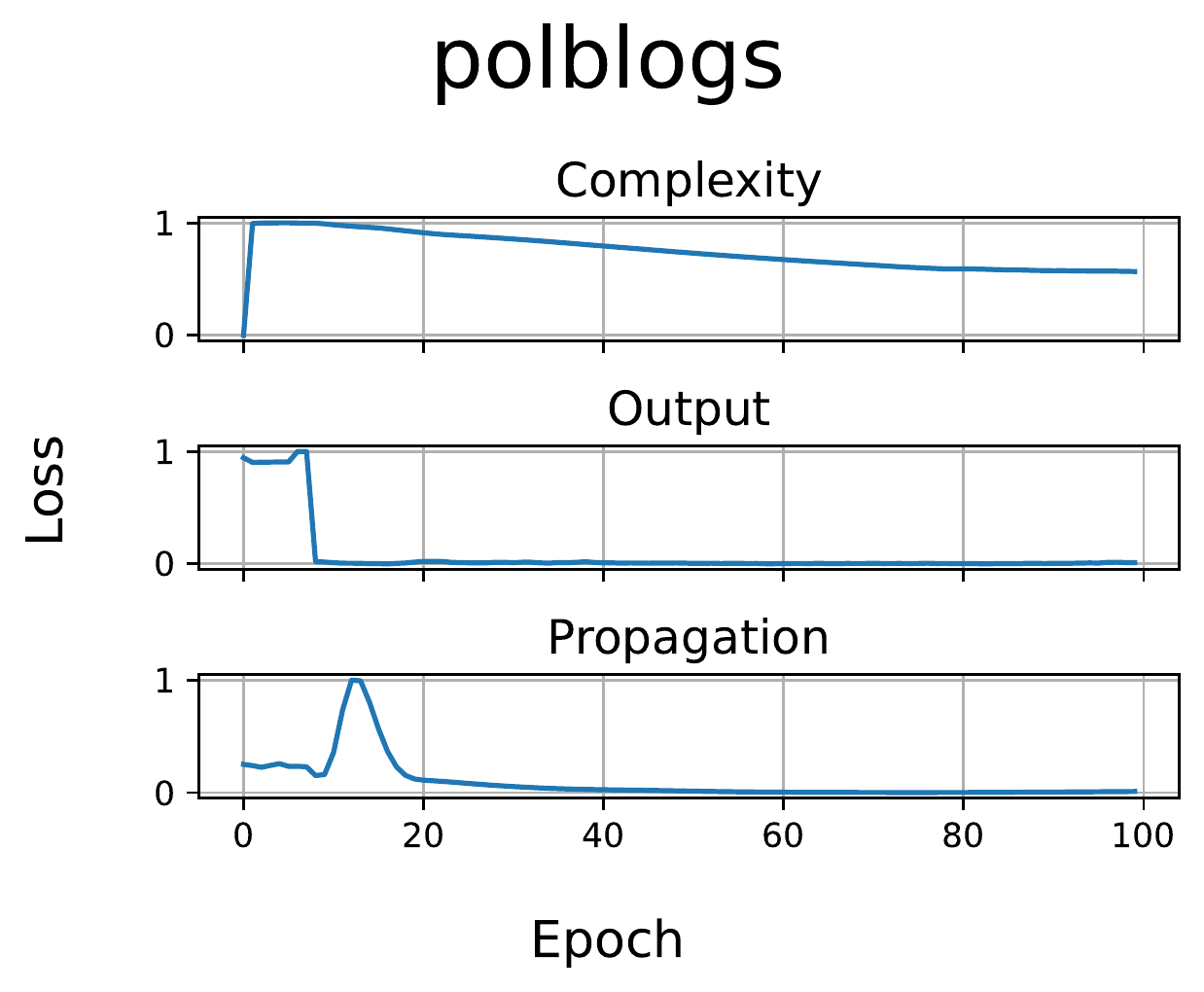}
		\hfill
		\\
		\vspace{0.3cm}
		\hfill
		\includegraphics[width=5.5cm]{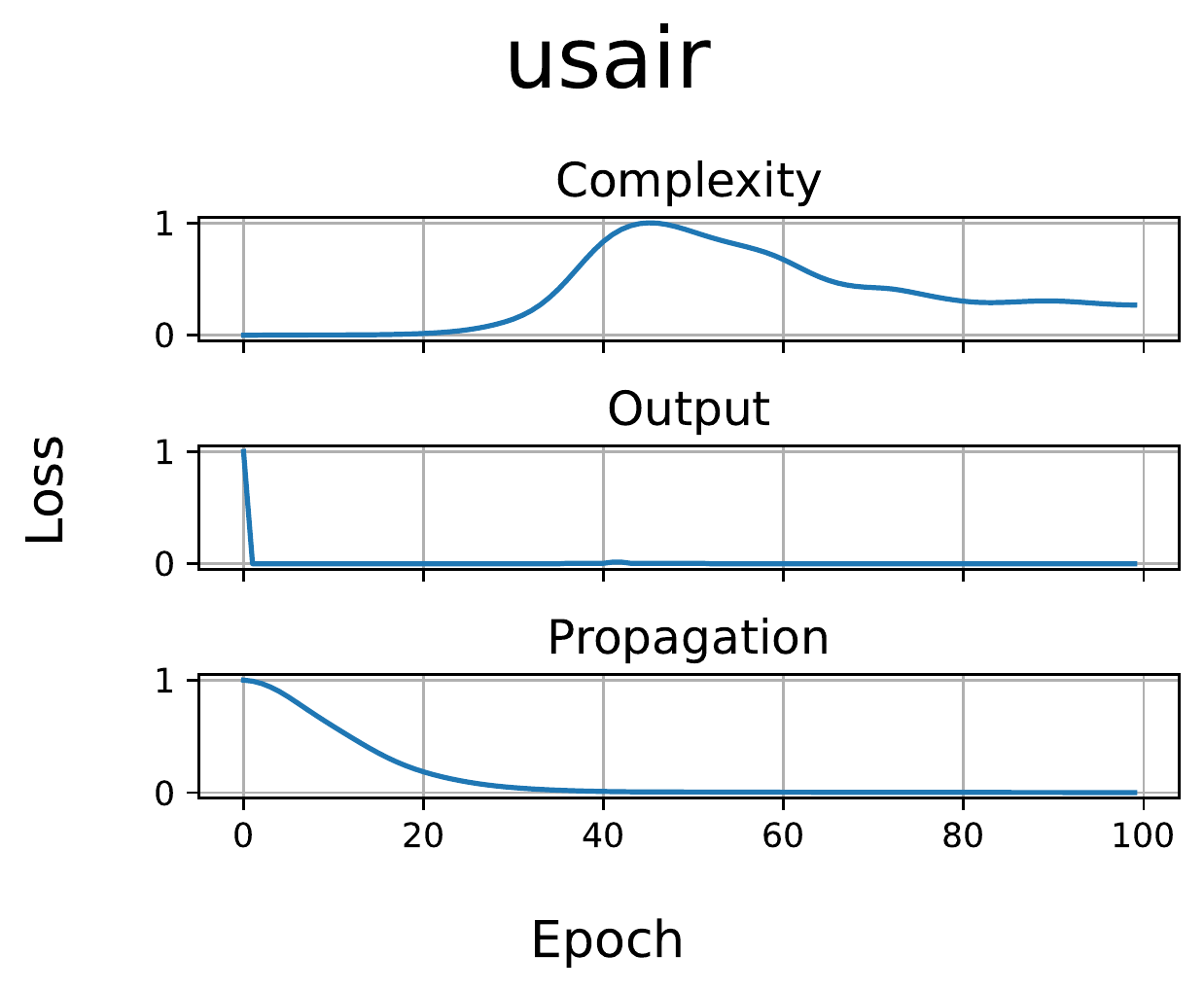}
		\hfill
		\includegraphics[width=5.5cm]{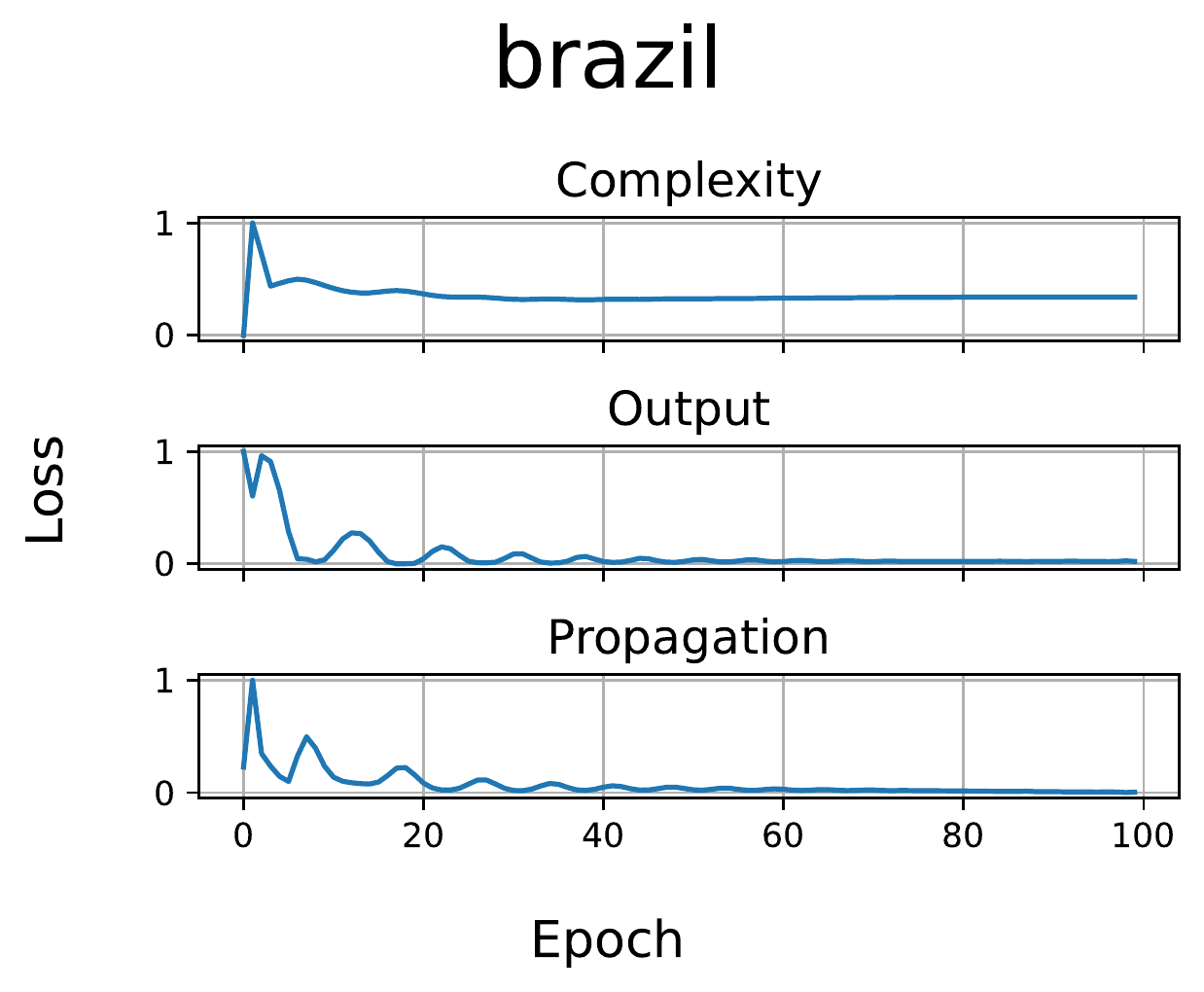}
		\hfill
		\includegraphics[width=5.5cm]{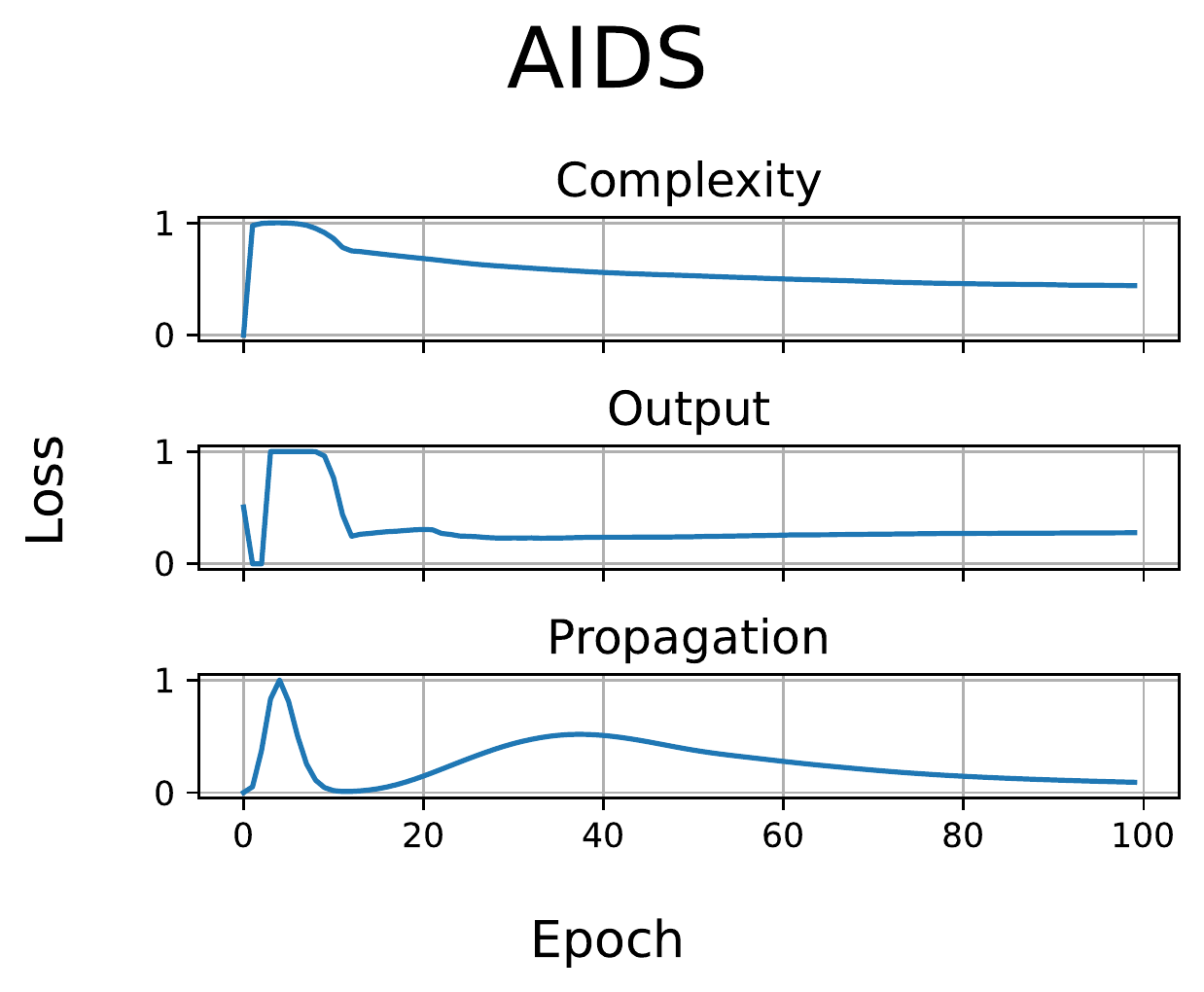}
		\hfill
		\vspace{-6px}
		\caption{
			Training curves of MC-GRA on each dataset.
		}
            \label{appd:curve:mc-gra}
	\end{figure}
	
	\begin{figure}[H]
        \vspace{-20pt}
		\centering
		\hfill
		\includegraphics[width=5.5cm]{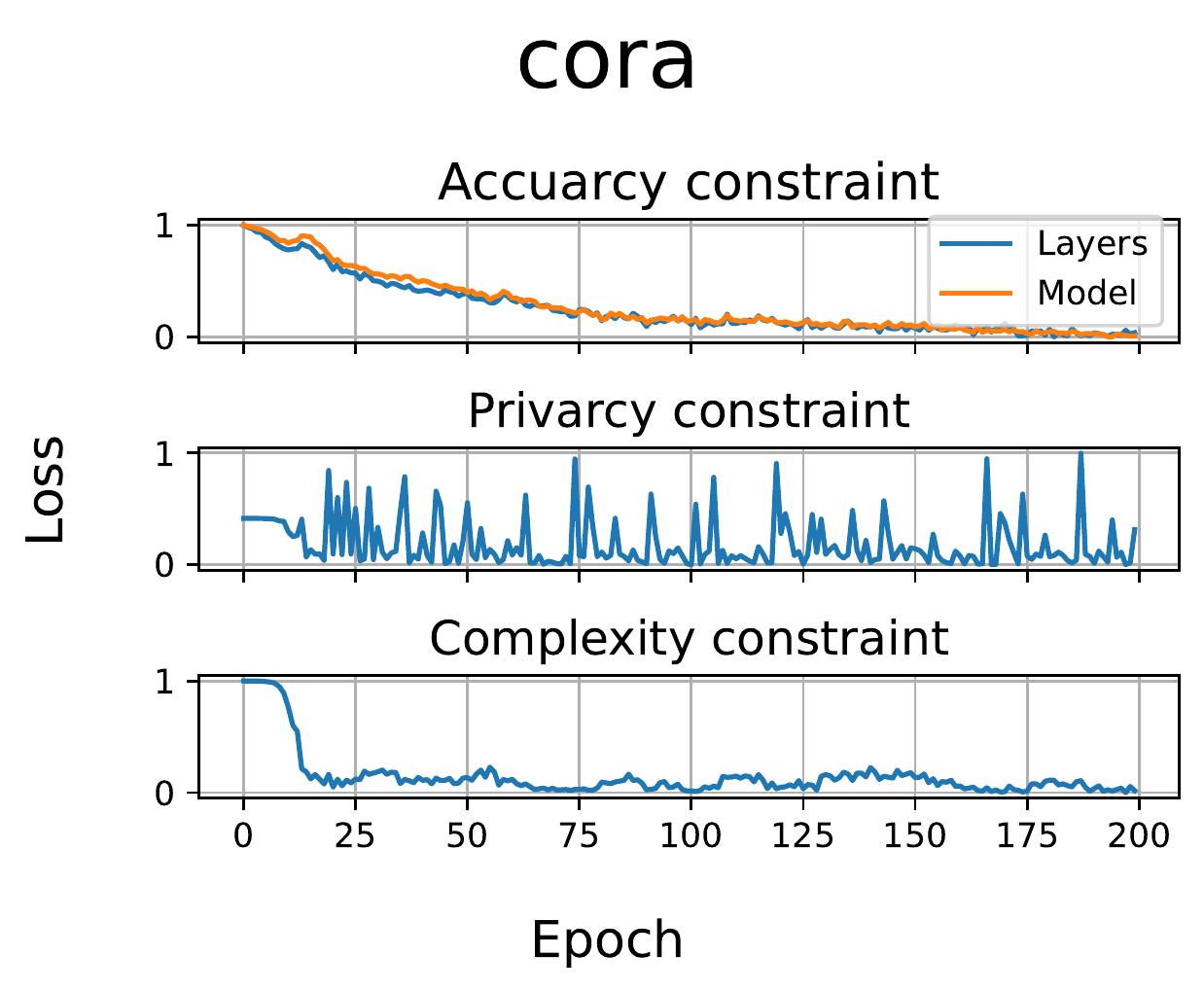}
		\hfill
		\includegraphics[width=5.5cm]{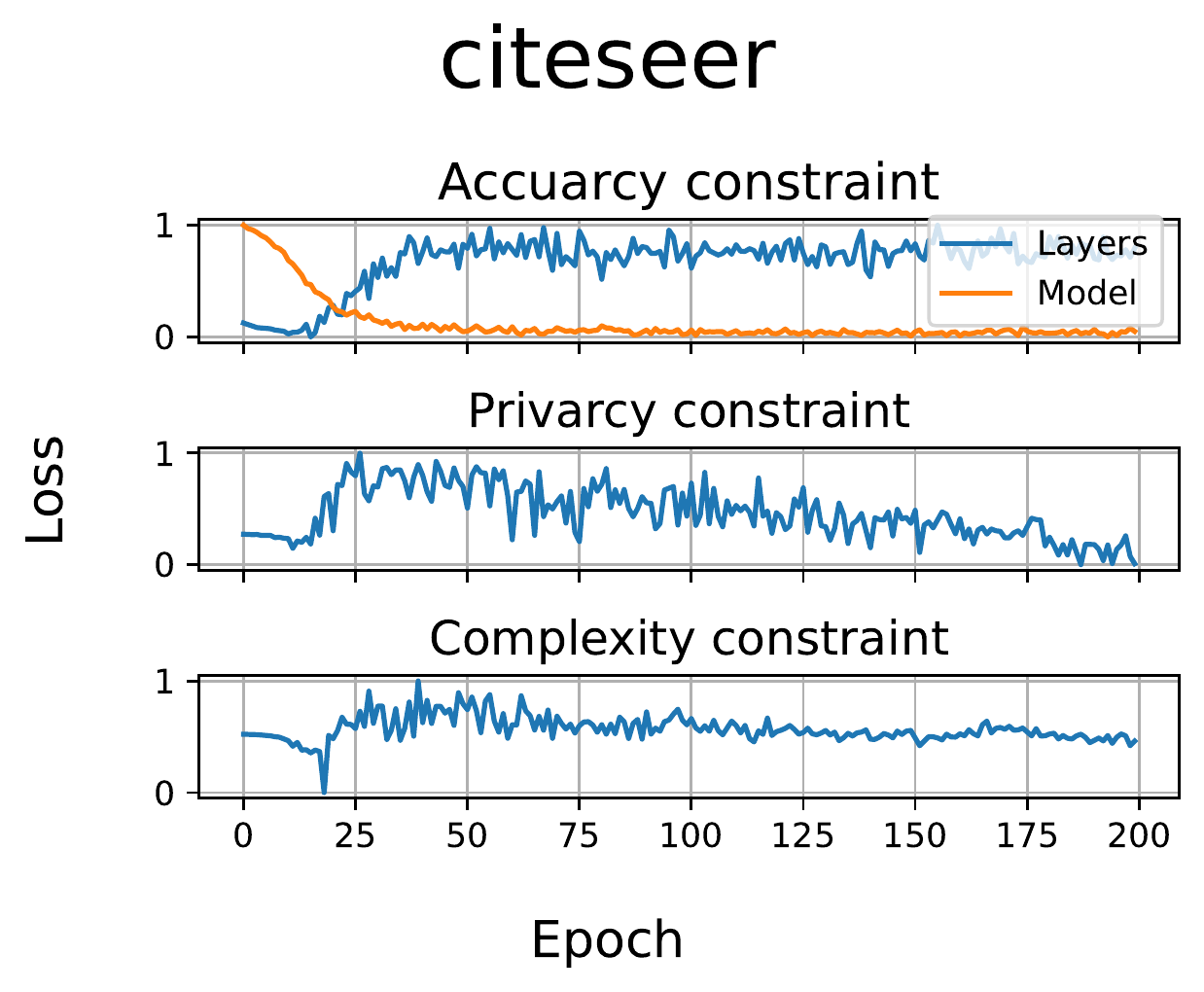}
		\hfill
		\includegraphics[width=5.5cm]{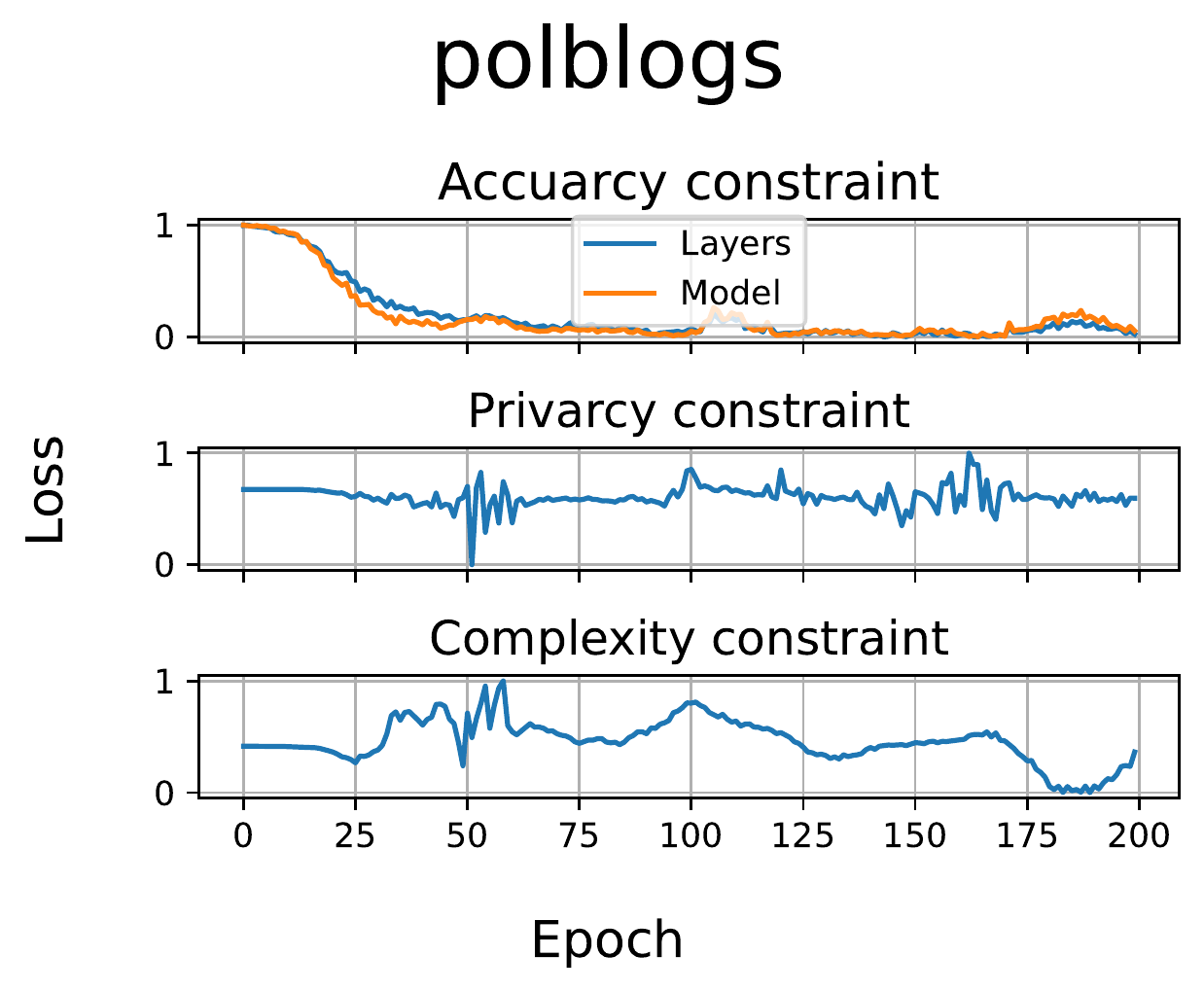}
		\hfill
		\\
		\vspace{0.3cm}
		\hfill
		\includegraphics[width=5.5cm]{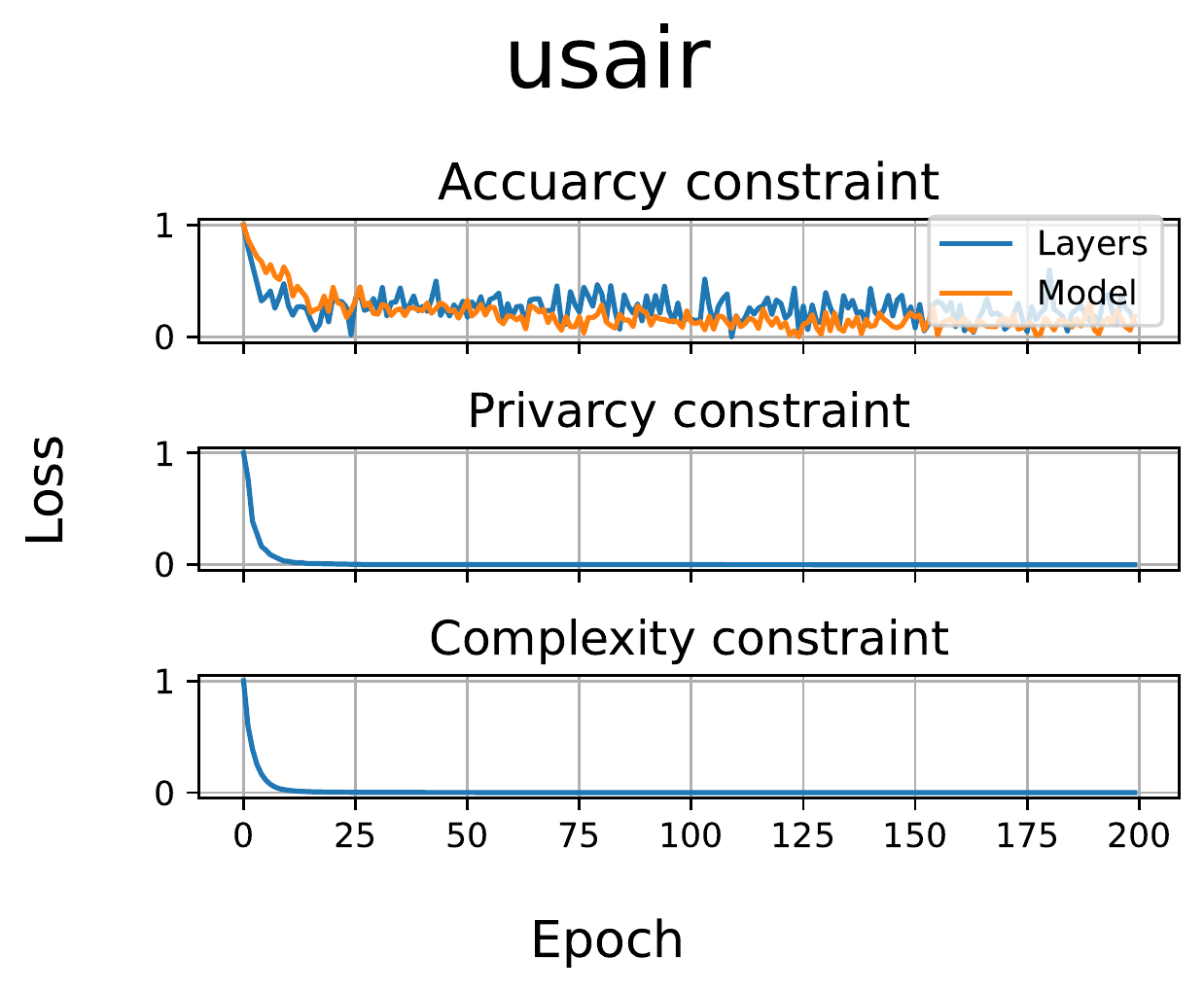}
		\hfill
		\includegraphics[width=5.5cm]{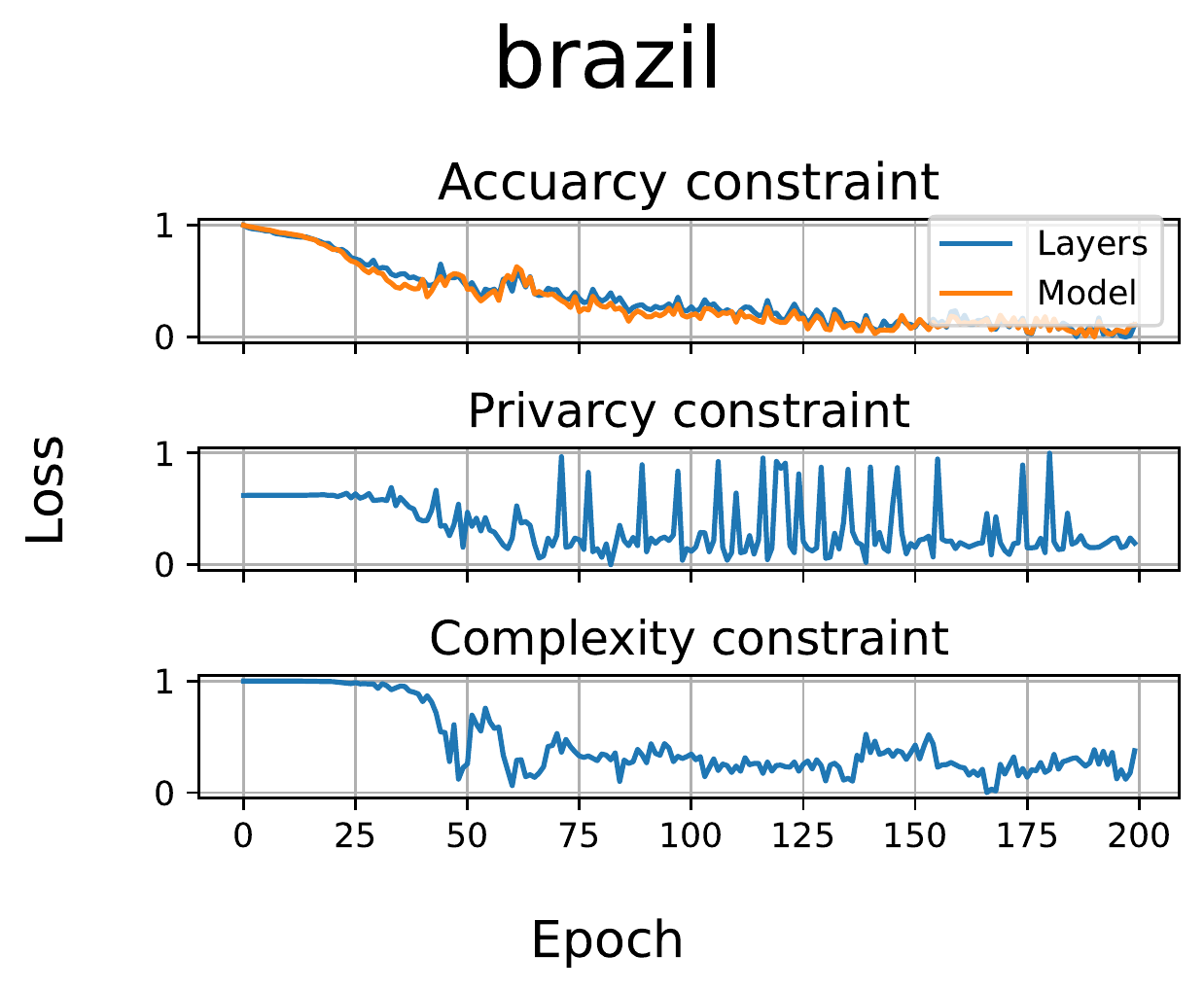}
		\hfill
		\includegraphics[width=5.5cm]{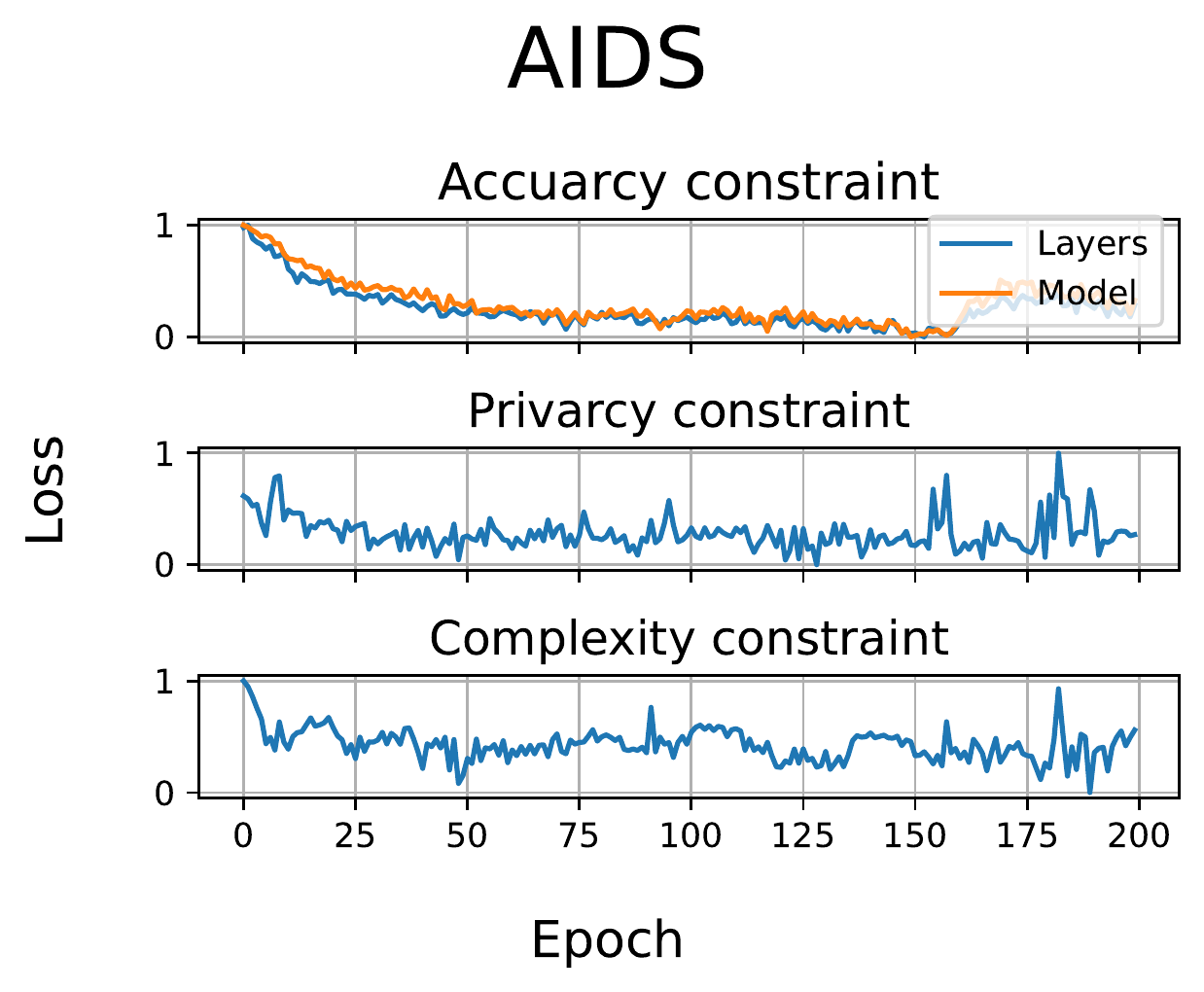}
		\hfill
		\vspace{-6px}
		\caption{
			Training curves of MC-GPB on each dataset.
		}
	\end{figure}

	\textbf{A further analysis with the training dynamics.}
	We show the graph information planes with/without MC-GPB as follows.
	The model training without MC-GPB memorizes the privacy information at the beginning of training before gradually forgetting it. By applying our MC-GPB, we can enhance such forgetting procedures while preventing the model from discarding task-relevant information that might lead to a drop in accuracy.

	\begin{figure}[H]
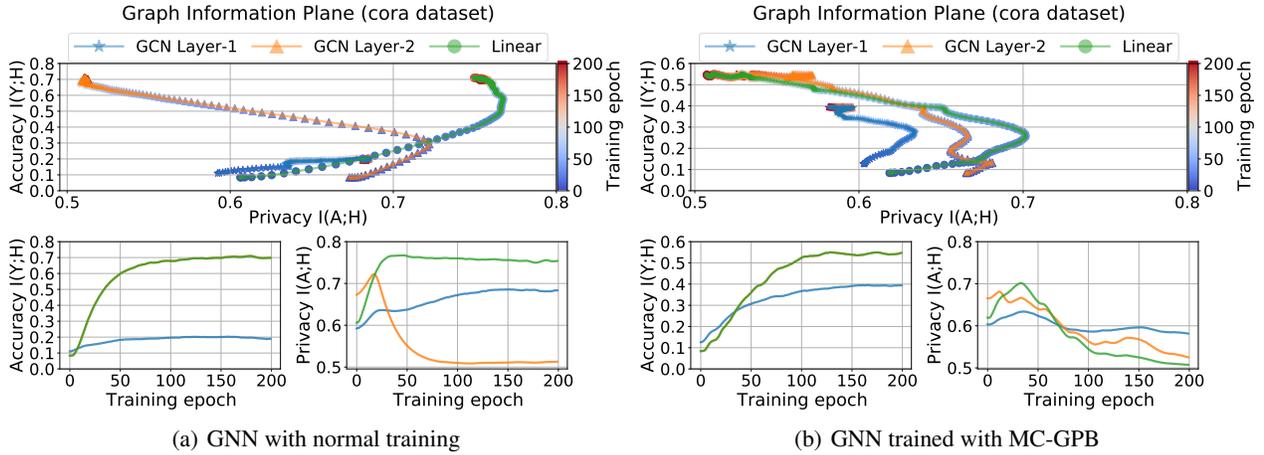

		\centering
		\subfigure[GNN with normal training]
		{\includegraphics[width=8.3cm]{figures/graph-information-plane/plane-standard-cora-3-0.95}}
		\subfigure[GNN trained with MC-GPB]
		{\includegraphics[width=8.3cm]{figures/graph-information-plane/plane-defense-cora-3-0.95}}
		\vspace{-6px}
		\caption{
			Graph information plane on Cora dataset.
		}
            \label{appd:gip:cora}
	\end{figure}
	
	\begin{figure}[H]
		\centering
		\subfigure[GNN with normal training]
		{\includegraphics[width=8.3cm]{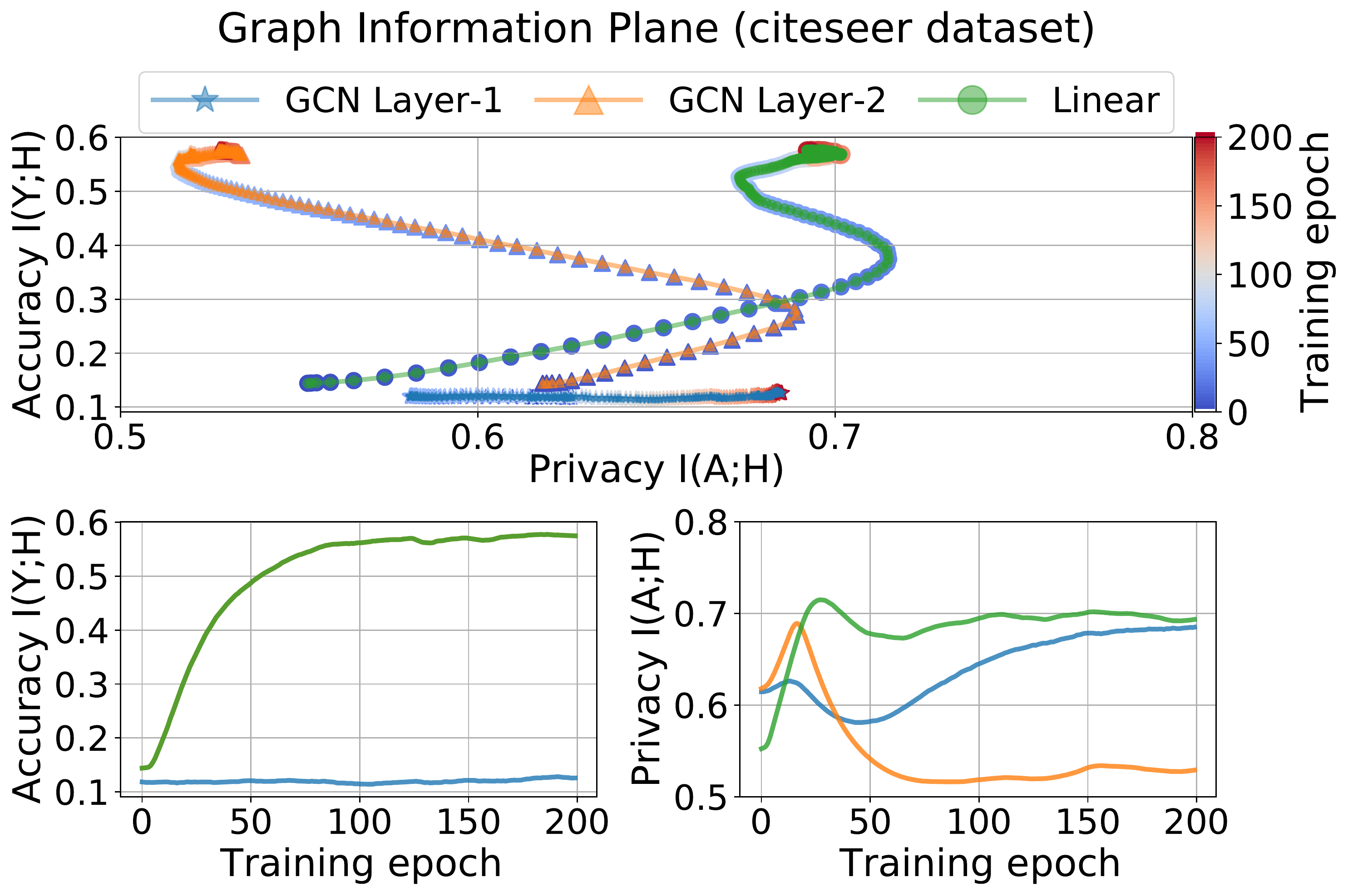}}
		\subfigure[GNN trained with MC-GPB]
		{\includegraphics[width=8.3cm]{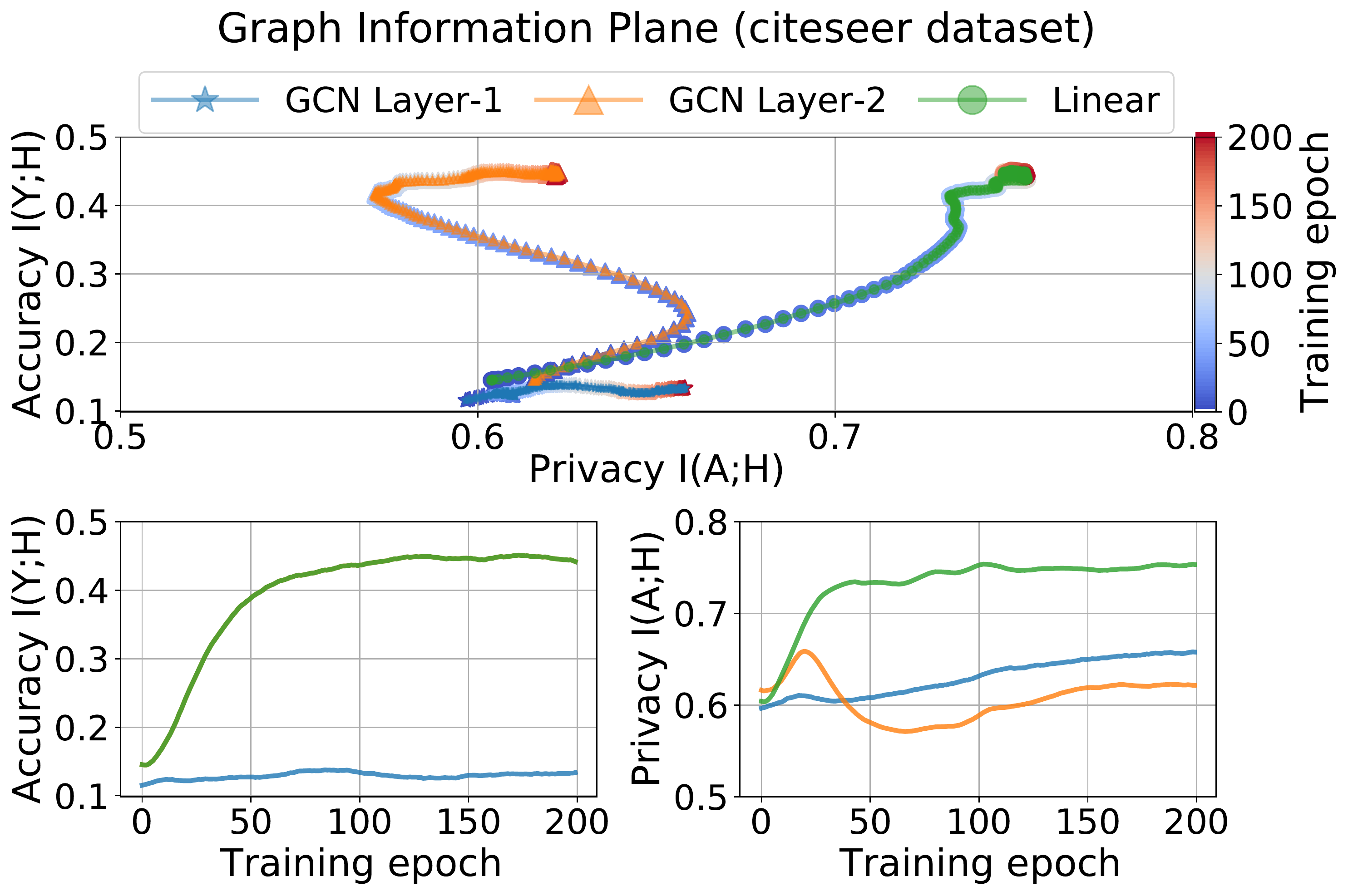}}
		\vspace{-6px}
		\caption{
			Graph information plane on Citeseer dataset.
		}
	\end{figure}
	
	\begin{figure}[H]
		\centering
		\subfigure[GNN with normal training]
		{\includegraphics[width=8.3cm]{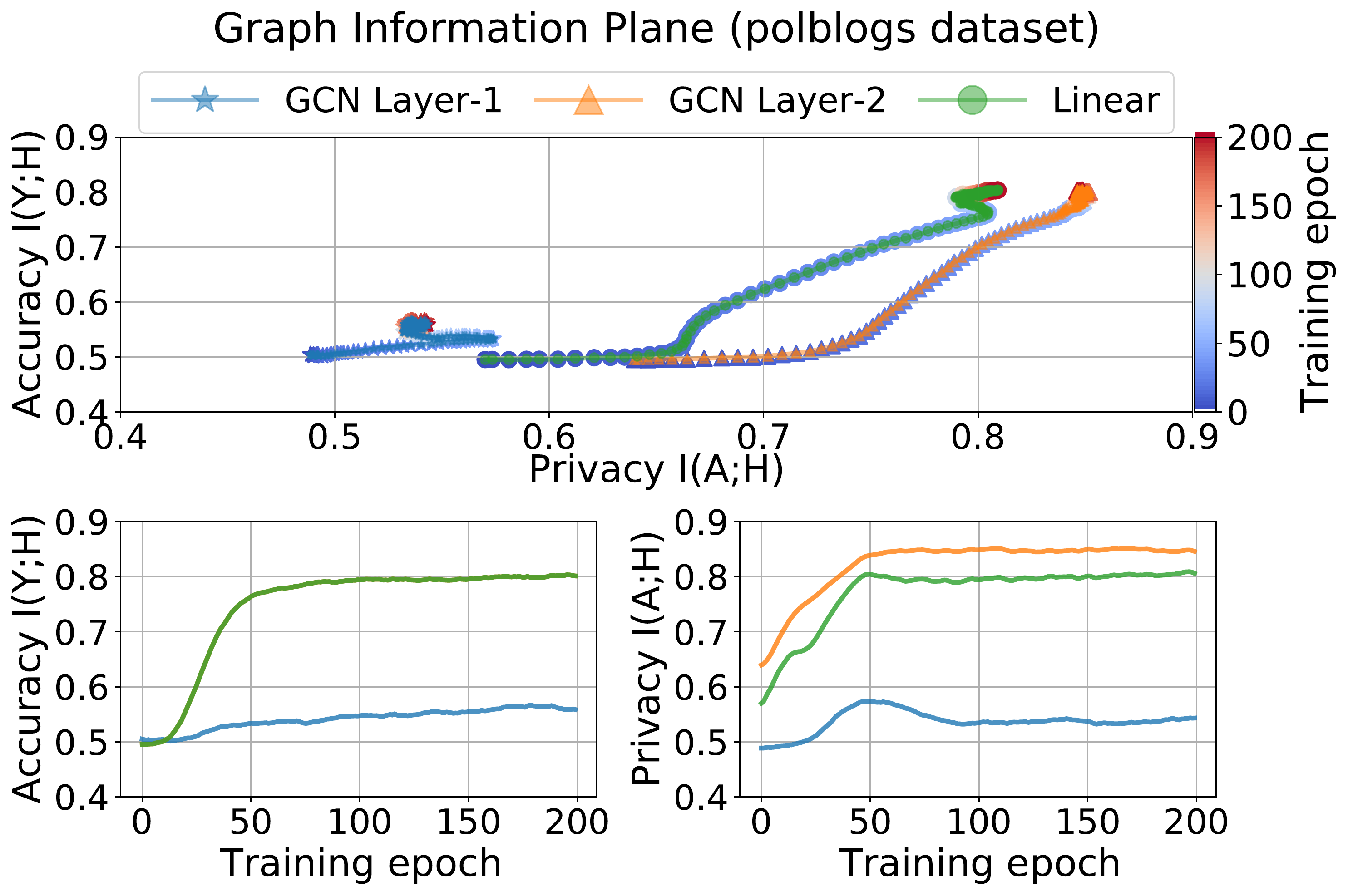}}
		\subfigure[GNN trained with MC-GPB]
		{\includegraphics[width=8.3cm]{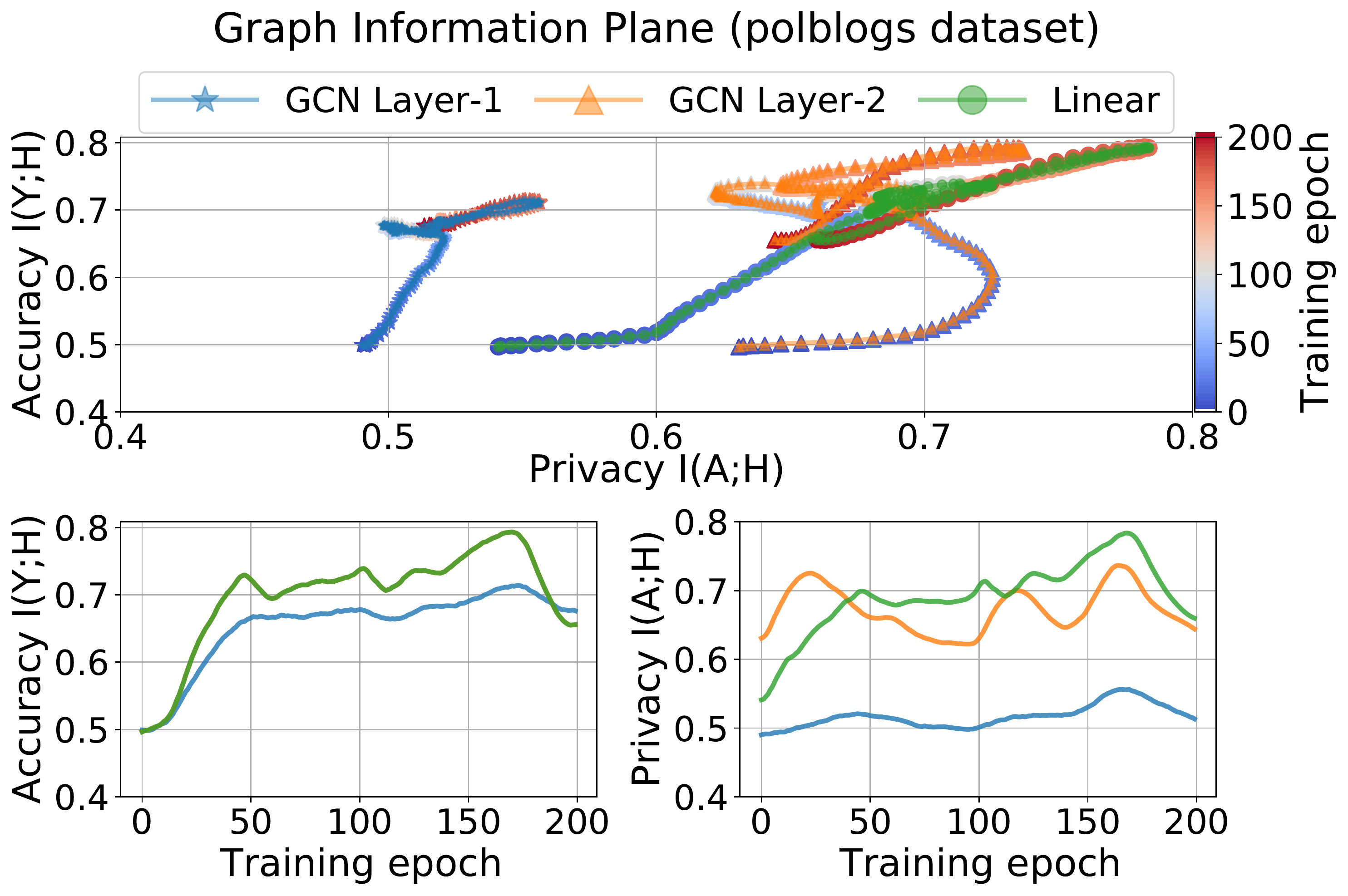}}
		\vspace{-6px}
		\caption{
			Graph information plane on Polblogs dataset.
		}
	\end{figure}
	
	\begin{figure}[H]
		\centering
		\subfigure[GNN with normal training]
		{\includegraphics[width=8.3cm]{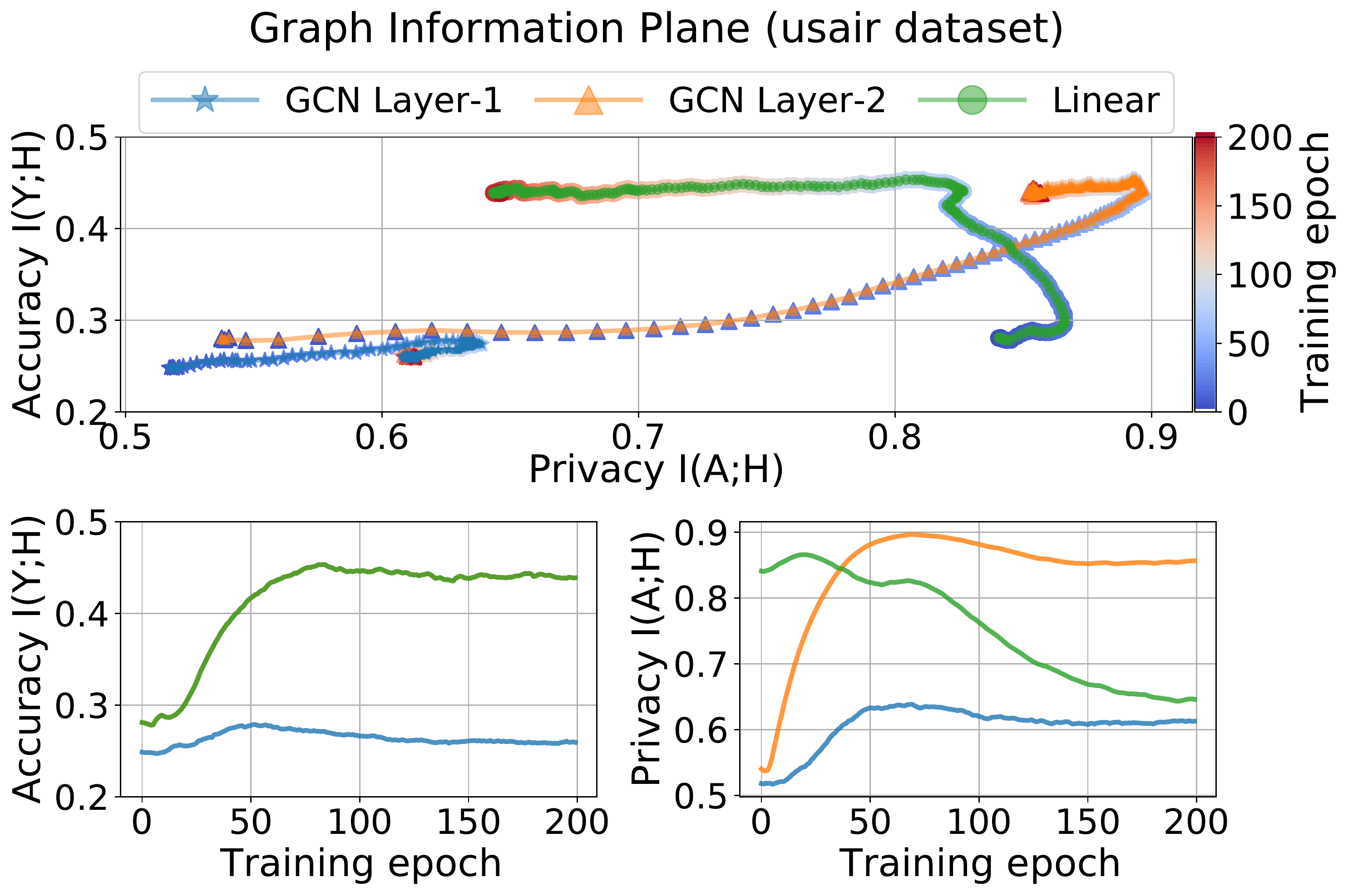}}
		\subfigure[GNN trained with MC-GPB]
		{\includegraphics[width=8.3cm]{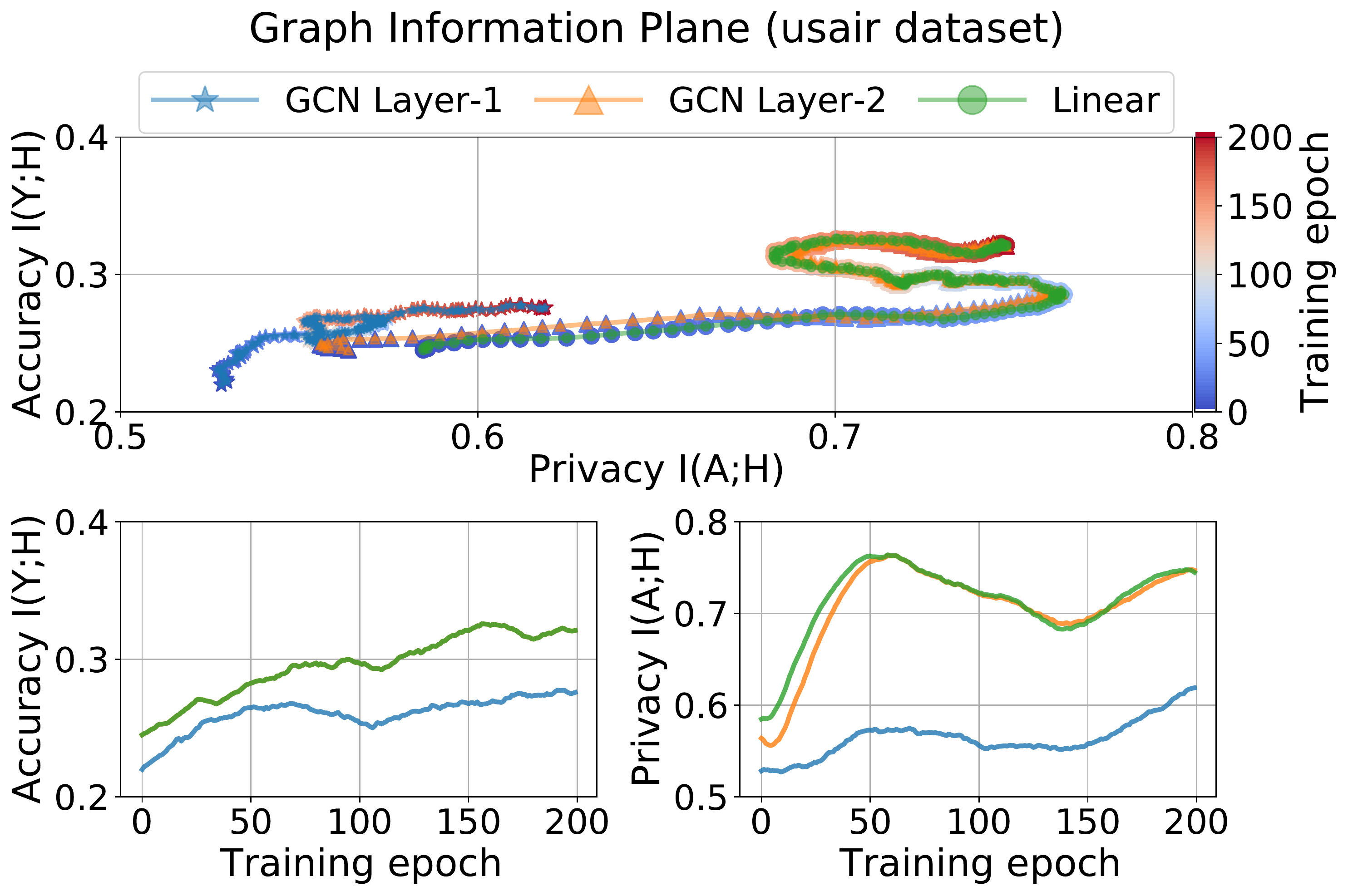}}
		\vspace{-6px}
		\caption{
			Graph information plane on USA dataset.
		}
	\end{figure}
	
	\begin{figure}[H]
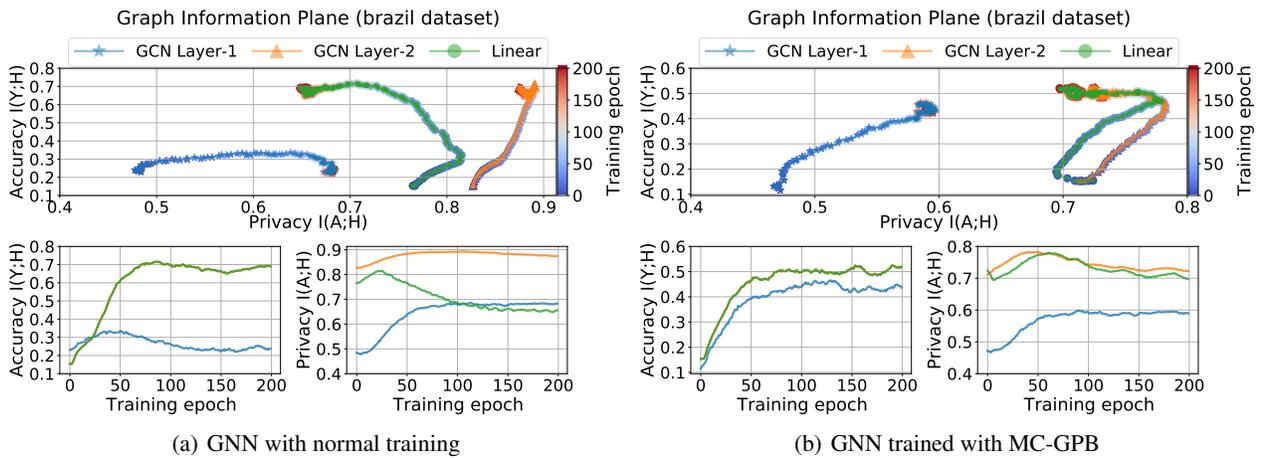

		\centering
		\subfigure[GNN with normal training]
		{\includegraphics[width=8.3cm]{figures/graph-information-plane/plane-standard-brazil-3-0.95}}
		\subfigure[GNN trained with MC-GPB]
		{\includegraphics[width=8.3cm]{figures/graph-information-plane/plane-defense-brazil-3-0.95}}
		\vspace{-6px}
		\caption{
			Graph information plane on Brazil dataset.
		}
	\end{figure}
	
	\begin{figure}[H]
		\centering
		\subfigure[GNN with normal training]
		{\includegraphics[width=8.3cm]{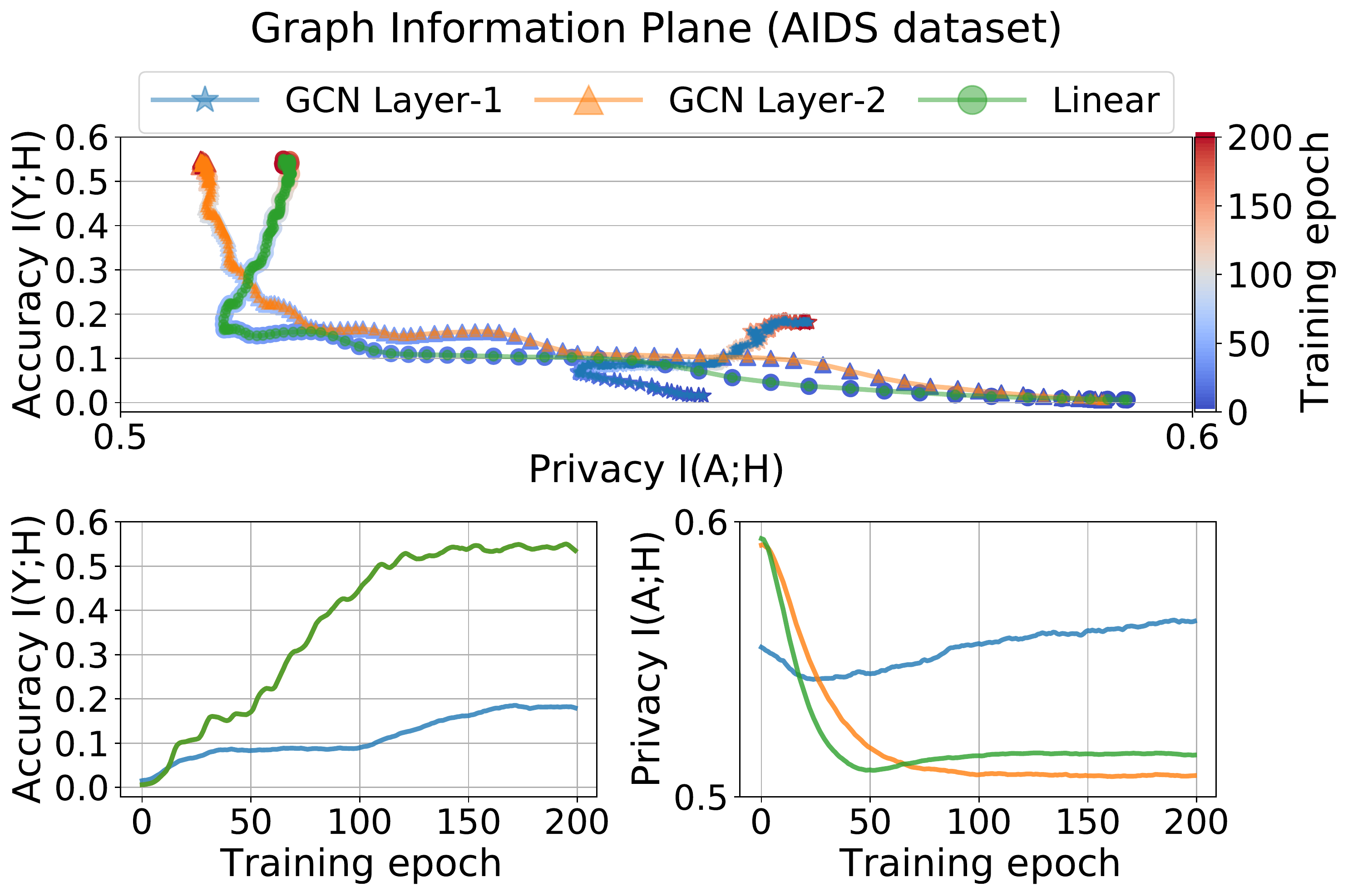}}
		\subfigure[GNN trained with MC-GPB]
		{\includegraphics[width=8.3cm]{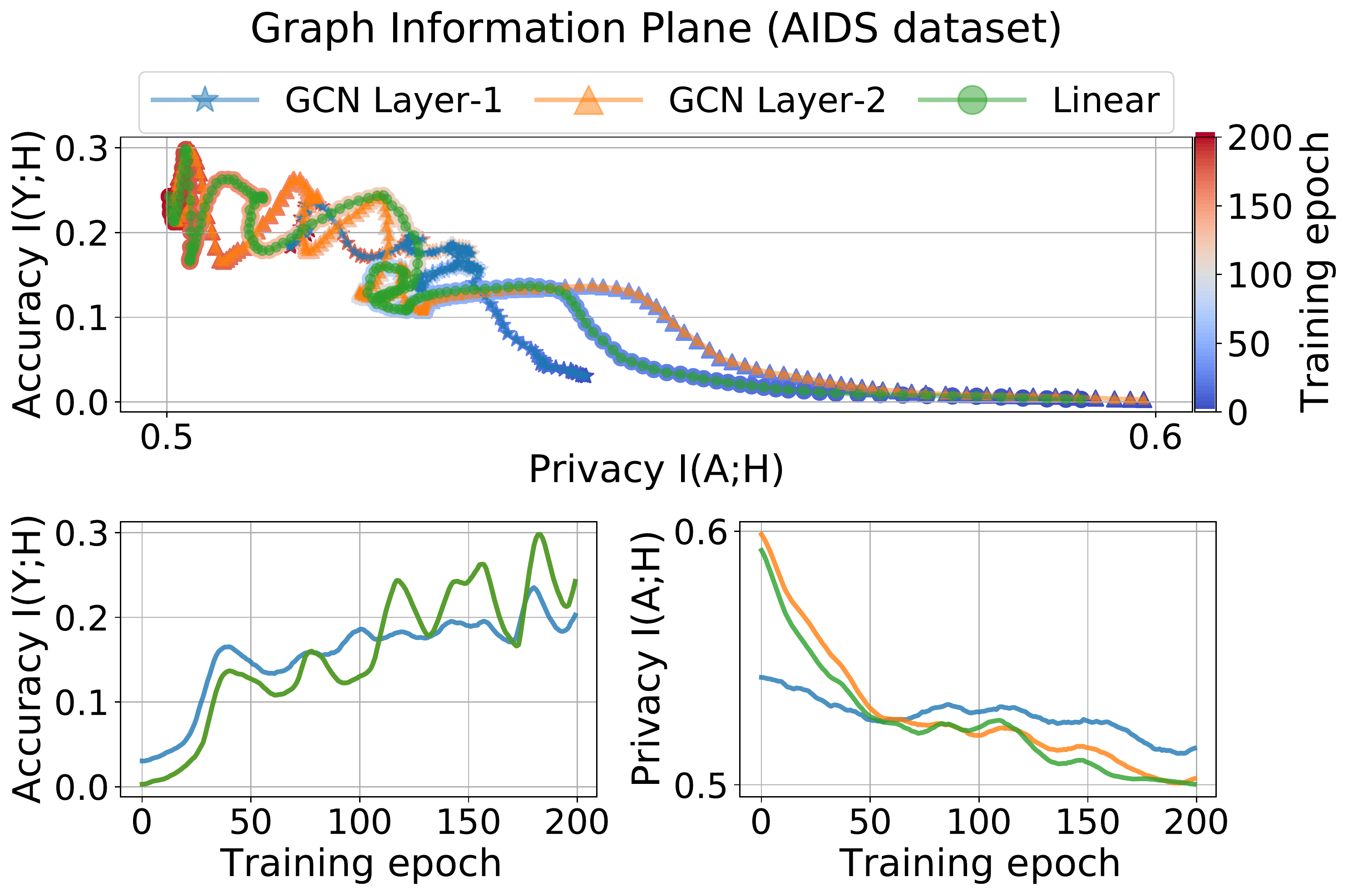}}
		\vspace{-6px}
		\caption{
			Graph information plane on AIDS dataset.
		}
  \label{appd:gip:aids}
	\end{figure}

	\clearpage
	\section{Related work}
	\label{sec: related_work}
	
	
	\subsection{Graph Neural Networks}	
	Predicting node labels 
	requires a parameterized hypothesis 
	$f_{\bm{\theta}}(A, X) \! = \! \bm{\hat{Y}}_{A}$
	with GNN architecture~\citep{kipf2016semi, velickovic2018graph, hamilton2017inductive}
	and message propagation framework~\citep{gilmer2017neural},
	where the architecture can be
	GCN~\citep{kipf2016semi}, GAT~\citep{velickovic2018graph}, or GraphSAGE~\citep{hamilton2017inductive}.
	%
	The forward inference of a $L$-layer GNN
	generates node representations $\bm{H}_{A} \! \in \! \mathbb{R}^{N \times D}$ 
	by a $L$-layer message propagation.

	Formally, let $\ell=1\dots L$ denote the layer index, 
	$h_{i}^{\ell}$ is the representation of the node $i$, 
	$\text{MESS}(\cdot)$ is a learnable mapping function to transform the input feature,
	$\text{AGGREGATE}(\cdot)$ captures the $1$-hop information from 
	neighborhood $\mathcal N(v)$ in the graph,
	and $\text{COMBINE}(\cdot)$ is the final combination between neighbor features and the node itself.
	Then, the $l$-layer operation of GNNs can be formulated as
	$
	\bm m_v^{\ell} \! = \! \text{AGGREGATE}^{\ell}(\{ \text{MESS}(\bm h_u^{\ell-1}, \bm h_v^{\ell-1}, e_{uv}) \! :\!  u \in \mathcal{N}(v)\})
	$,
	where the representation of node $v$ is 
	$
	\bm h_v^{\ell} = \text{COMBINE}^{\ell}(\bm h_v^{\ell-1}, \bm m_v^{\ell})
	$.
	After $L$-layer propagation,
	the final node representations $\bm h_{e}^L$ 
	of each $e \! \in \! V$ are obtained.
	In addition,
	we summarize the detailed architectures of different GNNs
	in the following Table~\ref{tab: GNN-architectures}.

	Then, the follow-up linear layer
	transforms $\bm{H}_{A}$  to classification probabilities
	$\bm{\hat{Y}}_{A} \! \in \! \mathbb{R}^{N \times C}$,
	with $C$ categories in total.
	The training objective is to minimize the classification loss, \textit{e.g.}, 
	the cross-entropy between predictions $\bm{\hat{Y}}_{A}$ and ground truth $Y$.
	
	\begin{table}[H]
		\centering
		\caption{Detailed architectures of different GNNs.}
		\fontsize{10}{16}\selectfont
		\setlength\tabcolsep{6pt}
		\begin{tabular}{c|c|c}
			\toprule
			GNN & $\text{MESS}(\cdot) \; \& \; \text{AGGREGATE}(\cdot)$ & $\text{COMBINE}(\cdot)$ \\ \hline
			GCN & $\bm m_{i}^{l} = \bm W^{l} \sum_{j \in \mathcal{N}(i)} \frac{1}{\sqrt{\hat{d}_i \hat{d}_j }} \bm h_{j}^{l-1}$ 
			& $\bm h_{i}^{l}  = \sigma( \bm m_{i}^{l} + \bm W^{l} \frac{1}{\hat{d}_i} \bm h_{i}^{l-1} )$  \\
			GAT & $\bm m_{i}^{l} =  \sum_{j \in \mathcal{N}(i)} \alpha_{ij} \bm W^{l} \bm h_{j}^{l-1}$ 
			& $\bm h_{i}^{l}  = \sigma( \bm m_{i}^{l} + \bm W^{l}  \alpha_{ii} \bm h_{i}^{l-1} )$  \\
			GraphSAGE & $\bm m_{i}^{l} =  \bm W^{l}  \frac{1}{|\mathcal{N}(i)|} \sum_{j \in \mathcal{N}(i)} \bm h_{j}^{l-1}$ 
			& $\bm h_{i}^{l}  = \sigma( \bm m_{i}^{l} + \bm W^{l}  \bm h_{i}^{l-1} )$  \\
			\bottomrule
		\end{tabular}
		\label{tab: GNN-architectures}
	\end{table}

	\subsection{Privacy Attack on Graphs}	
	Generally, the privacy attack on graphs can be attributed to
	\textit{membership inference attack}, 
	\textit{model extraction attack}, 
	and \textit{model inversion attack}.
	Specifically,
	the membership inference attack~\citep{he2021node}
	aims to indicate whether a data sample is used to train the model.
	Besides,
	model extraction attack~\citep{shen2022model}
	is to extract information about the model parameters and 
	reconstruct a surrogate model that behaves like the black-box model.
	Lastly,
	the model inversion attack aims to extract sensitive features of training data
	with only access to a trained model and non-sensitive features.
	We summarize the literature on the model inversion attack as follows.

	\subsection{Inversion Attack on Graphs}
	
	
	As introduced before, 
	most works of model inversion attack are investigated images and texts domains, 
	leaving its effectiveness in the non-grid domain an open problem, 
	\textit{e.g.}, graph-structured data.
	While recently, several graph neural networks 
	(GNNs)~\citep{kipf2016semi,gilmer2017neural,kipf2016variational, zhang2018link}
	are proposed for graph data
	and boosted many real-world applications, \textit{e.g.},
	recommendation systems~\citep{wu2020graph}
	and drug discovery~\citep{ioannidis2020few}.
	In graph scenarios,
	the target of the model inversion attack is to recover the topology of the training graph,
	\textit{i.e.}, the connectivity properties \textit{w.r.t.} each edge.

	In practice,
	inferring links between nodes leads to a severe privacy threat 
	when the links represent sensitive information,
	\textit{e.g.}, the relationship between users in social networks.
	Besides,
	it may also compromise a model owner’s intellectual property.
	The challenges of applying the model inversion attack 
	to graphs are two folds.
	(1) The discrete nature of graph structure.
	It is hard to optimize in a differentiable way.
	Besides, 
	the nodes and edges in a graph cannot be resized to the same shape.
	(2) Lack of domain knowledge as priors.
	Graph data are less intuitive than images,
	and the domain knowledge can be diverse,
	\textit{e.g.}, molecules, social networks, and citation networks.

	The pioneer works~\citep{duddu2020quantifying, chanpuriya2021deepwalking}
	attempt to reconstruct the target graph from released graph embeddings
	$\bm{H} \in \mathbb{R}^{N \times D}$ of each node,
	that are generated by Deepwalk or GNNs.
	The specific attack method can be
	the Deepwalk Backward~\citep{chanpuriya2021deepwalking},
	or a decoder $f^{dec}(\bm{H}): \mathbb{R}^{N \times D}  \to \{0, 1\}^{N \times N}$ 
	that is trained on auxiliary datasets~\citep{duddu2020quantifying}.
	Graph embedding attack with the auto-encoder framework
	is also an exciting direction as the graph embeddings of each node can usually be accessed in practice.
	\citep{zhang2022inference} systematically investigate the information leakage of graph embedding,
	and justify that the basic graph properties, \textit{e.g.}, number of nodes, number of edges, and graph density, 
	can be accurately extracted.
	Besides, it can determine whether a given subgraph is contained in the target graph or not.
	More importantly, it also shows that the graph topology can be recovered 
	via conducting the MIA with graph embeddings.

	The link stealing attack~\citep{he2021stealing} 
	is the first work to \textit{steal} links from a GNN as the target model.
	It aims to conduct the attack on black-box settings
	with three kinds of prior knowledge, 
	including (1) node features, (2) target dataset’s partial graph, and (3) a shadow dataset.
	This work proposed $8$ different kinds of attacks in total
	to be adaptive to the $2^3 = 8$ scenarios.
	Each proposed method for the attack was verified on
	chemical networks and social networks,
	which justified the feasibility of conducting a model inversion attack on graphs.
	However, it requires to be accessible to the partial graph and an auxiliary dataset.
	The partial graph actually contains sensitive information about the adjacency,
	and selecting the auxiliary dataset also requires extra information about the target graph.
	Thus, these methods cannot be directly utilized here.
	Besides, the GraphMI~\citep{zhang2021graphmi}
	is also a learnable attack that also aims to recover the links of the original graph.
	With the white-box access to the target model,
	the optimal adjacency is obtained 
	by maximizing the classification probability 
	\textit{w.r.t.} given node labels $Y$, 
	namely,
	$\bm{\hat{A}}^* 
	\! = \! \arg \max_{\bm{\hat{A}}} \mathbb{P}(f_{\theta}(\bm{\hat{A}}, X), Y)$.
	In practice, the $\bm{\hat{A}}^* $
	can be recursively updated by the projected gradient descent (PGD).
	The proposed attack method In this work
	is partially inspired by GraphMI,
	and it can be degenerated to GraphMI when
	the prior knowledge set is reduced to $\mathcal{K} = \{X, Y_{sub}\}$,
	where $Y_{sub} \subset Y$.

	In addition to attacking GNNs trained for node classification tasks,
	a recent work~\cite{zhang2022inference} also attempted to attack the model for graph classification tasks.
	The shift from node-level tasks to graph-level tasks brings several unique challenges
	as the obtained one-dimensional embeddings 
	$\bm{h}_{\mathcal{G}} \in \mathbb{R}^D$
	are the compressed information of the whole graph $\mathcal{G}$.
	This work reconstructs the graphs with a graph auto-encoder
	that takes the graph embeddings as inputs and then generates the corresponding graphs.
	Note that the adopted graph auto-encoder is trained on an auxiliary dataset
	and then applied to the target dataset.
	It shares a similar spirit of generative attacks on degenerate
	that the generator (\textit{e.g.}, a generative adversarial network) is pre-trained on public datasets.

	\subsection{Model Inversion Attack on Images}

	Pioneer works~\citep{fredrikson2014privacy, fredrikson2015model, hidano2017model}
	introduce the model inversion attack with 
	comparably simple models,
	\textit{e.g.}, linear regression, decision trees, and shallow networks.
	These early works justified the feasibility of model inversion attacks
	and succeeded in recovering the monochrome images.
	However, the reconstructed images are also usually of low fidelity~\citep{szegedy2013intriguing},
	and they fail in attacking DNNs for image classification tasks.
	
	\textit{So, how can we recover the polychromatic and realistic images used for training?}
	Generative Model Inversion (GMI) \citep{zhang2020secret}
	is the first to conduct model inversion attacks on deep models, 
	\textit{i.e.}, the convolution neural networks.
	Instead of directly reconstructing the private data from scratch,
	its inversion process is guided by a distributional prior 
	through the generative adversarial networks (GAN).
	Specifically, the used GAN is pretrained on public datasets
	for obtaining the generic prior knowledge of human faces
	via minimizing the canonical WassersteinGAN training loss, namely,
	\begin{equation}
		\mathcal{L}_{GAN}(G,D) = \mathbb{E}_x[D(x)] - \mathbb{E}_z[G(z)].
	\end{equation}
	that optimizes the generator $G$ and the discriminator $D$
	in an adversarial training manner.
	Then, GMI introduces a diversity loss to encourage
	the more diverse generated images
	to recover more training patterns in $X^{tra}$.
	To be specific, with two sampled latent vectors $z_1, z_2$,
	the diversity loss is calculated as
	\begin{equation}
		\mathcal{L}_{div}(G) = \mathbb{E}_{z_1, z_2}\big[\frac{||f_\theta^{feat}(G(z_1)) - f_\theta^{feat}(G(z_2))||}{||z_1 - z_2||}\big].
	\end{equation}
	where $f_\theta^{feat}$ is the feature extractor of the target network.
	With these two loss functions,
	the full objective of GAN is as follows.
	\begin{equation}
		\min_{G}\max_{D}\mathcal{L}_{GAN}(G,D) - \lambda \mathcal{L}_{div}(G).
	\end{equation}
	
	
	After training the GAN, 
	GMI aims to find the latent vector $z$ that achieves the highest likelihood 
	under the target network while being constrained to the data manifold learned by $G$,
	\textit{i.e.},
	\begin{equation}
		z^* = \min_z -D(G(z)) - \lambda \log[f_\theta(G(z))].
	\end{equation}
	where a lower prior loss $-D(G(z))$ require the more realistic images,
	while a lower identity loss $\log[f_\theta(G(z))]$ 
	encourages the generated images to have a higher likelihood \textit{w.r.t.} the targeted network.
	In summary,
	GMI conducts the model inversion attack in an end-to-end manner based on GANs
	that can reveal private training data of the target model with high fidelity,
	which make up for the deficiency of the early works.
	Besides, it also reveals the non-convex nature of model inversion attacks on deep models,
	where a more powerful target model can exhibit a higher privacy risk.

	However,
	the top-one identification accuracy of face images 
	inverted from the classifier is not that high.
	Is it because CNNs do not \textit{memorize} much about private data 
	or is it due to the \textit{imperfect} attack algorithm?
	To answer,
	the follow-up work, 
	Knowledge-Enriched Distributional Model Inversion (KED-MI)~\citep{chen2021knowledge},
	shows that the target network maybe not be fully utilized.
	KED-MI further distills the useful knowledge from the target model with two designs.
	On the one hand,
	instead of generating and discriminating real or fake samples,
	DMI utilizes the target model to generate soft labels for supervising the GAN,
	\textit{i.e.}, to minimize the loss $\mathcal{L}_{GAN} = \mathcal{L}(D) + \mathcal{L}(G)$, specifically,
	\begin{align}
		\begin{split}
			\mathcal{L}(D) =& \mathcal{L}_{sup}(D) + \mathcal{L}_{unsup}(D) \\
			\mathcal{L}_{sup}(D) =& -\mathbb{E}_{x \sim p_{data}(x) \sum_{k=1}^{K}}
			f_\theta(x) \log p_{disc}(y=k|x) \\
			\mathcal{L}_{unsup}(D) =& \!- \!\mathbb{E}_{x \! \sim \! p_{data}}\log D(x) \! + \! \mathbb{E}_{z \! \sim \! noise}\log(1 \! - \! D(G(z))) \\
			\mathcal{L}(G) =& || \mathbb{E}_{x \sim p_{data}}f_\theta(x) - \mathbb{E}_{z \sim noise} f_\theta(G(z)) ||^{2}_{2}
			+ \lambda \mathcal{L}_{ent} \\
		\end{split}
	\end{align}
	where the entropy regularization term $\mathcal{L}_{ent}$ is taken from previous work~\citep{grandvalet2004semi}.
	
	On the other hand,
	no longer recovering a sample given a label in a one-to-one manner,
	DMI explicitly parameterizes the distribution of private data
	and proceed with the model inversion attack in a new many-to-one way.
	Technically,
	the latent vectors of the generator are
	sampled from a learnable distribution to capture the class-wise information,
	while the discriminator acts as a (K+1)-classifier
	to differentiate the K classes of private data.
	Here, the latent variable $z$ is parameterized by 
	$z = \sigma \epsilon + \mu$ by the reparameterization trick
	that the corresponding distribution $p_{gen}$ samples the optimal $z^*$ as follows.
	\begin{equation}
		z^* = \min_z -\mathbb{E}_{z \sim p_{gen}} \log D(G(z)) 
		- \lambda \mathbb{E}_{z \sim p_{gen}} \log[f_\theta(G(z))].
	\end{equation}

	Basically,
	a successful attack should generate realistic and diverse samples.
	\textit{So, how can we generate more diverse samples?}
	The Variational Model Inversion (VMI)~\citep{wang2021variational}
	further formulates the model inversion attack as the variational inference.
	VMI generally can bring a higher attack accuracy and diversity
	for its equipped powerful generator StyleGAN to optimize its designed variational objective.
	Specifically,
	for the target class $y$, VMI approximates the target posterior
	with a variational distribution $q(x) \in Q_x$ from the variational family $Q_x$.
	The target model $f_{\theta}(x)$ is then denoted as $p_{\text{TAR}}(x|y)$.
	The variational objective is derived as follows.
	\begin{align}
		\begin{split}
			q^*(x) =& \min_{q \in Q_x} D_{KL}(q(x) || p_{\text{TAR}}(x|y)) \\
			=& \min_{q \in Q_x} \mathbb{E}_{q(x)}\big[ -\log p_{\text{TAR}}(x|y) 
			+ D_{KL}(q(x) || p_{\text{TAR}}(x) ) \big]
		\end{split}
	\end{align}

	In addition to recovering images,
	\textit{can model inversion attack be applied to other extensions?}
	The Contrastive Model Inversion (CMI)~\citep{fang2021contrastive}
	aims for data-free knowledge distillation.
	It recovers the training data from the target model via model inversion attacks,
	based on which it trains a student model.
	To overcome the mode collapse problem
	that recovered images are highly similar to each other,
	CMI proposes the contrastive learning objective
	upon the generated data
	to promote diversity while remaining considerable fidelity.
	With the similarity measurement
	$sim(x_1, x_2, h) = cos(h(x_1), h(x_2) = \nicefrac{<h(x_1), h(x_2)>}{||h(x_1)|| \cdot ||h(x_2)||}$,
	where the $h(\cdot)$ projects $x_i$ to the embedding space,
	the contrastive loss is formulated as 
	\begin{equation}
		\mathcal{L}_{con}(X, h) = -\mathbb{E}_{x_i \in X}\Bigg[ \log 
		\frac{exp(sim(x_i, x_i^{+}, h)/\tau)}{\sum_{j}exp(sim(x_i, x_j^{-}, h)/\tau)} \Bigg]
	\end{equation}

	Besides, XAI-aware model inversion attack~\citep{zhao2021exploiting} shows that
	the additional knowledge 
	collected from the model inference procedure
	can promote the model inversion attack performances.
	In detail, if the model explanations
	\textit{e.g.}, saliency maps of Gradients or Grad-CAM,
	are attainable in practice,
	it might do harm to privacy since 
	these explanations can help recover private data.

	In addition,
	\citep{kahla2022label} conducts the first label-only model inversion attack
	only accessing the model's predicted labels without the confidence scores.
	As a machine learning model is often packed into a black-box
	that only generates the hard label (\textit{i.e.}, the label of the class with the highest probability),
	such an attack scenario is more practical 
	but also much more challenging to perform.
	Despite requiring less knowledge about the target model,
	this work justifies that such a black-box attack is also feasible and effective.
	Specially,
	it attempts to generate the most likelihood images for the target class,
	and observes that a region of high likelihood
	shall be located in the center of the class.
	Based on this observation, 
	this work proposes to iteratively move the generated image
	away from the decision boundary and closer to the center.
	
	Recent advance~\citep{struppek2022ppa}
	significantly decreases the cost of conducting a model inversion attack
	through relaxing the dependency between the target model and the image prior.
	This work enables the use of a single GAN to attack a wide range of targets,
	requiring only minor adjustments to the attack.
	Moreover, this work shows that the model inversion attack is possible even 
	with publicly available pre-trained GANs and under strong distributional shifts.

	\subsection{Model Inversion Attack on Texts} 
	In addition to 
	recovering training images with visual models introduced before,
	model inversion attack on text data with language models 
	is also attacking more and more interests.
	In this domain,
	the input $X$ is changed from image to text (\textit{i.e.}, sentences),
	and the architecture of model $f_\theta$ 
	is also shifted from the convolutional neural network to the transformer-based language model.

	A pioneer work~\cite{carlini2021extracting}
	demonstrates that large language models (\textit{i.e.}, the GPT-2) memorize and leak individual training examples,
	with only black-box query access.
	The private information of an individual person can be accurately recovered.
	More importantly,
	this work reveals that
	some worst-case training examples are indeed memorized,
	although training examples do not have noticeably lower losses than test examples on average.
	Such a phenomenon is correlated with the 
	memorization effect of DNNs and deserved further investigations.

	Besides,
	another work, Text Revealer~\citep{zhang2022text}, 
	firstly proposed to apply the model inversion attack 
	to text classification with the transformer-based pretrained language models.
	Its attack consists of two stages:
	(1) collect texts from the same domain as the public dataset
	and extract high-frequency phrases from the public dataset as templates;
	(2) train a language model as the text generator on the public dataset.
	By minimizing the text classification loss, \textit{i.e.}, cross-entropy,
	the generated text distribution becomes closer to the private dataset.

	\subsection{On defending Model Inversion Attack}
	
	As for defending against the model inversion attack,
	a natural solution can be differential privacy (DP).
	Although effective in defending the membership attack~\citep{abadi2016deep},
	the techniques of DP are proved to be ineffective with model inversion attacks~\citep{fredrikson2014privacy, zhang2020secret}.
	
	The mutual-information-based defense (MID)~\citep{yang2019adversarial}
	and Bilateral Dependency Optimization (BiDO)~\citep{peng2022bilateral} are two representative defense methods that are specially designed for the model inversion attack.
	They follow a similar principle, that is, to control the mutual information among inputs $X$, hidden representations $Z$, 
	prediction outputs $\hat{Y}$, and labels $Y$.
	Specifically,
	MID directly decreases the mutual information between $I(X; \hat{Y})$.
	BiDO forces the model to learn
	the robust representations by minimizing $I(X; Z)$
	to limit redundant information that is
	transferred from the inputs to the latent representations,
	while maximizing $I(Y; Z)$ to keep the representations informative.
	
	These two robust methods are effective in defending against model inversion attacks.
	The recovered images are neither correct nor realistic.
	However, 
	such defense methods can do harm to the performance of the target model,
	as the informative signals from the input side
	can be overlooked under the balance of mutual information.
	Thus, a better trade-off between 
	the model inversion-robustness and prediction performance is expected.
	In general,
	such an area of defending against model inversion attacks is still under-explored.

	%
	In general,
	the principle of conducting the model inversion attack is to
	utilize prior knowledge as much as possible,
	to extract more information from the target model,
	in order to generate more realistic and diverse samples.
	While defending against the model inversion attack,
	one promising solution
	is to store less information about input data in the weights of the model.
	In this way, the attacker is unable to recover the private data
	via querying the target model.
	However,
	it usually forms a trade-off between privacy and accuracy
	that such privacy-safe solutions can harm the accuracy,
	Thus, a better defensive approach is needed.
	The model inversion defense in practice
	where several trained models are expected to be protected without 
	further modifications are much more challenging and essential.

	\section{A Further Discussion on Graph Reconstruction Attack}
	\label{sec: further discussion on GRA}
	\subsection{Background of the research problem}
	
	In this part, 
	we would further clarify the background and settings
	of the research problem in our work, \textit{i.e.}, Graph Reconstruction Attack.
	To be specific, we provide rigorous answers to the three following research questions.
	
	\begin{itemize}[leftmargin=*]
		\setlength\itemsep{0.1em}
		\item 
		\textit{Q1. About the black-box or white-box attack settings regarding accesses to the target model.}
		Here, the black-box setting indicates that the attackers 
		can only query the target model and receive the classification results,
		which is the most difficult setting for the adversary.
		When a company employs GNN tools from another company 
		that could be considered an adversary who possesses black-box access to the GNN model.
		On the contrary,
		the white-box setting means that the entire parameters of the target model can be obtained.
		Here, white-box attacks have created an increasingly serious threat to privacy 
		due to the rise in the number of internet venues and online platforms where users can download full models.
		
		\item 
		\textit{Q2. About accesses to the prior knowledge.}
		The GNN model prediction results $ \bm{\hat{Y}}_{A}$ 
		shared by many departments within the same corporation may be accessed by an adversary. 
		For instance, to train a GNN model for fraudulent account detection, 
		a social network service provider uses the technology of another business. 
		In this situation, the supplier frequently must send the business 
		the nodes' prediction results in order to debug or improve them.
		Similar circumstances apply to representations $\bm{H}_{A}$ (\textit{i.e.}, node embeddings), which are typically released. 
		Furthermore, GNNs use the message passing framework, as is common knowledge, 
		to produce representations of each node that are used in downstream tasks like node classification and link prediction.
		Additionally, a prior study~\citep{he2021stealing} also took into account the availability of an auxiliary dataset 
		and a partial graph $A_{sub} \subseteq A$. 
		The additional prior knowledge does successfully improve the GRA, even though it necessitates more access. 
		As a result, 
		the GRA can be carried out if the attacker can access the trained GNN model from malicious clients 
		and has some leaked prior knowledge that is connected to the model.
		
		\item 
		\textit{Q3. About the attack target \textit{i.e.}, the adjacency rather than the node feature.}
		Intuitively, the adjacency $A$ and the node feature $X$
		are all located on the input sides of the forward Markov chain,
		which means both of them can be the inversion target.
		The key motivation to attack the adjacency is two folds,
		\textit{i.e.}, its practical risks and understandability to human beings.
		For instance, social network data is gathered with user consent 
		in order to train GNNs for better service, such as friend classification or ad recommendation. 
		It should be noted that user friendship data is sensitive and relational and that it should be kept secret.
		If the user's friendship is recovered in this scenario, the user's privacy will be compromised. 
		A bigger security risk is presented by the fact that attackers can grasp such privacy, which is more critical. 
		As a result, the privacy of graph adjacency data should receive greater attention and protection 
		because it is more sensitive and intelligible than the node feature.
		
	\end{itemize}
	
	\subsection{Graph Reconstruction Attack in practice}
	Here, we provide a detailed explanation 
	for the existence of the threat model in practice
	with several real-world examples.
	
	\textit{Q1. Why can adjacency be attacked, and should be protected in practice? }
	In practice, inferring links between nodes leads to a severe privacy threat when the links represent sensitive information, \textit{e.g.}, the relationship between users in social networks. Taking social networks as an example, which require gathering interaction among individuals, GNNs can only have satisfied performance on downstream tasks like community detection or ad recommendation once the network structure is accurately characterized. However, this connection among users should be private since it is gathered with user consent and shall be kept between the service provider and users. 
        
	For instance, to train a GNN model for fraudulent account detection, a social network service provider uses the technology of another business. In this case, the supplier will frequently send the business the nodes' prediction results to debug or improve them. Similar circumstances apply to node representations, which are typically released. Thus, the model's outputs shared by many departments within the same corporation can be accessed by an adversary. The same to node features and node labels. Note that the graph reconstruction attack can be conducted with only a subset of the above informative variables, as we have empirically justified its feasibility in Section~\ref{sec: experiment}. The attack target here, \textit{i.e.}, links, can reflect the model owner's sensitive relationship information or intellectual properties, which brings considerable safety risk that is orthogonal to the well-known and widely-studied adversarial attacks~\citep{dai2018adversarial, chen2022understanding, zhu2023combating}.
    The inversion of adjacency is a severe privacy threat in several real-world scenarios of GNNs, which have been widely used in recommendation systems, social networks, citation networks, and drug discovery. 
	
	Thus, the users' privacy should be paid attention to and protected, especially for personal relationships and sensitive information. The private connection among individuals shall be protected since it might become a powerful weapon for fraud syndicates to generate fake identities or threaten people with concerns about their secret relationships. Moreover, in the drug discovery scenario, the company might train its graph generation model for searching for new medicine and share its model with other companies for further development. However, the dataset for training the model contains private drug structures on the market, which is worth stealing as it involves tremendous efforts to discover and test the drug's safety before appearing on the market. 
	
	\textit{Q2. Why investigating the GRA is meaningful and practical?}
	One cannot ignore the importance of privacy leaking of existing GNNs, regardless of the possibility of the attack. The model inversion attack also receives noticeable attention in the visual domain~\citep{fredrikson2014privacy, hidano2017model, chen2021knowledge, wang2021variational, zhao2021exploiting, kahla2022label} 
    and the natural language processing domain~\citep{carlini2021extracting, zhang2022text}. For the graph domain, previous works~\citep{he2021stealing, zhang2021graphmi} have justified the practical feasibility and possible impact on real life.
	
	The GRA, or the general MIA, is a well-defined problem that attacks growing interests. We have justified that the adjacency matrix contains rich private information and is prone to attack. The purpose of this work is to illustrate the flaw of the current GNNs training process and provide a direction to protect the model.
        One cannot wait until a mistake is made to fix it.

	\subsection{Issues of existing attack or defense methods}
	\label{ssec: GRA analysis}
	
	
	In this part, 
	we further elaborate
	the challenges of the studied GRA problems 
	and issues of existing methods (introduced in Appendix.~\ref{sec: related_work}),
	which are summarized in the following three folds.
	
	\begin{itemize}[leftmargin=*]
		\setlength\itemsep{0.1em}
		\item 
		Directly applying existing methods~\citep{fredrikson2015model}
		to graphs can be easily sub-optimal.
		The attribute is that they are originally designed for grid data like images,
		overlooking the inherently topological and semantical properties of graphs.
		Besides, another modality gap
		is the lacking of a distributional prior (\textit{e.g.}, a public face dataset)
		stored in a pre-trained generative adversarial network
		that is used to guide the inversion process of graphs.
		Thus, it hinders several generative attack methods~\citep{zhang2020secret, struppek2022ppa}
		to be applicable to graphs.
		
		\item 
		Considering the inductive nature
		that graphs can be collected from diverse domains,
		the fetched prior knowledge set $\mathcal{K}$ is of vital importance.
		However,
		$|\mathcal{K}|$ can be $2$, $3$, or $4$ (with $7$ combinations in total),
		while trivially treating each case with a specific method is quite non-general.
		Thus, how the adaptively utilized each $\mathcal{K}_i \! \in \! \mathcal{K}$
		in the form of \textit{one} generic objective of combination optimization,
		is the main challenge to solving the non-convex problem here.
		
		\item 
		As for the defense,
		the differential privacy techniques are proven to be of little help
		to defend against MIA~\citep{zhang2020secret, zhang2021graphmi},
		although it can be helpful to defend against the membership attack.
		On the other hand, the improvement of privacy guarantee 
		might come at the cost of degenerating the empirical performance~\cite{bietti2022personalization}.
		Thus, an effective defense method customized for GNNs 
		that nicely balancing of accuracy and privacy is expected.
	\end{itemize}

        \textbf{Technical contribution.}
	To better clarify the technical contribution of our work, we provide a brief summary with regard to existing works as follows.
        \begin{itemize}[leftmargin=*]
		\setlength\itemsep{0.1em}
		\item 
            For the attack, we propose the Markov Chain-based Graph Reconstruction Attack (MC-GRA) that boosts the attack fidelity with parameterization techniques and injected stochasticity. Unlike existing GRA methods designed for ad-hoc scenarios, our proposed MC-GRA aims to locate, present, and utilize the interplaying variables of GNN forward in a generic manner. It can adaptively support the white-box GRA that leverages the target model and any prior knowledge. That is, \textit{to recover better, you must extract more.}

            \item 
            For the defense, existing works have justified that differential privacy (DP) is ineffective with GRA (or general MIA). And currently, there is a lack of an effective way to defend GRA. In this work, we propose the Markov Chain-based Graph Privacy Bottleneck (MC-GPB), an information theory-guided principle that significantly degenerates GRA with only a slight accuracy loss. The MC-GPB requires the GNN to forget the privacy information in the training process, \textit{i.e.}, to make the learned representations contain less information about adjacency. That is, \textit{to learn safer, you must forget more.}

            \item 
            To the best of our knowledge, we are the first to conduct a systematic study of GRA from both sides of attack and defense. By taking GNNs as a Markov chain and attacking GNNs via a flexible chain approximation, we systematically explore the underlying principles of GRA and reveal several essential phenomena. In addition, we also provide a rigorous analysis from information-theoretical perspectives to disclose several valuable insights on how to strengthen and defend GRA.
            
        \end{itemize}
        
	\subsection{The information-theoretic principles of GRA}
	\label{ssec: principle of GRA}
	
	Basically, the objective of the attack is to recover the adjacency, as was stated earlier. 
	On the other hand, the defense consists of acquiring a dependable and thoroughly trained model that is resistant to assault. 
	In order to launch an attack, one must first collect and combine all of the relevant prior knowledge and then do a backward recovery concerning the adjacency; 
	As a sort of defense, rather than that, it is necessary to mandate that the GNN forget all of the information on the adjacency during its training phase.
	
	Specifically, 
	the correlation between each $\mathcal{K}_i \in \mathcal{K}$ and 
	its counterpart in the forward process with the recovered adjacency $\hat{\bm{A}}$
	should be encouraged to enhance the GRA.
	For example, $\hat{\bm{Y}}_{A} \in \mathcal{K}$ should be approximated by $\hat{\bm{Y}}_{\hat{\bm{A}}}$,
	\textit{i.e.}, a higher $I(\hat{\bm{Y}}_{A} ; \hat{\bm{Y}}_{\hat{\bm{A}}})$,
	that is essential to obtain a high fidelity $I(A; \hat{\bm{A}})$.
	On the contrary,
	constraining the correlation between intermediate variables
	and the original adjacency, \textit{e.g.}, a lower $I(A; \bm{H}_{A}^{i})$ for $i = 1,2,\cdots, L$,
	is a natural solution to defend the GRA.
	As such,
	even these variables are leaked, 
	the attacker is scarcely possible to recover $A$.

	\subsection{Deriving the MC-GRA objective}
	\label{ssec: deriving MC-GRA}
	
	For approximating $A$ by $\bm{\hat{A}}$,
	the basic objective of attacking and its relaxed form
	to optimize $\bm{\hat{A}}$ are derived as follows.
	
	\textbf{The basic attack objective.}
	%
	Intuitively,
	given a prior knowledge set
	$\mathcal{K} \! \subseteq \! \{X,  Y, \bm{H}_{A}, \bm{\hat{Y}}_{A} \}$,
	the optimal recovered adjacency $\bm{\hat{A}}^*$ can be obtained by
	directly maximizing its correlation with each term $\mathcal{K}_i \! \in \! \mathcal{K}$,
	\textit{i.e.}, solving the Basic-GRA,
	\begin{equation}	
		\bm{\hat{A}}^* = \arg \max_{\bm{\hat{A}}} I(\bm{\hat{A}}; \mathcal{K}) 
		\triangleq \sum_{\mathcal{K}_i  \in \mathcal{K}}  \alpha_{i} I(\bm{\hat{A}} ; \mathcal{K}_i).
		\label{eqn: Basic-GRA}
	\end{equation}
	where the hyper-parameters $\{\alpha_{i}\}_{i=1}^{|\mathcal{K}|}$ 
	balance the MI terms $\{I(\bm{\hat{A}} ; \mathcal{K}_i)\}_{i=1}^{|\mathcal{K}|}$.
	Intuitively,
	maximizing $I(\bm{\hat{A}} ; \mathcal{K})$ 
	enables to extract information in $\mathcal{K}$ and store it in $\bm{\hat{A}}$
	to approximate $A$, namely,
	\begin{equation}	
		\max_{\bm{\hat{A}}} H(\bm{\hat{A}}) \! \approx \!  H( \mathcal{K})
		\; \Rightarrow \;
		\max_{\bm{\hat{A}}} I(A ; \bm{\hat{A}}) \! \approx \! I(A ; \mathcal{K}).
	\end{equation}
	The Basic-GRA can be applied to \textit{black-box} settings,
	however,
	it can also be sub-optimal as locations of $\bm{\hat{A}}$ and $\mathcal{K}_i$ are distant:
	$\bm{\hat{A}}$ is in the front-end while $\mathcal{K}_i$ is in the back-end.
	Which means,
	the information unrelated to adjacency
	induced by the \texttt{ORI-chain}, 
	\textit{i.e.}, $H(\mathcal{K}_i | A)$,
	will be also stored in $H(\bm{\hat{A}})$.
	That is,
	$\max_{\bm{\hat{A}}} I(A ; \bm{\hat{A}}) \! \approx \! I(A ; \mathcal{K})$
	might come at the cost of
	$H(\bm{\hat{A}}) \! \approx \!  H( \mathcal{K})$.
	Besides,
	the knowledge stored in the target model $f_{\bm{\theta}^*}(\cdot)$ is entirely not utilized.
	Thus, the recovered $\bm{\hat{A}}^*$ is not good enough,
	and a refined objective is required.

	\textbf{The relaxed attack objective.}
	For extracting the target model and relaxing the optimization simultaneously,
	we replace $\bm{\hat{A}}$ in Eq.~\eqref{eqn: Basic-GRA}
	by the latent variable $\bm{Z}_{\bm{\hat{A}}}^j$
	generated by \texttt{GRA-chain} $f_{\bm{\theta}^*}(\bm{\hat{A}}, X)$.
	We promote $I(\bm{Z}_A^j; \bm{Z}_{\bm{\hat{A}}}^j)$ instead of $I(\bm{Z}_A^j; \bm{\hat{A}})$,
	as it provides supervision signals that can be tractably approximated.
	Here,
	$\bm{Z}_{\bm{\hat{A}}}^j$ shares the same location as 
	$\mathcal{K}_i$ in the chain
	($\bm{Z}_A^j \! = \! \mathcal{K}_i$).
	The derived new objective is
	\begin{equation}	
		\begin{split}
			\text{MC-GRA: \;}
			\bm{\hat{A}}^* = \arg \max_{\bm{\hat{A}}}
			\underbrace{\alpha_{p} I(\bm{H}_A ; \bm{H}^{i}_{\bm{\hat{A}}})}_{\text{propagation approximation}}
			+ \underbrace{\alpha_{o} I(\bm{Y}_A ; \bm{Y}_{\bm{\hat{A}}})
				+ \alpha_{s} I(Y ; \bm{Y}_{\bm{\hat{A}}})}_{\text{outputs approximation}}
			- \underbrace{\alpha_{c} H(\bm{\hat{A}})}_{\text{complexity}}.
		\end{split}
	\end{equation}
	Note that MC-GRA is a maximin game:
	it maximizes the approximation of encoding and decoding processes
	of the two Markov chains,
	while minimizing the complexity to avoid trivial solutions
	by constraining the graph density.
	Compared with Basic-GRA (Eq.~\eqref{eqn: Basic-GRA}),
	it approximates latent variables $\mathcal{S}_{A}$ in \texttt{ORI-chain}
	by $\mathcal{S}_{\bm{\hat{A}}}$ in \texttt{GRA-chain}, \textit{i.e.},
	\vspace{-4pt}
	\begin{equation}	
		\begin{split}
			\max_{\bm{\hat{A}}} I(\mathcal{S}_{A}; \mathcal{S}_{\bm{\hat{A}}})
			\; \Rightarrow \;
			\max_{\bm{\hat{A}}} I(A ; \bm{\hat{A}}), 
			\; \; s.t. \; \;
			\mathcal{S}_{A} \! = \! \{\bm{Z}_{A}^{i}: \bm{Z}_{A}^{i} \! \in \! \mathcal{K} \}, \;
			\mathcal{S}_{\bm{\hat{A}}} \! = \! \{\bm{Z}_{\bm{\hat{A}}}^{i}: \bm{Z}_{A}^{i} \! \in \! \mathcal{S}_{A}\}.
		\end{split}
		\label{eqn: MC-GRA-MI}
	\end{equation}

	\section{Implementation Details}
	

	In this section,
	we provide a detailed introduction 
	to the technical designs and implementation details 
	of the proposed methods in our work.
	Specifically,
	Appendix.~\ref{ssec: quantifying privacy leakage} and Appendix.~\ref{ssec: graph information plane}
	are technical details of the empirical study in Sec.~\ref{sec: overview},
	Appendix.~\ref{ssec: differentiable-MI} and Appendix.~\ref{ssec: full algorithm}
	are elaboration for the methodology in Sec.~\ref{sec: GRA attack} and Sec.~\ref{sec: GRA defense}.
	
	\subsection{Quantifying privacy leakage}
	\label{ssec: quantifying privacy leakage}
	
	As introduced in Sec.~\ref{sec: overview},
	studying the direct correlation between ground-truth adjacency $A$
	and single variable $\bm{Z} \! \in \! \{ X, Y, \bm{H}_{A}, \bm{\hat{Y}}_{A} \}$
	can provide insight into the studied GRA problem.
	%
	The informative concept of mutual information (MI)
	that $I(X; Y) = D_{KL} [ p(x,y) || p(x)p(y)] = H(X) - H(X|Y)$
	is a measure of the symmetric correlation between the two variables,
	which suits our needs perfectly.
	To avoid the cumbersome calculation of MI,
	a surrogate estimation \textit{w.r.t.} the existence of edges in $A$, \textit{i.e.},
	the AUC (area under the curve) metric is utilized~\citep{zhang2018link, zhu2021neural} to quantify $I(A; \bm{Z})$
	regarding edges in $A$ and $\hat{A}_{\bm{Z}} \! = \! \sigma(\bm{Z}\bm{Z}^\top)$
	can be an efficient solution here.
	
	Specifically,
	suppose $A \! \in \! \{0,1\}^{N \times N}$ 
	and $\bm{Z} \! \in \! \mathbb{R}^{N \times D}$,
	where $N$ is the number of nodes and $D$ is the hidden size.
	We define $I(A;\bm{Z}) \triangleq I(A; \hat{A}_{\bm{Z}})$,
	where $\hat{A}_{\bm{Z}} \! = \! \sigma(\bm{Z} \bm{Z}^\top) \! \in \! \mathbb{R}^{N \times N}$
	indicates the predictive existence of each edge.
	Note that the dot product $\bm{Z} \bm{Z}^\top$
	followed by the activate function $\sigma(\cdot)$
	is in direct proportion to the cosine similarity,
	which is commonly utilized in investigating the distribution of representations.
	Here, a higher MI $I(A;\bm{Z})$
	indicates a lower expectation of distance $\mathbb{E}_{(i,j) \sim A} d(z_i, z_j)$,
	where measurement $d(\cdot, \cdot)$ can be cosine, euclidean, etc.
	Alternatively,
	the activation function $\sigma(\cdot)$ can be \texttt{ReLU}, \texttt{Sigmoid}, etc.
	For quantifying $I(A; \hat{A}_{\bm{Z}})$,
	the AUC (area under the curve) is utilized as the metric~\citep{zhang2018link, zhu2021neural}
	regarding edges 
	in factual adjacency $A$ and recovered adjacency $\hat{A}_{\bm{Z}}$.

	\subsection{Graph information plane}
	\label{ssec: graph information plane}
	
	Recall that in Sec.~\ref{ssec: tracking by graph information plane},
	we track the aforementioned MI terms in the training process for further study. 
	Here, the training dynamics of representations H in each layer are projected to 
	the two dimensional $\big( I(A; \bm{H}), I(Y ; \bm{H}) \big)$ plane. 
	The $I(X; \bm{H})$ is not considered as node features do not always exist, 
	while the characteristic adjacency $A$ and labels $Y$ are indispensable for supervised graph learning. 
	Thus, we derive the Graph Information Plane
        as defined in the following Def.~\ref{def:Graph Information Plane}.
	
	\begin{definition}[Graph Information Plane]	
		For any node representations $\bm{H}$ of a graph,
		it can be seen as encoded from the adjacency $A$
		and decoded into the prediction objective $Y$.
		The sample complexity of the graph learning model
		is determined by the encoder MI $I(A; \bm{H}_{A})$,
		while the generalization error is indicated by the decoder MI $I(Y; \bm{H})$.
		And technically, the AUC is utilized for computing $I(A; \bm{H})$ \textit{w.r.t.} edges in $A$,
		while the Accuracy is used to measure $I(Y; \bm{H})$.
		Here, the node representations $\bm{H}$ 
		can be uniquely mapped to the plane with coordinates $\big(I(A; \bm{H}),  I(Y; \bm{H})\big)$.
		\label{def:Graph Information Plane}
	\end{definition}

	%
	%
	%
	%

	\subsection{Differentiable similarity estimations}
	\label{ssec: differentiable-MI}
	
	Solving Eq.~\eqref{eqn: MC-GRA} and Eq.~\eqref{eqn: tighter-defense-MI}
	requires to derive tractable objectives.
	%
	Given two variables,
	$X \! \in \! \mathbb{R}^{N \times D_x}$ and $Y \! \in \! \mathbb{R}^{N \times D_y}$,
	we estimate their similarity $s(X, Y)$
	with six following differentiable measurements~\citep{kornblith2019similarity}.
	\vspace{-8pt}
	\begin{itemize}[leftmargin=*]
		\setlength\itemsep{0.1em}
		\item
		\textit{Dot Product-Based Similarity (DP).}
		That is, dot products measure the similarity between samples' features as
		$\texttt{DP}(X, Y) \! = \! \text{tr}(X X^\top Y Y^\top) \! = \!  ||Y^\top X||^{2}_{F}$.
		Note the $ij$-th element of $X X^\top$ 
		are dot products of feature $x_i$ and $x_j$.
		
		\item
		\textit{Hilbert-Schmidt Independence Criterion (HSIC).}
		HSIC first takes a nonlinear feature transformation of each variable, 
		and then measures the norm of cross-covariance between these features.
		The empirical estimation~\citep{gretton2005measuring} is
		$\texttt{HSIC}(K, L) \! = \! \frac{1}{(n-1)^2} \text{tr}(KHLH)$.
		Specifically, $K_{ij} \! = \! k(x_i, x_j)$ and $L_{ij} \! = \! l(y_i, y_j)$,
		$k$ and $l$ are kernels,
		and $H$ is the centering matrix of size $N$.
		
		\item
		\textit{Centered Kernel Alignment (CKA).}
		CKA~\citep{cortes2012algorithms} is devised based on HSIC.
		It cooperates with HSIC with normalization
		to become invariant to isotropic scaling,
		\textit{i.e.}, $\forall \alpha, \beta \! \in \! \mathbb{R}^{+}, s(X, Y) \! = \! s(\alpha X, \beta Y)$.
		In calculation,
		$\texttt{CKA}(K, L) = \nicefrac{\texttt{HSIC}(K, L)}{\sqrt{\texttt{HSIC}(K, K) \texttt{HSIC}(L, L)}}$.
		
		\item
		\textit{Kernel Density Estimation (KDE).}
		KDE estimates the margin and joint PDF (Probability Density Function) of the data as
		$K_\mathbf{H}(\mathbf{x})={(2\pi)^{-d/2}}\mathbf{|H|}^{-1/2}e^{-\frac{1}{2}\mathbf{x^T}\mathbf{H^{-1}}\mathbf{x}}$.
		%
		KDE generates the $p_X$ and $p_Y$ along with kernels $k_X$ and $k_Y$, 
		which are then used to compute the joint PDF $p_{XY}$ by simply taking the dot product,
		based on which the MI $I(X;Y)$ is then computed.
		
		\item
		\textit{The Kullback-Leibler Divergence (KL)}.
		Given two probability distributions $X$ and $Y$,
		the KL divergence is computed as 
		$\texttt{KL}(X, Y) = \sum_{z\in \mathcal{X}} X(z)\log(\nicefrac{X(z)}{Y(z)})$.
		
		\item
		\textit{Mean Squared Error (MSE)}. MSE calculates the point-wise distance of two distributions 
		to indicate their similarity
		as $\texttt{MSE}(X, Y) = \frac{1}{n} \sum_{i=1}^{n} (X_i-Y_i)^2$.
	\end{itemize}

	\subsection{Full algorithm}
	\label{ssec: full algorithm}
	
	Recall that the two proposed methods,
	MC-GRA and MC-GPB,
	that are illustrated in Fig.~\ref{fig: markov-attack} and Fig.~\ref{fig: markov-defense}.
	Here,
	regarding these two methods, we respectively elaborate the full algorithms 
	in Alg.~\ref{alg: GRA} and Alg.~\ref{alg: GPB} as follows.
	
	\begin{algorithm}[H]
		\caption{Markov Chain-based Graph Reconstruction Attack.}
		\begin{algorithmic}[1]
			\REQUIRE Target model $f_{\bm{\theta}^*}$, prior knowledge set $\mathcal{K}$, similarity measurement $s(\cdot)$
			\STATE initialize the parameterized distribution $\mathbb{P}_{\bm{\phi}}(\bm{\hat{A}})$ with parameters $\bm{\phi}$
			\STATE collect $\mathcal{S}_{A} \! = \! \{\bm{Z}_{A}^{i}: \bm{Z}_{A}^{i} \! \in \! \mathcal{K} \}$
			\FOR{$i=1\dots n$}
			\STATE sample an  adjacency from the distribution $\bm{\hat{A}} \sim \mathbb{P}_{\bm{\phi}}(\bm{\hat{A}})$
			\STATE inject stochasticity as $\tilde{X} \! = \! X \oplus X_{\epsilon}, \bm{\tilde{A}} \! = \! \bm{\tilde{A}} \oplus A_{\epsilon}$
			\STATE obtain $\mathcal{S}_{\bm{\hat{A}}} \! = \! \{\bm{Z}_{\bm{\hat{A}}}^{i}: \bm{Z}_{A}^{i} \! \in \! \mathcal{S}_{A}\}$ by the forward of \texttt{GRA-chain} as $f_{\bm{\theta}^*}(\bm{\tilde{A}}, \tilde{X})$ 
			\STATE update $\bm{\phi}$ by maximizing the MC-GRA objective in Eq.~\eqref{eqn: MC-GRA} with $\mathcal{S}_{\bm{\hat{A}}}$ and $\mathcal{S}_{A}$
			\ENDFOR
			\STATE \textbf{return } The optimal recovered adjacency $\bm{\hat{A}}^*$ that $\bm{\hat{A}}^* \sim \mathbb{P}_{\bm{\phi}^*}(\bm{\hat{A}})$
		\end{algorithmic}
		\label{alg: GRA}
	\end{algorithm}

	\begin{algorithm}[H]
		\caption{Markov Chain-based Defensive Training Against Graph Reconstruction Attack.}
		\begin{algorithmic}[1]
			\REQUIRE Graph data $A, X, Y$ and similarity measurement $s(\cdot)$
			\STATE initialize parameters $\bm{\theta}$ of the GNN $f_{\bm{\theta}}$
			\FOR{$i=1\dots n$}
			\STATE inject stochasticity as $\tilde{A} \! = \! A \oplus A_{\epsilon}$ by randomly dropping edges
			\STATE obtain the hidden representations in each layer and outputs by forwarding $f_{\bm{\theta}}(\tilde{A}, X)$
			\STATE update $\bm{\theta}$ by minimizing the MC-GPB objective in Eq.~\eqref{eqn: tighter-defense-MI}
			\ENDFOR
			\STATE \textbf{return } The trained model $f_{\bm{\theta}^*}$
		\end{algorithmic}
		\label{alg: GPB}
	\end{algorithm}

	\subsection{Reproduction}
	
	The source code is publicly available at:
	\url{https://github.com/tmlr-group/MC-GRA}.
	In addition, we summarize the search space of hyperparameters and optimal cases as follows.
	The optimal hyperparameters are obtained by random search or grid search~\cite{lavalle2004relationship}.
	
	
	\begin{table}[H]
		\centering
		\setlength\tabcolsep{10pt}
		\scriptsize
		\caption{The search space of hyper-parameters in MC-GPB.}
		\vspace{-10pt}
		\begin{tabular}{c|c|c|c}
			\toprule
			component & name & type & range \\
			\midrule
			\multirow{3}{*}{Privacy constraint} & $\beta^1_p$ (GNN layer-1) & float & $(0,10)$ \\
			& $\beta^2_p$ (GNN layer-2) & float & $(0,10)$ \\
			& $\beta^3_p$ (linear layer) & float & $(0,10)$ \\
			\midrule
			\multirow{2}{*}{Complexity constraint} 
			& $\beta^1_c$ (GNN layer-1) & float & $(0,10)$ \\
			& $\beta^2_c$ (GNN layer-2) & float & $(0,10)$ \\
			\midrule
			Similarity measurement & metric $s(\cdot, \cdot)$ & category & DP, HSIC, CKA, KDE, KL, MSE \\
			\bottomrule
		\end{tabular}
	\end{table}

	\begin{table}[H]
		\centering
		\scriptsize
		\setlength\tabcolsep{10pt}
		\caption{The optimal hyper-parameters for MC-GPB.}
		\vspace{-10pt}
		\begin{tabular}{c|c|cccccc}
			\toprule
			component & name & {    Cora    }  & Citeseer &  Polblogs   & USA   &   Brazil &  AIDS   \\
			\midrule
			\multirow{3}{*}{Privacy constraint} & $\beta^1_p$ (GNN layer-1) & 1.3 		& 0.09		& 3.00	  & 6.60	 	&	1.90	& 2.40		\\
			& $\beta^2_p$ (GNN layer-2) & 1.3 		& 0.006		& 2.00	  & 1.00	 	& 2.50		& 3.90	\\
			& $\beta^3_p$ (linear layer) & 1.7 		& 0.01	&	2.00  & 0.50 	& 1.00	& 1.30		\\
			\midrule
			\multirow{2}{*}{Complexity constraint} 
			& $\beta^1_c$ (GNN layer-1) &  1.4		& 5e-10		& 1.00	  & 1.30	 	& 1.20		& 1.30		\\
			& $\beta^2_c$ (GNN layer-2) & 1.5 		& 1e-10		& 1.00	  & 3.80	 	& 1.20	& 1.30	\\
			\midrule
			Similarity measurement & metric $s(\cdot, \cdot)$ 		&	KDE	& KDE	  &	KDE 	&	DP	& KDE	&	KDE	\\
			\bottomrule
		\end{tabular}
	\end{table}

	\begin{table}[H]
		\centering
		\setlength\tabcolsep{10pt}
		\scriptsize
		\caption{The search space of hyper-parameters in MC-GRA.}
		\vspace{-10pt}
		\begin{tabular}{c|c|c|c}
			\toprule
			component & name & type & range \\
			\midrule
			Complexity constraint
			& $\alpha_{c}$ (information entropy of $\hat{\bm{A}}$) & float & $(10^{-4},10^{4})$ \\
			\midrule
			Propagation approximation 
			& $\alpha_{p}$ (between $\bm{H}_{\hat{\bm{A}}}$ and $\bm{H}_A$) & float & $(10^{-4},10^{4})$ \\
			\midrule
			Output approximation &$\alpha_{o}$ (between $\hat{\bm{Y}}_{\hat{\bm{A}}}$ and $\hat{\bm{Y}}_{A}$) & float & $(10^{-4},10^{4})$ \\
			\midrule
			Label approximation 
			& $\alpha_{s}$ (between $\hat{\bm{Y}}_{\hat{\bm{A}}}$ and $Y$) & float & $(10^{-4},10^{4})$ \\
			\midrule
			Similarity measurement & metric $s(\cdot, \cdot)$ & category & DP, HSIC, CKA, KDE, KL, MSE \\
			\bottomrule
		\end{tabular}
	\end{table}

	\begin{table}[!htp]
		\centering
		\caption{The optimal hyper-parameters for MC-GRA.}
		\vspace{-10pt}
		\scriptsize
		\setlength\tabcolsep{10pt}
		\begin{tabular}{cc|ccccc}\toprule
			dataset &prior $\mathcal{K}$ &$\alpha_c$ &$\alpha_p$ &$\alpha_o$ &$\alpha_s$ &$s(\cdot, \cdot)$ \\
			\midrule
			\multirow{7}{*}{cora}
                &$\{X, \bm{H}_{A}\}$ &10000 &1000 &0 &0 &MSE \\
			&$\{X, \bm{Y}_{A}\}$ &100 &0 &0.1 &0 &KL \\
			&$\{X, Y\}$ &0.01 &0 &0 &1 &CKA \\
			&$\{X, \bm{H}_{A}, \bm{Y}_{A}\}$ &0.1 &100 &100 &0 &MSE \\
			&$\{X, \bm{H}_{A}, Y\}$ &0.0001 &10 &0 &1 &MSE \\
			&$\{X, \bm{Y}_{A}, Y\}$ &10 &0 &0.01 &1 &MSE \\
			&$\{X, \bm{H}_{A}, \bm{Y}_{A}, Y\}$ &10 &10 &1000 &1 &MSE \\
			\midrule
			\multirow{7}{*}{citeseer}
                &$\{X, \bm{H}_{A}\}$ &0.01 &10 &0 &0 &KL \\
			&$\{X, \bm{Y}_{A}\}$ &0 &0 &1 &0 &KL \\
			&$\{X, Y\}$ &100 &0 &0 &1 &MSE \\
			&$\{X, \bm{H}_{A}, \bm{Y}_{A}\}$ &10 &100 &0.001 &0 &MSE \\
			&$\{X, \bm{H}_{A}, Y\}$ &0.0001 &100 &0 &1 &KL \\
			&$\{X, \bm{Y}_{A}, Y\}$ &1 &0 &10000 &1 &KL \\
			&$\{X, \bm{H}_{A}, \bm{Y}_{A}, Y\}$ &0.001 &1000 &0.001 &1 &KL \\
			\midrule
			\multirow{7}{*}{polblogs}
                &$\{X, \bm{H}_{A}\}$ &0 &1000 &0 &0 &KL \\
			&$\{X, \bm{Y}_{A}\}$ &100 &0 &1 &0 &DP \\
			&$\{X, Y\}$ &1000 &0 &0 &1 &MSE \\
			&$\{X, \bm{H}_{A}, \bm{Y}_{A}\}$ &0.1 &1000 &0 &0 &MSE \\
			&$\{X, \bm{H}_{A}, Y\}$ &100 &0.001 &0 &1 &HSIC \\
			&$\{X, \bm{Y}_{A}, Y\}$ &100 &0 &0 &1 &CKA \\
			&$\{X, \bm{H}_{A}, \bm{Y}_{A}, Y\}$ &10000 &0.001 &1000 &1 &HSIC \\
			\midrule
			\multirow{7}{*}{usair}
                &$\{X, \bm{H}_{A}\}$ &100 &100 &0 &0 &MSE \\
			&$\{X, \bm{Y}_{A}\}$ &0.01 &0 &0.001 &0 &MSE \\
			&$\{X, Y\}$ &0.0001 &0 &0 &1 &DP \\
			&$\{X, \bm{H}_{A}, \bm{Y}_{A}\}$ &0.0001 &0.01 &0.0001 &0 &HSIC \\
			&$\{X, \bm{H}_{A}, Y\}$ &1000 &0.001 &0 &1 &HSIC \\
			&$\{X, \bm{Y}_{A}, Y\}$ &1000 &0 &0.01 &1 &CKA \\
			&$\{X, \bm{H}_{A}, \bm{Y}_{A}, Y\}$ &10 &0.01 &0.01 &1 &DP \\
			\midrule
			\multirow{7}{*}{brazil}&$\{X, \bm{H}_{A}\}$ &100 &0.01 &0 &0 &KL \\
			&$\{X, \bm{Y}_{A}\}$ &1 &0 &100 &0 &MSE \\
			&$\{X, Y\}$ &0.001 &0 &0 &1 &KDE \\
			&$\{X, \bm{H}_{A}, \bm{Y}_{A}\}$ &0.0001 &100 &1000 &0 &MSE \\
			&$\{X, \bm{H}_{A}, Y\}$ &0.0001 &100 &0 &1 &HSIC \\
			&$\{X, \bm{Y}_{A}, Y\}$ &100 &0 &0.1 &1 &DP \\
			&$\{X, \bm{H}_{A}, \bm{Y}_{A}, Y\}$ &0.1 &0.1 &100 &1 &KL \\
			\midrule
			\multirow{7}{*}{AIDS}
                &$\{X, \bm{H}_{A}\}$ &10000  &1000 &0 &0 &KL \\
			&$\{X, \bm{Y}_{A}\}$ &1 &0 &0.01 &0 &MSE \\
			&$\{X, Y\}$ &0.0001 &0 &0 &1 &CKA \\
			&$\{X, \bm{H}_{A}, \bm{Y}_{A}\}$ &0.001 &1 &100 &0 &MSE \\
			&$\{X, \bm{H}_{A}, Y\}$ &0.0001 &0.1 &0 &1 &MSE \\
			&$\{X, \bm{Y}_{A}, Y\}$ &0 &0 &0 &1 &MSE \\
			&$\{X, \bm{H}_{A}, \bm{Y}_{A}, Y\}$ &0 &1000 &0.0001 &1 &KL\\
			\bottomrule
		\end{tabular}
	\end{table}

\end{document}